\renewcommand*{\backrefalt}[4]{%
    \ifcase #1 \footnotesize{(Not cited.)}%
    \or        \footnotesize{(Cited on page~#2.)}%
    \else      \footnotesize{(Cited on pages~#2.)}%
    \fi}
\def\eqref#1{equation~\ref{#1}}
\def\ceil#1{\lceil #1 \rceil}
\def\floor#1{\lfloor #1 \rfloor}
\def\1{\bm{1}}
\def\eps{{\epsilon}}
\DeclareMathAlphabet{\mathsfit}{\encodingdefault}{\sfdefault}{m}{sl}
\SetMathAlphabet{\mathsfit}{bold}{\encodingdefault}{\sfdefault}{bx}{n}
\newcommand{\E}{\mathbb{E}}
\newcommand{\R}{\mathbb{R}}
\newcommand{\Var}{\mathrm{Var}}
\DeclareMathOperator*{\argmin}{arg\,min}
\crefname{appendix}{App.}{Apps.}
\crefname{subsubsubappendix}{App.}{Apps.}
\crefname{equation}{}{}
\crefname{lemma}{Lem.}{Lems.}
\crefname{theorem}{Thm.}{Thms.}
\crefname{Corollary}{Cor.}{Cors.}
\crefname{algorithm}{Alg.}{Algs.}
\crefname{algocf}{Alg.}{Algs.} %
\crefname{section}{Sec.}{Secs.}
\crefname{table}{Tab.}{Tabs.}
\crefname{remark}{Rem.}{Rems.}
\crefname{example}{Ex.}{Exs.}
\crefname{definition}{Def.}{Defs.}
\crefname{Proposition}{Prop.}{Props.}
\crefname{mytheorem}{Thm.}{Thms.}
\crefname{myremark}{Rem.}{Rems.}
\crefname{mylemma}{Lem.}{Lems.}
\crefname{mydefinition}{Def.}{Defs.}
\crefname{myproposition}{Prop.}{Props.}
\crefname{mycorollary}{Cor.}{Cors.}
\crefname{enumi}{}{}
\crefname{name}{}{} %
\DeclareMathAlphabet{\mathbsf}{OT1}{cmss}{bx}{n}%
\DeclareMathAlphabet{\mathssf}{OT1}{cmss}{m}{sl}%
\newcommand{\vsep}{\vspace{-2mm}}
\newcommand{\numperm}{\mc B}
\newcommand{\uptag}{\trm{up}}
\newcommand{\defeq}{\triangleeq}
\newcommand{\para}[1]{\noindent\tbf{#1}\ \ }
\newcommand{\set}[1]{\mc{#1}}
\newcommand{\ball}{\mathbb{B}}
\newcommand{\indicator}{\mbf 1}
\newcommand{\snorm}[1]{\Vert #1 \Vert}
\newcommand{\sinfnorm}[1]{\snorm{#1}_\infty}
\renewcommand{\natural}{\mbb{N}}
\newcommand{\real}{\ensuremath{\mathbb{R}}}
\newcommand{\sless}[1]{\stackrel{#1}{\leq}}
\newcommand{\seq}[1]{\stackrel{#1}{=}}
\newcommand{\x}{x}
\newcommand{\y}{y}
\newcommand{\z}{z}
\renewcommand{\l}{\ell}
\newcommand{\vareps}{\varepsilon}
\DeclareMathOperator{\mmd}{MMD}
\def\Matern{Mat\'ern\xspace}
\newcommand{\kernel}{\mbf{k}}
\newcommand{\hkernel}{\mbf{h}}
\newcommand{\knorm}[1]{\Vert{#1}\Vert_{\kernel}}
\newcommand{\kinfsin}[1][\kernel]{\Vert{#1}\Vert_{\infty,\mrm{in}}}
\newcommand{\ksinfsin}{\kinfsin[\kersplit]}
\newcommand{\tail}[1][\kernel]{\tau_{#1}}
\newcommand{\klip}[1][\kernel]{L_{#1}}
\newcommand{\rttag}{\mrm{rt}}
\newcommand{\ksqrt}[1][\kernel]{{#1}_{\rttag}}
\newcommand{\ltwonorm}[1]{\norm{#1}_{2}}
\newcommand{\sgparam}[1][i]{\sigma_{#1}}
\newcommand{\vmax}[1][i]{\mathfrak{b}_{#1}}
\newcommand{\coreset}[1][j]{\mathcal{S}^{(#1)}}
\newcommand{\brackets}[1]{\left[ #1 \right]}
\newcommand{\parenth}[1]{\left( #1 \right)}
\newcommand{\bigparenth}[1]{\big( #1 \big)}
\newcommand{\braces}[1]{\left\{ #1 \right \}}
\newcommand{\sbraces}[1]{\{ #1  \}}
\newcommand{\abss}[1]{\left| #1 \right |}
\newcommand{\angles}[1]{\left\langle #1 \right \rangle}
\newcommand{\inv}{^{-1}}
\newcommand{\lam}{\lambda}
\newcommand{\Lam}{\Lambda}
\def\balign#1\ealign{\begin{align}#1\end{align}}
\def\baligns#1\ealigns{\begin{align*}#1\end{align*}}
\def\balignat#1\ealign{\begin{alignat}#1\end{alignat}}
\def\balignats#1\ealigns{\begin{alignat*}#1\end{alignat*}}
\def\bitemize#1\eitemize{\begin{itemize}#1\end{itemize}}
\def\benumerate#1\eenumerate{\begin{enumerate}#1\end{enumerate}}
\newenvironment{talign*}
 {\csname align*\endcsname}
 {\endalign}
\newenvironment{talign}
 {\csname align\endcsname}
 {\endalign}
\def\balignst#1\ealignst{\begin{talign*}#1\end{talign*}}
\def\balignt#1\ealignt{\begin{talign}#1\end{talign}}
\newcommand{\qtext}[1]{\quad\text{#1}\quad} 
\newcommand{\stext}[1]{\text{ #1 }} 
\let\originalleft\left
\let\originalright\right
\renewcommand{\left}{\mathopen{}\mathclose\bgroup\originalleft}
\renewcommand{\right}{\aftergroup\egroup\originalright}
\def\Matern{Mat\'ern\xspace}
\def\tinycitep*#1{{\tiny\citep*{#1}}}
\def\tinycitealt*#1{{\tiny\citealt*{#1}}}
\def\tinycite*#1{{\tiny\cite*{#1}}}
\def\smallcitep*#1{{\scriptsize\citep*{#1}}}
\def\smallcitealt*#1{{\scriptsize\citealt*{#1}}}
\def\smallcite*#1{{\scriptsize\cite*{#1}}}
\def\mbf#1{\mathbf{#1}}
\def\mbb#1{\mathbb{#1}}
\def\mc#1{\mathcal{#1}}
\def\mrm#1{\mathrm{#1}}
\def\trm#1{\textrm{#1}}
\def\tbf#1{\textbf{#1}}
\def\wtil#1{\widetilde{#1}}
\def\mfk#1{\mathfrak{#1}}
\def\reals{\mathbb{R}} %
\def\R{\mathbb{R}}
\def\Q{\mathbb{Q}}
\def\N{\mathbb{N}}
\def\<{\left\langle} %
\def\>{\right\rangle}
\def\iff{\Leftrightarrow}
\def\defeq{\triangleq} %
\def\half{\frac{1}{2}}
\def\quarter{\frac{1}{4}}
\newcommand{\textfrac}[2]{{\textstyle\frac{#1}{#2}}}
\def\norm#1{\left\|{#1}\right\|} %
\newcommand{\twonorm}[1]{\norm{#1}_2} %
\newcommand{\infnorm}[1]{\norm{#1}_{\infty}} %
\newcommand{\opnorm}[1]{\norm{#1}_{\mathrm{op}}} %
\def\staticnorm#1{\|{#1}\|} %
\newcommand{\staticinfnorm}[1]{\staticnorm{#1}_\infty} %
\def\indic#1{\mbb{I}\left[{#1}\right]} %
\def\E{\mbb{E}} %
\def\P{\mbb{P}} %
\def\Var{\mrm{Var}} %
\def\indep{\perp\!\!\!\perp} %
\newcommand{\iid}{\textrm{i.i.d.}\ }
\newcommand{\dist}{\sim}
\newcommand{\distiid}{\overset{\textup{\tiny\iid}}{\dist}}
\providecommand{\argmin}{\mathop\mathrm{arg min}}
\def\supp#1{\mathrm{supp}({#1})}
\newenvironment{proof-sketch}{\noindent\textbf{Proof Sketch}
  \hspace*{1em}}{\qed\bigskip\\}
\newenvironment{proof-idea}{\noindent\textbf{Proof Idea}
  \hspace*{1em}}{\qed\bigskip\\}
\newenvironment{proof-of-lemma}[1][{}]{\noindent\textbf{Proof of Lemma {#1}}
  \hspace*{1em}}{\qed\\}
\newenvironment{proof-of-theorem}[1][{}]{\noindent\textbf{Proof of Theorem {#1}}
  \hspace*{1em}}{\qed\\}
\newenvironment{proof-attempt}{\noindent\textbf{Proof Attempt}
  \hspace*{1em}}{\qed\bigskip\\}
\newcommand{\cset}{\mc{S}}
\newcommand{\inputcoreset}{\cset_{\mrm{in}}}
\newcommand{\outputcoreset}{\cset_{\mrm{out}}}
\newcommand{\ktcoreset}{\cset_{\mrm{KT}}}
\newcommand{\rmin}{\mfk{S}}
\newcommand{\rminpn}[1][\inputcoreset]{\rmin_{#1}}
\newcommand{\shiftparam}{a}
\newcommand{\kgaussparam}{v}
\newcommand{\err}{\mathfrak{N}}%
\newcommand{\kersplit}[1][]{\kernel_{{#1}}'}%
\newcommand{\kt}{\textsc{KT}\xspace}
\newcommand{\ktsplit}{\hyperref[algo:ktsplit]{\color{black}{\textsc{kt-split}}}\xspace}
\newcommand{\ktswap}{\hyperref[algo:ktswap]{\color{black}{\textsc{kt-swap}}}\xspace}
\newcommand{\halve}{\textsc{Halve}\xspace}
\newcommand{\osname}{compression level\xspace}
\newcommand{\ossymb}{\ensuremath{\mfk{g}}\xspace}
\newcommand{\compress}{\textsc{Compress}\xspace}
\newcommand{\ktcompress}{\textsc{KT-Compress}\xspace}
\newcommand{\compresssub}{\textsc{C}}
\newcommand{\genmat}{\mbf{M}}
\newcommand{\xs}[1][m]{\mbb{X}_{#1}}
\newcommand{\ys}[1][n]{\mbb{Y}_{#1}}
\newcommand{\nullhypo}{\mc H_{0}}
\newcommand{\althypo}[1][1]{\mc H_{#1}}
\newcommand{\threshold}[1][\alpha]{t_{#1}}
\newcommand{\block}{B}
\newcommand{\mmdblock}{\mmd_{\block}}
\newcommand{\mmdincomplete}{\mmd_{\trm{inc}}}
\newcommand{\hatxs}[1][m]{\hat{\mathbb{X}}_{#1}}
\newcommand{\hatys}{\hat{\mathbb{Y}}_n}
\newcommand{\hatzs}[1][]{\hat{\mbb{Z}}_{#1}}
\newcommand{\ctt}{\textup{CTT}\xspace}
\newcommand{\deltactt}{\Delta_{\ctt}}
\newcommand{\actt}{\textup{ACTT}\xspace}
\newcommand{\deltaactt}{\Delta_{\actt}}
\newcommand{\lctt}{\textup{LR-CTT}\xspace}
\newcommand{\feat}[1][r]{\Phi_{#1}} %
\newcommand{\featmmd}[1][r]{\mmd_{\feat[#1]}} %
\newcommand{\featerr}[1][r]{\eps_{\feat[#1]}} %
\newcommand{\deltalctt}{\Delta_{\lctt}}
\newcommand{\cttname}{{Compress Then Test}\xspace}
\newcommand{\acttname}{{Aggregated \ctt}\xspace}
\newcommand{\lcttname}{{Low-Rank \ctt}\xspace}
\newcommand{\tmmd}{\textsc{CoresetMMD}\xspace}
\newcommand{\ltmmd}{\textup{LR}-\textsc{CoresetMMD}\xspace}
\newcommand{\nout}{n_{\mrm{out}}}
\newcommand{\ncref}[1]{\cref{#1}: \nameref*{#1}} %
\newcommand{\pcref}[1]{Proof of \ncref{#1}} %
\newcommand{\error}[1][\kernel]{\mathbsf{R}_{#1}}
\newcommand{\errorsplit}[1][]{\error[{\kernel_{#1}},{\kersplit[#1]}]}
\newcommand{\sblock}{s} %
\newcommand{\speedup}{\ensuremath{200\times}\xspace}
\newcommand{\speeduptwenty}{\ensuremath{20\times}\xspace}
\newcommand{\klam}[1][\lam]{\kernel_{#1}}
\newcommand{\kslam}[1][\lam]{{\kersplit[#1]}}
\newcommand{\aggtag}{\textsc{agg}}
\newcommand{\ktagg}{\ktcompress-\aggtag\xspace}
\newcommand{\kagg}{\kersplit}
\newcommand{\kagginf}{\kinfsin[{\kagg}]}
\newcommand{\epsagg}{\vareps_{\aggtag}}
\newcommand{\ckk}[1][\kernel,\kersplit]{C_{#1}}
\newcommand{\mkk}[1][\kernel,\kersplit]{\mathfrak{M}_{#1}}
\newcommand{\ackk}{C_{\kernel_{\lam},\kersplit}}
\newcommand{\amkk}{\mfk{M}_{\kernel_{\lam},\kersplit}}
\newcommand{\cnew}{\mfk{a}}
\newcommand{\opthalvefour}{\textsc{OptHalve4}\xspace}
\newcommand{\cttalg}{
  \begin{algorithm2e}[h]
\caption{\cttname, $\deltactt$}
\label{algo:ctt}
\SetAlgoLined
  \DontPrintSemicolon
\small{
  \KwIn{Samples ($\xs$, $\ys$),  
  \# coresets $\sblock$,
  \osname $\ossymb$,
  kernels ($\kernel,\kersplit$),
  failure prob.~$\delta$,  \# replicates $\numperm$, level $\alpha$} 
  \BlankLine
  Partition $\xs$ into $\sblock_m =  \frac{\sblock m}{m+n}$ equal-sized bins 
  $ ( \xs^{(i)} )_{i=1}^{\sblock_m}$ \\
  Partition $\ys$ into $\sblock_n =  \frac{\sblock n}{m+n}$ equal-sized bins 
  $ ( \ys^{(i)} )_{i=1}^{\sblock_n}$ \\
  \BlankLine
  // Identify coreset of size
  $2^\ossymb\sqrt{\frac{m+n}{\sblock}}$
  for each bin\\
  \For{$i=1, \dots, \sblock_m$}
  {$\hatxs^{(i)} \leftarrow \ktcompress( \xs^{(i)}, \ossymb, \kernel, \kersplit, \delta)$}
  \For{$i=1, \dots, \sblock_n$}
  {$\hatys^{(i)} \leftarrow \ktcompress( \ys^{(i)}, \ossymb, \kernel, \kersplit, \delta)$}
    \BlankLine
    // Compute \tmmd test statistic \\

\makebox[\linewidth]{$M_{\numperm+1} \gets \mmd_{\kernel}(\hatxs,\hatys)\qtext{for}$  \hfill\refstepcounter{equation}\llap{(\theequation)} \label{tmmd}} \\[.2\baselineskip]
$\hatxs \coloneqq\textsc{Concat}((\hatxs^{(i)})_{i=1}^{\sblock_m} )\qtext{and} \hatys \coloneqq\textsc{Concat}((\hatys^{(i)})_{i=1}^{\sblock_n} )$\\
    \BlankLine
    // Simulate null by randomly permuting the $\sblock$ coresets $\numperm$ times \\ %
    \For{$b=1,\dots,\numperm$}
    {
    ($\hatxs^b, \hatys^b) \gets \textsc{PermuteCoresets}(\hatxs, \hatys, \sblock)$
    
    $M_b \!\gets\! \mmd_{\kernel}(\hatxs^{b}, \hatys^{b})$ 
    }
    \BlankLine
    // Threshold test statistic\\
    $R \gets$ position of $M_{\numperm+1}$ in an increasing ordering of $(M_b)_{b=1}^{\numperm+1}$ with ties broken uniformly at random
    
    \lIf{  $R > b_{\alpha} \coloneqq \ceil{(1\!-\!\alpha)(\numperm\!+\!1)}$ }{
        \KwRet{ \textup{ 1 \,\, // reject null}}
    }
    \lElseIf{
    $R < b_{\alpha}$}{
        \KwRet{ \textup{ 0 \qquad\qquad\qquad\quad// accept null} }
    }
    \lElse{
        \KwRet{ %
        \textup{1 with prob.\  
        $p_\alpha = b_\alpha - (1-\alpha)(\numperm+1)$ or else $0$}%
        }
    }
    }
\end{algorithm2e}}
\newcommand{\lcttalg}{
  \begin{algorithm2e}[h]
\caption{\lcttname, $\deltalctt$}
\label{algo:lctt}
\SetAlgoLined
  \DontPrintSemicolon
\small{
  \KwIn{Samples ($\xs$, $\ys$),  
  coreset size factor $a$, 
  \osname $\ossymb$,
  kernels ($\kernel,\kersplit$),
  feature map $\feat$, 
  failure prob.~$\delta$,  
  \# permutation bins $\sblock$, \#~replicates $\numperm$, level $\alpha$} 
  \BlankLine
  Partition $\xs$ into 
  $\sblock_{m,r} = \frac{4^{\ossymb} a^2 m}{r^2}$
  equal-sized bins 
  $ ( \xs^{(i)} )_{i=1}^{\sblock_{m,r}}$ \\
  Partition $\ys$ into 
  $\sblock_{n,r} = \frac{4^{\ossymb} a^2 n}{r^2}$
  equal-sized bins 
  $ ( \ys^{(i)} )_{i=1}^{\sblock_{n,r}}$ \\
  \BlankLine
  // Identify coreset of size
  $\frac{r}{a}$
  for each bin\\
  \For{$i=1, \dots, \sblock_{m,r}$}
  {$\hatxs^{(i)} \leftarrow \ktcompress( \xs^{(i)}, \ossymb, \kernel, \kersplit, \delta)$}
  \For{$i=1, \dots, \sblock_{n,r}$}
  {$\hatys^{(i)} \leftarrow \ktcompress( \ys^{(i)}, \ossymb, \kernel, \kersplit, \delta)$}
    \BlankLine
    // Compute \ltmmd test statistic \\

\makebox[\linewidth]{%
$M_{\numperm+1} 
    \gets 
\featmmd(\hatxs,\hatys)
\qtext{for}$  \hfill\refstepcounter{equation}\llap{(\theequation)} \label{ltmmd}} \\[.2\baselineskip]
$\hatxs \coloneqq\textsc{Concat}((\hatxs^{(i)})_{i=1}^{\sblock_{m,r}} )\stext{and} \hatys \coloneqq\textsc{Concat}((\hatys^{(i)})_{i=1}^{\sblock_{n,r}} )$\\
    \BlankLine
    // Simulate null by randomly permuting $\sblock$ coresets $\numperm$ times \\ %
    \For{$b=1,\dots,\numperm$}
    {
    ($\hatxs^b, \hatys^b) \gets \textsc{PermuteCoresets}(\hatxs, \hatys, \sblock)$
    
    $M_b \!\gets\! \featmmd(\hatxs^{b}, \hatys^{b})$ 
    }

    \BlankLine
    // Threshold test statistic\\
    $R \gets$ position of $M_{\numperm+1}$ in an increasing ordering of $(M_b)_{b=1}^{\numperm+1}$ with ties broken uniformly at random
    
    \lIf{  $R > b_{\alpha} \coloneqq \ceil{(1\!-\!\alpha)(\numperm\!+\!1)}$ }{
        \KwRet{ \textup{ 1 \,\, // reject null}}
    }
    \lElseIf{
    $R < b_{\alpha}$}{
        \KwRet{ \textup{ 0 \qquad\qquad\qquad\quad// accept null} }
    }
    \lElse{
        \KwRet{ %
        \textup{1 with prob.\  
        $p_\alpha = b_\alpha - (1-\alpha)(\numperm+1)$ or else $0$}%
        }
    }
    }
\end{algorithm2e}}
\newcommand{\acttalg}{%
  \begin{algorithm2e}[h]
\caption{\acttname, $\deltaactt$}
\label{algo:actt}
\SetAlgoLined
  \DontPrintSemicolon
\small{
  \KwIn{Samples ($\xs$, $\ys$),  
  \# coresets $\sblock$,
  \osname $\ossymb$,
  kernels $(\kernel_{\lambda},\kersplit[\lam])_{\lambda \in \Lambda}$, importance weights $(w_{\lambda})_{\lambda \in \Lambda}$,
  failure prob.~$\delta$,  \# replicates ($\numperm_1$, $\numperm_2$, $\numperm_3$), level $\alpha$}
  \BlankLine
  Partition $\xs$ into $\sblock_m =  \frac{\sblock m}{m+n}$ equal-sized bins 
  $ ( \xs^{(i)} )_{i=1}^{\sblock_m}$ \\
  Partition $\ys$ into $\sblock_n =  \frac{\sblock n}{m+n}$ equal-sized bins 
  $ ( \ys^{(i)} )_{i=1}^{\sblock_n}$ \\
  \BlankLine
  // Identify coresets of size
  $2^\ossymb\sqrt{\frac{m+n}{\sblock}}$
  using sum of kernels\\
  $\kernel \gets \sum_{\lam\in\Lambda} \kernel_\lam;\quad
  \kersplit \gets \sum_{\lam\in\Lambda} \kersplit[\lam]
  $\\
  \For{$i=1, \dots, \sblock_m$}
  {$\hatxs^{(i)} \leftarrow \ktcompress( \xs^{(i)}, \ossymb, \kernel, \kersplit, \delta)$}
  \For{$i=1, \dots, \sblock_n$}
  {$\hatys^{(i)} \leftarrow \ktcompress( \ys^{(i)}, \ossymb, \kernel, \kersplit, \delta)$}
  \BlankLine
    // Compute \tmmd for each candidate parameter $\lam$\\
    $\hatxs \coloneqq\textsc{Concat}((\hatxs^{(i)})_{i=1}^{\sblock_m} );\quad \hatys \coloneqq\textsc{Concat}((\hatys^{(i)})_{i=1}^{\sblock_n} )$\\
    \lFor{$\lambda \in \Lambda$}{\ $M_{\lambda} \gets \mmd_{\kernel_{\lam}}(\hatxs,\hatys)$}
  \BlankLine
    // Simulate null for each $\lam$ by randomly permuting $\sblock$ coresets\\
    \For{\textup{$\ell = 1,2$ \textbf{and} $b=1,\dots,\numperm_{\ell}$}}
    {
    ($\hatxs^{\ell,b}, \hatys^{\ell,b}) \gets \textsc{PermuteCoresets}(\xs, \ys, \sblock)$

    \lFor{$\lambda \in \Lambda$}{$M_{b,\lambda,\ell} \gets \mmd_{\kernel_{\lam}}(\hatxs^{b},\hatys^{b})$} %

    }
    \lFor{$\lambda \in \Lambda$}{\ Sort $(M_{b,\lambda,1})_{b=1}^{\numperm_1}$ increasingly into $(M_{(b),\lambda,1})_{b=1}^{\numperm_1}$}
    \BlankLine
    // Estimate largest rejection threshold for each $M_\lam$ statistic that ensures aggregated test size $\leq \alpha$ \\
    $u_{\min} \gets 0$ \textup{\textbf{and}} $u_{\max} \gets \min_{\lambda \in \Lambda} w_{\lambda}^{-1}$\\
    \For{$i=1,\dots,\numperm_3$}{
    $u \gets\! \frac{u_{\min}+u_{\max}}{2}$;\ \  \lFor{$\lam\in\Lam$}{$b_{u,\lam}\! \gets\! \ceil{(\numperm_1\!+\!1)(1\!-\!u w_{\lambda})}$}
    $P_u \gets\!\frac{1}{\numperm_2}\!\sum_{b=1}^{\numperm_2} \!\indicator[\max_{\lambda \in \Lambda} (M_{b,\lambda,2}\!-\!M_{(b_{u,\lam}),\lambda,1})\!>\!0]$
    
    \lIf{$P_u \leq \alpha$}{$u_{\min} \gets u$ \textup{\textbf{else}} $u_{\max} \gets u$}
    }
    // Reject null if any test statistic $M_\lam$ exceeds its threshold\\
    $\hat{u}_{\alpha} \gets u_{\min}$;\quad \lFor{$\lam\in\Lam$}{$b'_{\alpha,\lam}\gets \ceil{(\numperm_1+1)(1-\hat{u}_{\alpha} w_{\lambda})}$}
    \lIf{$M_{\lambda} > M_{(b'_{\alpha,\lam}),\lambda,1}$ \textbf{\textup{for some}} $\lambda \in \Lambda$}{\KwRet{\textup{1 (reject null)}}}\lElse{\KwRet{\textup{0 (accept null)} }}
    }

\end{algorithm2e}}
\newcommand{\gaussfig}{
\begin{figure*}[t!]
    \centering
    \begin{tabular}{cc}
    \includegraphics[width=0.48\textwidth]{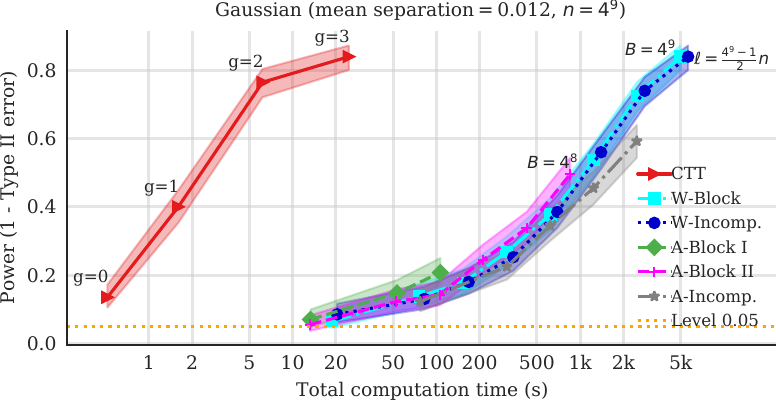}
    &%
    \includegraphics[width=0.48\textwidth]{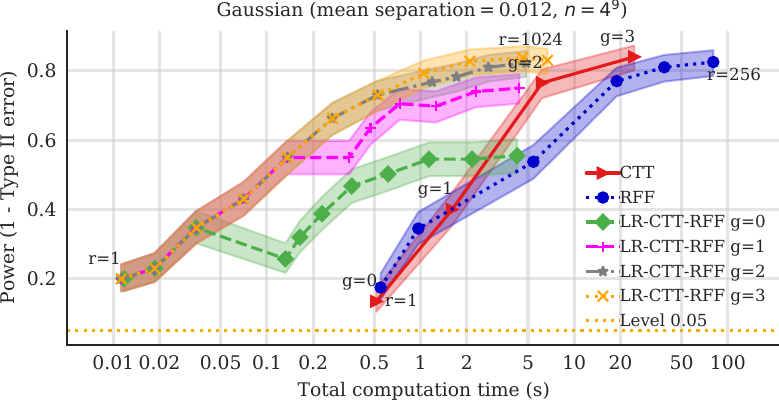}
    \\ \vsep
    \includegraphics[width=0.48\textwidth]{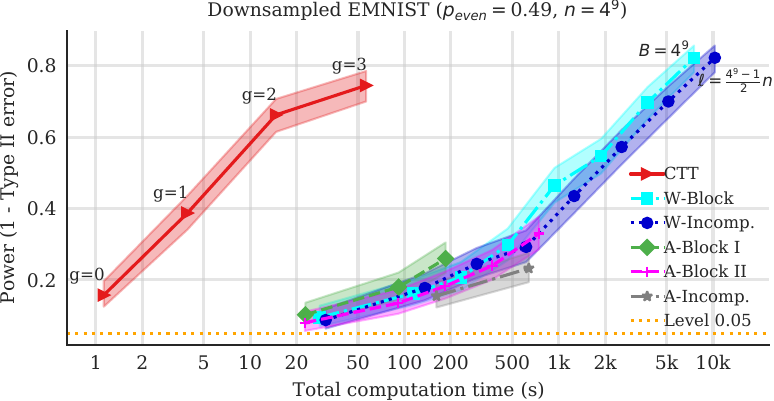}
    &
    \includegraphics[width=0.48\textwidth]{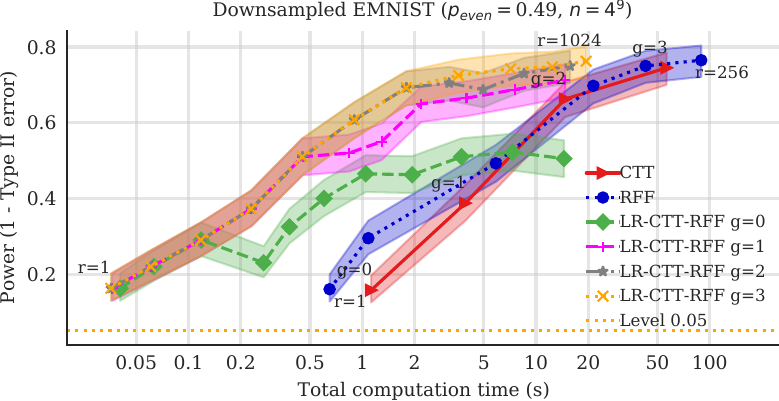}
    \end{tabular}
    \caption{\tbf{Time-power trade-off curves} in the \textsc{Gaussian} and \textsc{EMNIST} experimental settings comparing 
    (\emph{left}) \ctt to five state-of-the-art approximate MMD tests based on subsampling 
    and (\emph{right}) \lctt to the state-of-the-art low-rank MMD test based on random Fourier features (RFF).
    }
    \label{fig:gaussians_EMNIST}
\end{figure*}
}
\newcommand{\blobs}{

\begin{figure}[h!]
    \centering
    \includegraphics[width=0.48\textwidth]{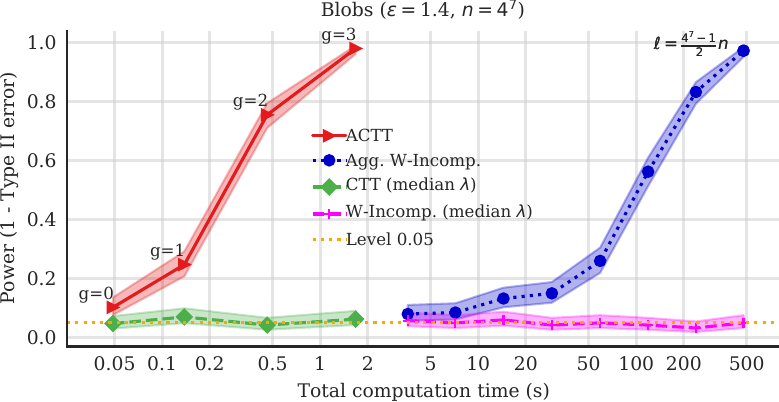}
    \includegraphics[width=0.48\textwidth]{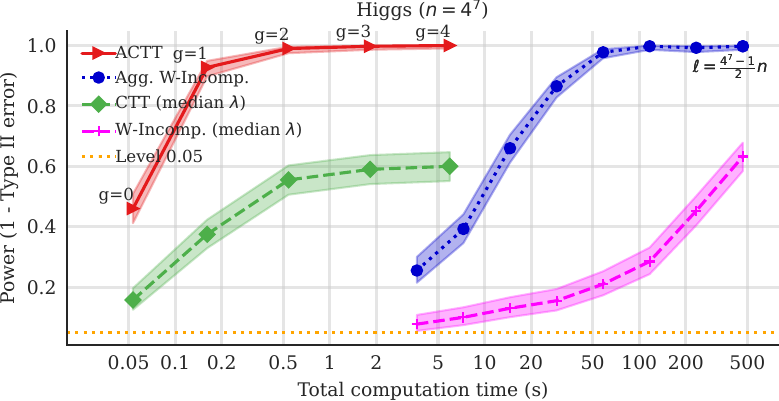}
    \includegraphics[width=0.48\textwidth]{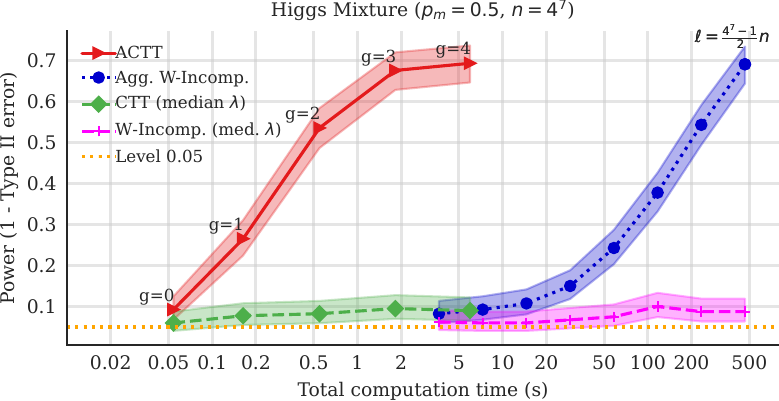}
    \caption{\tbf{Time-power trade-off curves} 
     for \actt and the state-of-the-art incomplete MMD aggregation test in the  {\textsc{Blobs}} and \textsc{Higgs} experimental settings.
    }
    \label{fig:blobs}
\end{figure}}
\begin{document}
\makeatletter
\patchcmd{\@algocf@start}%
  {-1.5em}%
  {0pt}%
  {}{}%
\makeatother

\setlength{\algomargin}{0pt}

\twocolumn[\aistatstitle{Compress Then Test: Powerful Kernel Testing in Near-linear Time}
\aistatsauthor{Carles Domingo-Enrich \And Raaz Dwivedi \And  Lester Mackey}
\aistatsaddress{Courant Institute of Mathematical Sciences\\ NYU\\ \texttt{cd2754@nyu.edu} \And  Harvard University\\ MIT\\ \texttt{raaz@mit.edu} \And Microsoft Research New England\\ \texttt{lmackey@microsoft.com}}]

\etocdepthtag.toc{mtchapter}
\etocsettagdepth{mtchapter}{subsection}
\etocsettagdepth{mtappendix}{none}

\begin{abstract}
Kernel two-sample testing provides a powerful framework for distinguishing any pair of distributions based on $n$ sample points. However, existing kernel tests either run in $n^2$ time or sacrifice undue power to improve runtime. To address these shortcomings, we introduce Compress Then Test (CTT), a new framework for high-powered  kernel testing based on sample compression. CTT cheaply approximates an expensive test by compressing each $n$ point sample into a small but provably high-fidelity coreset. For standard kernels and subexponential distributions, CTT inherits the statistical behavior of a quadratic-time test---recovering the same optimal detection boundary---while running in near-linear time. We couple these advances with cheaper permutation testing, justified by new power analyses; 
improved time-vs.-quality guarantees for low-rank approximation; and a fast aggregation procedure for identifying especially discriminating kernels. In our experiments with real and simulated data, CTT and its extensions provide 20--200x speed-ups over state-of-the-art approximate MMD tests with no loss of power.
\end{abstract}

\section{Introduction}
\label{sec:intro}
Kernel two-sample tests based on the maximum mean discrepancy \citep[MMD,][]{gretton2012akernel} can distinguish any pair of distributions given only a sufficiently large sample from each.
However, standard MMD tests have prohibitive running times that scale quadratically in the sample size $n$.
\citet{gretton2012akernel,zaremba2013btest,yamada2018post,schrab2022efficient} introduced faster approximate MMD tests based on subsampling, but each suffers from a fundamental time-quality trade-off barrier: for any pair of distributions, quadratic time is required to match the discrimination power of a standard MMD test (see \cref{thm:rates_other_tests}).
Our first contribution is a new subsampling approach called {\cttname} (\ctt) that accelerates testing by first compressing each sample.
In \cref{sec:ctt}, we prove that this approach pierces the aforementioned barrier, matching the quality of quadratic-time tests in near-linear time for subexponential distributions.
Along the way, we develop refined analyses of permutation tests, establishing their discriminating power even when permutations are restricted to preserve group structure and relatively few (e.g., $39$) permutations are employed.
In our experiments with both real and synthetic data, the CTT time-quality trade-off curves dominate those of state-of-the-art subsampling approaches, providing \speedup speed-ups.

\citet{zhao2015fastmmd} introduced an alternative, {low-rank} approach to fast approximate MMD testing that replaces the target kernel with a $\Theta(nr)$ time approximation based on $r$ random Fourier features \citep[RFFs,][]{rahimi2008random}.
This method often performs well in practice, but the guarantees of \citet{rahimi2008random,zhao2015fastmmd,sutherland2015error,sriperumbudur2015optimal} require $\Omega(n^2)$ random features and hence $\Omega(n^3)$ time to match the power of a standard MMD test.
By compressing before performing low-rank approximation, our second contribution, {\lcttname} (\lctt),  allows a user to harness any effective low-rank approximation without sacrificing the improved time-quality guarantees of CTT.  
In our experiments, this hybrid test offers the best performance of all, outpacing both the \ctt and RFF tests.

Finally, in the spirit of \citet{schrab2021mmd}, we develop {\acttname} (\actt) tests that improve power by rapidly identifying the most discriminating kernel in a collection of candidates.  In our experiments, \actt offers 100-\speedup speed-ups over the state-of-the-art efficient aggregated tests of \citet{schrab2022efficient}.

\section{Kernel Two-sample Testing}
\label{sub:two_sample}
As a standing assumption, suppose that we observe $\xs \defeq (X_i)_{i=1}^m$ and $\ys \defeq (Y_j)_{j=1}^n$, two independent sequences of datapoints drawn \iid from unknown probability measures $\P$ and $\Q$ respectively.
In two-sample testing, our goal is to decide whether the null hypothesis $\nullhypo: \P = \Q$ or the alternative hypothesis $\althypo : \P \neq \Q$ is correct.
A test $\Delta$ is a binary function of $\xs$ and $\ys$ such that the null hypothesis $\nullhypo$ is rejected if and only if $\Delta(\xs,\ys) = 1$. The \emph{size} or Type I error of the test is the probability that the null hypothesis is rejected when it is true, i.e., the probability $\mathrm{Pr}[\Delta(\xs,\ys) = 1]$ when $\P=\Q$. A test is said to have \emph{level} $\alpha \in (0,1)$ if its Type I error is bounded by $\alpha$ for all probability distributions, i.e.,  $\sup_{\P=\Q} %
\mathrm{Pr}[\Delta(\xs,\ys) = 1] \leq \alpha$. 
The Type II error of a test for a specific choice of $\P \neq \Q$ is the probability that the null hypothesis is accepted, i.e., $\mathrm{Pr}[\Delta(\xs,\ys) = 0]$. For a given level $\alpha$, our aim is to build a test with Type II error as small as possible for alternatives $\Q$ that are not too similar to $\P$. If  $\mathrm{Pr}[\Delta(\xs,\ys) = 0]=\beta$, we say that the test has \emph{power} $1-\beta$ against the alternative $\Q$.

Kernel two-sample tests are popular because they can distinguish any pair of distributions given sufficiently large samples and a {characteristic kernel} $\kernel$ \citep{gretton2012akernel}.
A \emph{characteristic kernel} is any positive-definite function $\kernel(x,y)$ \citep[Def.~4.15]{steinwart2008support} satisfying $\E_{X\sim\P}\kernel(X,x) \neq \E_{Y\sim\Q}\kernel(Y,x)$ for some $x$ whenever $\P\neq\Q$.
Common examples include Gaussian, \Matern, B-spline, inverse multiquadric (IMQ), sech, and Wendland's compactly supported kernels on $\R^d$ \citep{dwivedi2021kernel}. 
Kernel two-sample tests take the form 
$\Delta(\xs\!,\!\ys) \!\!=\!\! \indicator[T(\xs,\!\ys) \!\!>\!\! \threshold]$ where the test statistic $T(\xs,\!\ys)$ is an estimate of the squared 
\emph{maximum mean discrepancy} (MMD) between $\P$ and $\Q$, 
\begin{talign}
	\mmd^2_{\kernel}(\P, \Q)&\defeq %
	\E_{X, X'\distiid\P} \kernel(X,X') + \E_{Y, Y'\distiid\Q} \kernel(Y,Y') \\ 
	&\qquad - 2 \E_{X\sim\P\indep Y\sim\Q} \kernel(X,Y),
	\label{eq:kernel_mmd_distance}
\end{talign}
and $\threshold$ is a threshold chosen to ensure that the test has either finite-sample or asymptotic level $\alpha$.

\para{Quadratic-time or complete MMD tests}
The standard MMD test statistics 
defined in \citet{gretton2012akernel} each require $\Theta(m^2 + n^2)$ kernel evaluations and hence computation that grows quadratically in the sample sizes.
For example, \citet[Sec.~4.1]{gretton2012akernel} defines the squared sample MMD test statistic,
\begin{talign}
&\mmd^2_{\kernel}\!(\xs,\!\ys) 
    \!\defeq\! 
\frac{1}{m^2}\sum_{1\leq i, i' \leq m} \kernel(X_i, X_{i'})  \\
    &\ +\! 
\frac{1}{n^2}\sum_{1 \leq j, j' \leq n} \kernel(Y_j, Y_{j'})
    \!-\!\! 
\frac{2}{m n}\sum_{i=1}^{m}\!\! \sum_{j=1}^{n} \kernel(X_i,Y_j).
 \label{eq:biased}
\end{talign}
\citet[Lem.~6]{gretton2012akernel} also presents two unbiased estimators of $\mmd_{\kernel}^2(\P,\Q)$ as test statistics:
\begin{talign} 
&\mmd_u^2\!(\xs,\!\ys) \!\defeq\! \frac{\sum_{1\leq i \neq i' \leq m} \kernel(X_i,\! X_{i'}) }{m(m\!-\!1)} \\
&\qquad+\! \frac{\sum_{1 \leq j \neq j' \leq n} \kernel(Y_j, \!Y_{j'})}{n(n-1)} 
\!-\!\! \frac{2\sum_{i=1}^{m}\!\! \sum_{j=1}^{n} \kernel(X_i,\!Y_j)}{m n},
\label{eq:unbiased}
\\
 &\mmd^2_{\uptag}(\xs[n],\!\ys) \!\defeq\! \frac{\sum_{1 \leq i \neq j \leq n} \hkernel(X_i,\! X_{j},\! Y_i,\! Y_{j})}{n(n-1)}, \label{eq:unbiased_equal}
\end{talign}
where $\hkernel(x,x'\!,y,y')\!\! =\! \!\kernel(x,x') \!+\! \kernel(y,y') \!-\! \kernel(x,y') \!-\! \kernel(x',y)$. The estimator \cref{eq:unbiased_equal} differs from the estimator \cref{eq:unbiased} as it omits the diagonal cross-terms and is  defined only when $m=n$. 

\para{Block MMD tests} 
To improve computational cost through subsampling, \cite{zaremba2013btest} introduced block MMD tests, or \emph{B-tests} for short, that average $\frac{n}{\block}$ independent instances of the quadratic estimator \cref{eq:unbiased_equal}, each with sample size $\block$, i.e., 
\begin{talign} \label{eq:tauBdef}
&\mmdblock^2(\xs[n], \ys) \!\defeq\! \frac{\block}{n} \sum_{i=1}^{\frac{n}{\block}} \eta_i(\xs[n], \ys) \stext{with}\\
&\eta_i(\xs[n], \ys) \!\defeq\! \frac{1}{\block(\block-1)}\sum_{j,k=(i\!-\!1)\block+1, j \neq k}^{i\block} \hkernel(X_j, X_k, Y_j, Y_k).
\end{talign}
Consequently, the statistic computation takes time $\Theta(n \block)$. 
Moreover, when $\frac{n}{\block} \!\to\! \infty$, $\sqrt{\frac{n}{B}}\mmdblock^2$ has a Gaussian limit under the null that can be estimated to set $t_\alpha$. Previously, \citet[Sec.~6]{gretton2012akernel} studied a particular instantiation of this test with $\block \!=\! 2$.

\para{Incomplete MMD tests} 
\citet{yamada2018post} introduced an alternative $\Theta(\ell)$ time subsampling approximation based on \emph{incomplete} MMD test statistics, 
$\mmdincomplete^2(\xs[n],\ys)\! \defeq\! \frac{1}{\l} \sum_{(i,j) \in \mathcal{D}} \hkernel(X_i\!,X_j,\!Y_i,\!Y_j)$, with $\mathcal{D}$ a collection of $\l$ ordered index pairs.
\citet{yamada2018post} sampled pairs uniformly with replacement and set $t_\alpha$ using the Gaussian limit of $\sqrt{\l} \mmdincomplete^2$ as $\l\to\infty$.
\citet{schrab2022efficient} instead used  deterministically pre-selected index pairs and a wild bootstrap setting of $t_\alpha$ described below.

\para{Low-rank RFF tests} 
\citet{zhao2015fastmmd} proposed a complementary speed-up for MMD testing based on a low-rank MMD approximation of the form
\begin{talign}
&\mmd_{\feat}^2(\xs,\!\ys)
    \defeq \label{low-rank-mmd}\\
&\twonorm{\frac{1}{m}\sum_{i=1}^m\!\feat(x_i)\!-\!\frac{1}{n}\sum_{i=1}^n\!\feat(y_i)}^2
\end{talign}
where $\feat$ maps each sample point to an $r$-dimensional feature vector.
Specifically, \citeauthor{zhao2015fastmmd} chose $r$ RFFs to unbiasedly estimate $\mmd_{\kernel}^2$ in $\Theta((m+n)r)$ time.

\para{Permutation tests} 
For any of the aforementioned test statistics, one can alternatively set $t_\alpha$ using the following permutation approach to obtain a test with non-asymptotic level $\alpha$ \citep{romano2005exact,fromont2012kernels}.
Let $\mathbb{U}$ be the concatenation of $\xs$ and $\ys$.
For each permutation $\sigma$ of the indices $\{1,\dots,m+n\}$, define the permuted samples $\xs^{\sigma} = (U_{\sigma(i)})_{i=1}^m, \ys^{\sigma} = (U_{\sigma(m+j)})_{j=1}^n$ and the permuted statistic as $T^{\sigma} \defeq T(\xs^{\sigma},\ys^{\sigma})$. Sample $\numperm$ uniformly random permutations $(\sigma_b)_{b=1}^{\numperm}$ to obtain the values $T_b \defeq T^{\sigma_b}$ and sort them in increasing order $(T_{(b)})_{b=1}^{\numperm}$. Finally, set $t_\alpha = T_{(\ceil{(1-\alpha)(\numperm+1)})}$.

\para{Wild bootstrap tests}
Similarly, when $m=n$, the following wild bootstrap approach employed by 
\citet{fromont2012kernels} yields a non-asymptotic level $\alpha$ by exchangeability and \citet[Lem.~1]{romano2005exact}.
For each vector $\epsilon \in \{\pm 1\}^n$, define 
$T^{\epsilon} \defeq T(\mathbb{X}_n^{\epsilon},\ys^{\epsilon})$
where $\mathbb{X}_n^{\epsilon_{b}},\ys^{\epsilon_{b}}$ are constructed from $\xs[n]$ and $\ys[n]$ by swapping $X_i$ and $Y_i$ if $\epsilon_i = -1$.
Sample  $\numperm$ \iid vectors $(\epsilon_b)_{b=1}^{\numperm}$ uniformly from $\{\pm1\}^n$, compute the values $T_b \defeq T^{\epsilon_{b}}$, and finally set $t_\alpha$ as in the permutation approach.
\newcommand{\kinfnorm}{\staticinfnorm{\kernel}}
\newcommand{\ksplitinfnorm}{\staticinfnorm{\kersplit}}

\section{Compress Then Test}
\label{sec:ctt}

This section introduces \cttname (\ctt), a new framework for testing with sample compression.
\ctt relies on a new test statistic, \tmmd, that we describe and analyze in \cref{sub:tmmd}.
\cref{sub:ctt} then provides an analysis of the complete \ctt procedure detailed in \cref{algo:ctt}.
\vspace{-0mm}
\cttalg
\subsection{MMD compression with \tmmd}
\label{sub:tmmd}
At the heart of our testing strategy lies  \tmmd \cref{tmmd}, a new, inexpensive estimate for $\mmd_{\kernel}(\P, \Q)$ that builds atop the \ktcompress algorithm, a strategy introduced by \citet[Ex.~4]{shetty2022distribution} to compress a given point sequence (see \cref{sec:ktcompress} for background on \ktcompress). 
Given a coreset count $\sblock$, a target compression level $\ossymb$, and an auxiliary kernel function $\kersplit$ used by \ktcompress, \tmmd partitions each input sample into bins of size $\frac{m+n}{\sblock}$, compresses each bin into a smaller coreset of points using \ktcompress, concatenates the coresets to form the compressed approximations $\hatxs$ and $\hatys$ of size
$2^{\ossymb} m\sqrt{\frac{\sblock}{m+n}}$ and $2^{\ossymb} n\sqrt{\frac{\sblock}{m+n}}$ 
respectively, and finally computes the MMD estimate $\mmd_{\kernel}(\hatxs,\hatys)$.

As we show in \cref{proof-compression_guarantee}, this strategy offers the following strong approximation error guarantees, expressed in terms of the \ktcompress \emph{error inflation factor} $\errorsplit / 2^\ossymb$.

\begin{lemma}[Quality of \tmmd] \label{thm:compression_guarantee}
The 
\tmmd estimate~\cref{tmmd} %
satisfies\footnote{Unless otherwise specified, all of our results refer to an arbitrary setting of an algorithm's input arguments.}
\begin{talign} \label{eq:mmd_diff_x_y}
&
|\mmd_\kernel( \xs, \ys) \!- \! \mmd_\kernel( \hatxs,\hatys ) |
\\
&\quad\leq \! \frac{\errorsplit(\xs, \frac{m}{\sblock_m}, \delta,\ossymb)}{2^{\ossymb} \sqrt{m}} \!+  \! \frac{\errorsplit(\ys, \frac{n}{\sblock_n},\delta,\ossymb)}{2^{\ossymb} \sqrt{n}},
\end{talign}
with probability at least $1\!-\!\delta$ conditional on $(\xs,\ys)$, and
\begin{talign}
    \label{eq:mmd_diff_p_q}
    &\!\!\!|\mmd_\kernel(\P, \Q) \! - \! \mmd_\kernel( \hatxs, \hatys ) | 
    \\ 
    &\!\!\!\leq\! \frac{\errorsplit(\P, \frac{m}{\sblock_m}, \delta,\ossymb)}{2^{\ossymb} \sqrt{m}}\!\!+\! \!  \frac{\errorsplit(\Q, \frac{n}{\sblock_n},\delta,\ossymb)}{2^{\ossymb} \sqrt{n}}  \!\!+ \!c_{\delta} \bigparenth{\sqrt{\!\frac{\kinfnorm}{m}}\!\!+\!\!\sqrt{\!\frac{\kinfnorm}{n}}},
\end{talign}
with probability at least %
$1\!-\!3\delta$ for $c_{\delta}\!\defeq\! 2\!+\!\sqrt{2\log(\frac{%
2}{\delta}})$.
\end{lemma}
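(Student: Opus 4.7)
\textbf{Proof plan for~\cref{eq:mmd_diff_x_y}.} The plan is to reduce the left-hand side to two independent coreset compression errors and to invoke the \ktcompress MMD guarantee on each. Recalling that $\mmd_\kernel(\cdot,\cdot)$ is the $\rkhs$-distance between empirical mean embeddings, two applications of the triangle inequality yield
\baligns
|\mmd_\kernel(\xs,\ys) - \mmd_\kernel(\hatxs,\hatys)|
\leq \mmd_\kernel(\xs,\hatxs) + \mmd_\kernel(\ys,\hatys).
\ealigns
Since $\hatxs$ concatenates the \ktcompress outputs of the $\sblock_m$ bins of $\xs$, the mean embedding of $\hatxs$ in $\rkhs$ equals the average of those of the per-bin coresets $\hatxs^{(i)}$, so the RKHS vector associated with $\mmd_\kernel(\xs,\hatxs)$ decomposes as the mean of $\sblock_m$ independent per-bin error vectors. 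Invoking the sub-Gaussian MMD guarantee of \ktcompress \citep[Ex.~4]{shetty2022distribution} bin-by-bin and aggregating across the independent \ktcompress runs in Hilbert space then delivers $\mmd_\kernel(\xs,\hatxs) \leq \errorsplit(\xs,m/\sblock_m,\delta,\ossymb)/(2^{\ossymb}\sqrt{m})$ with probability at least $1-\delta/2$ conditional on $\xs$. Repeating the argument for $\ys \to \hatys$ and applying a union bound produces~\cref{eq:mmd_diff_x_y} conditional on $(\xs,\ys)$.

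\textbf{Proof plan for~\cref{eq:mmd_diff_p_q}.} I would chain one more triangle inequality,
\baligns
&|\mmd_\kernel(\P,\Q) - \mmd_\kernel(\hatxs,\hatys)|
\leq |\mmd_\kernel(\xs,\ys) - \mmd_\kernel(\hatxs,\hatys)| \\
&\qquad\qquad + \mmd_\kernel(\P,\xs) + \mmd_\kernel(\Q,\ys),
\ealigns
and bound the first summand via~\cref{eq:mmd_diff_x_y}. Each of the remaining summands is the $\rkhs$-norm of an empirical-vs-population mean-embedding difference. By Jensen's inequality, $\E[\mmd_\kernel(\P,\xs)] \leq \sqrt{\E[\mmd_\kernel^2(\P,\xs)]} \leq 2\sqrt{\kinfnorm/m}$, while the map $\xs \mapsto \mmd_\kernel(\P,\xs)$ has $2\sqrt{\kinfnorm}/m$ bounded-difference coefficients, so McDiarmid's inequality yields $\mmd_\kernel(\P,\xs) \leq 2\sqrt{\kinfnorm/m} + \sqrt{2\kinfnorm\log(2/\delta)/m}$ with probability at least $1-\delta$, and symmetrically for $\mmd_\kernel(\Q,\ys)$. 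A union bound across the three high-probability events (compression and the two empirical-vs-population deviations) yields the stated $1-3\delta$ confidence level and the constant $c_\delta = 2 + \sqrt{2\log(2/\delta)}$.

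\textbf{Main obstacle.} The delicate step is the bin-aggregation in part~\cref{eq:mmd_diff_x_y}: a naive triangle-inequality aggregation across the $\sblock_m$ bins would retain $\sqrt{m/\sblock_m}$ in the denominator, losing the key $\sqrt{\sblock_m}$ improvement that comes from running \ktcompress independently on small bins. Recovering this factor requires a Hilbert-space sub-Gaussian concentration bound for the mean of $\sblock_m$ independent, bounded mean-embedding-valued errors, or equivalently a multi-coreset version of the \ktcompress guarantee; the inflation factor $\errorsplit$ is engineered so that this improvement is absorbed into its definition, leaving the rest of the argument as a routine chain of triangle inequalities and standard MMD concentration.
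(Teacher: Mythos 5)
Your high-level plan is sound and both of your decompositions are legitimate alternatives to the paper's. In part~(a) you first split $|\mmd(\xs,\ys)-\mmd(\hatxs,\hatys)| \leq \mmd(\xs,\hatxs)+\mmd(\ys,\hatys)$ and concentrate each side separately; the paper instead bounds the difference by $\|(\mu_{\xs}-\mu_{\hatxs})-(\mu_{\ys}-\mu_{\hatys})\|_\kernel$ and runs a \emph{single} sub-Gaussian matrix-Freedman argument over the joint martingale of all $\compress$ halving calls for both samples (see \cref{eq:matrix_friedman_app}), obtaining a variance $\sigma^2\leq\sigma_X^2+\sigma_Y^2$ that is then split by $\sqrt{a+b}\leq\sqrt a+\sqrt b$. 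The two routes land on the same form; the joint argument saves a union bound and keeps a single $\sqrt{\log(m+n+1)}$ factor, but that is a constant-level difference. Likewise for part~(b), your triangle inequality into $\mmd(\P,\xs)+\mmd(\Q,\ys)$ with per-sample McDiarmid is a clean alternative to the paper's direct invocation of \cref{lem:concentration_mmd} (Theorem~7 of Gretton et al.\ on the combined quantity $|\mmd(\xs,\ys)-\mmd(\P,\Q)|$); both yield $c_\delta=2+\sqrt{2\log(2/\delta)}$ up to the bookkeeping of how you split $\delta$ between $\xs$ and $\ys$. Do be aware that the ``independent per-bin error vectors'' step is hiding the real work: each bin's error is not a simple bounded random vector but the telescoping sum of Halve-call increments with a sub-Gaussian martingale tail (\cref{eq:psi_equality}, \cref{eq:subgaussian_params}); the sub-Gaussian MMD guarantee you cite is exactly what licenses the aggregation, but making that rigorous is where \cref{lem:hanson_wright}-style / matrix-Freedman machinery is indispensable, and the $\errorsplit$ constants are defined in terms of the \emph{whole} sample $\xs$, not the individual bins $\xs^{(i)}$.

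There is, however, one genuine gap in part~(b): your $1-3\delta$ budget has no room for the conversion from the data-dependent inflation factor to the population one. Bounding the first summand via \cref{eq:mmd_diff_x_y} gives $\errorsplit(\xs,\tfrac{m}{\sblock_m},\delta,\ossymb)$ and $\errorsplit(\ys,\tfrac{n}{\sblock_n},\delta,\ossymb)$, whereas the lemma's conclusion is stated with $\errorsplit(\P,\cdot)$ and $\errorsplit(\Q,\cdot)$. Passing from one to the other is a separate high-probability event — the paper spends a dedicated $\delta$ on it in \cref{eq:R_bound} (the event $\errorsplit(\xs,\cdot)\leq\errorsplit(\P,\cdot)$ and $\errorsplit(\ys,\cdot)\leq\errorsplit(\Q,\cdot)$, driven by the definition around \cref{eq:M_P_def} and the order-statistic bound on $\rminpn[\xs]$ in \cref{prop:rmin_P_xs}). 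Since you already allocate all three $\delta$'s to compression and the two McDiarmid deviations, your argument as written either proves the bound with $\errorsplit(\xs,\cdot),\errorsplit(\ys,\cdot)$ in place of the stated population quantities, or proves the stated bound only at confidence $1-4\delta$. The fix is a rebudget (e.g., $\delta/2$ for each McDiarmid bound, freeing $\delta$ for the transfer, at the cost of a slightly larger $c_\delta$), but the step itself must appear.
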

\begin{remark}[Beyond \iid data]
Our proof shows that the guarantee \cref{eq:mmd_diff_x_y} holds more generally for any point sequences $(\xs,\ys)$ generated independently of the randomness in \tmmd.
\end{remark}
\begin{remark}[Beyond \ktcompress]
    \tmmd and \ctt are compatible with any  compression scheme. In particular, when an  alternative compression algorithm is used in place of \ktcompress in \cref{algo:ctt}, the  conclusions of \cref{thm:compression_guarantee,thm:uniform_separation} can be straightforwardly generalized to accommodate the quality guarantees of that alternative.
\end{remark}

The first guarantee of \cref{thm:compression_guarantee} bounds the compression error introduced by substituting the compressed points $(\hatxs,\hatys)$ for $(\xs,\ys)$, while the second accounts also for the  $\Theta(\frac{1}{\sqrt{m}} \!+\! \frac{1}{\sqrt{n}})$  random fluctuations of the quadratic-time statistic $\mmd_\kernel(\xs,\ys)$ around the population estimand $\mmd_\kernel(\P,\Q)$ \citep{gretton2012akernel}.
In either case, we find that \tmmd offers an order $O(\frac{1}{\sqrt{m}} \!+\! \frac{1}{\sqrt{n}}$ approximation---the same order as the quadratic-time $\mmd_\kernel(\xs,\ys)$ estimate---up to the inflation factor $(1\!+\!{\errorsplit}/{2^\ossymb})$.

The value $\errorsplit(\P,m,\delta,\ossymb)$ depends on the choice of the auxiliary kernel $\kersplit$ and the tail decay of $\P$ (see \cref{sub:inflation_factor} for details).\footnote{The related value $\errorsplit(\xs,m,\delta,\ossymb)$ is $\errorsplit(\cdot,m,\delta,\ossymb)$ applied to the empirical distribution over $\xs$.}
Two standard choices for $\kersplit$ are the target kernel $\kernel$ itself~\citep{dwivedi2022generalized} or a \emph{square-root kernel} $\ksqrt$ satisfying $\kernel(x,y) = \int \ksqrt(x,z)\ksqrt(y,z) dz$.
As detailed in \citet{dwivedi2021kernel}, convenient square-root (or square-root dominating) kernels are available for a variety of popular kernels including Gaussian, \Matern, B-spline, inverse multiquadric (IMQ), sech, and Wendland’s compactly supported $\kernel$.

\cref{table:error_tails} summarizes how $\errorsplit(\P,m,\delta,\ossymb)$ varies with $\kersplit$ and $\P$.  For example, when $\P$, $\Q$, and $\kersplit=\ksqrt$ are compactly supported, $\errorsplit(\P,m,\delta,\ossymb) = O((\log\frac{m}{\delta})^2)$ and hence the compression error \cref{eq:mmd_diff_x_y} of \cref{thm:compression_guarantee} becomes
\begin{talign}
O(\frac{(\log\frac{m}{\delta})^2}{2^{\ossymb} \sqrt{m}}  \!+ \!  \frac{(\log\frac{n}{\delta})^2}{2^{\ossymb} \sqrt{n}})  
    =
o(\frac{1}{\sqrt{m}}  \!+ \!  \frac{1}{\sqrt{n}}),
\end{talign}
when $\ossymb = \log_2(\omega(\log^2(\frac{m \vee n}{\delta})))$.
More generally, the \tmmd compression error is asymptotically negligible relative to the usual error of $\mmd_\kernel(\xs,\ys)$ whenever $\ossymb = \log_2 (\omega(\errorsplit(\P,m\vee n,\delta,\ossymb)))$.
For example, if  $\P$, $\Q$, and 
$\kersplit=\ksqrt$ have 
have subexponential tails then, for some constant $c>0$, the choice $\ossymb \ge c\log_2\log (m\vee n) $ yields $o(\frac{1}{\sqrt{m}}  \!+ \!  \frac{1}{\sqrt{n}})$ compression error. By \cref{table:error_tails}, the same result holds when $\kersplit =\kernel$ is analytic.  Together, these results cover all of the aforementioned popular kernels.

We next turn our attention to the running time of \tmmd. By \citet[Ex.~4]{shetty2022distribution}, the runtime of each $\ktcompress(\xs^{(i)},\ossymb,\kernel,\kersplit,\delta)$ call is dominated by $O(4^\ossymb \frac{m+n}{\sblock}(\log_4 (\frac{m+n}{\sblock}) - \ossymb))$ kernel evaluations.
Since $\mmd_\kernel(\hatxs,\hatys)$ can be computed using $O(4^{\ossymb} \sblock (m + n))$ kernel evaluations once $\hatxs$ and $\hatys$ are available, the total runtime of \tmmd is 
\begin{talign}\label{tmmd-runtime}
O(4^{\ossymb} (m+n)(\sblock + \log_4 (\frac{m+n}{\sblock})- \ossymb)).
\end{talign}
Notably, this runtime is $O((m+n)\log_4^{c+1} (m+n))$, \emph{near-linear} in $m+n$, whenever $\sblock=O(\log_4(m+n))$ and $\ossymb\leq c\log_4 \log (m+n)$, as in the subexponential and compact-support settings previously considered.

\begin{table}[t]
    \centering
  \resizebox{0.48\textwidth}{!}
  {
    {
    \renewcommand{\arraystretch}{2}
    \begin{tabular}{ccc}
        \toprule
          \Centerstack{\bf Tails of $\P$}
        & \Centerstack{\bf Choice of $\kersplit$} 
        & %
        $\errorsplit(\P, m,\delta,\ossymb)$
        \\[0mm]
        \midrule 

        \Centerstack{ Compact}
        & \Centerstack{Compact $\ksqrt$ }
        & $(\log\frac{m}{\delta})^2$
        \\[0mm]
        
        \Centerstack{ Subexponential}
        & \Centerstack{Analytic $\kernel$}
        & \Centerstack{$(\log\frac{m}{\delta})^{\frac{3d+5}{2}}$}
         
        \\[0mm]
        
        \Centerstack{ Subexponential}
        & \Centerstack{Subexponential $\ksqrt$ } 
        & $c_{m,\delta}  (\log\frac{m}{\delta})^{\frac{d+5}{2}}$
        \\[0mm]
        
        \Centerstack{ $\rho$-Heavy-tailed} 
        &\Centerstack{$\rho$-Heavy-tailed $\ksqrt$} 
        & $(\frac{m}{\delta})^{\frac{d}{2\rho}}  (\log\frac{m}{\delta})^{\frac{5}{2}}$
        \\[1ex] \bottomrule \hline
    \end{tabular}
    }
    }
    \caption{\tbf{Error inflation due to compression.} We report the scaling of $\errorsplit$ in \cref{thm:compression_guarantee} up to constants depending on $d$ under various assumptions on $\kersplit$ and the tail decay of $\P$ (see \cref{sub:proof_of_tab_1} for the proof). 
    Here $c_{m,\delta} \defeq \sqrt{\log\log\frac{m}{\delta}}$.
    } 
 \label{table:error_tails}
\end{table}

\begin{table}[t]
    \centering
  \resizebox{0.48\textwidth}{!}
  {
    {
    \renewcommand{\arraystretch}{2}
    \begin{tabular}{ccc}
        \toprule
        \bf Test name
        & \bf MMD separation
        & \bf Runtime
        \\[0mm]
        \midrule 

        \Centerstack{\bf \ctt \\ \scriptsize{(ours, \cref{thm:uniform_separation})}} 
        &  $\frac{\errorsplit(\P,m,\delta,\ossymb)}{2^{\ossymb}\sqrt{m}} \!+\! m^{-\half}$ 
        & $4^{\ossymb} m \log m$ 
        \\ [0mm]
        
        \Centerstack{\bf Complete MMD \\ {\scriptsize{\citep{gretton2012akernel}}}}
        & $m^{-\half}$ 
        & $m^2$

        \\ [0mm]
        \Centerstack{\bf Block MMD \\ {\scriptsize{\citep{zaremba2013btest}}}}
        & $(Bm)^{-\quarter}$ 
        & $Bm$  

        \\ [0mm]
        \Centerstack{\bf Incomp. MMD \\ \scriptsize{\citep{yamada2018post}}} 
        & $\l^{-\quarter}$
        & $\l$
        \\[1ex] \bottomrule \hline
    \end{tabular}
    }
    }
    \caption{\tbf{Detectable $\mmd(\P,\Q)$ separation vs.\ runtime} for complete and approximate MMD tests.
    For subexponential $(\P,\Q)$, \ctt can detect 
    $m^{-\half}$ MMD separation in near-linear time, while the complete, block, and incomplete tests require quadratic time. See \cref{sub:ctt} for more details.
    } 
 \label{table:power_runtime}
\end{table}

\subsection{\cttname}
\label{sub:ctt}
We are now prepared to discuss our complete \ctt procedure defined in \cref{algo:ctt}.
\ctt begins by computing the \tmmd test statistic  described in \cref{sub:tmmd} but then reuses the coresets to carry out a special form of the permutation test.  Rather than permuting all $m+n$ points as in the standard permutation tests of \cref{sub:two_sample}, \ctt keeps each coreset intact and only permutes the order of the $\sblock$ coresets when setting the test threshold.  

The advantages of coreset reuse are threefold. First, the compression step can be carried out just once, irrespective of the number of permutations employed.
Second, the same kernel evaluations used to compute the initial test statistic \cref{tmmd} can be reused to compute every permuted \tmmd.
Indeed, when forming the initial test statistic, it suffices to store the $\sblock^2$ sufficient statistics 
\begin{talign}
\label{eq:coreset-sufficient-stats}
&a_{ij}
    =
\frac{1}{|\hatzs^{(i)}||\hatzs^{(j)}|}\sum_{z\in\hatzs^{(i)}, z'\in\hatzs^{(j)}}\kernel(z,z')
    \qtext{for} \\
&(\hatzs^{(1)},\dots,\hatzs^{(\sblock)})
    \defeq
(\hatxs^{(1)},\dots,\hatxs^{(s_m)}, 
\hatys^{(1)}, \dots,\hatys^{(s_n)})
\end{talign}
since each permuted \tmmd can be written as
\begin{talign}
\mmd^2_\kernel(\hatxs^b, \hatys^b) =  \sum_{i,j = 1}^\sblock\textfrac{1 - 2 |\indic{i\leq s_m} - \indic{j\leq s_m}|}{s^2}  a_{\sigma(i)\sigma(j)}
\end{talign}
for some permutation $\sigma$ over the coreset indices $\{1,\dots, \sblock\}$.
Hence, the total running time of \ctt is simply the running time \cref{tmmd-runtime} of a single \tmmd call  plus $O(\sblock^2\numperm)$ arithmetic operations.

Finally, by keeping each coreset intact, \ctt
ensures that every coreset permutation $(\hatxs^b, \hatys^b)$ accurately approximates an analogous full-sample permutation that keeps each of the 
$ ( \xs^{(i)} )_{i=1}^{\sblock_m}$ and  
  $ ( \ys^{(i)} )_{i=1}^{\sblock_n}$
bins intact and only permutes the order of the $\sblock$ bins.
One of the main contributions of this work is showing that such restricted permutation procedures provide high power even when $\sblock$ is set to a small value.  However, before turning to power, we next show that \ctt has a size exactly equal to the nominal level $\alpha$ for all sample sizes and all data distributions $\P$.

\begin{proposition}[Finite-sample exactness of \ctt] \label{thm:validity}
For any distribution $\P$, \cttname (\cref{algo:ctt}) has size (Type I error) exactly equal to the nominal level $\alpha$, i.e.,
    \begin{align}
        \mathrm{Pr}[\deltactt(\xs, \ys) = 1] = 
        \alpha \qtext{whenever} \P=\Q.
    \end{align}
\end{proposition}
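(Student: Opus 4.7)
The plan is to reduce the exactness claim to the standard exact-level guarantee of a randomized permutation test \citep{romano2005exact}, once I have established exchangeability at the level of coresets rather than individual sample points. The only conceptual subtlety is that \cref{algo:ctt} only ever permutes the $\sblock$ coresets, so the natural exchangeable object is the coreset tuple, not the raw $m+n$ sample points.

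First I would observe that under the null $\P=\Q$ the concatenation $(X_1,\dots,X_m,Y_1,\dots,Y_n)$ is \iid $\P$, so the $\sblock=\sblock_m+\sblock_n$ equal-sized bins $\xs^{(1)},\dots,\xs^{(\sblock_m)},\ys^{(1)},\dots,\ys^{(\sblock_n)}$ produced by sequential partitioning form an exchangeable $\sblock$-tuple of random point sets. Since \cref{algo:ctt} invokes \ktcompress on each bin with independent internal randomness, the compressed coresets $(\hatzs^{(i)})_{i=1}^{\sblock}\defeq(\hatxs^{(1)},\dots,\hatxs^{(\sblock_m)},\hatys^{(1)},\dots,\hatys^{(\sblock_n)})$ inherit exchangeability as a $\sblock$-tuple.

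Next I would express every candidate statistic as a common deterministic function of the coreset tuple. Let $T(\mathbb{A}_1,\dots,\mathbb{A}_\sblock)\defeq\mmd_\kernel(\textsc{Concat}(\mathbb{A}_1,\dots,\mathbb{A}_{\sblock_m}),\textsc{Concat}(\mathbb{A}_{\sblock_m+1},\dots,\mathbb{A}_\sblock))$. Then $M_{\numperm+1}=T(\hatzs^{(1)},\dots,\hatzs^{(\sblock)})$ corresponds to the identity permutation, and each $M_b$ corresponds to an independent uniform permutation $\sigma_b$ on $\{1,\dots,\sblock\}$ via $M_b=T(\hatzs^{(\sigma_b(1))},\dots,\hatzs^{(\sigma_b(\sblock))})$. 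Exchangeability of $(\hatzs^{(i)})_{i=1}^{\sblock}$ then yields exchangeability of the augmented sequence $(M_1,\dots,M_\numperm,M_{\numperm+1})$.

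Finally I would close with the standard randomized-rank argument: for any exchangeable sequence of $\numperm+1$ real random variables, the position $R$ of a designated element in the increasing order, with ties broken uniformly at random, is uniform on $\{1,\dots,\numperm+1\}$. The rejection rule of \cref{algo:ctt} then rejects with probability $\mathrm{Pr}(R>b_\alpha)+p_\alpha\,\mathrm{Pr}(R=b_\alpha)=\frac{\numperm+1-b_\alpha}{\numperm+1}+\frac{p_\alpha}{\numperm+1}=\alpha$ by the definitions of $b_\alpha$ and $p_\alpha$. The main obstacle is cleanly handling the internal randomness of \ktcompress across the $\sblock$ bins: I would model each bin's compression as a measurable function of the bin together with an independent auxiliary random seed, so that exchangeability of the bins lifts directly to exchangeability of the coresets with no hidden coupling between blocks.
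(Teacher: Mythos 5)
Your proof is correct and uses the same strategy as the paper's: establish exchangeability of the $\numperm+1$ statistics under the null and then apply the uniform-rank calculation with randomized tie-breaking to get exactly $\alpha$. The only difference is presentational—the paper asserts exchangeability of $(M_b)_{b=1}^{\numperm+1}$ directly from exchangeability of the raw points, whereas you spell out the intermediate steps (exchangeable bins $\Rightarrow$ exchangeable coresets via independent \ktcompress randomness $\Rightarrow$ exchangeable statistics), which is a faithful expansion of the paper's one-line claim.
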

\vspace{-4mm}
\begin{remark}[Exchangeability]
Our proof in \cref{proof-validity} does not require datapoint independence and rather holds under the weaker condition that the point sequence $(X_1,\dots, X_m, Y_1,\dots, Y_n)$ is \emph{exchangeable} under the null, i.e., the null distribution of this point sequence is invariant under permutation.
\end{remark}

Our proof of \cref{thm:validity}, based on exchangeability, parallels the size-no-larger-than-level proofs of \citet{schrab2021mmd, albert2022adaptive} but includes a more detailed treatment of the case $R = b_{\alpha}$ to ensure the exactness of the Type I error, as in \citet{hoeffding1952large}.

We now provide a complementary upper bound on the Type II error of \ctt (or equivalently, a lower bound on its power) under suitable assumptions on the MMD separation between $\P$ and $\Q$.

\begin{theorem}[Power of \ctt]
\label{thm:uniform_separation}
Suppose \cttname (\cref{algo:ctt}) is run with $m \leq n$, level $\alpha$, replication count $\numperm \geq \frac{1}{\alpha}- 1$, coreset count %
$\sblock_m \geq \frac{32}{9} \log(\frac{2e}{\gamma})$  for
    $\gamma \defeq \frac{\alpha}{4e} (\frac{\tilde\beta}{4})^{\frac{1}{\floor{\alpha (\numperm\!+\!1)}}}$ and $\tilde{\beta} \defeq \frac{\beta}{1+\beta/2}$.
Then \ctt 
has power 
\begin{talign} \label{eq:power_condition_main}
    \Pr[\deltactt(\mathbb{X}_m,\mathbb{Y}_n)=1] &\geq 1\!-\!\beta 
    \end{talign}
whenever $c' \mmd_{\kernel}(\P,\Q)/\sqrt{\log({1/}{\gamma})}$ is greater than
\begin{talign}
    &2c_{\nicefrac{\tilde{\beta}}{20\sblock}}\sqrt{\frac{\kinfnorm}{m}}  \!+\! \frac{\errorsplit(\P, \frac{m}{\sblock_m}, \frac{\tilde{\beta}}{20\sblock_m},\ossymb)+\errorsplit(\Q, \frac{m}{\sblock_m}, \frac{\tilde{\beta}}{20\sblock_n},\ossymb)}{2^{\ossymb} \sqrt{m}}
\end{talign}
for %
$c'$ a universal constant and $c_{\delta}$ as defined in \cref{thm:compression_guarantee}.
\end{theorem}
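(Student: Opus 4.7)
The plan is to (i) lower-bound the test statistic $M_{\numperm+1} = \mmd_\kernel(\hatxs,\hatys)$ using \cref{thm:compression_guarantee}, (ii) upper-bound each permuted statistic $M_b$ by exploiting the hypergeometric structure induced by random coreset permutations, and (iii) combine via a binomial-type tail bound to certify that the rank of $M_{\numperm+1}$ among $(M_b)_{b=1}^{\numperm+1}$ exceeds the rejection threshold $b_\alpha$ with probability at least $1-\beta$. For step (i), I would apply the second bound of \cref{thm:compression_guarantee} with $\delta = \tilde\beta/20$ (split across the $\P$- and $\Q$-sample bins) to obtain $M_{\numperm+1} \geq \mmd_\kernel(\P,\Q) - \varepsilon_1$ on a high-probability event, where $\varepsilon_1$ is a constant multiple of the quantity on the right-hand side of the separation hypothesis.

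For step (ii), I would write each permuted statistic as $M_b = \|\bar\mu_{X,\sigma_b} - \bar\mu_{Y,\sigma_b}\|_\rkhs$ with $\bar\mu_{X,\sigma_b}, \bar\mu_{Y,\sigma_b}$ the RKHS mean embeddings of $\hatxs^{\sigma_b}, \hatys^{\sigma_b}$, and decompose the coreset mean embeddings as $\mu_i = \mu_\P + \eta_i$ for $i \leq \sblock_m$ (coresets compressed from $\P$-bins) and $\mu_i = \mu_\Q + \eta_i$ otherwise, with each $\|\eta_i\|_\rkhs$ controlled by a per-bin compression-plus-sampling error. An algebraic manipulation shows that the signal portion of $\bar\mu_{X,\sigma_b} - \bar\mu_{Y,\sigma_b}$ equals $(\mu_\P - \mu_\Q)\cdot(a_{\sigma_b} - \sblock_m^2/\sblock)\cdot\sblock/(\sblock_m\sblock_n)$, where $a_{\sigma_b} \coloneqq |\{i \leq \sblock_m : \sigma_b(i) \leq \sblock_m\}|$ is hypergeometric with mean $\sblock_m^2/\sblock$. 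Serfling's tail bound for sampling without replacement then yields a marginal bound
\begin{talign*}
M_b \;\leq\; c_1\,\mmd_\kernel(\P,\Q)\sqrt{\log(1/\delta)/\sblock} + c_2\,\varepsilon_1
\end{talign*}
with probability at least $1-\delta$, for absolute constants $c_1,c_2$.

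For step (iii), I set $\delta = \gamma$ and define $N \coloneqq |\{b \in [\numperm]: M_b \geq c^\star\}|$ for the resulting threshold $c^\star$. Conditional on the coresets, the $(\sigma_b)_{b=1}^{\numperm}$ are iid uniform random permutations, so the $(M_b)$ are conditionally iid and $N$ is conditionally Binomial with parameter $q \leq \gamma$. A binomial tail bound of the form $\Pr[N \geq K] \leq (e\numperm q/K)^K$ with $K \approx \alpha(\numperm+1)$ forces $N < \numperm + 1 - b_\alpha$ with probability $\geq 1 - \tilde\beta/2$---the specific form of $\gamma$ in the theorem statement is precisely engineered so that this probability equals $\tilde\beta/2$ after marginalizing over the coresets. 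Combined with step (i) and the hypothesis $\sblock_m \geq \tfrac{32}{9}\log(2e/\gamma)$---which makes the coefficient $c_1\sqrt{\log(1/\gamma)/\sblock}$ strictly below one---the separation hypothesis implies $M_{\numperm+1} > c^\star$, so the rank of $M_{\numperm+1}$ exceeds $b_\alpha$ and \ctt rejects.

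The hard part will be carefully threading the coupled failure probabilities and absolute constants through the three bounds---compression, hypergeometric concentration, and binomial tail---while ensuring that the ``signal leakage'' term $c_1\mmd_\kernel(\P,\Q)\sqrt{\log(1/\gamma)/\sblock}$ is absorbed into the left-hand side of the separation inequality (via the factor $c'$) rather than added as an extra additive requirement. This absorption is precisely why the stated hypothesis only involves $\sqrt{\log(1/\gamma)}$ multiplied by a quantity independent of $\mmd_\kernel(\P,\Q)$, and it is enabled by the lower bound on $\sblock_m$.
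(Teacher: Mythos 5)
There is a genuine gap in your step (ii). Your decomposition $\mu_i = \mu_\P + \eta_i$ (or $\mu_\Q + \eta_i$) is correct, and your computation that the signal portion of $\bar\mu_{X,\sigma_b} - \bar\mu_{Y,\sigma_b}$ is $(\mu_\P - \mu_\Q)(a_{\sigma_b} - \sblock_m^2/\sblock)\,\sblock/(\sblock_m\sblock_n)$ with $a_{\sigma_b}$ hypergeometric is right, and Serfling does control that signal term to order $\mmd_\kernel(\P,\Q)\sqrt{\log(1/\delta)/\sblock_m}$. The problem is the noise portion $\bigl\|\sum_j c_j\eta_j\bigr\|_{\rkhs}$, where $c_j\in\{1/\sblock_m,-1/\sblock_n\}$ are determined by $\sigma_b$. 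Each $\|\eta_j\|_{\rkhs}$ is a \emph{per-bin} error of order $\sqrt{\sblock_m/m}$ (compression on a bin of $m/\sblock_m$ points plus that bin's sampling fluctuation), whereas the error appearing on the right of the theorem's separation hypothesis is the \emph{full-sample} error of order $1/\sqrt{m}$; so $\|\eta_j\|_{\rkhs}$ is roughly $\sqrt{\sblock_m}$ times $\varepsilon_1$. The deterministic triangle-inequality bound $\|\sum_j c_j\eta_j\|\le\sum_j|c_j|\,\|\eta_j\| = 2\max_j\|\eta_j\|$ therefore gives $\approx\sqrt{\sblock_m}\,\varepsilon_1$, not $\varepsilon_1$. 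To reach the theorem's rate you must exploit cancellation coming from the random signs of the $c_j$ (which have zero mean and near-independent structure under the permutation), and that amounts to concentrating the quadratic form $\sum_{j,j'}c_jc_{j'}\langle\eta_j,\eta_{j'}\rangle_{\rkhs}$ about its mean. The paper does exactly this with a Rademacher coupling of the coreset permutation and a Hanson--Wright inequality (their $\tilde\epsilon^\top A\tilde\epsilon$ bound, with $\|A\|_F \lesssim (\mmd\,W + W^2)/\sblock_m$); this is the step that produces the crucial $1/\sqrt{\sblock_m}$ gain that turns per-bin error into full-sample error. Serfling on the scalar count $a_{\sigma_b}$ does not deliver this: it only controls the signal, not the vector-valued noise.

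Your steps (i) and (iii) are essentially the paper's. In particular the binomial-tail/rank argument in (iii) is mathematically equivalent to the paper's order-statistic argument: they transform via a randomized CDF so that $\mathfrak{F}(M_{(b_\alpha)})$ is a Beta order statistic, and the regularized incomplete Beta function $I_x(m,n+1-m) = \sum_{j\ge m}\binom{n}{j}x^j(1-x)^{n-j}$ is precisely the binomial tail you invoke; the choice of $\gamma$ and the requirement $\numperm\ge\alpha^{-1}-1$ fall out the same way in both formulations. And your intuition that $\sblock_m\ge\tfrac{32}{9}\log(2e/\gamma)$ exists to keep the signal-leakage coefficient below one, so the separation condition can be solved as a quadratic inequality in $\sqrt{\mmd_\kernel(\P,\Q)}$, is exactly what the paper does. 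So the proposal is sound in outline but is missing the quadratic-form concentration that is the mathematical heart of the result; without it the argument proves a version of the theorem with an extra $\sqrt{\sblock_m}$ factor on the compression-error terms.
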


\begin{remark}[Valid parameter values]
   The \ctt compression level $\mathfrak{g}$ is an integer in $\{0,\dots,\log_4 (\frac{m+n}{s})\}$---a larger value provides more power but increases runtime. The failure probability $\delta$ and level $\alpha$ take arbitrary values in $(0,1)$, while the coreset count $s \leq m+n$ and replicate count $\mathcal{B}$ are positive integers.
\end{remark}

The proof of \cref{thm:uniform_separation} in \cref{proof-uniform_separation} contains several novel arguments that may be of independent interest.
First, using novel techniques based on order statistics, we show that $\numperm \geq \frac{1}{\alpha}\!-\! 1$ permutations suffice to obtain a powerful permutation test.  
Our arguments can be straightforwardly adapted to strengthen the analogous results for the complete \citep[Thm.~5]{schrab2021mmd} and incomplete \citep[Thm.~5.2]{schrab2022efficient} permutation tests.
Compared with the $\numperm \geq \frac{3}{\alpha^2} (\log(\frac{8}{\beta}) \!+\! \alpha(1\!-\!\alpha))$ requirement of \citet{schrab2021mmd, schrab2022efficient}, our requirement  eliminates all dependence on the target power $1-\beta$ and improves the $\alpha$ dependence by a quadratic factor.
Put in practical terms, by \cref{thm:uniform_separation}, $\numperm \geq 19$ permutations suffice for powerful permutation testing at level $\alpha=0.05$ while $\numperm \geq 2613$ were previously required to guarantee power greater than the level. 
Second, we show that to obtain a powerful permutation test, one need not permute all $m+n$ datapoints; rather, it suffices to permute $\sblock$ bins where the number $\sblock$ can be chosen \emph{independently} of the sample sizes.
In the end, \cref{thm:uniform_separation} implies that \ctt with a small number of coresets and permutations can detect distributional discrepancies of order $\frac{1}{\sqrt{m}}$---the same detection threshold enjoyed by the quadratic-time MMD tests \citep[Thm.~13]{gretton2012optimal}----up to the inflation factor $(1+\errorsplit/2^{\ossymb})$.
Since $\errorsplit/2^{\ossymb}=o(1)$ whenever $\ossymb =  \log_2 (\omega(\errorsplit))$ and the runtime of \ctt is dominated by a single \tmmd computation, by setting $\kersplit$, $\ossymb$, and $\sblock$ as discussed in \cref{sub:tmmd}, \ctt can recover the quadratic-time detection threshold in \emph{near-linear} $O((m+n)\log_4^{c+1}(m+n))$ time for subexponential $(\P,\Q)$ and subquadratic time for heavy-tailed $(\P,\Q)$ with $\rho > 2d$ moments.

Our next result shows that such runtime improvements \emph{cannot} be achieved by the  state-of-the-art block and incomplete MMD tests of \cref{sub:two_sample}, as each requires quadratic time (i.e., $B=\Omega(m)$ or $\l = \Omega(m^2)$) to match the order $\frac{1}{\sqrt{m}}$ detection threshold of a complete MMD test.

\begin{proposition}[Power upper bounds for complete, block, and incomplete MMD tests] \label{thm:rates_other_tests}
For any nominal level $\alpha \in (0,1)$ and target Type II error $\beta \in (0,1)$, there exists a constant $c_{\alpha,\beta}$ such that the following power upper bounds hold for all sample sizes $m$. %
\begin{enumerate}[label=(\alph*),leftmargin=*]
\itemsep0em
    \item \label{thm:rate_complete} %
    Asymptotic complete test: $\Pr[\Delta_{\trm{up}}\!(\mathbb{X}_m,\mathbb{Y}_m) \!=\! 1] \!<\! 1\!-\!\beta$ if $\mmd(\P,\!\Q) \!\leq\! \frac{c_{\alpha,\beta}}{\sqrt{m}}$.
    \item \label{thm:rate_block} 
    Asymptotic block test:  $\Pr[\Delta_{B}\!(\mathbb{X}_m,\mathbb{Y}_m) \!=\! 1] \!<\! 1\!-\!\beta$ if $\mmd(\P,\!\Q) \!\leq\! \frac{c_{\alpha,\beta}}{(B m)^{1/4}}$ and $B,\!\frac{m}{B} \!\to\! \infty$.
    \item \label{thm:rate_incomplete} %
    Asymptotic incomplete test: $\Pr[\Delta_{\trm{inc}}(\mathbb{X}_m,\!\mathbb{Y}_m)\!=\!1] \!<\! 1\!-\!\beta$ if $\mmd(\P,\Q) \!\leq\! \frac{c_{\alpha,\beta}}{\l^{1/4}}$, and $\frac{\l}{m}\!\to \!c\!>\!0$.
\end{enumerate}
\end{proposition}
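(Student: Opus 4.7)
The plan is to exhibit, for each part, a shrinking family of alternatives $(\P, \Q_m)$ that sits exactly at the stated rate and to upgrade the corresponding known null central limit theorem (CLT) for the test statistic to a local-alternative CLT differing only by a mean shift; this reduces the problem to a one-parameter power-curve analysis whose value at zero is $\alpha$. Concretely, I would fix a bounded characteristic kernel (e.g.\ the Gaussian $\kernel$), take $\P = \gN(0, I_d)$ and $\Q_m = \gN(\mu_m, I_d)$, and choose $\|\mu_m\|$ so that $\mmd_\kernel(\P,\Q_m)$ equals $c/\sqrt{m}$ in part (a), $c/(Bm)^{1/4}$ in part (b), and $c/\l^{1/4}$ in part (c). This is possible because $\mu \mapsto \mmd_\kernel(\P, \gN(\mu, I_d))$ is continuous, vanishes only at $\mu = 0$, and covers a neighborhood of zero.

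Next, I would invoke the $\P=\Q$ CLTs of \citet[Thm.~12]{gretton2012akernel}, \citet[Thm.~1]{zaremba2013btest}, and \citet{yamada2018post} to obtain the null limit laws $m\,\mmd_{\uptag}^2 \todist L_{\mrm{c}} \defeq \sum_k \lambda_k(Z_k^2 - 1)$, $\sqrt{mB}\,\mmdblock^2 \todist \gN(0,\sigma_{\mrm{b}}^2)$, and $\sqrt{\l}\,\mmdincomplete^2 \todist \gN(0,\sigma_{\mrm{i}}^2)$, respectively. I would then show that under the matching local alternatives these extend to $L_{\mrm{c}} + c^2$, $\gN(c^2, \sigma_{\mrm{b}}^2)$, and $\gN(c^2, \sigma_{\mrm{i}}^2)$. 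The key observation is that the non-degenerate (linear) variance component of each U-statistic scales like $\mmd^2/m$ or $\mmd^2/\l$, which is vanishingly small relative to the degenerate null variance along the chosen shrinking alternatives, so only the mean shifts. This can be verified either by direct second-moment calculations feeding into a Lindeberg CLT for triangular arrays or, more cleanly, via Le Cam's third lemma after establishing contiguity of $\P^{\otimes 2m}$ and $\P^{\otimes m} \otimes \Q_m^{\otimes m}$.

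Given these local-alternative limits and the level-$\alpha$ rejection rule ``reject iff the statistic exceeds the $(1-\alpha)$-quantile $t_\alpha$ of the null limit'', the limiting power equals $\Pr[L + c^2 > t_\alpha]$ where $L$ denotes the appropriate null limit. This is a continuous, nondecreasing function of $c \ge 0$ that equals $\alpha$ at $c=0$, so for any $\beta \in (0, 1-\alpha)$ (the only regime in which the upper bound is non-trivial) choosing $c_{\alpha,\beta}$ small enough forces the limit power strictly below $1-\beta$. The bound for every finite $m$ then follows by folding the asymptotic $o(1)$ slack into $c_{\alpha,\beta}$.

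The principal technical obstacle is the rigorous justification of each local-alternative CLT, in particular controlling the cross-terms between the degenerate and non-degenerate components of each U-statistic along a contiguous sequence of $(\P, \Q_m)$. The incomplete case additionally requires integrating out the randomness of the index set $\mathcal{D}$, which is handled by conditioning on $\mathcal{D}$ and invoking the Lindeberg conditions already verified by \citet{yamada2018post} for deterministic $\mathcal{D}$.
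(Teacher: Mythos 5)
Your proposal takes a genuinely different route from the paper's. The paper invokes the known asymptotic null and alternative distributions of each statistic directly for an \emph{arbitrary} pair $(\P,\Q)$, and then solves a quadratic inequality in $\mmd(\P,\Q)$ to extract a necessary condition for power $\geq 1-\beta$. Its technical linchpin is the deterministic Cauchy--Schwarz bound $\sigma_{\uptag}^2 \leq 4\,\mmd^2(\P,\Q)\,\E_z[\hkernel(z,z)^2]$, which couples the alternative-limit variance to the MMD separation and is what makes the quadratic in $\mmd$ close. You sidestep that bound by fixing a specific shrinking Gaussian family at each critical rate and arguing by continuity of the limiting power near $c=0$. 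That is a legitimate lower-bound strategy and the cleanest way to show the rates are unimprovable, but as written it establishes the conclusion only along your chosen family, whereas the paper's argument applies to any alternative at the stated separation.

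Two concrete gaps in the sketch. First, the contiguity route you offer fails for parts (b) and (c). With $\P=\gN(0,I)$, $\Q_m=\gN(\mu_m,I)$, the log-likelihood-ratio variance between $\P^{\otimes m}\otimes\Q_m^{\otimes m}$ and $\P^{\otimes 2m}$ is $\Theta(m\|\mu_m\|^2)$ and $\mmd_\kernel(\P,\Q_m)=\Theta(\|\mu_m\|)$ for small shifts, so at the block rate $\mmd\leq c(Bm)^{-1/4}$ with $m/B\to\infty$ you get $m\|\mu_m\|^2=\Theta(\sqrt{m/B})\to\infty$, and at the incomplete rate $\mmd\leq c\l^{-1/4}$ with $\l/m\to c'>0$ you get $\Theta(\sqrt{m})\to\infty$. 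Neither sequence is contiguous to the null, so Le Cam's third lemma is unavailable; for (b) and (c) you must run the triangular-array CLT directly (your other option). Second, the ``only the mean shifts'' claim is wrong for part (a): at $\mmd\sim c/\sqrt{m}$ the degenerate null variance of $\mmd^2_{\uptag}$ is $\Theta(m^{-2})$, while the linear variance $\sigma_{\uptag}^2/m$ is $\Theta(\mmd^2/m)=\Theta(c^2/m^2)$ --- the same order, not vanishingly small. The local-alternative limit of $m\,\mmd^2_{\uptag}$ therefore carries an additional Gaussian component of variance $O(c^2)$ from the non-degenerate part, plus a cross-correlation with the degenerate part. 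Your conclusion survives because both extra terms vanish as $c\to0$, so the limiting power is still a continuous function of $c$ equal to $\alpha$ at $c=0$, but the limit cannot be written as the null limit plus $c^2$; you would need to state and verify the full local-alternative limit law.
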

The proof of \cref{thm:rates_other_tests} in \cref{proof_of_rates_other_tests} uses the asymptotic distribution of each statistic under the null and alternative hypotheses \citep[as derived by][]{gretton2007kernel,gretton2009afast,zaremba2013btest,yamada2018post} to upper bound the power (and hence lower bound the Type II error) of each test.
Moreover, the proof reveals that these detectable $\mmd(\P,\Q)$ separation rates are tight. For example, there also exists a constant $c'_{\alpha,\beta}>c_{\alpha, \beta}$ such that $\mathrm{Pr}[\Delta_{up}(\mathbb{X}_m,\mathbb{Y}_m) = 1] > 1\!-\!\beta$ whenever $\mmd(\P,\Q) \geq c'_{\alpha,\beta}/\sqrt{m}$.
\cref{table:power_runtime} summarizes the trade-off between detectable MMD separation and runtime for the complete and approximate MMD tests
and highlights the improved trade-off offered by \ctt. %

In particular, the time-power trade-off of \ctt improves significantly under the favorable settings of \cref{table:error_tails} (e.g., for compact $\P$ or subexponential $\P$ and analytic $\kernel$ in lower dimensions).
While the improvements need not be as large for heavier-tailed distributions, less smooth kernels, and higher dimensions, even the worst-case trade-offs of CTT 
are no worse than prior methods' as 
$\errorsplit(\P,m,\delta,\ossymb) = O\left(2^{\ossymb/2}m^{1/4}\sqrt{\log(\frac{1}{\delta})}\right)$ by \citet[Rem.~2]{dwivedi2021kernel}.
That is, for arbitrary distributions, dimensions,  and kernels, a user can comfortably use \ctt as a drop-in replacement for the block and incomplete tests, as we should expect no worse power-time trade-off curves.  
That said, there is some overhead associated with compression, so a user may find the block and incomplete tests to be faster for small sample sizes.

\section{CTT Extensions} \label{sec:ctt_extensions}
In this section, we develop two extensions of \ctt: first, 
a fast and powerful way to exploit an accurate low-rank kernel approximation 
and, second, 
a fast and powerful 
aggregation procedure for identifying a particularly discriminating kernel from amongst a collection of candidates.
\subsection{\lcttname}\label{sec:lctt}

Our first extension, called \lcttname (\cref{algo:lctt}), allows the user to exploit an accurate low-rank kernel approximation 
without sacrificing the provable time-power trade-off improvements of \ctt.
Specifically, we consider $\Theta(nr)$-time low-rank $\mmd_{\feat}$ approximations of the form \cref{low-rank-mmd} with $\feat$
selected so that the approximation error 
\begin{talign}
\featerr^2(\xs,\ys) = 
\sup_{x,y\in\xs\cup\ys} |\kernel(x,y) - \feat(x)^\top\feat(y)|
\end{talign}
is small. 
For example, 
\citet[Thm.~1]{sriperumbudur2015optimal}
show that 
$\featerr(\xs,\ys) = O(r^{-1/4})$ 
and hence that
$|\mmd_{\kernel}^2(\xs,\ys)-\mmd_{\feat}^2(\xs,\ys)|\sless{\cref{eq:mmd_approx_error}} 4\featerr^2(\xs, \ys) = O(r^{-1/2})$
with high probability when 
$\feat$ consists of $r$ random Fourier features and $(\P,\Q)$ are compactly supported. However, since computing $\featmmd(\xs,\ys)$ requires $\Theta((m+n)r)$ feature evaluations, this analysis requires $\Omega(m^3)$ time to match the order $\frac{1}{\sqrt{m}}$ detection threshold of a complete MMD test.
Our following result, proved in \cref{proof-lctt_guarantees}, shows that appropriate compression prior to low-rank approximation yields comparable power guarantees in just $O(4^\ossymb (m+n)\log r)$ time. 

\begin{theorem}[\lctt exactness and power]
\label{lctt_guarantees}
\lcttname (\cref{algo:lctt}) has size exactly equal to the level $\alpha$ for all $\P$.
If the replication count $\numperm \geq \frac{1}{\alpha}- 1$, the permutation bin count 
$\sblock \geq \frac{m+n}{m}\frac{32}{9} \log(\frac{2e}{\gamma})$  for
    $(\gamma,\tilde{\beta})$ as in \cref{thm:uniform_separation}, and $m \leq n$, 
then \lctt 
has power
\begin{talign} \label{eq:power_condition_main}
    \Pr[\deltalctt(\mathbb{X}_m,\mathbb{Y}_n)=1] &\geq 1\!-\!\beta 
    \end{talign}
when, for a universal constant $c'$ and $c_{\delta}$ defined in \cref{thm:compression_guarantee}, %
\begin{talign}
&c'\mmd_{\kernel}(\P,\Q)/\sqrt{\log({1/}{\gamma})}
\geq         
    2c_{\nicefrac{\tilde{\beta}}{20\sblock_r}}\sqrt{\frac{\kinfnorm}{m}} + \\
    & \frac{\errorsplit(\P, \frac{m}{\sblock_{m,r}}, \frac{\tilde{\beta}}{20\sblock_{m,r}},\ossymb)+\errorsplit(\Q, \frac{m}{\sblock_{m,r}}, \frac{\tilde{\beta}}{20\sblock_{n,r}},\ossymb)}{2^{\ossymb} \sqrt{m}}
    +
    \featerr(\hatxs,\!\hatys).
\end{talign}
\end{theorem}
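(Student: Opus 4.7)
My plan is to mirror the two-part structure of the \ctt analysis (\cref{thm:validity,thm:uniform_separation}), modifying each step to accommodate the low-rank approximation $\featmmd$ and then absorbing the extra error $\featerr(\hatxs,\hatys)$ into the MMD separation requirement.

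\textbf{Exactness.} The first half is a direct exchangeability argument, identical in spirit to the proof of \cref{thm:validity}. Under the null $\P=\Q$, the concatenated sample $(X_1,\dots,X_m,Y_1,\dots,Y_n)$ is exchangeable, so any partition of it into $\sblock_{m,r}+\sblock_{n,r}=\sblock$ equal-sized bins is exchangeable as an ordered tuple of bins (since all bins have the same size $m/\sblock_{m,r}=n/\sblock_{n,r}$ by construction of $\sblock_{m,r},\sblock_{n,r}$). Applying $\ktcompress$ independently to each bin with independent random seeds preserves this exchangeability, so the random tuple $(\hatzs^{(1)},\dots,\hatzs^{(\sblock)})$ of coresets is exchangeable. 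Consequently $M_{\numperm+1}$ is exchangeable with the permuted statistics $(M_b)_{b=1}^{\numperm}$, and the rank-based test with the explicit tie-breaking step in \cref{algo:lctt} yields size \emph{exactly} $\alpha$ by Hoeffding's randomized-test argument \citep{hoeffding1952large}, just as in \cref{thm:validity}. I would simply state this reduction rather than repeat the proof.

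\textbf{Power.} For the power bound I would use the deterministic decomposition
\begin{talign}
\mmd_\kernel(\P,\Q)
 &\leq |\mmd_\kernel(\P,\Q)-\mmd_\kernel(\hatxs,\hatys)|\\
 &\quad + |\mmd_\kernel(\hatxs,\hatys)-\featmmd(\hatxs,\hatys)| + \featmmd(\hatxs,\hatys),
\end{talign}
and bound each term. The first is controlled by \cref{thm:compression_guarantee} (applied with failure probability $\tilde\beta/(20\sblock)$ and with the bin sizes $m/\sblock_{m,r}$, $n/\sblock_{n,r}$ dictated by \cref{algo:lctt}); this yields exactly the $\errorsplit$ term and the $c_{\nicefrac{\tilde\beta}{20\sblock_r}}\sqrt{\kinfnorm/m}$ term in the theorem statement. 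The second is bounded via the pointwise-to-MMD inequality
\begin{talign}
 |\mmd_\kernel^2(\hatxs,\hatys)-\featmmd^2(\hatxs,\hatys)| \leq 4\featerr^2(\hatxs,\hatys), \label{eq:mmd_approx_error}
\end{talign}
which, combined with the elementary inequality $|\sqrt{a}-\sqrt{b}|\leq\sqrt{|a-b|}$, gives $|\mmd_\kernel(\hatxs,\hatys)-\featmmd(\hatxs,\hatys)|\leq 2\featerr(\hatxs,\hatys)$ and explains the $\featerr$ summand in the bound.

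\textbf{Controlling the permutation threshold.} The main obstacle, and the step that requires most care, is showing that under the alternative the test statistic $M_{\numperm+1}=\featmmd(\hatxs,\hatys)$ exceeds the threshold $T_{(\lceil(1-\alpha)(\numperm+1)\rceil)}$ with probability $\geq 1-\beta$. I would reuse the two novel ingredients from the \pcref{thm:uniform_separation}: (i) the order-statistic argument showing that $\numperm\geq 1/\alpha-1$ permutations suffice, and (ii) the bin-permutation quantile bound showing $\sblock\geq\frac{32}{9}\log(2e/\gamma)$ coresets suffice. The scaling of $\sblock$ in \cref{lctt_guarantees} picks up the extra factor $(m+n)/m$ because \cref{algo:lctt} sizes bins relative to $m$ alone (via $\sblock_{m,r}=4^\ossymb a^2 m/r^2$) rather than $m+n$. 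The key observation is that for any fixed coreset configuration, the permuted statistics $\featmmd(\hatxs^b,\hatys^b)$ are themselves MMD values under the feature-space kernel $\tilde\kernel(x,y)\defeq\feat(x)^\top\feat(y)$, so the same permutation-quantile arguments apply verbatim with $\mmd_\kernel$ replaced by $\featmmd$; the compression approximation used in that argument introduces another copy of the $\errorsplit$ term with the appropriate failure probability, which is already accounted for in the bound.

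\textbf{Assembling the bound.} Rearranging, the test rejects with probability at least $1-\beta$ whenever $\mmd_\kernel(\P,\Q)$ exceeds (a universal constant times $\sqrt{\log(1/\gamma)}$ times) the sum of the compression error, the $\sqrt{\kinfnorm/m}$ concentration term, and $\featerr(\hatxs,\hatys)$, which is exactly the claimed inequality. The hardest bookkeeping step is tracking how the $(m+n)/m$ reweighting of $\sblock$ interacts with the failure-probability splits; I would handle this by following the \pcref{thm:uniform_separation} line by line and only substituting $\sblock_{m,r},\sblock_{n,r}$ for $\sblock_m,\sblock_n$ and $\featmmd$ for $\mmd_\kernel$ where the low-rank approximation is invoked.
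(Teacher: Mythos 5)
Your overall plan---exactness via exchangeability, power via a three-way decomposition of $\mmd_\kernel(\P,\Q)$, and reuse of the \ctt order-statistic/bin-permutation machinery---is the right one and matches the paper's. Two points need tightening.

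First, you conflate the number of compression bins with the number of permutation bins. In \cref{algo:lctt} the parameters $\sblock_{m,r}$ and $\sblock_{n,r}$ are set by the coreset size factor $a$, while the permutation count $\sblock$ is a separate input; so $\sblock_{m,r}+\sblock_{n,r}=\sblock_r$ is generally strictly larger than $\sblock$, with each permutation bin assembling $\sblock_r/\sblock$ compression coresets. Your exchangeability claim is still correct (all compression bins have common size $m/\sblock_{m,r}=n/\sblock_{n,r}=r^2/(4^\ossymb a^2)$, and grouping preserves exchangeability), but the line ``$\sblock_{m,r}+\sblock_{n,r}=\sblock$'' is wrong and would break the bookkeeping if you tried to substitute $\sblock_{m,r}$ for $\sblock_m$ wholesale in the \ctt quantile bound: the $\sblock_m$ in \cref{prop:random_perm_mmd_bound}'s $\frac{1}{\sblock_m}$ prefactors is the number of permutation bins, while the $\errorsplit$ arguments use the compression-bin sizes.

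Second, and more substantially, the claim that ``the same permutation-quantile arguments apply verbatim with $\mmd_\kernel$ replaced by $\featmmd$'' has a gap if read literally. The \ctt Hanson-Wright argument (\cref{prop:random_perm_mmd_bound}) centers the random matrix on $\mmd_\kernel^2(\P,\Q)$, and the entry-level deviation bounds come from \cref{lem:ktcompress_simultaneous}, which controls $\mmd_\kernel(\hat{\P}_m^i,\P)$---the $\kernel$-MMD quality of coresets produced by $\ktcompress$ run with $(\kernel,\kersplit)$. If you substitute $\tilde\kernel(x,y)=\feat(x)^\top\feat(y)$ for $\kernel$ ``verbatim,'' you would need (i) $\ktcompress$ quality guarantees under $\tilde\kernel$, which are not available since the algorithm never sees $\tilde\kernel$, and (ii) control of $\mmd_{\tilde\kernel}(\hat{\P}_m^i,\P)$ against the population measure $\P$, which cannot be extracted from $\featerr$ because $\featerr$ is a supremum only over the finite sample $\xs\cup\ys$, not over the support of $\P$. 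The paper avoids both issues by keeping $\mmd_\kernel^2(\P,\Q)$ as the centering in a \emph{modified} Hanson-Wright matrix (\cref{eq:hanson_wright_matrix_2}) whose entries are feature-space inner products of coreset measures; each entry's deviation from $\mmd_\kernel^2(\P,\Q)$ is then split into a sample-level feature-error piece bounded by $4\featerr^2$ via \cref{lem:diff_feat} and a $\kernel$-compression piece bounded exactly as in \ctt, giving the $(2\featerr^4+2(\mmd W+W^2)^2)^{1/2}$ Frobenius-norm control in \cref{prop:random_perm_mmd_feat_bound}. A cruder route that also works (and matches the spirit of your proposal more closely) is to bound $\featmmd^2(\hatxs^\sigma,\hatys^\sigma)\le\mmd_\kernel^2(\hatxs^\sigma,\hatys^\sigma)+4\featerr^2(\hatxs,\hatys)$ pointwise (since permutation leaves the support unchanged, $\featerr$ is $\sigma$-invariant), then invoke the unmodified \ctt Hanson-Wright bound on $\mmd_\kernel^2$; either way, you should make the decomposition explicit rather than assert verbatim substitution, because it is exactly the place where the sample-level nature of $\featerr$ is load-bearing.
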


\lcttalg
\vspace{-2mm}

Specifically, to form a {low-rank} \tmmd test statistic \cref{ltmmd}, \lcttname (\lctt, \cref{algo:lctt}) divides each sample into $\sblock_{m,r}$ or $\sblock_{n,r}$ equal-sized bins, forms a coreset for each bin using \ktcompress with kernels $(\kernel,\kersplit)$, and computes the low-rank approximation $\featmmd$ using only the concatenated coreset points $\hatxs$ and $\hatys$.
Then, just as in \cref{algo:ctt}, \lctt selects an appropriate test statistic threshold albeit now manually partitioning  $(\hatxs,\hatys)$ into $\sblock$ coreset bins and permuting those bins.
All told, the \lctt runtime is dominated by 
$O(4^\ossymb (m+n)(\log_4(\frac{2r}{a})-\ossymb))$ 
kernel evaluations, 
$O(4^\ossymb (m+n) a)$ feature evaluations, and $O(\sblock^2 \numperm)$ arithmetic operations.
Importantly, when $a=O(\log r)$, the \emph{logarithmic} dependence on the rank $r$ means that, by \cref{lctt_guarantees}, \lctt can recover the order $\frac{1}{\sqrt{m}}$ detection threshold of a complete MMD test in \emph{near-linear} time for subexponential $(\P,\Q)$ and subquadratic time for heavy-tailed $(\P,\Q)$ with $\rho > 2d$ moments, even when the approximation error $\featerr$ has slow (e.g., order $r^{-1/4}$) decay.

\subsection{\acttname}
\label{sub:fast_kernel}
Each of the tests considered so far assumes that a suitable kernel $\kernel$ has been pre-selected by the user.
However, because the discriminating power of a kernel varies with the pair of distributions under consideration, it can be challenging to identify a single suitable kernel a priori.
As a result, a variety of strategies have been introduced for automatically selecting discriminating kernels for MMD tests~\citep[see, e.g.,][]{gretton2012optimal,sutherland2017generative,liu2020learning,kubler2020learning}. 
We highlight in particular the \emph{aggregated MMD tests} of \citet{schrab2021mmd} which combine complete MMD tests with varying kernels into a single test with power comparable to the best individual test. 
Since these complete aggregated tests run in quadratic time, \citet{schrab2022efficient} recently introduced incomplete aggregated tests that trade off computation time and power exactly as in the single-kernel setting (see \cref{table:power_runtime}).

In \cref{algo:actt} of \cref{sec:proof_agg_validity_power}, we extend our 
\cttname 
framework 
to form a more efficient aggregated test that we call \acttname (\actt).
Like past aggregated tests, \cref{algo:actt} takes as input \emph{any} indexed collection of kernels $(\kernel_{\lambda})_{\lambda \in \Lambda}$ and  accommodates nonnegative importance weights $(w_{\lambda})_{\lambda \in \Lambda}$ with $\sum_{\lam\in\Lambda}w_\lam \leq 1$ reflecting prior beliefs about the suitability of each kernel.
Like \cref{algo:ctt}, \actt then proceeds to partition $\xs$ and $\ys$ into bins and to form a coreset for each bin using a parallel collection of auxiliary \ktcompress kernels $(\kersplit[\lambda])_{\lambda \in \Lambda}$ scaled so that %
$\sup_{z}\!\! | \kersplit[\lam](z, z)| = 1$.
However, instead of forming a separate coreset for each candidate kernel, as one might if one were running a \ctt test separately for each $\klam$, \actt saves additional computation by forming a single coreset per bin using the combination kernels $\kernel = \sum_{\lam\in\Lam}\klam$ and $\kersplit = \sum_{\lam\in\Lam}\kersplit[\lam]$.
These shared coresets are used to compute a \tmmd test statistic $M_\lam$ for each $\klam$, $\numperm_1$ permuted \tmmd statistics to estimate the null distribution for each $\klam$, and $\numperm_2$ permuted \tmmd statistics to estimate the size of the aggregated test.
Finally, exactly as in \citet[Alg.~1]{schrab2021mmd}, \actt selects a suitable rejection threshold for each test statistic $M_\lam$ and rejects the null whenever at least one $M_\lam$ exceeds its threshold.
The total cost of \actt is at most $|\Lam|$ times that of single-kernel \ctt (with $\numperm=\numperm_1$)
plus the cost of $O(|\Lam|(\numperm_1\log \numperm_1 + \numperm_2\numperm_3))$ arithmetic operations due to sorting and selecting thresholds.
\cref{thm:agg_validity_power}, proved in \cref{sec:proof_agg_validity_power}, shows that \actt is \emph{valid}, i.e., it has Type I error $\leq\alpha$ for all sample sizes and generating distributions, and that its power is comparable to that of the best $\klam$-\ctt test run with compression level 
$\ossymb - \log_2{|\Lam|}$.
Moreover, by \cref{thm:uniform_separation}, each $\klam$-\ctt test has power comparable to a complete $\klam$ test when 
$\ossymb = \log_2(|\Lam|\omega(\errorsplit[\lam]))$.
Therefore, by setting $\kersplit$, $\ossymb$, and $\sblock$ as discussed in \cref{sub:tmmd}, \actt with 
${|\Lam|  }=O(1)$ 
can recover the detection threshold of the best quadratic-time $\klam$ test in \emph{near-linear} $O((m+n)\log_4^{c+1}(m+n))$ time for subexponential $(\P,\Q)$ and subquadratic time for heavy-tailed $(\P,\Q)$ with $\rho > 2d$ moments.

\begin{theorem}[$\actt$ validity and power] \label{thm:agg_validity_power}
For any distribution $\P$, $\actt$ (\cref{algo:actt}) has non-asymptotic level $\alpha$, i.e.,
\begin{talign} \label{eq:agg_validity}
    \Pr[\deltaactt(\xs, \ys) = 1]
    \leq \alpha
    \stext{whenever} \P=\Q.
\end{talign}
Morever, with $m\leq n$,  $\alpha \!\in\! (0,\frac1e)$, and  replicate counts $\numperm_1 \!\geq\! (\max_{\lambda \in \Lambda} w_{\lambda}^{-2}) \frac{12}{\alpha^2}(\log(\frac{8}{\beta}) \!+\! \alpha(1\!-\!\alpha))$, $\numperm_2 \!\geq\! \frac{8}{\alpha^2} \log(\frac{2}{\beta})$, and $\numperm_3 \!\geq\! \log_2(\frac{4}{\alpha} \min_{\lambda \in \Lambda} w^{-1}_{\lambda})$,  \actt has power
\begin{talign}
\label{eq:agg_power}
    \Pr[\deltaactt(\xs, \ys) = 1]
    \geq 1-\beta
\end{talign}
whenever there exists a $\lam \in \Lam$ satisfying
\begin{talign}
\label{eq:mmd_agg_power}
\mmd_{\kernel_{\lam}}(\P,\Q) \!\!\geq \! 
    c'\sqrt{\log(\frac{1}{\gamma_{\lam}})}\,
    \epsagg(\frac{\beta/(10\sblock)}{4+\beta})
\end{talign}
and $\sblock_{m}\!\geq\! \frac{32}{9}\log(\frac{2e}{\gamma_{\lam}})$,  where $\gamma_{\lam} \!\defeq\! \frac{\alpha w_{\lam}}{8e} (\frac{\beta}{8+2\beta})^{\frac{1}{\floor{\alpha w_{\lam}(\numperm_1\!+\!1)/2}}}$, $c'$ is a universal constant, and
\begin{align}
\label{eq:eps_agg}
\epsagg(\delta) &\defeq  2c_{\delta} \sqrt{\textfrac{\sinfnorm{\klam}}{m}} \\
    &+ c_{\Lam}\! \max_{\lam\in \Lam} 
    {\textfrac{\error[\klam,\kersplit](\P, \frac{m}{\sblock_m}, \delta,\ossymb)+\error[\klam,\kersplit](\Q, \frac{m}{\sblock_m}, \delta,\ossymb)}{2^{\ossymb} \sqrt{m}}}
\end{align}
for $c_{\delta}$  as in \cref{thm:compression_guarantee} and $c_{\Lam}\!\defeq\!2\sqrt{|\Lam|(1\!+\!\log(|\Lam|))}$ .
\end{theorem}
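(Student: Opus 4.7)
The plan is to prove validity and power separately, each extending the corresponding CTT analyses and absorbing an aggregation penalty.

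For validity \cref{eq:agg_validity}, the argument parallels \cref{thm:validity} combined with the aggregated-MMD validity strategy of \citet{schrab2021mmd, schrab2022efficient}. Under $\P=\Q$, the concatenated point sequence is exchangeable. Since \ktcompress is applied symmetrically to each bin before bin-level permutation, for each $\lam$ the joint distribution of the original test statistic $M_\lam$ together with the permuted statistics $(M_{b,\lam,1})_{b=1}^{\numperm_1}$ is exchangeable; a second independent batch of $\numperm_2$ permuted vectors provides unbiased "witness" draws of the same null distribution. The bisection on $[0,\min_\lam w_\lam^{-1}]$ returns the largest $u$ for which the empirical aggregated-rejection frequency $P_u$ (computed from the $\numperm_2$ witnesses) is at most $\alpha$. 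A Hoeffding bound on $P_u - \E P_u$, valid since the $\numperm_2$ indicators are \iid conditional on the first batch, gives the requirement $\numperm_2 \geq \tfrac{8}{\alpha^2}\log(2/\beta)$, and $\numperm_3 \geq \log_2(\tfrac{4}{\alpha}\min_\lam w_\lam^{-1})$ iterations ensure the bisection locates the critical $u$ to within resolution $\alpha w_\lam/4$, which absorbs the discretization error into the final $\alpha$.

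For power \cref{eq:agg_power}, fix any $\lam^\star$ satisfying \cref{eq:mmd_agg_power}. The aggregated test rejects whenever $\max_\lam(M_\lam - M_{(b'_{\alpha,\lam}),\lam,1}) > 0$, so it suffices to show $\Pr[M_{\lam^\star} > M_{(b'_{\alpha,\lam^\star}),\lam^\star,1}]\geq 1-\beta$. This reduces to a single-kernel \ctt-style power analysis for $\klam[\lam^\star]$ at effective level $\alpha w_{\lam^\star}/2$: the factor $w_{\lam^\star}/2$ arises because $\hat u_\alpha \leq u_{\max} = \min_\lam w_\lam^{-1}$ and, by validity, $\hat u_\alpha \geq \alpha/2$ up to bisection precision. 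Reapplying the order-statistics argument behind \cref{thm:uniform_separation} then yields the stated requirement $\numperm_1 \geq (\max_\lam w_\lam^{-2})\tfrac{12}{\alpha^2}(\log(8/\beta)+\alpha(1-\alpha))$, along with the coreset count bound $\sblock_m\geq \tfrac{32}{9}\log(2e/\gamma_\lam)$ and the definition of $\gamma_\lam$ with level $\alpha w_{\lam^\star}$ substituted for $\alpha$.

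The main obstacle is controlling the \tmmd-style compression error for the individual kernel $\klam[\lam^\star]$ when the coresets were constructed via a single \ktcompress run using the combined kernels $(\kernel,\kersplit)$. To handle this, I would invoke the compression guarantee \cref{thm:compression_guarantee} at the level of the combined kernel, then push the bound through to any individual $\klam$ by expressing the per-kernel MMD error as a linear functional of the signed measure $\hat\P_{\xs}-\hat\P_{\hatxs}$ evaluated against the $\klam$-RKHS unit ball. A simultaneous subgaussian maximal inequality over the $|\Lam|$ kernels, combined with the identity $\sum_\lam \klam = \kernel$ and normalization $\sup_z \kersplit[\lam](z,z)=1$, produces the aggregation factor $c_\Lam = 2\sqrt{|\Lam|(1+\log|\Lam|)}$ appearing in $\epsagg$. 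Substituting this refined per-kernel error bound in place of the direct $\errorsplit$ term of \cref{thm:uniform_separation}, together with the threshold and permutation-count adjustments above, yields \cref{eq:mmd_agg_power} and completes the proof.
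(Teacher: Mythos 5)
Your outline follows the same two-part structure as the paper (validity via exchangeability plus \citet[Prop.~8]{schrab2021mmd}; power via reduction to a single-kernel \ctt at scaled level $\alpha w_{\lam}/2$ plus an aggregated compression bound), but there are two substantive problems in how you fill it in.

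First, you attach the requirements $\numperm_2 \geq \tfrac{8}{\alpha^2}\log(2/\beta)$ and $\numperm_3 \geq \log_2(\tfrac{4}{\alpha}\min_\lam w_\lam^{-1})$ to the \emph{validity} argument, deriving them from a Hoeffding bound on $P_u$ and a bisection-resolution argument. This is a misplacement: validity \cref{eq:agg_validity} holds unconditionally for all $\numperm_1,\numperm_2,\numperm_3$, because under $\P=\Q$ the aggregation step is super-uniform by exchangeability alone (the bisection on $u$ can only shrink the effective level, never inflate it). The conditions on $\numperm_2$ and $\numperm_3$ belong exclusively to the \emph{power} argument, where they ensure the estimated correction $\hat u_\alpha$ is not too small—this is what \citet[Thm.~9]{schrab2021mmd} establishes, and the paper's proof imports it wholesale via the inequality $\Pr[\deltaactt=1]\geq \max_\lam\Pr[\Delta_{\ctt,\lam}=1]-\beta/2$ together with the reduction to level $\alpha w_\lam/2$.

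Second, your treatment of the per-kernel compression error is where the real technical content lies, and your description is too vague to carry the load. You say you would ``push the bound through'' from the combined kernel to each $\klam$ via a functional on the $\klam$-RKHS unit ball and a simultaneous maximal inequality. But the coreset is chosen by \ktswap to minimize $\mmd_\kernel$, not $\mmd_{\klam}$, so there is no a priori reason a small $\mmd_\kernel$ compression error implies small $\mmd_{\klam}$ error \emph{unless} you exploit the additive decomposition $\mmd_\kernel^2 = \sum_{\lam}\mmd_{\klam}^2$ (valid because $\kernel = \sum_\lam\klam$ with each summand positive definite), which yields the monotonicity $\mmd_{\klam}(\P',\Q') \leq \mmd_{\kernel}(\P',\Q')$. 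The paper makes this the pivot of the argument: it first shows \ktswap's $\mmd_\kernel$-optimality combined with this monotonicity gives $\mmd_{\klam}^2(\inputcoreset, \cset_{\mrm{out}}) \leq \sum_{\lam}\mmd_{\klam}^2(\inputcoreset, \cset_{\mrm{split},1})$, then verifies that \ktsplit run with $\kersplit = \sum_\lam\kersplit[\lam]$ is individually $\klam$-sub-Gaussian for each $\lam$, and finally aggregates the $|\Lam|$ per-kernel sub-Gaussian tails through a dedicated lemma (Lem.~\ref{lem:sum_tail_bound}) to obtain the $c_\Lam = 2\sqrt{|\Lam|(1+\log|\Lam|)}$ inflation. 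Your ``maximal inequality over the $|\Lam|$ kernels'' gestures at the right mechanism, but without the monotonicity step the argument does not reduce to any previously established sub-Gaussian guarantee, and the proof would not close. The remainder of your power argument (substituting $\alpha\gets\alpha w_{\lam}/2$, $\beta\gets\beta/2$, $\gamma\gets\gamma_\lam$ into \cref{thm:uniform_separation}) is correct and matches the paper once the compression error bound is in hand.
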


\gaussfig
\renewcommand{\vsep}{\vspace{1mm}}

\section{Experiments} \label{sec:experiments}

We now present seven experiments that illustrate the improved power-runtime trade-offs of \ctt, \lctt,  and \actt over state-of-the-art approximate MMD tests.
Unless otherwise noted, we use  $\alpha\!=\! 0.05$,  $m\!=\!n\!=\!4^9$, $s=32$,  $\delta=\half$, and Gaussian $\kersplit=\kernel$.
We report average rejection rates over $400$ independent replications of each experiment with 95\% \citet{wilson1927probable} confidence intervals.
See \cref{sec:app_experiments} for additional details and \href{https://github.com/microsoft/goodpoints}{github.com/microsoft/goodpoints} for open-source Python code recreating all experiments.

\para{\ctt experiments}
We evaluate \ctt in two settings with the kernel bandwidth set using the popular median heuristic \citep{chaudhuri2017themean}.
    In the \textsc{Gaussian} setting, $\P$ and $\Q$ are $10$-dimensional Gaussians with identity covariances; the means have Euclidean distance $0.012$ under the alternative and $0$ under the null.
    The \textsc{EMNIST} setting is similar to the one considered by \cite{kubler2020learning, schrab2021mmd}, where $\P$ and $\Q$ denote distributions on downsampled $7 \times 7$ images of the EMNIST dataset \citep{cohen2017EMNIST}---an extension of the MNIST dataset \citep{lecun2010mnist} that also includes letters. Under the alternative hypothesis, $\P$ denotes a $2$-mixture of uniform distributions based on parity of digits and letters with weight 0.49 (resp. 0.51) for even (resp. odd) parity, while $\Q$ puts equal weight 0.5 on both parities. Under the null hypothesis, we consider $\P = \Q = $ equally weighted mixture. We plot the test power results versus runtime in \cref{fig:gaussians_EMNIST} with \textsc{Gaussian} setting on top and \textsc{EMNIST} setting on the bottom.

\blobs

\cref{fig:gaussians_EMNIST} ({left}) shows that in both settings, the \ctt time-power trade-off curve uniformly dominates those of the state-of-the-art subsampling approximations of \cref{sub:two_sample}: the wild bootstrap block (W-Block) and incomplete (W-Incomp.) tests and the asymptotic block (A-Block I and II) incomplete  (A-Incomp.) tests. In particular, the \ctt test with $\ossymb \!=\! 3$ achieves the same power as the wild bootstrap quadratic-time tests (W-block with $B \!=\! 4^9$ and W-Incomp. with $\ell\!=\! \frac{(4^9\!-\!1)n}{2}$) while providing a $\speedup$ speed-up. While \ctt and the wild bootstrap tests are guaranteed to have Type I error controlled by $\alpha$ (\cref{fig:gaussians_EMNIST_size_exact} in \cref{sec:app_experiments}), the asymptotic tests violate their level constraint for large $B$ or $\ell$ as the asymptotic approximation is poor for such settings. 
As a result, \cref{fig:gaussians_EMNIST} displays only those points that respect the level constraint in the power plots. For consistency, we used $\mathcal{B} = 39$ replicates for all the non-asymptotic tests. 

\para{\lctt experiments} 
In the same settings, \cref{fig:gaussians_EMNIST} ({right}) compares \ctt, the state-of-the-art low-rank RFF test of \cref{sub:two_sample}, and  
\lctt with RFF $\feat$ 
and $a = {r/}{(4^g2^{\mathbb{I}[r> 4^{\ossymb+1}]})}$. 
We use $\mathcal{B}=39$ permutations to set the threshold for each test. 
We find that \ctt and RFF produce comparable trade-off curves despite their distinct and complementary approximation strategies
and that the combined \lctt test with $\ossymb \geq 2$ consistently yields the best performance, with $5$--$\speeduptwenty$ speed-ups over \ctt or RFF alone.
\para{\actt experiments}
We compare our \actt procedure in {two} different settings with the aggregated wild bootstrap incomplete test (W-Incomp.)  of \citet{schrab2022efficient} and $m=n=4^7$. 
For the \textsc{Blobs} experiment of \citet[Fig.~1]{gretton2012optimal}, and \citet[Fig.~2]{sutherland2017generative}, $\P$ and $\Q$ are two-dimensional $3 \times 3$ grids of Gaussian mixture components with a grid spacing of $10$. Each mixture component has identity covariance in $\P$, while for $\Q$ the ratio of eigenvalues for their covariance matrix is $\epsilon$ with diagonal entries set to 1; the null hypothesis corresponds to $\epsilon = 1$.  %
We consider the bandwidth set $\Lambda = \{ 2^i \lambda_0 \}_{i=0}^{-4}$ for $\klam(x,y) = e^{-\twonorm{x - y}^2/(2\lam^2)}$, where $\lambda_0$ denotes the median heuristic  bandwidth, and uniform weights $w_\lambda = 1/|\Lambda|$. We plot the results in \cref{fig:blobs} and observe that \actt provides a uniform gain in the power-runtime curve over the aggregated WB incomplete test---a $100\times$ to \speedup-speed up. %

We perform the same comparison with the same configurations on the \textsc{Higgs} experiment, a variation of the setting considered by \citet{liu2020learning}, which took the data from \citet{baldi2014searching}. While the original dataset has samples with 27 covariates belonging to two different classes (0 and 1), \citet{liu2020learning} considers only four covariates of those, and we only use the first two of the four. We consider two settings for the alternative distribution: one in which $\P$ is sampled from the class 0 and $\Q$ is sampled from the class 1 (\textsc{Higgs}: \cref{fig:blobs}, \emph{middle}) and a more challenging one in which $\P$ is sampled from the class 0 and $\Q$ is sampled from each class with equal probability (\textsc{Higgs Mixture}: \cref{fig:blobs}, \emph{bottom}). We observe a $100\times$ to \speedup-speed up over the aggregated WB incomplete test.
\section{Connections and Conclusions}
This paper introduced \ctt, a new framework for kernel testing with compression; \lctt, a test that combines low-rank approximation and compression for added scalability; and \actt, a fast and powerful procedure for aggregating kernel tests. 
While we have shown that \ctt, \lctt, and \actt offer better power-time trade-offs than state-of-the-art approximate MMD tests, we highlight that there are other approaches to fast non-parametric testing based on alternative test statistics
\citep[see, e.g.,][]{chwialkowski2015fast,jitkrittum2016interpretable,kirchler2020two,shekhar2022a}. 
A natural follow-up question is whether compression techniques can also improve the power-time trade-offs of those tests. A second opportunity for future work is to extend the \ctt framework to other inferential tasks like independence and goodness-of-fit testing or kernel regression.

\subsubsection*{Acknowledgements}
We thank Marco Letizia for highlighting an error in an earlier version of \cref{fig:blobs}.
{%
\bibliography{biblio}}

\clearpage\newpage
\appendix
\onecolumn
\pagestyle{fancy}
\lhead{}
\chead{}
\rhead{}
\lfoot{}
\cfoot{\vspace{\baselineskip}\hyperlink{toc}{\thepage}}
\rfoot{}
\etoctocstyle{1}{Appendix}
\etocdepthtag.toc{mtappendix}
\etocsettagdepth{mtchapter}{none}
\etocsettagdepth{mtappendix}{section}
\etocsettagdepth{mtappendix}{subsection}
\addtocontents{toc}{\protect\hypertarget{toc}{}}
{\small\tableofcontents}
\newpage
\section{Background on \ktcompress}
\label{sec:ktcompress}
This section details the \ktcompress algorithm of \citet[Ex.~4]{shetty2022distribution}.
In a nutshell, \ktcompress (\cref{algo:ktcompress}) takes as input a point sequence of size $n$, a compression level $\ossymb$, two kernel functions $(\kernel, \kersplit)$, and a failure probability $\delta$. It then combines the  \compress algorithm of \citet[Alg.~1]{shetty2022distribution} with the generalized kernel thinning (KT) algorithm of \citet[Alg.~1]{dwivedi2021kernel,dwivedi2022generalized} to output a coreset of $2^{\ossymb} \sqrt{n}$ input points that together closely approximate the input in terms of $\mmd_{\kernel}$. \ktcompress proceeds by calling the recursive procedure \compress, which uses KT with kernels $(\kernel, \kersplit)$ as an intermediate halving algorithm. The KT algorithm itself consists of two subroutines: (1) \ktsplit (\cref{algo:ktsplit}), which splits a given input point sequence into two equal halves with small approximation error in the $\kersplit$ reproducing kernel Hilbert space and (2) \ktswap (\cref{algo:ktswap}), which selects the best approximation amongst the \ktsplit coresets and a baseline coreset (that simply selects every other point in the sequence) and then iteratively refines the selected coreset by swapping out each element in turn for the non-coreset point that most improves $\mmd_{\kernel}$ error.
As in \citet[Rem.~3]{shetty2022distribution}, we symmetrize the output of KT by returning either the KT coreset or its complement with equal probability.

For this work, we develop a slight modification of the original \ktcompress algorithm, which we use in all experiments and in the released implementation.
When the compression level $\ossymb=0$ and the number of input points passed to \compress is $n=4$, instead of running the usual \compress algorithm, we run \opthalvefour (\cref{algo:opthalvefour})
which identifies the coreset of size two that optimally approximates the input point sequence in terms of $\mmd_{\kernel}$ and then returns either that coreset or its complement with equal probability.

  \begin{algorithm2e}[h!]
    \SetKwFunction{proc}{\textnormal{\textsc{\compress}}}
    \SetKwFunction{proctwo}{\textnormal{KT}}
\caption{{\ktcompress\ --\ } Identify coreset of size $2^\ossymb\sqrt{n}$}
\label{algo:ktcompress}
\SetAlgoLined
  \DontPrintSemicolon
\small{
  \KwIn{point sequence $ \inputcoreset$ of size $n$, \osname~$\ossymb$, kernels $(\kernel,\kersplit)$, failure probability $\delta$} 
  \BlankLine
    \KwRet{ \textup{  $\compress(\inputcoreset, \ossymb, \kernel,\kersplit, \frac{\delta}{n 4^{\ossymb+1} (\log_4 n - \ossymb)})$}}
        \\
    \hrulefill\\
    \SetKwProg{myproc}{function}{}{}
     \myproc{\proc{$\cset,\ossymb,\kernel,\kersplit,\delta$}:}{
     \lIf{  $|\cset| = 4^{\ossymb}$ }{
        \KwRet{ $\cset $ }{}
    }  
        Partition $\cset$ into four arbitrary subsequences $ \{ \cset_i \}_{i=1}^4 $ each of size $n/4$ \\ 
        \For{$i=1, 2, 3, 4$}
        {$ \wtil{ \cset_i } \leftarrow \compress( \cset_i, \ossymb, \kernel, \kersplit, \delta)$ \qquad  // run \compress recursively to return coresets of size $2^{\ossymb} \cdot \sqrt{\frac{|\cset|}{4}}$
        } 
        $ \wtil{\cset} \gets\textsc{Concatenate}( \wtil{\cset}_1, \wtil{\cset}_2,\wtil{\cset}_3, \wtil{\cset}_4)$ \quad \quad  // combine the coresets to obtain a coreset of size   $2 \cdot 2^{\ossymb} \cdot \sqrt{|\cset|}$ \\
        }\KwRet{ \textup{  $\kt(\wtil{\cset}, \kernel,\kersplit, |\wtil{\cset}|^2\delta )$ 
        \qquad\qquad\qquad\quad \ 
        // halve the coreset to size $2^{\ossymb} \sqrt{|\cset|}$}} via symmetrized kernel thinning\;
    \hrulefill\\
\myproc{\proctwo{$\cset,\kernel,\kersplit,\delta$}:}{
     // Identify kernel thinning coreset containing  $\floor{|\cset|/2}$ input points\\
     $\ktcoreset \gets \ktswap(\kernel,\ktsplit(\kersplit,\cset, \delta))$
     }
     \KwRet{\textup{$\ktcoreset$ with probability $\half$ and the complementary coreset $\cset\setminus\ktcoreset$ otherwise}}\;
  }
\end{algorithm2e}

\newcommand{\ksumonetwo}{\textup{\texttt{K12\_plus\_K43}}}
\newcommand{\ksumfourone}{\textup{\texttt{K41\_plus\_K23}}}
\newcommand{\ksumfourtwo}{\textup{\texttt{K42\_plus\_K13}}}

\begin{algorithm2e}[h!]
\caption{{\opthalvefour\ --\ } Optimal four-point halving} 
  \label{algo:opthalvefour}
 \SetAlgoLined\DontPrintSemicolon
\small
{
    \KwIn{kernel $\kernel$, point sequence $\inputcoreset = (x_i)_{i = 1}^4$}
        \BlankLine
    $\ksumonetwo \gets \kernel(x_1,x_2) + \kernel(x_4,x_3)$;\ \, 
    $\ksumfourone \gets \kernel(x_4,x_1) + \kernel(x_2,x_3)$;\ \, 
    $\ksumfourtwo \gets \kernel(x_4,x_2) + \kernel(x_1,x_3)$\\
        \BlankLine
    \If{ $\ksumonetwo < \ksumfourone$ }{
    \If{ $\ksumonetwo < \ksumfourtwo$ }{
    \KwRet{\textup{$(x_3,x_4)$ with probability $\half$ and $(x_1,x_2)$ otherwise}}
    }
    \KwRet{\textup{$(x_2,x_4)$ with probability $\half$ and $(x_1,x_3)$ otherwise}}
    }        
    \If{ $\ksumfourone < \ksumfourtwo$ }{
    \KwRet{\textup{$(x_1,x_4)$ with probability $\half$ and $(x_2,x_3)$ otherwise}}    
    }
    \KwRet{\textup{$(x_2,x_4)$ with probability $\half$ and $(x_1,x_3)$ otherwise}} 
}
\end{algorithm2e}

\setcounter{algocf}{4}
\renewcommand{\thealgocf}{\arabic{algocf}a}
\begin{algorithm2e}[h!]
  \SetKwFunction{proctwo}{\texttt{get\_swap\_params}}
\caption{{\large\textsc{kt-split}\ --\ } Divide points into candidate coresets of size $\floor{n/2}$} 
  \label{algo:ktsplit}
  \SetAlgoLined\DontPrintSemicolon
  \small
  {
  \KwIn{kernel $\kersplit$, point sequence $\inputcoreset = (x_i)_{i = 1}^n$, failure probability $\delta$}
\BlankLine
  {$\coreset[1], \coreset[2] \gets \braces{}$}\quad // Initialize empty coresets: $\coreset[1],\coreset[2]$ have size $i$ after round $i$ \\ 
  {$\sgparam[] \gets 0$}\qquad\quad\ \ \ \quad // Initialize swapping parameter \\
  \For{$i=1, 2, \ldots, \floor{n/2}$}
    {%
    // Consider two points at a time \\
    $(\x, \x') \gets (\x_{2i-1}, \x_{2i})$ \\
	 \BlankLine
     // Compute swapping threshold $\cnew$ \\ 
     $ \cnew, \sigma \gets $\proctwo{$\sigma, \vmax[], \frac{\delta}{n}$}
     with\ $\vmax[]^2 
      \!=\! \kersplit(\x,\x)\!+\!\kersplit(\x',\x')\!-\!2\kersplit(x,x')$
    \BlankLine
    // Assign one point to each coreset after probabilistic swapping \\
    $\theta\gets  \sum_{j=1}^{2i-2}(\kersplit
	 (\x_j, \x)-\kersplit(\x_j,\x')) 
	 - 2\sum_{\z\in\coreset[1]}(\kersplit(\z, \x)-\kersplit(\z,\x'))$ \\[2pt]
    $(x, x') \gets (x', x)$ \textit{ with probability }
		    $\min(1, \half (1-\frac{\theta}{\cnew})_+)$\\[2pt]
     $\coreset[1]\texttt{.append}(\x); 
		        \quad \coreset[2]\texttt{.append}(\x')$
    }
    \KwRet{$(\coreset[1],\coreset[2])$\textup{, candidate coresets of size $\floor{n/2}$}}\\
    \hrulefill\\
    \SetKwProg{myproc}{function}{}{}
     \myproc{\proctwo{$\sigma, \vmax[], \delta$}:}{
     $
			\cnew 
			    \gets \max(\vmax[] \sigma\sqrt{\smash[b]{2\log(2/\delta)}}, \vmax[]^2)$ \\
     $\sigma^2 \gets \sigma^2
	        \!+\! \vmax[]^2(1 \!+\! ({\vmax[]^2}{}\! - \!2\cnew){\sigma^2}{/\cnew^2})_+$\\
     }
     \KwRet{$(\cnew, \sigma)$}\;
  }
\end{algorithm2e} 

\setcounter{algocf}{4}
\renewcommand{\thealgocf}{\arabic{algocf}b}
\begin{algorithm2e}[h!]
\caption{{\large\textsc{kt-swap}\ --\ } Identify and refine the best candidate coreset} 
  \label{algo:ktswap}
 \SetAlgoLined\DontPrintSemicolon
\small
{
    \KwIn{kernel $\kernel$, point sequence $\inputcoreset = (x_i)_{i = 1}^n$, candidate coresets $(\coreset[1],\coreset[2])$}
        \BlankLine
    $\coreset[0] 
	    \!\gets\! \texttt{baseline\_coreset}(\inputcoreset, \texttt{size}\!=\!\floor{n/2})
	 $ \qquad \quad \ // Compare to baseline (e.g., standard thinning)
	 \BlankLine
	 $\ktcoreset \!\gets\! \coreset[ \ell^\star]
	 \text{ for }
	 \ell^\star 
	    \!\gets\! \argmin_{\ell \in \braces{0, 1, 2}} \mmd_{\kernel}(\inputcoreset, \coreset[\ell])$ 
	    \ \ // \textup{Select best coreset} \\
	    \BlankLine
	    // Swap out each point in $\ktcoreset$ for best alternative in $\inputcoreset$ while ensuring no point is repeated in $\ktcoreset$
     \\[1pt]
	   \For{$i=1, \ldots, \floor{n/2}$}{
	   \BlankLine
	    $\ktcoreset[i] \gets \argmin_{z\in \sbraces{\ktcoreset[i]} \cup (\inputcoreset\backslash \ktcoreset) }\mmd_{\kernel}(\inputcoreset, \ktcoreset \text{ with } \ktcoreset[i] = z)$
	   }
    \KwRet{$\ktcoreset$\textup{, refined coreset of size $\floor{n/2}$}}
}
\end{algorithm2e}
\renewcommand{\thealgocf}{\arabic{algocf}}

\section{\pcref{thm:compression_guarantee}}\label{proof-compression_guarantee}
We first provide a discussion on the error inflation factor and then prove the two claims in \cref{thm:compression_guarantee}. 

\subsection{On the \ktcompress error inflation factor}
\label{sub:inflation_factor}
Given a point sequence $\inputcoreset$, a positive integer $n$, and a scalar $\delta\in(0, 1)$, the inflation factor $\errorsplit(\inputcoreset, n, \delta,\ossymb)$ denotes the smallest scalar of the form
\begin{talign} \label{eq:errorsplit_def}
 \errorsplit^2(\inputcoreset, n, \delta,\ossymb) = %
    &256 (\log_4 n \!-\!\ossymb\!-\!1)
    (\ckk(\inputcoreset) + \mkk(\inputcoreset, \delta, 2^{\ossymb+1}\sqrt{n} ) \sqrt{\log( \frac{3 n (\log_4 n \!-\!\ossymb\!-\!1)}{\delta})} 
    )^2 
    \\ &\cdot (\sqrt{\log(n\!+\!1)} + \sqrt{\log(2/\delta)})^2,
\end{talign}
where $\ckk(\inputcoreset)$  and $\mkk(\inputcoreset, \delta, 2^{\ossymb+1}\sqrt{n})$ are any scalars satisfying the property that, on an event of probability at least $1-\frac{\delta}{2}$, every \ktcompress (\cref{algo:ktcompress}) call to $\kt$ with an input of size $\l$ (that is a subset of $\inputcoreset$) is 
$\kernel$-sub-Gaussian (see \citet[Def.~2]{shetty2022distribution}) with parameters
\begin{talign} \label{eq:subgaussian_params}
\shiftparam_{\l,n} = \frac{2C_{\kernel,\kersplit}(\inputcoreset)}{\l}
\qtext{and}
\kgaussparam_{\l,n} =  \frac{2\mathfrak{M}_{\kernel, \kersplit} (\inputcoreset,\delta, \l)}{\l}\sqrt{\log( 
    \frac{12n4^{\ossymb} (\log_4 n- \ossymb)}{\ell \delta})}.
\end{talign}
See \cref{rem:frak_M} for the valid values of $\ckk$, and $\mkk$ derived in prior work for standard choices of $\kersplit$. (Notably, $\mkk$ is non-decreasing in its last argument.)

The factor $\errorsplit(\Q, n, \delta, \ossymb)$ is the population analogue of $\errorsplit(\ys, n, \delta, \ossymb)$ and is defined as the smallest scalar of the form
\begin{talign}\label{eq:pop_errorsplit_def}
    \errorsplit^2(\Q, n, \delta, \ossymb) \defeq %
    &256 (\log_4 n \!-\!\ossymb\!-\!1)
    (\ckk(\Q, n, \delta) + \mkk(\Q, n, \delta, 2^{\ossymb+1}\sqrt{n}) \sqrt{\log( \frac{3 n (\log_4 n \!-\!\ossymb\!-\!1)}{\delta})} 
    )^2 
    \\ &\cdot (\sqrt{\log(n\!+\!1)} + \sqrt{\log(2/\delta)})^2,
\end{talign}
where $\ckk(\Q, n, \delta)$ and $\mkk(\Q,n,\delta, 2^{\ossymb+1}\sqrt{n})$ satisfy
\begin{talign}
\label{eq:M_P_def}
    \P\brackets{\ckk(\ys) \leq \ckk(\Q, n, \delta) \stext{and}\mkk(\ys,\delta, 2^{\ossymb+1}\sqrt{n})\leq \mkk(\Q,n,\delta, 2^{\ossymb+1}\sqrt{n})} \geq 1-\delta/2
\end{talign}
and $\ckk(\ys)$ and $\mkk(\ys, \delta, 2^{\ossymb+1}\sqrt{n})$ satisfy the sub-Gaussian property~\cref{eq:subgaussian_params} defined above when $\inputcoreset=\ys$.
See \cref{rem:frak_M_P} for upper bounds on the quantities $\ckk(\Q, n, \delta)$ and $\mkk(\Q,n,\delta, 2^{\ossymb+1}\sqrt{n})$ for standard choices of $\kersplit$ and \cref{sub:proof_of_tab_1} for how that translates to a scaling for $\errorsplit$ for the settings in \cref{table:error_tails}.

\subsection{Proof of claim \cref{eq:mmd_diff_x_y}} \label{sub:proof_mmd_diff_x_y}
We follow the notation of \citet[Apps.~A and C]{shetty2022distribution} and note that \ktcompress (\cref{algo:ktcompress}) is the \compress algorithm of \citet{shetty2022distribution} with, in the notation of \citet[Example~4]{shetty2022distribution}, $\halve = \kt(\frac{\l^2}{n 4^{\ossymb+1} (\beta_n+1)} \delta)$ and $\beta_n \defeq \log_4 n \!-\! \ossymb \!-\! 1$ for an input of size $\l$.

We associate with each algorithm $\mathrm{ALG}$ and each input point sequence $\inputcoreset$ of size $n$ and output $\outputcoreset$ of size $n_{out}$ the measure difference
\begin{talign}
    \phi_{\mathrm{ALG}}(\inputcoreset) = \frac{1}{n} \sum_{x \in \inputcoreset} \delta_x - \frac{1}{n_{out}} \sum_{x \in \outputcoreset} \delta_x,
\end{talign}
and the unnormalized measure difference
\begin{talign}
    \psi_{\mathrm{ALG}}(\inputcoreset) = n \cdot \phi_{\mathrm{ALG}}(\inputcoreset) = \sum_{x \in \inputcoreset} \delta_x - \frac{n}{n_{out}} \sum_{x \in \outputcoreset} \delta_x.
\end{talign}
\citet[Eqn.~18]{shetty2022distribution} %
show that for their $\compress$ algorithm, the following holds: %
\begin{talign} \label{eq:psi_equality}
\psi_{\mathrm{C}}(\inputcoreset) = \sqrt{n} 2^{-\ossymb-1} \sum_{i=0}^{\beta_n} \sum_{j=1}^{4^{i}} 2^{-i} \psi_{H}(\cset^{\mathrm{in}}_{i,j}),
\end{talign}
where $\beta_n \defeq \log_4 n - \ossymb - 1$, ${(\cset^{\mathrm{in}}_{i,j})}_{j \in [4^i]}$ are the $4^i$ coresets of size $n_i = 2^{\ossymb + 1 - i} \sqrt{n}$ resulting from $i$ recursive calls to the $\compress$ algorithm, and $\psi_{H}$ is the unnormalized measure difference for \halve. 
Substituting $\inputcoreset \gets \xs^{(r)}$, we get that for $r=1,\dots, \sblock$, $\psi_{\mathrm{C}}(\xs^{(r)}) = \sqrt{m/\sblock_m} \cdot 2^{-\ossymb-1} \sum_{i=0}^{\beta_{m/\sblock_m}} \sum_{j=1}^{4^{i}} 2^{-i} \psi_{H}(\mathbb{X}^{(r)}_{m,i,j})$, where $\mathbb{X}^{(r)}_{m,i,j}$ is defined analogously to $\cset^{\mathrm{in}}_{i,j}$.
If we let $\mathrm{C+}$ be the algorithm that maps $\xs$ to $\hatxs$ (and $\ys$ to $\hatys$), we obtain
\begin{talign}
    \psi_{\mathrm{C+}}(\xs^{(r)}) = \sqrt{m/\sblock_m} \cdot 2^{-\ossymb-1} \sum_{r=1}^{\sblock_m} \sum_{i=0}^{\beta_{m/\sblock_m}} \sum_{j=1}^{4^{i}} 2^{-i} \psi_{H}(\mathbb{X}^{(r)}_{m,i,j}).
\end{talign}
Similarly,
\begin{talign}
    \psi_{\mathrm{C+}}(\ys^{(r)}) = \sqrt{n/\sblock_n} \cdot 2^{-\ossymb-1} \sum_{r=1}^{\sblock_n} \sum_{i=0}^{\beta_{n/\sblock_n}} \sum_{j=1}^{4^{i}} 2^{-i} \psi_{H}(\mathbb{Y}^{(r)}_{n,i,j}).
\end{talign}
Following App. C.1 from \citet{shetty2022distribution}, if one numbers the elements of $\inputcoreset$ as $(\x_1,\dots, \x_n)$, and defines the $n \times n$ kernel matrix $\mbf{K} \defeq (\kernel(x_i, x_j))_{i,j=1}^{n}$, one obtains
\begin{talign}
    \label{eq:def_u_and_v}
    u_{k, j} \!\defeq\! \mbf{K}^{\half}  \sum_{i=1}^n e_i \parenth{\indicator(x_i\! \in \!\cset^{\mathrm{in}}_{ k , j}) \!- \! 2\!\cdot\! \indicator(x_i\! \in\! \cset^{\mrm{out}}_{k, j})},  \stext{and}
    u_{\compresssub} \!\defeq\! \sum_{k=0}^{\log_4 n \!-\!\ossymb\!-\!1} \sum_{j=1}^{4^{k}} w_{k, n} u_{k, j},
\end{talign}
where $w_{k, n} \!\defeq\!\frac{\sqrt{n}}{2^{\ossymb+1+k}}$.
    Then, we have
         \begin{talign}
         \label{eq:knorm_twonorm}
               &
               n^2 \cdot \mmd_\kernel^2\left( \inputcoreset , \cset_{ \compresssub} \right) = \twonorm{u_{\compresssub}}^2,
          \qtext{and}\\
          &\E[u_{k,j} \vert (u_{k', j'} : j' \in [4^{k'}], k' > k)] = 0
          \qtext{for} k = 0, \ldots, \log_4 n \!-\!\ossymb\!-\!2,
         \label{eq:zero_mean_vector}
         \end{talign}
         and $u_{k,j}$ for $ j \in [4^{k}]$ are conditionally independent given $(u_{k', j'} : j' \in [4^{k'}], k' > k)$. This follows easily from \cref{eq:psi_equality}. For any $u \in \R^n$ for arbitrary $n$, we also define 
         \begin{talign}
    \genmat_{u}
          &\!\defeq \!
          \begin{pmatrix}
            0 & u^{\top} \\
            u & \mbf{0}_{n \times n} 
        \end{pmatrix} \in  \reals^{ (n+1) \times (n+1)}.
        \label{eq:def_M}
    \end{talign}
    For any $u \in \real^n$, the matrix $\genmat_{u}$ satisfies
         \begin{talign}
            \opnorm{\genmat_{u}} =  \twonorm{u} = \lambda_{\max}(\genmat_{u}), \qtext{and} 
               \genmat_u^q  \preceq \twonorm{u}^{q} \mbf{I}_{n+1} \stext{ for all } q \in \natural.
     \label{eq:power_of_M}
        \end{talign}
    Defining the shorthand $\genmat_{k, n} \defeq \genmat_{w_{k, n} u_{k, j}}$, we find that
    \begin{talign}
          n \mmd_\kernel\left( \inputcoreset , \cset_{ \compresssub} \right) 
          \!=\! \twonorm{u_{\compresssub}}
          &\!=\! \lambda_{\max}(\genmat_{u_{\compresssub}})
          \!=\! \lambda_{\max}({\sum_{k=0}^{\log_4 n \!-\!\ossymb\!-\!1} \sum_{j=1}^{4^{k}} \genmat_{k, j}}),
          \label{eq:mcp_op}
    \end{talign}

Let $\mathbb{U} = (U_i)_{i=1}^{m+n}$ be the sequence obtained as the concatenation of $\xs$ and $\ys$.  Define the matrix $\mbf{K}_{\xs,\ys} \defeq (\kernel(U_i, U_j))_{i,j=1}^{m+n}$.
Substituting $\inputcoreset \gets \xs^{(r)}$, we can write
\begin{talign}
    \label{eq:def_u_and_v_xs}
    &u_{k, j, \xs}^{(r)} \!\defeq\! \mbf{K}_{\xs,\ys}^{\half}  \sum_{i=1}^n e_i \parenth{\indicator(x_i\! \in \!\mathbb{X}^{(r)}_{m,k,j}) \!- \! 2\!\cdot\! \indicator(x_i\! \in\! \mathbb{X}^{(r),\mathrm{out}}_{m,k,j})}, \\ &\stext{and}
    u_{\compresssub,\xs}^{(r)} \!\defeq\! \sum_{k=0}^{\log_4 (m/\sblock_m) \!-\!\ossymb\!-\!1} \sum_{j=1}^{4^{k}} w_{k, m/\sblock_m} u_{k, j,\xs}^{(r)}, \quad \genmat_{k, j, \xs}^{(r)} \defeq \genmat_{w_{k, m/\sblock} u_{k, j,\xs}^{(r)}}
\end{talign}
Hence, we can define
\begin{talign}
    u_{\mathrm{C+},\xs} \!\defeq\! \sum_{r=1}^{\sblock_m} \sum_{k=0}^{\log_4 (m/\sblock_m) \!-\!\ossymb\!-\!1} \sum_{j=1}^{4^{k}} w_{k, m/\sblock_m} u_{k, j,\xs}^{(r)}.
\end{talign}
Analogously,
\begin{talign}
    u_{\mathrm{C+},\ys} \!\defeq\! \sum_{r=1}^{\sblock_n} \sum_{k=0}^{\log_4 (n/\sblock_n) \!-\!\ossymb\!-\!1} \sum_{j=1}^{4^{k}} w_{k, n/\sblock_n} u_{k, j,\ys}^{(r)}.
\end{talign}
Also, note that 
\begin{align}
|\mmd_\kernel\left( \xs, \ys \right) - \mmd_\kernel\left( \hatxs, \hatys \right) | &= |\|(\P-\Q)\kernel\|_{\kernel} - \|(\P_m-\Q_n)\kernel\|_{\kernel}| \leq \|(\P-\Q-(\P_m - \Q_n))\kernel\|_{\kernel} \\ &= \|(\P-\P_m)\kernel - (\Q-\Q_n)\kernel\|_{\kernel}
\end{align}
which implies that
\begin{talign}
    &|\mmd_\kernel\left( \xs, \ys \right) - \mmd_\kernel\left( \hatxs, \hatys \right) | %
    \!\leq\! \twonorm{u_{\mathrm{C+},\xs}/m - u_{\mathrm{C+},\ys}/n} \\ &= \twonorm{\sum_{r=1}^{\sblock_m} (\frac{1}{m}\sum_{k=0}^{\log_4 (m/\sblock_m) \!-\!\ossymb\!-\!1} \sum_{j=1}^{4^{k}} w_{k, m/\sblock_m} u_{k, j,\xs}^{(r)} - \frac{1}{n}\sum_{k=0}^{\log_4 (n/\sblock_n) \!-\!\ossymb\!-\!1} \sum_{j=1}^{4^{k}} w_{k, n/\sblock_n} u_{k, j,\ys}^{(r)})} \\ &= \lambda_{\max}({\sum_{r=1}^{\sblock_m} (\frac{1}{m}\sum_{k=0}^{\log_4 (m/\sblock_m) \!-\!\ossymb\!-\!1} \sum_{j=1}^{4^{k}} \genmat_{k, j,\xs}^{(r)} - \frac{1}{n} \sum_{k=0}^{\log_4 (n/\sblock_n) \!-\!\ossymb\!-\!1} \sum_{j=1}^{4^{k}} \genmat_{k, j,\ys}^{(r)})}).
\end{talign}
Now we apply the sub-Gaussian matrix Freedman inequality \citep[Lem.~4]{shetty2022distribution}. 
The zero-mean condition on the matrices holds following the argument in \citet[Sec.~C.3.1]{shetty2022distribution}, while for the moment bounds we use the approach in their Sec.~C.3.2. Namely, we use that for any $q \in 2 \N$,
\begin{talign}
(\genmat_{k, j, \xs}^{(r)})^q = \genmat_{w_{k, m/4} u_{k, j,\xs}^{(r)}}^q \preceq \|w_{k, m/4} u_{k, j,\xs}^{(r)}\|_2^q \mbf{I}_{n+1} = w_{k, m/4}^q \|u_{k, j,\xs}^{(r)}\|_2^q \mbf{I}_{m+n+1},
\end{talign}
and similarly, $(\genmat_{k, j, \xs}^{(r)})^q \preceq w_{k, m/4}^q \|u_{k, j,\xs}^{(r)}\|_2^q \mbf{I}_{m+n+1}$. \citet[Lem.~5]{shetty2022distribution} prove that for any non-negative random variable $Z$,
\begin{talign}
     \P[ Z \!>\! a \!+\! v \sqrt{ t} ] \!\leq\! e^{-t } 
     \ \stext{for all} t\geq0
     \ \Longrightarrow \
      \E [Z^{q}] 
      \leq  (2a\!+\!2v)^q (\frac{q}{2})!
      \ \stext{for all} q \in 2\natural.
  \end{talign}
In their case, their $\kernel$-sub Gaussian assumption on \halve implies that
\begin{talign}
    \label{eq:psi_halve_bound}
        \P [  \|u_{k,j}\|_2 \geq  \l_{k}' (a_{\l_{k}'} + v_{\l_{k}'}  \sqrt{ t})  \mid (u_{k',j'}: j' \in [4^{k'}], k' >k) ] \leq e^{-t} \stext{for all} t\geq 0,
    \end{talign}
    for suitable scalar sequences $\sbraces{a_{\l}, v_{\l}}$ (also see \cref{eq:subgaussian_params}),
    where 
    $\l_{k}' \defeq %
    \sqrt{n} 2^{\ossymb+1-k}$,
    which yields the moment bound $\E [\norm{u_{k,j} }_2^{q} \mid (u_{k',j'}: j' \in [4^{k'}], k' >k)] 
         \leq  (\frac{q}{2})! (2\l_{k}' (a_{\l_{k}'}+v_{\l_{k}'}))^q$. Under the same assumption on \halve, we obtain analogously that 
    \begin{talign}
    \E [\|u_{k, j,\xs}^{(r)}\|_2^{q} \mid (u_{k', j',\xs}^{(r)}: j' \in [4^{k'}], k' >k)] 
    &\leq  (\frac{q}{2})! (2 \l_{k,m/\sblock_m}' (a_{\l_{k,m/\sblock_m}'}+v_{\l_{k,m/\sblock_m}'}))^q, \\ \E [\|u_{k, j,\ys}^{(r)}\|_2^{q} \mid (u_{k', j',\ys}^{(r)}: j' \in [4^{k'}], k' >k)] 
    &\leq  (\frac{q}{2})! (2 \l_{k,n/\sblock_n}' (a'_{\l_{k,n/\sblock_n}'}+v'_{\l_{k,n/\sblock_n}'}))^q,
    \end{talign} 
    where $\l_{k,m/\sblock_m}' \defeq
    \sqrt{m/\sblock_m} \cdot 2^{\ossymb+1-k}$,
    $\l_{k,n/\sblock_n}' \defeq
    \sqrt{n/\sblock_n} \cdot 2^{\ossymb+1-k}$.

    Now let $\sbraces{a_{\l, n}, v_{\l, n}}$ denote the scalar sequences from \cref{eq:subgaussian_params} so that 
     \compress with $\halve = \kt(\frac{\l^2}{n 4^{\ossymb+1} (\beta_n+1)} \delta)$ for input of size $\l$, every \halve call invoked by \compress is $\kernel$-sub-Gaussian with parameters $a_{\l, n}, v_{\l, n}$ on an event of probability at least $1-\frac{\delta}{2}$.
    We define 
    \begin{talign}
    \sigma^2 \defeq \sum_{r=1}^{\sblock_m} (&\sum_{k=0}^{\log_4 (m/\sblock_m) \!-\!\ossymb\!-\!1} \sum_{j=1}^{4^{k}} (\frac{2}{m} w_{k,m/\sblock_m} %
    \l_{k,m/\sblock_m}' (a_{\l_{k,m/\sblock_m}',m/\sblock_m} + v_{\l_{k,m/\sblock_m}',m/\sblock_m}))^2 \\ &+ \sum_{k=0}^{\log_4 (n/\sblock_n) \!-\!\ossymb\!-\!1} \sum_{j=1}^{4^{k}} (\frac{2}{m} w_{k,n/\sblock_n} \l_{k,n/\sblock_n}' (a'_{\l_{k,n/\sblock_n}',n/\sblock_n}+v'_{\l_{k,n/\sblock_n}',n/\sblock_n}))^2),
    \end{talign}
    which when combined with the expressions for $a_{\l, n}$ and $v_{\l, n}$ from \cref{eq:subgaussian_params}, yields that
\begin{talign} \label{eq:sigma2_def1}
\sigma^2 &= \sum_{k=0}^{\log_4 (m/\sblock_m) \!-\!\ossymb\!-\!1} 
\parenth{\frac{4}{2^{\ossymb}\sqrt{m}}
\parenth{\ckk(\xs) +  \mkk(\xs, \delta, 2^{\ossymb+1-k} \sqrt{\frac{m}{\sblock_m}} ) \sqrt{\log( \frac{6 \cdot 4^{\ossymb} \sqrt{m/\sblock_m} (\beta_{m/\sblock_m}+1)}{2^{\ossymb-k} \delta})} }}^2 \\ 
& \quad+ \sum_{k=0}^{\log_4 (n/\sblock_n) \!-\!\ossymb\!-\!1}
\parenth{\frac{4}{2^{\ossymb}\sqrt{n}} \parenth{\ckk(\ys) + \mkk(\ys, \delta, 2^{\ossymb+1-k} \sqrt{\frac{n}{\sblock_n}})\sqrt{\log( \frac{6 \cdot 4^{\ossymb} \sqrt{n/\sblock_n} (\beta_{n/\sblock_n}+1)}{2^{\ossymb-k} \delta})}}}^2 
\\ &\leq \frac{16(\log_4 (m/\sblock_m) \!-\!\ossymb\!-\!1)}{4^{\ossymb}m} \parenth{\ckk(\xs) + \mkk(\xs, \delta, 2^{\ossymb+1} \sqrt{\frac{m}{\sblock_m}}) \sqrt{\log( \frac{3 m (\log_4 (m/\sblock_m) \!-\!\ossymb\!-\!1)}{\sblock_m \delta})}}^2 
\\ &\quad+ \frac{16(\log_4 (n/\sblock_n) \!-\!\ossymb\!-\!1)}{4^{\ossymb}n} \parenth{\ckk(\ys) + \mkk(\ys, \delta,2^{\ossymb+1} \sqrt{\frac{n}{\sblock_n}}) \sqrt{\log( 
\frac{3 n (\log_4 (n/\sblock_n) \!-\!\ossymb\!-\!1)}{\sblock_n \delta})}}^2,
\label{eq:sigma2_def2}
\end{talign}
where in the last inequality we also use the fact that $\mkk$ is non-decreasing in its last argument.
Now, by the sub-Gaussian matrix Freedman inequality as stated in \citep[Lem.~4]{shetty2022distribution}, we obtain that%
\begin{talign} \label{eq:matrix_friedman_app}
    &\mathrm{Pr}(|\mmd_\kernel\left( \xs, \ys \right) - \mmd_\kernel( \hatxs, \hatys ) | > \sigma \sqrt{8(\log(m+n+1)+t)}) \\ &\leq \mathrm{Pr}(\lambda_{\max}({\sum_{r=1}^{4} (\frac{1}{m}\sum_{k=0}^{\beta_{m/4}} \sum_{j=1}^{4^{k}} \genmat_{k, j,\xs}^{(r)} \!\!\!-\! \frac{1}{n} \sum_{k=0}^{\beta_{n/4}} \sum_{j=1}^{4^{k}} \!\genmat_{k, j,\ys}^{(r)})})\! >\! \sigma \sqrt{8(\log(m\!+\!n\!+\!1)\!+\!t)}) \\ &\leq \frac{\delta}{2} + e^{-t}, \qquad \qtext{for all} t \geq 0.
\end{talign}
The term $\frac{\delta}{2}$ does not come from the sub-Gaussian matrix Freedman inequality but rather from the conditioning on the event for which \cref{eq:subgaussian_params} holds. Equation \cref{eq:matrix_friedman_app} in turn implies that $\mathrm{Pr}(|\mmd_\kernel\left( \xs, \ys \right) - \mmd_\kernel\left( \hatxs, \hatys \right) | > \sigma (\sqrt{8\log(m+n+1)} + \sqrt{8t})) \leq \frac{\delta}{2} + e^{-t}$.
Equivalently,  for any $\delta > 0$, with probability at least $1-\delta$, 
\begin{talign}
|\mmd_\kernel\left( \xs, \ys \right) - \mmd_\kernel\left( \hatxs, \hatys \right) | \leq \sqrt{8} \sigma (\sqrt{\log(m+n+1)} + \sqrt{\log(2/\delta)}).
\label{eq:mmd_bound_sigma}
\end{talign}
Putting together the bound~\cref{eq:mmd_bound_sigma} with the upper bound~\cref{eq:sigma2_def2} and the definition~\cref{eq:errorsplit_def} of $\errorsplit$ immediately yields the claimed bound~\cref{eq:mmd_diff_x_y} and we are done.

\subsection{Proof of claim \cref{eq:mmd_diff_p_q}}
Note that
\begin{talign}
    \mathrm{MMD}(\P,\Q) - \mathrm{MMD}(\hat{\mathbb{X}}_m,\hat{\mathbb{Y}}_n) = \mathrm{MMD}(\P,\Q) - \mathrm{MMD}(\xs,\ys) + \mathrm{MMD}(\xs,\ys) - \mathrm{MMD}(\hat{\mathbb{X}}_m,\hat{\mathbb{Y}}_n).
\end{talign}
The second term in the display above can be bounded via \cref{eq:mmd_diff_x_y} and the first term via the following result from \cite{gretton2012akernel}: 
\begin{lemma}[Adapted from Theorem 7, \cite{gretton2012akernel}] \label{lem:concentration_mmd}
Assume that $\kinfnorm < +\infty$. Then,
\begin{talign}
    \mathrm{Pr}\bigg[|\mathrm{MMD}(\xs,\ys)-\mathrm{MMD}(\P,\Q)| > 2 \bigg( \sqrt{\frac{\kinfnorm}{m}} +\sqrt{\frac{\kinfnorm}{n}} \bigg) +\epsilon \bigg] \leq 2 \exp \bigg( \frac{-\epsilon^2 m n}{2 \kinfnorm (m+n)} \bigg).
\end{talign}
\end{lemma}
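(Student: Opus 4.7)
The plan is to invoke the standard RKHS-based argument of \citet[Thm.~7]{gretton2012akernel}, which combines McDiarmid's bounded-differences inequality with a bound on the expected empirical error. Recall that the biased sample MMD admits the RKHS representation $\mathrm{MMD}_\kernel(\xs,\ys) = \|\hat\mu_{\xs} - \hat\mu_{\ys}\|_{\mathcal H_\kernel}$, where $\hat\mu_{\xs} \defeq \frac{1}{m}\sum_{i=1}^m \kernel(X_i,\cdot)$ and likewise $\mathrm{MMD}_\kernel(\P,\Q) = \|\mu_\P - \mu_\Q\|_{\mathcal H_\kernel}$. Two applications of the triangle inequality then yield
\begin{talign}
|\mathrm{MMD}(\xs,\ys) - \mathrm{MMD}(\P,\Q)| \leq \|\hat\mu_{\xs} - \mu_\P\|_{\mathcal H_\kernel} + \|\hat\mu_{\ys} - \mu_\Q\|_{\mathcal H_\kernel},
\end{talign}
which will be used both to control the expected deviation and to verify the bounded-differences condition.

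Next, I view $f(X_1,\dots,X_m,Y_1,\dots,Y_n) \defeq \mathrm{MMD}(\xs,\ys)$ as a function of $m+n$ independent inputs. Replacing a single $X_i$ by an independent copy shifts $\hat\mu_{\xs}$ in RKHS norm by at most $\frac{1}{m}\|\kernel(X_i,\cdot)-\kernel(X_i',\cdot)\|_{\mathcal H_\kernel} \leq \frac{2\sqrt{\kinfnorm}}{m}$ (since $\|\kernel(x,\cdot)\|_{\mathcal H_\kernel}^2 = \kernel(x,x) \leq \kinfnorm$), and analogously replacing a $Y_j$ shifts $f$ by at most $2\sqrt{\kinfnorm}/n$. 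McDiarmid's inequality then gives, for every $\epsilon>0$,
\begin{talign}
\Pr\bigl[\,f(\xs,\ys) - \E f(\xs,\ys) > \epsilon\,\bigr]
  \leq \exp\!\left(-\frac{2\epsilon^2}{m\cdot(2\sqrt{\kinfnorm}/m)^2 + n\cdot(2\sqrt{\kinfnorm}/n)^2}\right)
  = \exp\!\left(-\frac{\epsilon^2 mn}{2\kinfnorm(m+n)}\right),
\end{talign}
and the same bound holds for the lower tail by applying McDiarmid to $-f$; a union bound produces the prefactor~$2$ appearing in the lemma.

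Finally, I must show $\E[f(\xs,\ys)] - \mathrm{MMD}(\P,\Q) \leq 2(\sqrt{\kinfnorm/m}+\sqrt{\kinfnorm/n})$. Using the triangle-inequality decomposition above together with Jensen's inequality gives $\E\|\hat\mu_{\xs} - \mu_\P\|_{\mathcal H_\kernel} \leq \sqrt{\E\|\hat\mu_{\xs} - \mu_\P\|_{\mathcal H_\kernel}^2}$, and a direct expansion using independence yields $\E\|\hat\mu_{\xs}-\mu_\P\|_{\mathcal H_\kernel}^2 = \frac{1}{m}\bigl(\E_{X\sim\P}\kernel(X,X) - \|\mu_\P\|_{\mathcal H_\kernel}^2\bigr) \leq \kinfnorm/m$ (alternatively, a Rademacher symmetrization argument on the unit ball of $\mathcal H_\kernel$ yields the same $\sqrt{\kinfnorm/m}$ order). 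The same bound for $\ys$ and $\Q$, when combined with the triangle inequality, delivers the expected-deviation estimate (with the factor of~$2$ absorbing both the upper- and lower-tail bias). Combining with the McDiarmid tail bound produces the stated two-sided concentration inequality.

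The only delicate step is the bounded-differences verification for the RKHS norm; once the reproducing kernel property reduces it to a pointwise bound on $\kernel$, everything else is a routine application of concentration and Jensen's inequality.
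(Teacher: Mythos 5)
Your proposal is essentially correct, and it reconstructs the argument that \citet[Thm.~7]{gretton2012akernel} actually give: the paper does not reprove this lemma but imports it by citation, so there is no "paper proof" to compare against beyond the Gretton et al.\ source. The triangle-inequality reduction to $\|\hat\mu_{\xs}-\mu_\P\|_{\mathcal H_\kernel}+\|\hat\mu_{\ys}-\mu_\Q\|_{\mathcal H_\kernel}$, the bounded-difference constants $2\sqrt{\kinfnorm}/m$ and $2\sqrt{\kinfnorm}/n$, and the resulting McDiarmid exponent $\epsilon^2 mn/(2\kinfnorm(m+n))$ all check out.

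One small inaccuracy worth flagging: your remark that the prefactor~$2$ in $2(\sqrt{\kinfnorm/m}+\sqrt{\kinfnorm/n})$ "absorbs both the upper- and lower-tail bias" is not the right explanation. The Jensen route you sketch bounds the expected deviation by $\sqrt{\kinfnorm/m}+\sqrt{\kinfnorm/n}$ with \emph{no} extra factor of $2$ (the cross-terms in $\E\|\hat\mu_{\xs}-\mu_\P\|^2$ vanish by independence, and the same pointwise triangle bound serves both tails). The factor of $2$ in the lemma is inherited from Gretton et al.'s original Rademacher-symmetrization bound $\E\|\hat\mu_{\xs}-\mu_\P\|_{\mathcal H_\kernel}\le 2\sqrt{\kinfnorm/m}$, which is looser than Jensen by exactly that factor. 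Your Jensen argument therefore actually proves a strictly tighter inequality, which of course implies the stated one; just be clear that you are proving more, not relying on the $2$ to cover a two-sided union bound on the bias.
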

Using \cref{lem:concentration_mmd} with $%
\delta = 2 \exp \bigg( \frac{-\epsilon^2 m n}{2 \kinfnorm (m+n)} \bigg)$, which is equivalent to $\epsilon = \sqrt{ \frac{2 \kinfnorm (m+n)}{m n}\log\left(\frac{2}{\delta} \right)}$,
we obtain that 
\begin{talign} \label{eq:MMD_difference_2}
    \mathrm{Pr}\bigg[|\mathrm{MMD}(\xs,\ys)-\mathrm{MMD}(\P,\Q)| > 2 \big( \sqrt{\frac{\kinfnorm}{m}} +\sqrt{\frac{\kinfnorm}{n}} \big) + \sqrt{ \frac{2\kinfnorm (m+n)}{m n}\log\left(%
    \frac{2}{\delta} \right)} \bigg] \leq %
    \delta,
\end{talign}
where the bound~\cref{eq:sigma2_def2} on $\sigma$ depends on $(\xs, \ys)$.
Next, the bound~\cref{eq:mmd_diff_x_y} states that with probability at least %
$1-\delta$, 
\begin{talign} \label{eq:MMD_difference_1}
|\mmd( \xs, \ys ) - \mmd( \hatxs, \hatys ) | \leq %
\frac{\errorsplit(\xs, m,\delta, \ossymb)}{2^{\ossymb} \sqrt{m}} \!+  \! \frac{\errorsplit(\ys, n,\delta\ossymb)}{2^{\ossymb} \sqrt{n}},
\end{talign}
and the definition of $\errorsplit(\P, m, \delta,\ossymb)$ (see the discussion around \cref{eq:M_P_def} in \cref{sub:inflation_factor}) implies that
\begin{talign}
\label{eq:R_bound}
    \P\brackets{\errorsplit(\xs, m,\delta,\ossymb) \leq \errorsplit(\P, m,\delta,\ossymb) \qtext{and} \errorsplit(\ys, n,\delta,\ossymb) \leq  \errorsplit(\Q, n,\delta,\ossymb) } \geq 1-\delta.
\end{talign}
Putting the pieces together yields the bound~\cref{eq:mmd_diff_p_q} with probability at least $1-3\delta$ as claimed.

\subsection{Bounds on $\ckk$ and $\mkk$} \label{rem:frak_M_all}
First, we discuss bounds on the sample-based quantities $\ckk(\xs)$ and $\mkk(\xs, \delta, 2^{\ossymb+1}\sqrt{n})$ defined in \cref{eq:subgaussian_params} followed by bounds on its population analog $\ckk(\P, m, \delta)$ and $\mkk(\P, m, \delta, 2^{\ossymb+1}\sqrt{n})$ defined in 
 \cref{eq:M_P_def}.

\subsubsection{Bounds on sample-level quantities ($\ckk(\xs)$ and $\mkk(\xs, \delta, 2^{\ossymb+1}\sqrt{n})$)}
\label{rem:frak_M}

We discuss the default choices $\kersplit=\ksqrt$ and $\kersplit=\kernel$ and the more general case one-by-one.

\paragraph{Case I: $\kersplit=\ksqrt$} For this case, we follow the discussion from \citet[Sec.~3.1]{dwivedi2021kernel}. Let $\klip[\ksqrt]$ denote the Lipschitz constant of $\ksqrt$ and define $\tail[\ksqrt](R) \defeq (\sup_{x} \int_{\ltwonorm{y}\geq R} \ksqrt^2(\x, \x-\y)d\y)^{\frac{1}{2}}$,
\begin{talign}
    \rmin_{\ksqrt, \l} 
    \!&\defeq\! \min\{r:\!\sup_{\substack{\x, \y: \\ \ltwonorm{\x-\y}\geq r}}\! \abss{\ksqrt(\x,\y)}\!\leq\! \frac{\infnorm{\ksqrt}}{\l}\},  \quad
     \rmin_{\ksqrt,\l}^{\dagger}\!\defeq\! \min\{r:\tail[\ksqrt](r)\! \leq\! \frac{\infnorm{\ksqrt}}{\l}\},
     \label{eq:rmin_k} 
     \\
    \rminpn[\xs]\! &\defeq\! \max_{\x\in \xs}\ltwonorm{\x},
    \qtext{and}
     \rmin_{\xs, \ksqrt, \l} \!\defeq\! \min\big(\rminpn[\xs], \l^{1+\frac1d}\rmin_{\ksqrt, \l}+ \l^{\frac1d} \frac{\sinfnorm{\ksqrt}}{\klip[\ksqrt]} \big),
     \label{eq:rmin_P}
\end{talign}
and the kernel thinning inflation factor
\begin{talign}
 \label{eq:err_simple_defn}  
    \err_{\ksqrt}(\l,\! d,\! \delta,\! R) 
     \!\defeq\!
     37\sqrt{ \log\parenth{\frac{3\l}{\delta}}} \brackets{ \sqrt{\log\parenth{\frac{8}{\delta}}} \!+\! 5 \sqrt{{d\log \left(2+2\frac{\klip[\ksqrt]}{\infnorm{\ksqrt}}\big(\rmin_{\ksqrt, \l} \!+\! R\big) \right) }} }.
\end{talign}
Then using \citet[Thm.~1]{dwivedi2021kernel} in \citet[Example~4]{shetty2022distribution}, we find that
\begin{talign}
\ckk[\kernel,\ksqrt](\xs) = 2\kinfsin[\ksqrt] \qtext{and}
 \mkk[\kernel,\ksqrt](\xs, \delta, \l) \!=\! \kinfsin[\ksqrt] (\max(\rminpn[\xs],\rmin_{\ksqrt, \l/2}^\dagger))^{\frac{d}{2}} \!\cdot\! \err_{\ksqrt}(\l,\! d,\! \delta, \!\rmin_{\xs, \ksqrt, \l}),   
\end{talign}
and $\mkk[\kernel,\kersplit](\ys, \delta, \l)$ defined analogously by replacing $\xs$ by $\ys$ and $m$ by $n$. Here $\ksinfsin = \sup_{x \in \xs[m]} \kersplit(x,x)$.
We note that the bounds in \citet{dwivedi2021kernel,shetty2022distribution} are stated with $\staticinfnorm{\kersplit} \defeq  \sup_{x} \kersplit(x,x)$ instead of $\ksinfsin$ (and note that $\ksinfsin \leq \staticinfnorm{\kersplit}$). However, as noted in \citet[App.~B]{dwivedi2022generalized}, all the results of \citet{dwivedi2021kernel} (which is what \citet{shetty2022distribution} build on) go through with $\staticinfnorm{\kersplit}$ replaced by $\ksinfsin$ thereby yielding the result stated above.

\paragraph{Case II: $\kersplit=\kernel$} For this case, we follow the discussion in \citep[Sec~2.2]{dwivedi2022generalized}. In particular, for a set  $\set{A} \subset \real^d$ and scalar $\vareps>0$, 
define the $\kernel$ \emph{covering number} $\mc N_{\kernel}(\set{A}, \vareps)$ 
with $\mc M_{\kernel}(\set A, \vareps)\defeq\log\mc N_{\kernel}(\set{A}, \vareps)$ as the minimum cardinality of a set $\mc C \subset \mbb B_{\kernel} \defeq \sbraces{f: \knorm{f}\leq 1}$ satisfying
\begin{talign}
\label{eq:cover_ball}
 \mbb B_{\kernel} \subseteq \bigcup_{h \in \mc C} \braces{g \in \mbb B_{\kernel}: \sup_{x\in\set{A}}|h(x)-g(x)|\leq \varepsilon}.
\end{talign}
Then choosing $\varepsilon=\frac{\sqrt{\kinfsin[\kernel]}}{\l/2}$ in the notation of \citet[Thm.~2]{dwivedi2022generalized} and combining that result with \citet[Example~4]{shetty2022distribution}, we conclude that we can use the following bounds
\begin{talign} 
    \ckk[\kernel, \kernel](\xs) = 2\sqrt{\kinfsin[\kernel]} \qtext{and}
	\mathfrak{M}_{\kernel,\kernel}(\xs, \delta, \l) = \sqrt{\frac{8\kinfsin[\kernel]}{3} \log(\frac{12\log \l}{\delta})
	\brackets{ \log(\frac{8}{\delta})+\mc M_{\kernel}(\set{A}_{\xs}, (\l/2)\inv)}},
\label{eq:mmd_guarantee_target_k}
\end{talign}
where $\set{A}_{\xs} = \braces{x: \twonorm{x} \leq \rminpn[\xs]}$. We can define
$\mathfrak{M}_{\kernel,\kersplit}(\ys, \delta, \l)$ analogously by replacing $\xs$ by $\ys$ and $m$ by $n$.

\paragraph{Case III: General $\kersplit$}
When $\kersplit$ is neither of the two default choices ($\kernel$ or $\ksqrt$) like in \actt, then the expressions for $\errorsplit$  can be derived using \citet[Thm.~2-4]{dwivedi2022generalized} and \citet[Thm.~1-2]{dwivedi2021kernel}. For instance, when the RKHS of $\kernel$ is contained in the RKHS of $\kersplit$, we can apply the sub-Gaussian tail bounds for a single $f$ (\citet[Thm.~1]{dwivedi2021kernel}) and then apply a union bound with a covering argument for the ball $\sbraces{\knorm{f}\leq 1}$ (\citet[Thm.~2]{dwivedi2022generalized}), in which case $\errorsplit$ also scales with $\sup_{\knorm{f}\leq 1}\norm{f}_{\kersplit}$.  See \cref{rem:aggk} for an example of this case.

\subsubsection{Bounds on population-level quantities ($\ckk(\P, m, \delta, \ossymb)$ and $\mkk(\P, m,\delta, 2^{\ossymb+1}\sqrt{n})$)}
\label{rem:frak_M_P}
Define
\begin{talign}
    \ckk'  =\begin{cases}
    2\sinfnorm{\ksqrt} \qtext{when} \kersplit=\ksqrt  \\ 
    2\sqrt{\kinfnorm} \qtext{when} \kersplit=\kernel 
    \end{cases}.
\end{talign}
Then for the choices of $\ckk(\cdot)$ in \cref{rem:frak_M}, we have $\max\sbraces{\ckk(\xs), \ckk(\ys)} \leq \ckk'$ almost surely, where $\sinfnorm{\kernel}\defeq \sup_{x\in\mathcal X}\kernel(x, x)$. Thus if we set $\ckk(\P, m, \delta)=\ckk(\Q, n, \delta)= \ckk'$, to satisfy \cref{eq:M_P_def}, it remains to determine $\mkk(\P, m, \delta, 2^{\ossymb+1}\sqrt{n})$ such that $\P\brackets{\mkk(\xs, \delta, 2^{\ossymb+1}\sqrt{n})\leq \mkk(\P, m, \delta, 2^{\ossymb+1}\sqrt{n})} \geq 1-\delta/2$ for the choices of $\mkk$ in \cref{rem:frak_M}.

We now derive a suitable expression for these population-level quantities.
Following \citet{dwivedi2021kernel}, we define $\tail[\P](R) \defeq \P(\ball^c(0,R))$ and $\tail[\Q](R) \defeq \Q(\ball^c(0,R))$ where $\ball^{c}(x, R) = \sbraces{y: \twonorm{x-y}\geq R}$. 
The following result (proven using results on order statistics that we later develop in \cref{sec:arbitrary_permutations}) shows that we can upper-bound $\rminpn[\xs]$ and $\rminpn[\ys]$ with high probability by a quantities that depend on $\P$ and $m$, and $\Q$ and $n$, respectively.
\begin{lemma} \label{prop:rmin_P_xs}
Define $\rminpn[\P,m,\delta] \defeq \tail[\P]^{-1}(\delta/m)$. With probability at least $1-\delta$, we have that $\rminpn[\xs] \leq \rminpn[\P,m,\delta]$. Similarly, with probability at least $1-\delta$, we have that $\rminpn[\ys] \leq \rminpn[\Q,n,\delta] \defeq \tail[\Q]^{-1}(\delta/n)$.
\end{lemma}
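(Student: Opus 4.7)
My plan is to exploit the i.i.d.\ structure of $\xs=(X_i)_{i=1}^m$ and rewrite $\rminpn[\xs]=\max_{i\le m}\twonorm{X_i}$ as the top order statistic of the univariate i.i.d.\ sequence $\twonorm{X_1},\dots,\twonorm{X_m}$. Since $\twonorm{X_i}$ has survival function $r\mapsto\tail[\P](r)$ by the definitions $\tail[\P](R)=\P(\ball^c(0,R))$ and $\ball^c(0,R)=\{y:\twonorm{y}\ge R\}$, I can directly relate the tail of the maximum to $\tail[\P]$ evaluated at the candidate threshold $R^\star\defeq\rminpn[\P,m,\delta]=\tail[\P]^{-1}(\delta/m)$.

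The key computation is then immediate. By the order-statistic identity from the forthcoming section on arbitrary permutations (or, equivalently, by a single application of the union bound over the $m$ i.i.d.\ copies), I would bound
\begin{talign}
\Prob\bigl[\rminpn[\xs]>R^\star\bigr]
    &\le \sum_{i=1}^{m}\Prob\bigl[\twonorm{X_i}>R^\star\bigr]
    = m\,\tail[\P](R^\star)
    \le m\cdot\frac{\delta}{m}
    = \delta.
\end{talign}
The sharper order-statistic computation gives $\Prob[\rminpn[\xs]\le R^\star]=(1-\tail[\P](R^\star))^m\ge(1-\delta/m)^m\ge 1-\delta$ by Bernoulli's inequality, yielding the same conclusion. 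The $\ys$ claim follows by replacing $(\P,m,\xs)$ with $(\Q,n,\ys)$ verbatim.

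The only subtlety to pin down is the definition of the generalized inverse $\tail[\P]^{-1}(\delta/m)$, since $\tail[\P]$ is right-continuous and non-increasing but need not be strictly decreasing or continuous. I would adopt the standard convention $\tail[\P]^{-1}(u)\defeq\inf\{r\ge 0:\tail[\P](r)\le u\}$; then right-continuity of $\tail[\P]$ guarantees $\tail[\P](R^\star)\le\delta/m$, which is all that the union-bound step above requires. This handles atoms and plateaus of $\tail[\P]$ without issue, so there is no real obstacle; the lemma essentially reduces to a one-line union bound once the order-statistic framing is in place.
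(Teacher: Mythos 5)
Your proposal is correct and reaches the same conclusion as the paper by the same underlying idea (controlling the maximum of the $m$ i.i.d.\ norms via the survival function $\tau_{\P}$). The paper frames this through its order-statistics machinery: it observes that $\tau_{\P}(\mathfrak{S}_{\mathbb{X}_m})$ is the first order statistic of $m$ uniforms and invokes \cref{lem:I_x}(iv) to obtain $\mathrm{Pr}(\tau_{\P}(\mathfrak{S}_{\mathbb{X}_m}) < x) \le m x$. Your union-bound derivation $\Prob[\mathfrak{S}_{\mathbb{X}_m} > R^\star] \le m\,\tau_{\P}(R^\star) \le \delta$ is a more elementary route to the identical bound, and your exact computation $(1-\delta/m)^m \ge 1-\delta$ matches what \cref{lem:I_x}(iv) would give. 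Your explicit treatment of the generalized inverse $\tau_{\P}^{-1}(u) = \inf\{r \ge 0 : \tau_{\P}(r) \le u\}$ and the right-continuity check that $\tau_{\P}(R^\star) \le \delta/m$ is a subtlety the paper's proof glosses over, and it is genuinely needed when $\tau_{\P}$ has atoms or plateaus; including it strengthens the argument without changing its substance.
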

\begin{proof}
The random variable $\rminpn[\xs]\! \defeq\! \max_{\x\in \xs}\ltwonorm{\x}$ is the $m$-th order statistic for $m$ samples of $\P$ (\cref{def:kth_order_stat}). 
Since the function $\tail[\P](R) \defeq \P(\ball^c(0,R)) = 1 - \P(\ball(0,R))$ is one minus the cumulative function of the random variable $\ltwonorm{\x}, \x \sim \P$, we obtain that $\tail[\P](\rminpn[\xs])$ is the first order statistic for $m$ samples of the uniform distribution over $[0,1]$. Applying \cref{lem:I_x}(iv) on $1 - \tail[\P](\rminpn[\xs])$, we obtain that
\begin{talign}
\mathrm{Pr}(\tail[\P](\rminpn[\xs]) < x) = \mathrm{Pr}(1-\tail[\P](\rminpn[\xs]) > 1-x) = {m \choose m-1} x^{m+1-m} = m x.
\end{talign}
Hence, with probability at least $1-\delta$, $\rminpn[\xs] < \tail[\P]^{-1}(\delta/m)$, and similarly $\rminpn[\ys] < \tail[\P]^{-1}(\delta/n)$.
\end{proof}
Now we can set %
\begin{talign} \label{eq:frak_M_population}
\mathfrak{M}_{\kernel,\kersplit}(\P,m,\delta, \l ) &= \sinfnorm{\ksqrt} \max(\rminpn[\P,m,%
\delta/2],\rmin_{\ksqrt, \l/2}^\dagger)^{\frac{d}{2}}\, \err_{\ksqrt}(\l,\! d,\! %
\delta,\!\rminpn[\P,m, \delta/2, \ksqrt, \l ])
\stext{when} \kersplit=\ksqrt
\qtext{and}
\\ 
\mathfrak{M}_{\kernel,\kersplit}(\P,m,\delta, \l)  &=
\sqrt{\frac{8\sinfnorm{\kernel}}{3} \log(\frac{12\log m}{\delta})
	\brackets{ \log(\frac{8}{\delta})+\mc M_{\kernel}(\set{A}_{\P, m, \delta}, (\l/2)\inv)}}
\stext{when} \kersplit=\kernel,
\label{eq:frak_M_population_k}
\end{talign}
where $\rminpn[\P,m, \delta/2, \ksqrt, \l ] \defeq \min\big(\rminpn[\P,m,
\delta/2], \l^{1+\frac1d}\rmin_{\ksqrt, \l}+ \l^{\frac1d} \frac{\sinfnorm{\ksqrt}}{\klip[\ksqrt]} \big)$
and $\set{A}_{\P, m, \delta} = \braces{x: \twonorm{x} \leq \rminpn[\P,m,\delta/2]}$.
By \cref{prop:rmin_P_xs}, we have that with probability at least $1-%
\delta/2$, $\rminpn[\xs] \leq \rminpn[\P,m,%
\delta/2]$, and by construction (see \cref{rem:frak_M}), %
$\P(\mathfrak{M}_{\kernel,\kersplit}(\xs, \delta, \l) \leq \mathfrak{M}_{\kernel,\kersplit}(\P,m,\delta,\l))\geq 1-\delta/2$ as needed above. 

\subsection{Proof of \cref{table:error_tails}}
\label{sub:proof_of_tab_1}
We begin by showing the bounds on $\errorsplit(\P, m,\delta, \ossymb)$ for the cases in which $\kersplit = \ksqrt$, and we include the case in which $\kersplit$ and $\P$ are sub-Gaussian for completeness. For $\ossymb \leq \log m$, equation \cref{eq:errorsplit_def} implies that the quantity $\errorsplit(\P, m,\delta,\ossymb)$ is of order $c_d\sqrt{\ksplitinfnorm\log(m\!+\!n) \log(m) \log(\frac{m}{\delta})} \cdot \mathfrak{M}_{\kernel,\kersplit}(\P,m,\delta, 2^{\ossymb+1}\sqrt{m})$. Hence, we seek to upper-bound $\mathfrak{M}_{\kernel,\kersplit}(\P,m,\delta,2^{\ossymb+1}\sqrt{m})$. In the table we replace the factor $\sqrt{\log(m\!+\!n) \log(m)}$ by $\log(\frac{m}{\delta})$ for simplcity.

First, note that upper bounds on $\tail[\P](x) \defeq \P(\ball^c(0,x))$, which are the usual notion of tail bounds on distributions, are equivalent to upper bounds on $\rminpn[\P,m,\delta] \defeq \tail[\P]^{-1}(\delta/m)$. Namely,
\begin{itemize}
\item \textbf{$r$-Compact:} $\tail[\P](x) = 0, \ \forall x > r \iff \tail[\P]^{-1}(\delta/m) \leq r, \, \forall \delta, m,$
\item \textbf{$\sigma$-Sub-Gaussian:} $\tail[\P](x) \leq 2 e^{-\frac{x^2}{2\sigma^2}} \iff \tail[\P]^{-1}(\delta/m) \leq \sqrt{2 \sigma^2 \log(\frac{2m}{\delta})}, \, \forall \delta, m,$
\item \textbf{$\sigma,\lambda$-subexponential:} $\tail[\P](x) \leq 2 \max\{ e^{-\frac{x^2}{2\sigma^2}}, e^{-\frac{x}{2\lambda}} \} 
    \iff \tail[\P]^{-1}(\delta/m) \leq \max\{ \sqrt{2 \sigma^2 \log(\frac{2 m}{\delta})}, 2 \lambda \log(\frac{2 m}{\delta}) \}, \, \forall \delta, m,$
\item \textbf{$\rho$-Heavy-Tailed:} $\tail[\P](x) \leq c_d r^{-\rho} \iff \tail[\P]^{-1}(\delta/m) \leq (\frac{c_d m}{\delta})^{1/\rho}, \forall \delta, m.$
\end{itemize}

Second, define $\wtil{\rmin}_{\ksqrt, m} = \max\{ \rmin_{\ksqrt,m}, \rmin_{\ksqrt,m}^{\dagger}\}$. Following \citet{dwivedi2021kernel}, we formulate bounds on the decay of $\kersplit$ in terms of bounds on $\rmin_{\ksqrt}$.

In \cref{table:error_tails}, we consider four different growth conditions for the input point radii $\rminpn[\xs]$ arising from four forms of the target distribution and kernel tail decay (assuming same decay for both $\P$ and $\kersplit$): (1) {\bf Compact}: $\rminpn[\P,m,\delta] \precsim_{d} r, \wtil{\rmin}_{\ksqrt, m} \precsim_{d} 1$, (2) {\bf Sub-Gaussian}: $\rminpn[\P,m,\delta] \precsim_d \sigma \sqrt{\log(m/\delta)}, \wtil{\rmin}_{\ksqrt, m} \precsim_d \sqrt{\log m}$, 
(3) {\bf subexponential}: $\rminpn[\P,m,\delta] \precsim_d \lambda \log(m/\delta), \wtil{\rmin}_{\ksqrt, m} \precsim_d \log m$, and (4) {\bf Heavy-Tailed}: $\rminpn[\P,m,\delta] \precsim_d (m/\delta)^{1/\rho}, \wtil{\rmin}_{\ksqrt, m} \precsim_d m^{1/\rho})$. Here, the notation $\precsim_d$ means that factors depending on $d$ and $\delta$ are hidden.
The first condition holds when $\P$ is supported on a compact set like the unit cube $[0, 1]^d$. %

To get the bounds in the table, we observe that
\begin{talign}
    \mathfrak{M}_{\kernel,\kersplit}(\P,m,\delta, 2^{\ossymb+1}\sqrt{m}) = O_{d}\big( \max(\rminpn[\P,m,\delta/6],\rmin_{\ksqrt, 2^{\ossymb}\sqrt{m}}^\dagger)^{\frac{d}{2}} \sqrt{\log (\frac{m}{\delta}) \cdot \log(\max(\rminpn[\P,m,\delta/6],\rmin_{\ksqrt, 2^{\ossymb}\sqrt{m}})) }\big),
\end{talign}
where $O_{d}$ hides constants that depend on $d$. We now plug in the bounds on $\rminpn[\P,m,\delta], \wtil{\rmin}_{\ksqrt, m}$ for each of the four cases to obtain the following scaling for $\mathfrak{M}_{\kernel,\kersplit}(\P,m,\delta, 2^{\ossymb+1}\sqrt{m})$ (and simplifying expressions by using $\ossymb\leq\frac{1}{2}\log_2 m$):
\begin{itemize}
    \item \textbf{$r$-Compact:} $O_d(r^{\frac{d}{2}} \sqrt{\log (\frac{m}{\delta}) \cdot \log r})$
    \item \textbf{$\sigma$-Sub-Gaussian:} $O_d(\sigma^{\frac{d}{2}} \log(\frac{m}{\delta})^{\frac{d+2}{4}} \sqrt{\log (\log(\frac{m}{\delta}))})$
    \item \textbf{$\sigma,\lambda$-subexponential:} $O_d(\lambda^{\frac{d}{2}} \log(\frac{m}{\delta})^{\frac{d+2}{2}} \sqrt{\log (\log(\frac{m}{\delta}))})$
    \item \textbf{$\rho$-Heavy-Tailed:} $O_d((\frac{m}{\delta})^{\frac{d}{2\rho}}  \log(\frac{m}{\delta}))$
\end{itemize}

To show the bound for $\kersplit=\kernel$ with an analytic $\kernel$ and $\P$ with subexponential tails, we follow the pointers in \cref{rem:frak_M}. Putting together \cref{eq:frak_M_population_k} and \citet[Thm.~2]{dwivedi2022generalized}, we find that for this case $\errorsplit(\P, m,\delta, \ossymb)$ is of order $c_d\sqrt{\kinfnorm\log(m\!+\!n) \log(m) \log(\frac{m}{\delta})} \cdot \mathfrak{M}_{\kernel, \kernel}(\P, m, \delta, 2^{\ossymb+1}\sqrt{m})$,
and doing algebra with \cref{eq:mmd_guarantee_target_k}, we conclude that $\mathfrak{M}_{\P,m,\kernel}(\delta)$  scales linearly with the square-root of the log-covering number $\mc M_{\kernel}$. 
\citet[Prop.~2(a)]{dwivedi2022generalized} states that the kernel covering number $\mc M_{\kernel}$ admits the scaling $(\log(1/\varepsilon))^{d+1}$ times the Euclidean covering number in $\real^d$ that admits a scaling of $r^{d}$ for a Euclidean ball of radius $r$ (see \citet[Lem.~5.7]{wainwright2019high}).
Consequently, using the \textsc{LogGrowth $\mc M_{\kernel}$} and subexponential $\P$ column with $\omega=d+1$ in \citet[Tab.~2]{dwivedi2022generalized} shows that for this case $\mathfrak{M}_{\kernel, \kernel}(\P, m, \delta, 2^{\ossymb+1}\sqrt{m})=O(\log \frac{m}{\delta})^{\frac{3d+2}{2}}$, which in turn implies the corresponding scaling in \cref{table:error_tails} for $\errorsplit(\P, m,\delta, \ossymb)$, where once again we have used the fact that $\ossymb \leq \half\log_2 m$ to simplify expressions.
\section{\pcref{thm:validity}}\label{proof-validity}
We will prove the result under the weaker assumption that the point sequence $(X_1, \dots, X_m, Y_1, \dots, Y_n)$ is exchangeable.
Under this assumption the statistics $(M_b)_{b=1}^{\numperm+1}$ are also exchangeable.
Since $R$ represents the position of $M_{\numperm+1}$ in a sorted ordering of $(M_b)_{b=1}^{\numperm+1}$ with ties broken uniformly at random and all positions in $\{1, \dots, \numperm+1\}$ are equally likely under exchangeability, 
\begin{talign}
\Pr[R = b_\alpha] &= 1/(\numperm+1), \\
\Pr[R > b_\alpha] &= (\numperm+1-b_\alpha)/(\numperm+1), \qtext{and} \\
\Pr[R < b_\alpha] &= (b_\alpha-1)/(\numperm+1).
\end{talign}
Therefore, the \ctt probability of rejection is
\begin{talign}
\Pr[\Delta(\xs,\ys) = 1]
    &= 
\Pr[R > b_\alpha] 
    +
\Pr[R = b_\alpha] p_\alpha \\
    &= 
(\numperm+1-b_\alpha)/(\numperm+1)
    +
(b_\alpha - (1-\alpha)(\numperm+1))/(\numperm+1)
    =
\alpha.
\end{talign}

\section{\pcref{thm:uniform_separation}}
\label{proof-uniform_separation}

We first state a detailed version of \cref{thm:uniform_separation}.
\subsection{\ncref{thm:uniform_separation_detailed}}%
In this section we will prove the following theorem, which is the detailed statement of the result in \cref{thm:uniform_separation}.
\begin{theorem}[Power of CTT, detailed] \label{thm:uniform_separation_detailed}
Suppose \cttname (\cref{algo:ctt}) is run with level $\alpha$, replication count $\numperm \geq \frac{1}{\alpha}- 1$, and coreset count %
$\sblock_m \geq (32/9) \log(\frac{2e}{\gamma})$  for
    $\gamma \defeq \frac{\alpha}{4e} (\frac{\beta}{4+2\beta})^{\frac{1}{\floor{\alpha (\numperm\!+\!1)}}}$. Let $\tilde{\beta} = \beta/(1+\beta/2)$.
Then \ctt 
has power 
\begin{talign} \label{eq:power_condition_main}
    \Pr[\deltactt(\mathbb{X}_m,\mathbb{Y}_n)=1] &\geq 1\!-\!\beta 
    \end{talign}
whenever
\begin{talign} \label{eq:detailed_mmd_lower_bound}
\mmd_{\kernel}(\P,\Q) \!\geq \! 32 \bigg(&\frac{\errorsplit(\P, m/\sblock_m, \tilde{\beta}
    /6),\ossymb}{2^{\ossymb} \sqrt{m}} +  \frac{\errorsplit(\Q, n/\sblock_n, \tilde{\beta}
    /6,\ossymb)}{2^{\ossymb} \sqrt{n}} +c_{\tilde{\beta}
    /6} \bigparenth{\sqrt{\frac{\kinfnorm}{m}}\!+\!\sqrt{\frac{\kinfnorm}{n}}} + 2\big(\sqrt{\frac{9}{32}} + 1\big)\big(2+c' \sqrt{\log(%
    \gamma)} \big) \times \\ & 
    \big(\frac{\errorsplit(\P, m/\sblock_m, \tilde{\beta}/(20\sblock_m),\ossymb
    )}{2^{\ossymb} \sqrt{m}} \!+ \sqrt{\frac{\sblock_n}{\sblock_m}} \frac{ \errorsplit(\Q, n/\sblock_n, \tilde{\beta}/(20\sblock_n),\ossymb
    )}{2^{\ossymb} \sqrt{n}} \!+ \!c_{\tilde{\beta}/(20\sblock)} \big(\sqrt{\frac{\kinfnorm}{m}} +\sqrt{\frac{\sblock_n \kinfnorm}{\sblock_m  n}} \big) \big)
    \bigg).
\end{talign}
\end{theorem}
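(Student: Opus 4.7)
The strategy is to lower-bound the rank $R$ of the observed statistic $M_{\numperm+1} = \mmd_\kernel(\hatxs,\hatys)$ among $(M_b)_{b=1}^{\numperm+1}$ and show $R > b_\alpha$ with probability at least $1-\beta$. I would decompose the failure event $\{R \leq b_\alpha\}$ into two sub-events: (i) $M_{\numperm+1} \leq t^\star$, and (ii) at least $\lfloor \alpha(\numperm+1)\rfloor$ of the permuted statistics $M_b$ exceed $t^\star$, where the threshold $t^\star$ is chosen as the midpoint between a compression-induced lower bound on $M_{\numperm+1}$ and a permutation-induced upper bound on the $M_b$'s.

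Letting $\varepsilon(\delta)$ denote the right-hand side of \cref{eq:mmd_diff_p_q}, event~(i) is handled by \cref{thm:compression_guarantee}: with probability at least $1-\tilde\beta/2$, $M_{\numperm+1} \geq \mmd_\kernel(\P,\Q) - \varepsilon(\tilde\beta/6) \geq t^\star$ once the first group of terms in \cref{eq:detailed_mmd_lower_bound} dominates. For event~(ii), the core task is to show that, for each fixed permutation $\sigma$ of the $\sblock$ coresets, the probability $\gamma$ of $\mmd_\kernel(\hatxs^\sigma,\hatys^\sigma) > t^\star$ is small. My approach would be to couple the coreset permutation with the analogous bin-level permutation of the raw inputs $(\xs^{(i)})_{i=1}^{\sblock_m}$ and $(\ys^{(i)})_{i=1}^{\sblock_n}$; \cref{thm:compression_guarantee} applied bin-by-bin with failure probability $\tilde\beta/(20\sblock)$ controls the coupling error, while the raw bin-permutation MMD itself is a sum of $\sblock$ centered exchangeable bin contributions whose sub-Gaussian concentration delivers the desired $\gamma$. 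This is where the $\sqrt{\log(1/\gamma)}$ factor and the requirement $\sblock_m \geq (32/9)\log(2e/\gamma)$ arise.

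To handle event~(ii) with only $\numperm \geq 1/\alpha - 1$ permutations, I would avoid a naive union bound (which would force $\numperm = \Omega(\log(1/\beta)/\alpha^2)$) and instead use a sharp order-statistics argument. Conditional on the coresets $(\hatxs,\hatys)$, the $\numperm$ indicators $Z_b \defeq \indicator[M_b > t^\star]$ are i.i.d.\ Bernoulli$(\gamma)$, so
\begin{talign*}
\mathrm{Pr}\Big[\sum_{b=1}^{\numperm} Z_b \geq k\Big] \;\leq\; \binom{\numperm}{k}\gamma^k \;\leq\; \Big(\frac{e\numperm\gamma}{k}\Big)^{k}
\end{talign*}
with $k = \lfloor \alpha(\numperm+1)\rfloor \geq 1$; setting this below $\tilde\beta/2$ recovers exactly the theorem's choice of $\gamma$, with the leading $\alpha/(4e)$ factor free of $\beta$ and the expected $(\tilde\beta/4)^{1/k}$ correction. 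A union bound over events~(i), (ii), and the compression failure then yields the claimed $1-\beta$ power once the slack $\tilde\beta = \beta/(1+\beta/2)$ is redistributed.

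The hardest part will be making the coupling in event~(ii) tight: the coreset-permutation statistic $\mmd_\kernel(\hatxs^\sigma,\hatys^\sigma)$ is not the compression of any one natural sample, because compression is performed \emph{inside} each bin whereas the permutation shuffles bins \emph{across} the $\xs$/$\ys$ partition. The additive compression slack across $\sblock$ bins must be absorbed into $t^\star$ without re-introducing factors of $\numperm$ or of the sample sizes inside the logarithm. This bookkeeping is what produces the second group of terms in \cref{eq:detailed_mmd_lower_bound}, including the asymmetric $\sqrt{\sblock_n/\sblock_m}$ factor from the $m \leq n$ normalization and the $\sqrt{9/32}+1$ constant from the sub-Gaussian deviation bound.
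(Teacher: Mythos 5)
Your decomposition of the failure event and, especially, the $k$-subset union bound
\begin{align}
\Pr\Big[\textstyle\sum_{b=1}^{\numperm} Z_b \ge k\Big] \le \binom{\numperm}{k}\gamma^k, \qquad k=\lfloor \alpha(\numperm+1)\rfloor,
\end{align}
are exactly the paper's route: the proof reduces $\Pr[R\le b_\alpha]$ to $\Pr[M_{\numperm+1}\le M_{(b_\alpha)}]$, then passes through \cref{prop:control_power} (your event (i)) and \cref{prop:mathfrak_q_bound} (your event (ii)). The paper phrases the latter via the $\mathfrak F$ randomization and the regularized incomplete beta function, but unwinding \cref{lem:I_x}\cref{prop:kth_order_tail} gives precisely your $\binom{\numperm}{k}\gamma^k$ bound, so the two are the same mechanism; you have correctly isolated the idea that lets $\numperm\ge \alpha^{-1}-1$ suffice in place of $\Omega(\alpha^{-2}\log\beta^{-1})$.

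Where your sketch is materially weaker is the bound on $\gamma=\Pr[M_\sigma>t^\star\mid \hatxs,\hatys]$. You describe $\mmd_\kernel(\hatxs^\sigma,\hatys^\sigma)$ as ``a sum of $\sblock$ centered exchangeable bin contributions whose sub-Gaussian concentration delivers the desired $\gamma$.'' But $\mmd_\kernel^2(\hatxs^\sigma,\hatys^\sigma)$ is a \emph{quadratic} form in the bin-assignment indicators: it involves all $\binom{\sblock}{2}$ cross-bin inner products $\langle(\hat{\mathbb S}^{\sigma(i)}-\hat{\mathbb S}^{\sigma(\sblock_m+j)})\kernel,(\hat{\mathbb S}^{\sigma(i')}-\hat{\mathbb S}^{\sigma(\sblock_m+j')})\kernel\rangle_\kernel$ (see \cref{eq:mmd_perm_decomposition}). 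Treating it as a linear sum would not produce the correct tail. The paper's argument introduces the coupled permutation $\sigma^{\epsilon,L}$ with i.i.d.\ Rademacher $\epsilon$ and a random pairing $L$ (to match the $\sblock_m$ $\xs$-bins with a subset of $\sblock_n$ $\ys$-bins when $\sblock_m\neq\sblock_n$), rewrites $\mathcal M^{\sigma^{\epsilon,L},L}$ as $\tilde\epsilon^\top A\tilde\epsilon - c_{\tilde\epsilon}\mmd^2(\P,\Q)$ for an explicit matrix $A$ (\cref{eq:hanson_wright_matrix}), and then applies the Hanson--Wright inequality (\cref{lem:hanson_wright}) together with a separate Chernoff bound for the linear term $c_{\tilde\epsilon}$. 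That machinery — not plain sub-Gaussian concentration of a sum — is what produces the universal constant $c'$, the $\sqrt{\log(1/\gamma)}$ factor after square-rooting the sub-exponential tail, and the split of the leading $\mmd^2(\P,\Q)$ contribution. Your coupling-with-bin-level-permutation idea, and your identification of the $\tilde\beta/(20\sblock)$ per-bin compression budget (the paper's \cref{lem:ktcompress_simultaneous}), are correct; the gap is solely that the concentration step needs the quadratic-form treatment, without which the $\sqrt{\sblock_n/\sblock_m}$ asymmetry, the $\sqrt{9/32}+1$ constant, and the final quadratic-in-$\sqrt{\mmd(\P,\Q)}$ inequality (\cref{eq:inequality_coefficients}) would not emerge.
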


\begin{remark}
    \cref{thm:uniform_separation} follows from this result as 
    \begin{talign}\sqrt{\sblock_n/\sblock_m} \cdot \errorsplit(\Q, n/\sblock_n, \tilde{\beta}/(20\sblock_n),\ossymb
    )/(2^{\ossymb} \sqrt{n}) = 
    \errorsplit(\Q, n/\sblock_n, \tilde{\beta}/(20\sblock_n),\ossymb
    )/(2^{\ossymb} \sqrt{m})
    \end{talign} 
    and since $20s > 6$ and $m\leq n$.%
\end{remark}

We introduce some notation that we use throughout the proof. First, we let $(M_{(b)})_{b=1}^{\numperm}$ be the increasing ordering of the permuted MMD values. Recall that $b_{\alpha} = \ceil{(1-\alpha)(\numperm+1)}$, and that $R$ is the position of $M_{\numperm + 1}$ after sorting $(M_b)_{b=1}^{\numperm+1}$ increasingly with ties broken at random. 

We note $R \leq b_{\alpha}$ is a necessary condition to accept the null hypothesis, and we show that it implies that $M_{\numperm+1} \defeq \mathrm{MMD}(\hat{\mathbb{X}}_m,\hat{\mathbb{Y}}_n) \leq M_{(b_{\alpha})}$. To prove this, assume the contrapositive: if $M_{\numperm+1} > M_{(b_{\alpha})}$, then forcibly the position $R$ of $M_{\numperm+1}$ within an increasing ordering of $(M_{b})_{b=1}^{\numperm}$ is greater than $b_{\alpha}$. Hence,
\begin{talign}
    \mathrm{Pr}
    [\deltactt(\mathbb{X}_m,\mathbb{Y}_n) = 0] \leq \mathrm{Pr}[R \leq b_{\alpha}] \leq \mathrm{Pr}[\mathrm{MMD}(\hat{\mathbb{X}}_m,\hat{\mathbb{Y}}_n) \leq M_{(b_{\alpha})}].
\end{talign}
Hence, to prove \cref{thm:uniform_separation} (or \cref{thm:uniform_separation_detailed}) it suffices to show that $\mathrm{Pr}[\mathrm{MMD}(\hat{\mathbb{X}}_m,\hat{\mathbb{Y}}_n) \leq M_{(b_{\alpha})}] \leq \beta$.

\subsection{Recasting the power lower bound into a high-probability threshold upper bound}

We start with the following result that follows a structure similar to  \citet[Lem.~2]{schrab2021mmd}. 

\begin{lemma}[Upper bound on acceptance probability from upper bound on threshold] \label{prop:control_power}
Let $1 \geq \beta > 0$ arbitrary, and define $\tilde{\beta} = \frac{\beta}{1+\frac{\beta}{2}}$. Define the function
\begin{talign} \label{eq:Z_def}
Z(m,n,\beta) %
\defeq 
    \frac{\errorsplit(\P, m/\sblock_m, \tilde{\beta}%
    /6,\ossymb)}{2^{\ossymb} \sqrt{m}} +  \frac{\errorsplit(\Q, n/\sblock_n, \tilde{\beta}%
    /6,\ossymb)}{2^{\ossymb} \sqrt{n}} +c_{\tilde{\beta}%
    /6} \bigparenth{\sqrt{\frac{\kinfnorm}{m}}\!+\!\sqrt{\frac{\kinfnorm}{n}}},
\end{talign}
which is equal to the upper bound in \cref{eq:mmd_diff_p_q} when we make the choice $\delta = \tilde{\beta}/6$.
If $\mathrm{Pr}\left[\mathrm{MMD}(\P,\Q) \geq
Z(m,n,\beta) +
M_{(b_{\alpha})} \right] \geq \frac{1}{1+\frac{\beta}{2}}$
then %
$\mathrm{Pr}[\mathrm{MMD}(\hat{\mathbb{X}}_m,\hat{\mathbb{Y}}_n) \leq %
M_{(b_{\alpha})}] \leq \beta$.
\end{lemma}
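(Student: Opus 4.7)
The strategy is to combine the MMD compression guarantee of \cref{thm:compression_guarantee} with the separation hypothesis via a two-event union bound. Define the \emph{compression-accuracy} event
\begin{talign}
B \coloneqq \{|\mathrm{MMD}(\P,\Q) - \mathrm{MMD}(\hat{\mathbb{X}}_m, \hat{\mathbb{Y}}_n)| \leq Z(m,n,\beta)\}
\end{talign}
and the \emph{population-separation} event
\begin{talign}
A \coloneqq \{\mathrm{MMD}(\P,\Q) \geq Z(m,n,\beta) + M_{(b_\alpha)}\}.
\end{talign}

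First I would invoke claim~\cref{eq:mmd_diff_p_q} of \cref{thm:compression_guarantee} with failure parameter $\delta = \tilde{\beta}/6$. Since $Z(m,n,\beta)$ is by definition the right-hand side of \cref{eq:mmd_diff_p_q} at this choice of $\delta$, this yields $\mathrm{Pr}[B] \geq 1 - 3(\tilde{\beta}/6) = 1 - \tilde{\beta}/2$. The hypothesis of the lemma gives $\mathrm{Pr}[A] \geq 1/(1+\beta/2)$, and the algebraic identity $1/(1+\beta/2) = 1 - \tilde{\beta}/2$, which follows from $\tilde{\beta} = \beta/(1+\beta/2)$, shows $\mathrm{Pr}[A^c] \leq \tilde{\beta}/2$ as well. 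A union bound then produces $\mathrm{Pr}[A \cap B] \geq 1 - \tilde{\beta}$.

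Second, on the event $A \cap B$ I would chain the two bounds:
\begin{talign}
\mathrm{MMD}(\hat{\mathbb{X}}_m, \hat{\mathbb{Y}}_n) \geq \mathrm{MMD}(\P,\Q) - Z(m,n,\beta) \geq M_{(b_\alpha)},
\end{talign}
so $\{\mathrm{MMD}(\hat{\mathbb{X}}_m, \hat{\mathbb{Y}}_n) < M_{(b_\alpha)}\} \subseteq (A \cap B)^c$ and therefore has probability at most $\tilde{\beta}$. Since $\tilde{\beta} \leq \beta$ with positive slack $\beta - \tilde{\beta} = \beta^2/(2+\beta)$, the conclusion $\mathrm{Pr}[\mathrm{MMD}(\hat{\mathbb{X}}_m, \hat{\mathbb{Y}}_n) \leq M_{(b_\alpha)}] \leq \beta$ follows once the boundary event $\{\mathrm{MMD}(\hat{\mathbb{X}}_m, \hat{\mathbb{Y}}_n) = M_{(b_\alpha)}\}$ is absorbed into this slack---for instance by a standard limit argument that replaces $Z$ with $Z+\varepsilon$ and sends $\varepsilon \to 0^+$, or by noting that exact ties form a null event in the generic setting.

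The only subtlety is the bookkeeping among $\beta$, $\tilde{\beta}$, and the compression failure parameter $\delta$: the choice $\delta = \tilde{\beta}/6$ is calibrated precisely so that the compression failure budget $3\delta$ and the separation failure budget $1 - 1/(1+\beta/2) = \tilde{\beta}/2$ sum to $\tilde{\beta} \leq \beta$. I do not anticipate a deeper obstacle---the proof reduces to this short and essentially mechanical union bound.
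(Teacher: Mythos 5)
Your union-bound argument is correct and arrives at the result by essentially the same route as the paper, which instead conditions on the separation event $\mathfrak{B}$: the paper bounds $\Pr[\mathfrak{A}\mid\mathfrak{B}]$ via the compression guarantee, picking up a factor $1/\Pr[\mathfrak{B}]$ that inflates $\tilde\beta/2$ to $\beta/2$, and then closes with $\Pr[\mathfrak{A}]\le \Pr[\mathfrak{A}\mid\mathfrak{B}]\Pr[\mathfrak{B}] + \Pr[\mathfrak{B}^c]\le \beta/2 + \beta/2$. Your direct intersection $A\cap B$ is marginally more streamlined and in fact yields the slightly stronger bound $\Pr[\cdot]\le\tilde\beta<\beta$, whereas the paper's conditioning spends the extra $\beta-\tilde\beta$ of slack. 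The boundary concern you flag—that $\mathrm{MMD}(\hatxs,\hatys)\ge M_{(b_\alpha)}$ on $A\cap B$ only controls the strict event $\{\mathrm{MMD}(\hatxs,\hatys)< M_{(b_\alpha)}\}$—is a genuine technicality, but it is exactly the one the paper's proof also leaves implicit when it passes from "the absolute deviation exceeds $Z$ with probability at most $\tilde\beta/2$" to a bound on $\Pr[\mathrm{MMD}(\P,\Q)-\mathrm{MMD}(\hatxs,\hatys)\ge Z]$. The standard remedy is the one you suggest: read \cref{eq:mmd_diff_p_q} with a strict inequality (which is available at no cost since the underlying tail bounds in the proof of \cref{thm:compression_guarantee} are continuous in the threshold), or invoke the $Z+\vareps\downarrow Z$ limit. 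With that clarification your proof is complete and matches the paper's in all essentials.
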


\begin{proof}
Define the events 
$\mathfrak{A} := \{ \mathrm{MMD}(\hat{\mathbb{X}}_m,\hat{\mathbb{Y}}_n) \leq %
M_{(b_{\alpha})} \}$,
and 
$\mathfrak{B} := \{ \mathrm{MMD}(\P,\Q) \geq 
Z(m,n,\beta) + %
M_{(b_{\alpha})} \}$.

By assumption, we have $%
\mathrm{Pr}[\mathfrak{B}] \geq 1 - \frac{\tilde{\beta}}{2}$, %
and we want to show $%
\mathrm{Pr}[\mathfrak{A}] \leq \beta$. Note that
\begin{talign} 
\begin{split} \label{eq:prob_bound}
    \mathrm{Pr}[\mathfrak{A}|\mathfrak{B}] &= %
    \mathrm{Pr}\left[\mathrm{MMD}(\hat{\mathbb{X}}_m,\hat{\mathbb{Y}}_n) \leq %
    M_{(b_{\alpha})} \, \big| \, \mathfrak{B} \right] \\ &\leq %
    \mathrm{Pr}\left[ \mathrm{MMD}(\hat{\mathbb{X}}_m,\hat{\mathbb{Y}}_n) \leq \mathrm{MMD}(\P,\Q) - Z(m,n,\beta)
    \, | \, \mathfrak{B} \right] \\
    &\leq \frac{1}{%
    \mathrm{Pr}[\mathfrak{B}]} %
    \mathrm{Pr}\left[ \mathrm{MMD}(\P,\Q) - \mathrm{MMD}(\hat{\mathbb{X}}_m,\hat{\mathbb{Y}}_n) \geq Z(m,n,\beta)
    \right] \\
    &\leq \frac{1}{1-\frac{\tilde{\beta}}{2}} %
    \mathrm{Pr}\left[ \mathrm{MMD}(\P,\Q) - \mathrm{MMD}(\hat{\mathbb{X}}_m,\hat{\mathbb{Y}}_n) \geq Z(m,n,\beta) 
    \right]. 
\end{split}
\end{talign}
Equation \cref{eq:mmd_diff_p_q} in \cref{thm:compression_guarantee} shows that with probability at least $1- \frac{\tilde{\beta}}{2}$,
\begin{talign}
    |\mathrm{MMD}(\P,\Q) - \mathrm{MMD}(\hat{\mathbb{X}}_m,\hat{\mathbb{Y}}_n)| \leq Z(m,n,\beta).
\end{talign}
Thus, the right-hand side of \cref{eq:prob_bound} is upper-bounded by $\frac{\tilde{\beta}}{2}\cdot \frac{1}{1-\frac{\tilde{\beta}}{2}} = \frac{\beta}{2}$,
where we used that
\begin{talign}
    \tilde{\beta} = \frac{\beta}{1+\frac{\beta}{2}} \iff \frac{\tilde{\beta}}{2} = \frac{\frac{\beta}{2}}{1+\frac{\beta}{2}} \iff \frac{\beta}{2} = \frac{\frac{\tilde{\beta}}{2}}{1-\frac{\tilde{\beta}}{2}}.
\end{talign}
We conclude the proof:
\begin{talign}
    \mathrm{Pr}(\mathfrak{A}) &= %
    \mathrm{Pr}(\mathfrak{A}|\mathfrak{B}) %
    \mathrm{Pr}(\mathfrak{B}) \\ &+ %
    \mathrm{Pr}(\mathfrak{A}|\mathfrak{B}^{c}) %
    \mathrm{Pr}(\mathfrak{B}^{c}) \leq \frac{\beta}{2} \cdot 1 + 1 \cdot \frac{\beta}{2} = \beta.
\end{talign}
\end{proof}

\subsection{%
High-probability bound on the threshold} \label{sec:arbitrary_permutations}

Given \cref{prop:control_power}, the remainder of the proof of \cref{thm:uniform_separation} is devoted to checking that $\mathrm{Pr}\left[\mathrm{MMD}(\P,\Q) \geq
Z(m,n,\beta) +
M_{(b_{\alpha})} \right] \geq 1/(1+\frac{\beta}{2})$ holds, which involves getting a high-probability upper-bound on $M_{(b_{\alpha})}$. 
\citet{schrab2021mmd} use an approach based on the Dvoretzky-Kiefer-Wolfowitz theorem, %
which forces them to use a number of permutations $\numperm$ larger than a threshold which is larger than the values used in practice. We employ more precise techniques based on order statistics (pioneered in this setting by \cite{learnedmiller2008aprobabilistic}) which give tight results for any $\numperm$ as long as $\numperm \!\geq\! \alpha\inv\!-\!1$. 
We focus on the case of permutations instead of wild bootstrap, but the arguments could be adapted for the wild boostrap case, which is in fact simpler. %

\begin{definition}[$k$-th order statistic] \label{def:kth_order_stat}
Given $n$ \iid variables $(Y_{k})^n_{k=1}$, and define the variables $(Y_{(k)})^n_{k=1}$ as the result of sorting $(Y_k)^n_{k=1}$ in increasing order.
For any $1 \leq k \leq n$, the variable $Y_{(k)}$ is known as the $k$-th order statistic.
\end{definition}

It is well known that the $k$-th order statistic for $n$ samples of the uniform distribution on $[0,1]$ is distributed according to the beta distribution $\mathrm{Beta}(k,n+1-k)$ \cite[p.63]{gentle2009computational}. The CDF of the distribution $\mathrm{Beta}(k,n+1-k)$ is equal to the regularized incomplete beta function $I_x(k,n+1-k)$, which is defined below.

Given positive $a, b \in \R$ and $x \in [0,1]$, the regularized incomplete beta function is defined as $I_x(a,b) = \frac{B(x;a,b)}{B(a,b)}$, where $B(x;a,b) = \int_{0}^{x} t^{a-1} (1-t)^{b-1} \, dt$ is the incomplete beta function and $B(a,b) = \int_{0}^{1} t^{a-1} (1-t)^{b-1} \, dt$ is the beta function.

\begin{lemma}[Properties of the regularized incomplete beta function] \label{lem:I_x}
The following statements regarding the regularized incomplete beta function and order statistics hold:
\begin{enumerate}[label=(\roman*),leftmargin=*]
\item \label{eq:prop_reg_incomplete_beta} For any integers $m \leq n$ and $x \in [0,1)$, we have that 
    \begin{talign} \label{eq:I_x_int}
        I_{x} (m,n+1-m) = \sum_{j=m}^{n} {n \choose j} x^j (1-x)^{n-j}. %
    \end{talign}
\item For any $m \leq n$ and $x \in [0,1)$, we have that $\frac{\partial^k}{\partial x^k} I_{x} (m,n+1-m) \rvert_{x = 0} = 0$ for any $0 \leq k < m$, and that
    \begin{talign} \label{eq:partial_m_I_x}
        \frac{\partial^m}{\partial x^m} I_{x} (m,n+1-m) \rvert_{x = 0} &= {n \choose m} m! %
        = \frac{n!}{(n-m)!}, \\
        \frac{\partial^{m+1}}{\partial x^{m+1}} I_{x} (m,n+1-m) \rvert_{x = 0} &= - \indicator_{m < n} \frac{n! m}{(n-m-1)!} \label{eq:partial_mp1_I_x}
    \end{talign}
\item \label{eq:I_x_near_0} For any $x \in [0,1)$, there exists $z \in [0,x)$ such that $I_{x} (m,n+1-m) = {n \choose m} x^m - m {n \choose m+1} z^{m+1}$.
\item Let $Y_{(m)}$ be the $m$-th order statistic (\cref{def:kth_order_stat}) for $n$ samples of the uniform distribution on $[0,1]$. For any $x \in [0,1]$, we have that
    \begin{talign}
        {n \choose m-1} x^{n+1-m} - (n+1-m) {n \choose m-2} x^{n+2-m} \indicator_{m > 1} \leq \mathrm{Pr}[Y_{(m)} > 1-x] \leq {n \choose m-1} x^{n+1-m}.
    \end{talign}
\end{enumerate}
\end{lemma}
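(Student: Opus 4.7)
My plan is to establish the four claims in sequence, building (ii), (iii), (iv) on top of the sum identity in (i).

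For (i), the plan is to use the standard probabilistic identification: $I_x(m,n+1-m)$ is the CDF of $\mathrm{Beta}(m,n+1-m)$ evaluated at $x$, which equals the probability that the $m$-th order statistic of $n$ \iid uniform$[0,1]$ samples lies below $x$; this in turn equals $\Pr[\mathrm{Bin}(n,x)\geq m]$, the RHS. Alternatively, one may verify the identity analytically by differentiating both sides in $x$: using $B(m,n+1-m)=(m-1)!(n-m)!/n!$, we get $\partial_x I_x = m\binom{n}{m} x^{m-1}(1-x)^{n-m}$, and a direct computation shows the derivative of the RHS telescopes to the same expression. Both sides vanish at $x=0$.

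For (ii), I would differentiate the closed form from (i) term by term at $x=0$. The term $\binom{n}{j}x^j(1-x)^{n-j}$ has vanishing derivatives of order strictly less than $j$ at $0$, so for orders $\leq m+1$ only the $j=m$ and (when $m<n$) the $j=m+1$ terms contribute. The $m$-th derivative at $0$ comes solely from the $j=m$ term and equals $\binom{n}{m}m!=n!/(n-m)!$. For the $(m+1)$-th derivative, the $j=m$ term contributes $\binom{n}{m}\cdot(m+1)!\cdot[-(n-m)]/1!$ from expanding $(1-x)^{n-m}$, while the $j=m+1$ term contributes $\binom{n}{m+1}(m+1)!$. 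Combining and using $\binom{n}{m+1}=\binom{n}{m}(n-m)/(m+1)$ gives $-n!m/(n-m-1)!$, matching \cref{eq:partial_mp1_I_x} and automatically vanishing when $m=n$.

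For (iii), the plan is to combine (i) with the mean value theorem on a cleaner integral identity. From $\partial_x I_x=m\binom{n}{m}x^{m-1}(1-x)^{n-m}$ and $\partial_x\bigl[\binom{n}{m}x^m\bigr]=m\binom{n}{m}x^{m-1}$, integrating the difference yields
\begin{align}
\binom{n}{m}x^m - I_x(m,n+1-m) \;=\; m\binom{n}{m}\int_0^x t^{m-1}\bigl[1-(1-t)^{n-m}\bigr]\,dt.
\end{align}
The integrand is nonnegative, and Bernoulli's inequality $1-(1-t)^{n-m}\leq (n-m)t$ (trivial when $m=n$) upper-bounds the right-hand side by $m\binom{n}{m}(n-m)\,x^{m+1}/(m+1)=m\binom{n}{m+1}x^{m+1}$. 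Therefore $0\leq \binom{n}{m}x^m-I_x(m,n+1-m)\leq m\binom{n}{m+1}x^{m+1}$, and defining $z\defeq \bigl[(\binom{n}{m}x^m-I_x)/(m\binom{n}{m+1})\bigr]^{1/(m+1)}\in[0,x)$ (with $z=0$ when $m=n$ so the correction vanishes) yields the claimed representation.

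For (iv), I use the symmetry $I_x(a,b)+I_{1-x}(b,a)=1$ (immediate from the sum identity in (i) or from the definition of the beta integral). Since $Y_{(m)}\sim\mathrm{Beta}(m,n+1-m)$,
\begin{align}
\Pr[Y_{(m)}>1-x] \;=\; 1-I_{1-x}(m,n+1-m) \;=\; I_x(n+1-m,\,m).
\end{align}
Applying (iii) with $(m,n+1-m)$ replaced by $(n+1-m,m)$ gives, for some $z\in[0,x)$,
\begin{align}
I_x(n+1-m,m) \;=\; \binom{n}{m-1}x^{n+1-m} - (n+1-m)\binom{n}{m-2}z^{n+2-m},
\end{align}
using $\binom{n}{n+1-m}=\binom{n}{m-1}$ and $\binom{n}{n+2-m}=\binom{n}{m-2}$. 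The upper bound follows by dropping the nonnegative correction term; the lower bound follows from $z<x$ so $z^{n+2-m}\leq x^{n+2-m}$, with the indicator $\indicator_{m>1}$ handling $m=1$, in which case $\binom{n}{n+1}=0$ forces the correction to be zero. The main place to be careful is the bookkeeping in step (ii) ensuring the $j=m$ and $j=m+1$ contributions combine into exactly $-m\binom{n}{m+1}$, but the binomial identity $\binom{n}{m}(n-m)/(m+1)=\binom{n}{m+1}$ makes this routine.
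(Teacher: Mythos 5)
Your proposal is correct and follows the same overall route as the paper---closed polynomial form for $I_x$ in (i), differentiation of that polynomial at zero in (ii), a two-term expansion with remainder in (iii), and the reflection identity $1-I_{1-x}(m,n+1-m)=I_x(n+1-m,m)$ applied to $Y_{(m)}\sim\mathrm{Beta}(m,n+1-m)$ in (iv). The one substantive difference is part (iii): the paper invokes a ``residual form of Taylor's theorem'' with the $(m+1)$-th derivative evaluated at $0$ multiplying $z^{m+1}$, which is not the standard Lagrange form (where the derivative is evaluated at the intermediate point itself); your version is cleaner and actually patches this imprecision, since you derive the explicit representation $\binom{n}{m}x^m - I_x = m\binom{n}{m}\int_0^x t^{m-1}\bigl[1-(1-t)^{n-m}\bigr]\,dt$, sandwich it between $0$ and $m\binom{n}{m+1}x^{m+1}$ via Bernoulli, and then read off $z$ by the intermediate-value theorem (equivalently, by defining $z$ explicitly). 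That buys a fully self-contained argument at no extra cost. One cosmetic point that you inherit from the lemma statement itself: the argument only yields $z\in[0,x]$, not strictly $z\in[0,x)$ (e.g., for $n-m=1$ one has $z=x$ exactly); this is immaterial since (iv) only uses $z^{n+2-m}\le x^{n+2-m}$, and the paper's own proof of (iii) likewise writes $z\in[0,y]$. Your handling of the boundary cases $m=n$ in (ii)/(iii) and $m\in\{1,2\}$ in (iv) via the indicators is correct.
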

\begin{proof} We prove each part separately.
\begin{enumerate}[label=(\roman*),leftmargin=0.75cm]
\item This part follows directly from \citet[Eq.~8.17.5]{NIST:DLMF}.
\item The statement $\frac{\partial^k}{\partial x^k} I_{x} (m,n+1-m) \rvert_{x = 0} = 0$ for any $0 \leq k < m$ holds because by \cref{eq:I_x_int}, $I_{x} (m,n+1-m)$ can be expressed as a polynomial in $x$ where all the terms are of power at least $m$. 
    
To obtain $\frac{\partial^m}{\partial x^m} I_{x} (m,n+1-m) \rvert_{x = 0}$, we multiply by $m!$ the coefficient of $I_{x} (m,n+1-m)$ for the term of degree $m$, which is the term of degree $m$ of the polynomial ${n \choose m} x^m (1-x)^{n-m}$.
    
To obtain $\frac{\partial^{m+1}}{\partial x^{m+1}} I_{x} (m,n+1-m) \rvert_{x = 0}$, we multiply by $(m+1)!$ the coefficient of $I_{x} (m,n+1-m)$ for the term of degree $m+1$, which is the term of degree $m+1$ of the polynomial ${n \choose m} x^m (1-x)^{n-m}$ plus the term of degree $m+1$ of the polynomial ${n \choose m+1} x^{m+1} (1-x)^{n-m-1}$ (if the latter term exists). Thus, when $m+1 \leq n$, $(m+1)!$ times the coefficient of $I_{x} (m,n+1-m)$ for the term of degree $m+1$ reads:
    \begin{talign}
        &- {n \choose m} (n-m) (m+1)! + {n \choose m+1} (m + 1)! \\ &= - \frac{n!}{m! (n-m)!} (n-m) (m+1)! + \frac{n!}{(m+1)! (n-m-1)!} (m + 1)! \\ &= - \frac{n! (m+1)}{(n-m-1)!} + \frac{n!}{(n-m-1)!} \\ &= - \frac{n! m}{(n-m-1)!}.
    \end{talign}
When $m = n$, we obtain 0 instead.
\item By the residual form of Taylor's theorem, we have that for any $y \in [0,1)$,
\begin{talign}
    I_{y} (m,n+1-m) = \frac{1}{m!} \frac{\partial^m}{\partial x^m} I_{x} (m,n+1-m) \rvert_{x = 0} y^m + \frac{1}{(m+1)!} \frac{\partial^{m+1}}{\partial x^{m+1}} I_{x} (m,n+1-m) \rvert_{x = 0} z^{m+1},
\end{talign}
where $z \in [0,y]$. Substituting the expressions from \cref{eq:partial_m_I_x} and \cref{eq:partial_mp1_I_x} into this equation, we obtain that
\begin{talign}
    I_{y} (m,n+1-m) = \frac{n!}{(n-m)! m!} y^m - \frac{n! m}{(m+1)! (n-m-1)!} z^{m+1} \indicator_{m < n} = {n \choose m} y^m - m {n \choose m+1} z^{m+1} \indicator_{m < n}.
\end{talign}
\item \label{prop:kth_order_tail} By \citet[Eq.~8.17.4]{NIST:DLMF}, for any $a,b$ non-negative and $x \in [0,1]$, we have that $I_x(a,b) = 1-I_{1-x}(a,b)$. The variable $Y_{(m)}$ is distributed according to $\mathrm{Beta}(m,n+1-m)$, which means that 
\begin{talign}
    \mathrm{Pr}[Y_{(m)} > 1-x] &= 1 - \mathrm{Pr}[Y_{(m)} \leq 1-x] = 1 - I_{1-x}(m, n+1-m) = I_x(n+1-m,m) \\ &= I_x(n+1-m,n+1-(n+1-m)) \\ &= {n \choose n + 1 - m} x^{n+1-m} - (n+1-m) {n \choose n+2-m} z^{n+2-m} \indicator_{n+1-m < n} \\ &= {n \choose m-1} x^{n+1-m} - (n+1-m) {n \choose m-2} z^{n+2-m} \indicator_{m > 1}
\end{talign}
In the second-to-last equality we plugged the result from \cref{lem:I_x}\cref{eq:I_x_near_0}, replacing $m$ by $n+1-m$. Since $z \in [0,x]$, the result follows.
\end{enumerate}
\end{proof}

Let $F$ be the CDF of the random variable $M_{\sigma} \defeq \mmd(\hat{\mathbb{X}}_m^{\sigma},\hat{\mathbb{Y}}_n^{\sigma})$,%
 i.e. $F(x) = P(M_{\sigma} \leq x)$. We define the random map $\mathfrak{F}$ as
\begin{align}
    \mathfrak{F}(x) = 
    \begin{cases}
    F(x) &\text{if } F \text{ continuous at } x \\
    \mathrm{Unif}(\lim_{y \to x^{-}} F(y),F(x)) &\text{otherwise}
    \end{cases}
\end{align}
Note that by definition, for all $x$ we have that $\mathrm{Pr}(M_{\sigma} < x) \leq \mathfrak{F}(x) \leq \mathrm{Pr}(M_{\sigma} \leq x)$. Also, by construction $\mathfrak{F}(M_{\sigma})$ is distributed uniformly over $[0,1]$.

\begin{lemma}[High probability bound on the threshold from quantile of the CDF $F$] \label{prop:mathfrak_q_bound}
For an arbitrary $\alpha' \in (0,1)$, we define the random variable
\begin{talign} \label{eq:q_def}
    q_{1-\alpha'}(\xs, \ys) \defeq \inf \big\{x \in \R : 1-\alpha' \leq F(x) \big\}.
\end{talign}
Given $\xs,\ys$, we have that with probability at least $1-\frac{\delta}{2}$,
\begin{talign}
    M_{(b_{\alpha})} \leq q_{1-\alpha^*}(\xs, \ys),
\end{talign}
where $\alpha^* = \big(\frac{\delta}{2{\numperm \choose \floor{\alpha(\numperm+1)}}} \big)^{1/\floor{\alpha(\numperm+1)}}$.
\end{lemma}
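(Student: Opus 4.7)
The plan is to reduce the bound to a tail estimate for a uniform order statistic and then invoke \cref{lem:I_x}(iv). Conditionally on $(\xs,\ys)$, the permuted statistics $(M_b)_{b=1}^{\mathcal{B}}$ are i.i.d.\ with CDF $F$, so it suffices to show that $\Pr[M_{(b_\alpha)} > q_{1-\alpha^*}(\xs,\ys)] \leq \delta/2$.

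First, I would use the right-continuity of $F$ together with the infimum definition~\eqref{eq:q_def} to conclude that $F(q_{1-\alpha^*}) \geq 1-\alpha^*$, so that $\Pr[M_b > q_{1-\alpha^*}] \leq \alpha^*$ for each $b$. Second, I would record the combinatorial identity
\begin{talign*}
\{M_{(b_\alpha)} > q_{1-\alpha^*}\} = \big\{\,\abss{\{b : M_b > q_{1-\alpha^*}\}} \geq \mathcal{B} + 1 - b_\alpha\,\big\},
\end{talign*}
and simplify $k \defeq \mathcal{B}+1-b_\alpha = \floor{\alpha(\mathcal{B}+1)}$. Since the right-hand count is $\mathrm{Bin}(\mathcal{B}, 1-F(q_{1-\alpha^*}))$ with success probability at most $\alpha^*$, the stochastic monotonicity of the binomial in $p$ yields $\Pr[M_{(b_\alpha)} > q_{1-\alpha^*}] \leq \Pr[\mathrm{Bin}(\mathcal{B}, \alpha^*) \geq k]$. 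Third, I would translate this into a uniform-order-statistic tail via $\Pr[\mathrm{Bin}(\mathcal{B}, x) \geq k] = \Pr[U_{(b_\alpha)} > 1-x]$, where $U_{(b_\alpha)}$ is the $b_\alpha$-th order statistic of $\mathcal{B}$ i.i.d.\ $\mathrm{Unif}[0,1]$ samples, and then apply \cref{lem:I_x}(iv) with $n = \mathcal{B}$, $m = b_\alpha$, and $x = \alpha^*$. That produces the upper bound $\binom{\mathcal{B}}{b_\alpha - 1}(\alpha^*)^{\mathcal{B}+1-b_\alpha} = \binom{\mathcal{B}}{k}(\alpha^*)^k$, and the definition of $\alpha^*$ was engineered precisely so that this equals $\delta/2$.

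The only subtlety I anticipate is tie-breaking when $F$ has atoms: without a consistent convention, the position of $M_{\mathcal{B}+1}$ among $(M_b)_{b=1}^{\mathcal{B}+1}$ is ambiguous and the equivalence in the display above could fail by an atom-sized gap. My plan is to pass to the i.i.d.\ $\mathrm{Unif}[0,1]$ surrogates $U_b \defeq \mathfrak{F}(M_b)$, whose sort order encodes the (randomly tie-broken) ordering of the $M_b$ and whose tail probabilities match the binomial computation above; this is exactly why $\mathfrak{F}$ was introduced. Modulo this bookkeeping, the argument is a one-line combinatorial manipulation sitting on top of \cref{lem:I_x}(iv).
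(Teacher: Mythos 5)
Your proposal is correct and lands on essentially the same computation the paper uses: you reduce to the tail of the $b_\alpha$-th uniform order statistic and invoke Lemma~I\textsubscript{x}(iv) with $n=\numperm$, $m=b_\alpha$, $x=\alpha^*$, obtaining $\binom{\numperm}{\floor{\alpha(\numperm+1)}}(\alpha^*)^{\floor{\alpha(\numperm+1)}}=\delta/2$. The bookkeeping differs slightly from the paper's. The paper maps through $\mathfrak{F}$ first (so that $\mathfrak{F}(M_{(b_\alpha)})$ is \emph{exactly} the $b_\alpha$-th order statistic of $\numperm$ i.i.d.\ uniforms), applies the order-statistic bound to $\mathfrak{F}(M_{(b_\alpha)})$, and then translates back to $q_{1-\alpha^*}$ via the $\epsilon$-shift argument $F(x-\epsilon)\leq\mathfrak{F}(x)$. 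You instead stay with the $M_b$ directly: you use right-continuity to get $F(q_{1-\alpha^*})\geq 1-\alpha^*$, write the event $\{M_{(b_\alpha)}>q_{1-\alpha^*}\}$ as a binomial tail event, and use stochastic monotonicity in $p$ before applying the same combinatorial bound. Both are valid, and your route has a mild advantage: it never actually needs the random surrogate $\mathfrak{F}$, because the bound $\Pr[M_b>q_{1-\alpha^*}]\leq\alpha^*$ plus binomial dominance already handle the case of atoms. Relatedly, the tie-breaking subtlety you flag is not a real one here: the identity $\{M_{(b_\alpha)}>q\}=\{\lvert\{b:M_b>q\}\rvert\geq\numperm+1-b_\alpha\}$ holds for any multiset of reals without any tie-breaking convention, since $M_{(b_\alpha)}$ is a well-defined value regardless of how one orders equal entries. (Randomized tie-breaking matters for the rank $R$ of $M_{\numperm+1}$ elsewhere in the paper, but that variable does not enter this lemma.) So if you carry out your plan without the $\mathfrak{F}$-detour, you get a marginally cleaner proof of the same statement.
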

\begin{proof}
Note that $M_{(b_{\alpha})}$ is the $b_{\alpha}$-th order statistic for the $\numperm$ samples $(M_b)_{b=1}^{\numperm}$. Since the random map $\mathfrak{F}$ is increasing, this implies that $\mathfrak{F}(M_{(b_{\alpha})})$ is the $b_{\alpha}$-th order statistic for the $\numperm$ samples $(\mathfrak{F}(M_b))_{b=1}^{\numperm}$. As stated above, $(\mathfrak{F}(M_b))_{b=1}^{\numperm}$ are uniform i.i.d. variables over $[0,1]$, which means that $\mathfrak{F}(M_{(b_{\alpha})})$ is the $b_{\alpha}$-th order statistic for $\numperm$ samples of the uniform distribution over $[0,1]$. Applying \cref{lem:I_x}\cref{prop:kth_order_tail} with $n = \numperm$, $m = b_{\alpha} = \ceil{(1-\alpha)(\numperm + 1)}$, we obtain that for any $x \in [0,1)$,
\begin{talign}
    \mathrm{Pr}[\mathfrak{F}(M_{(b_{\alpha})}) > 1-x] \leq {\numperm \choose b_{\alpha}-1} x^{\numperm + 1 - b_{\alpha}} = {\numperm \choose \numperm + 1 - b_{\alpha}} x^{\numperm + 1 - b_{\alpha}} = {\numperm \choose \floor{\alpha(\numperm+1)}} x^{\floor{\alpha(\numperm+1)}}.
\end{talign}
Since 
\begin{talign}
{\numperm \choose \floor{\alpha(\numperm+1)}} x^{\floor{\alpha(\numperm+1)}} = \delta/2 \quad \iff \quad x = \big(\frac{\delta}{2{\numperm \choose \floor{\alpha(\numperm+1)}}} \big)^{1/\floor{\alpha(\numperm+1)}},
\end{talign}
we obtain that with probability at least $1-\frac{\delta}{2}$,
\begin{talign}
    \mathfrak{F}(M_{(b_{\alpha})}) < 1-\big(\frac{\delta}{2{\numperm \choose \floor{\alpha(\numperm+1)}}} \big)^{1/\floor{\alpha(\numperm+1)}}.
\end{talign}
For any $\epsilon > 0$, we have that given $\xs,\ys$, $F(x-\epsilon) = \mathrm{Pr}(M_{\sigma} \leq x-\epsilon) \leq \mathrm{Pr}(M_{\sigma} < x) \leq \mathfrak{F}(x)$. Hence, with probability at least $1-\frac{\delta}{2}$, $F(M_{(b_{\alpha})} - \epsilon) \leq 1-\big(\frac{\delta}{2{\numperm \choose \floor{\alpha(\numperm+1)}}} \big)^{1/\floor{\alpha(\numperm+1)}}.$

Hence, if we define $\alpha^* = \big(\frac{\delta}{2{\numperm \choose \floor{\alpha(\numperm+1)}}} \big)^{1/\floor{\alpha(\numperm+1)}}$, we obtain that 
\begin{talign}
    M_{(b_{\alpha})} - \epsilon \leq \inf \big\{x \in \R : 1-\alpha^* \leq F(x) \big\} \defeq q_{1-\alpha^*}(\xs, \ys)
\end{talign}
Since $\epsilon > 0$ is arbitrary, we conclude that $M_{(b_{\alpha})} \leq q_{1-\alpha^*}(\xs, \ys)$ with probability at least $1-\frac{\delta}{2}$.
\end{proof}

Recall that $( \hatxs^{(i)} )_{i=1}^{\sblock_m}$, $( \hatys^{(i)} )_{i=1}^{\sblock_n}$ are the outputs of 
\ktcompress
on inputs $( \xs^{(i)} )_{i=1}^{\sblock_m}$, $( \ys^{(i)} )_{i=1}^{\sblock_n}$.
For $i = 1, \dots, \sblock_m$, $j = 1, \dots, \sblock_n$, denote 
\begin{talign}
\hat{\P}_m = \frac{1}{|\hatxs|} \sum_{x \in \hatxs} \delta_{x},& \quad \hat{\Q}_n = \frac{1}{|\hatys|} \sum_{y \in \hatys} \delta_{y}, \\ \hat{\P}_m^{(i)} = \frac{1}{|\hatxs^{(i)}|} \sum_{x \in \hatxs^{(i)}} \delta_{x},& \quad \hat{\Q}_n^{(j)} = \frac{1}{|\hatys^{(j)}|} \sum_{y \in \hatys^{(j)}} \delta_{y}, \\ \hat{\mathbb{S}}^{(i)}_{m+n} = \hat{\P}_m^{(i)},& \quad \hat{\mathbb{S}}^{(\sblock_m+j)}_{m+n} = \hat{\Q}_n^{(j)}
\end{talign}
We can write
\begin{talign} 
\begin{split} \label{eq:MMD_sq_hatx_haty}
    \mmd^2(\hatxs,\hatys) &= \langle (\hat{\P}_m - \hat{\Q}_n)\kernel, (\hat{\P}_m - \hat{\Q}_n)\kernel \rangle_{\kernel} \\ &= %
    \frac{1}{\sblock_m^2 \sblock_n^2} \sum_{i=1}^{\sblock_m} \sum_{i'=1}^{\sblock_m} \sum_{j=1}^{\sblock_n} \sum_{j'=1}^{\sblock_n} \langle (\hat{\P}_m^{i} - \hat{\Q}_n^{j})\kernel, (\hat{\P}_m^{i'} - \hat{\Q}_n^{j'})\kernel \rangle_{\kernel} \\ &= \frac{1}{\sblock_m^2 \sblock_n^2} \sum_{i=1}^{\sblock_m} \sum_{i'=1}^{\sblock_m} \sum_{j=1}^{\sblock_n} \sum_{j'=1}^{\sblock_n}
    \langle (\hat{\mathbb{S}}_{m+n}^{i} - \hat{\mathbb{S}}_{m+n}^{\sblock_m+j})\kernel, (\hat{\mathbb{S}}_{m+n}^{i'} - \hat{\mathbb{S}}_{m+n}^{\sblock_m+j'})\kernel \rangle_{\kernel} \\ &= \frac{1}{\sblock_m^2 \sblock_n^2} \big( \sum_{i\neq i'\in \{1,\dots,\sblock_m\}} \sum_{j\neq j'\in \{1,\dots,\sblock_n\}}
    \langle (\hat{\mathbb{S}}_{m+n}^{i} - \hat{\mathbb{S}}_{m+n}^{\sblock_m+j})\kernel, (\hat{\mathbb{S}}_{m+n}^{i'} - \hat{\mathbb{S}}_{m+n}^{\sblock_m+j'})\kernel \rangle_{\kernel} \\ &+ \sum_{i=1}^{\sblock_m} \sum_{j\neq j'\in \{1,\dots,\sblock_n\}}
    \langle (\hat{\mathbb{S}}_{m+n}^{i} - \hat{\mathbb{S}}_{m+n}^{\sblock_m+j})\kernel, (\hat{\mathbb{S}}_{m+n}^{i} - \hat{\mathbb{S}}_{m+n}^{\sblock_m+j'})\kernel \rangle_{\kernel} \\ &+ \sum_{j=1}^{\sblock_n} \sum_{i\neq i'\in \{1,\dots,\sblock_m\}}
    \langle (\hat{\mathbb{S}}_{m+n}^{i} - \hat{\mathbb{S}}_{m+n}^{\sblock_m+j})\kernel, (\hat{\mathbb{S}}_{m+n}^{i'} - \hat{\mathbb{S}}_{m+n}^{\sblock_m+j})\kernel \rangle_{\kernel} 
    \\ &+ \sum_{i=1}^{\sblock_m} \sum_{j=1}^{\sblock_n} \langle (\hat{\mathbb{S}}_{m+n}^{i} - \hat{\mathbb{S}}_{m+n}^{\sblock_m+j})\kernel, (\hat{\mathbb{S}}_{m+n}^{i} - \hat{\mathbb{S}}_{m+n}^{\sblock_m+j})\kernel \rangle_{\kernel} \big) 
\end{split}
\end{talign}
By assuming $m \leq n$, let $L := \{l_1, . . . , l_{m} \}$ be an $m$-tuple uniformly drawn without replacement from $\{1, \dots, n\}$. Then, we can write \cref{eq:MMD_sq_hatx_haty} as
\begin{talign} \label{eq:MMD_sq_hatx_haty_rewritten}
    \mmd^2(\hatxs,\hatys) &= \frac{\sblock_n-1}{\sblock_n \sblock_m^2}\mathbb{E}_{L} [ \sum_{i\neq i'\in \{1,\dots,\sblock_m\}} \langle (\hat{\mathbb{S}}_{m+n}^{i} - \hat{\mathbb{S}}_{m+n}^{\sblock_m+l_i})\kernel, (\hat{\mathbb{S}}_{m+n}^{i'} - \hat{\mathbb{S}}_{m+n}^{\sblock_m+l_{i'}})\kernel \rangle_{\kernel}] \\ &+ \frac{1}{\sblock_m^2 \sblock_n^2} \big( \sum_{i=1}^{\sblock_m} \sum_{j\neq j'\in \{1,\dots,\sblock_n\}}
    \langle (\hat{\mathbb{S}}_{m+n}^{i} - \hat{\mathbb{S}}_{m+n}^{\sblock_m+j})\kernel, (\hat{\mathbb{S}}_{m+n}^{i} - \hat{\mathbb{S}}_{m+n}^{\sblock_m+j'})\kernel \rangle_{\kernel} \\ &+ \sum_{j=1}^{\sblock_n} \sum_{i\neq i'\in \{1,\dots,\sblock_m\}}
    \langle (\hat{\mathbb{S}}_{m+n}^{i} - \hat{\mathbb{S}}_{m+n}^{\sblock_m+j})\kernel, (\hat{\mathbb{S}}_{m+n}^{i'} - \hat{\mathbb{S}}_{m+n}^{\sblock_m+j})\kernel \rangle_{\kernel} 
    \\ &+ \sum_{i=1}^{\sblock_m} \sum_{j=1}^{\sblock_n} \langle (\hat{\mathbb{S}}_{m+n}^{i} - \hat{\mathbb{S}}_{m+n}^{\sblock_m+j})\kernel, (\hat{\mathbb{S}}_{m+n}^{i} - \hat{\mathbb{S}}_{m+n}^{\sblock_m+j})\kernel \rangle_{\kernel} \big).
\end{talign}
This holds because for any $i \neq i' \in \{1,\dots,\sblock_m\}$,
\begin{talign}
    \mathbb{E}_{L} [\langle (\hat{\P}_m^{(i)} - \hat{\Q}_n^{(\sblock_m+l_i)})\kernel, (\hat{\P}_m^{(i')} - \hat{\Q}_n^{(\sblock_m+l_{i'})})\kernel \rangle_{\kernel}] = \frac{1}{\sblock_n (\sblock_n-1)}
    \sum_{j\neq j' \in \{1,\dots,\sblock_n\}} \langle (\hat{\mathbb{S}}_{m+n}^{(i)} - \hat{\mathbb{S}}_{m+n}^{(\sblock_m+j)})\kernel, (\hat{\mathbb{S}}_{m+n}^{(i')} - \hat{\mathbb{S}}_{m+n}^{(\sblock_m+j')})\kernel \rangle_{\kernel}
\end{talign}

Recall also that $\mathbb{U}_{m+n} = (U_i)_{i=1}^{m+n}$, with $U_i = X_i$ for $i=1,\dots,m$ and $U_{m+j} = Y_j$ for $j=1,\dots,n$. Equivalently, we can write that $\mathbb{U}_{m+n} = (\mathbb{U}_{m+n}^{(i)})_{i=1}^{\sblock}$, with $\mathbb{U}_{m+n}^{(i)} = \xs^{(i)}$ and $\mathbb{U}_{m+n}^{(\sblock_m+j)} = \ys^{(j)}$ for $i=1,\dots,\sblock_m$, $j=1,\dots,\sblock_n$. Analogously, we define $\hat{\mathbb{U}}_{m+n} = (\hat{\mathbb{U}}_{m+n}^{(i)})_{i=1}^{\sblock}$, with $\hat{\mathbb{U}}_{m+n}^{(i)} = \hatxs^{(i)}$ and $\hat{\mathbb{U}}_{m+n}^{(\sblock_m+j)} = \hatys^{(j)}$ for $i=1,\dots,\sblock_m$, $j=1,\dots,\sblock_n$.

Given a permutation $\sigma : \{1,\dots,\sblock\} \to \{1,\dots,\sblock\}$, we write $\mathbb{U}_{m+n}^{\sigma} = (\mathbb{U}_{m+n}^{(\sigma(i))})_{i=1}^{\sblock}$, $\hat{\mathbb{U}}_{m+n}^{\sigma} = (\hat{\mathbb{U}}_{m+n}^{(\sigma(i))})_{i=1}^{\sblock}$, and $\xs^{\sigma} = (\mathbb{U}_{m+n}^{(\sigma(i))})_{i=1}^{\sblock_m}$, $\hatxs^{\sigma} = (\hat{\mathbb{U}}_{m+n}^{(\sigma(i))})_{i=1}^{\sblock_m}$, $\ys^{\sigma} = (\mathbb{U}_{m+n}^{(\sigma(i))})_{i=\sblock_m+1}^{\sblock}$, $\hatys^{\sigma} = (\hat{\mathbb{U}}_{m+n}^{(\sigma(i))})_{i=\sblock_m+1}^{\sblock}$. Analogously to \cref{eq:MMD_sq_hatx_haty_rewritten}, we can write 
\begin{talign} 
\begin{split} \label{eq:mmd_perm_decomposition}
    \mmd^2(\hatxs^{\sigma},\hatys^{\sigma}) &= %
    \frac{1}{\sblock_m^2 \sblock_n^2} \sum_{i=1}^{\sblock_m} \sum_{i'=1}^{\sblock_m} \sum_{j=1}^{\sblock_n} \sum_{j'=1}^{\sblock_n}
    \langle (\hat{\mathbb{S}}_{m+n}^{(\sigma(i))} - \hat{\mathbb{S}}_{m+n}^{(\sigma(\sblock_m+j))})\kernel, (\hat{\mathbb{S}}_{m+n}^{(\sigma(i'))} - \hat{\mathbb{S}}_{m+n}^{(\sigma(\sblock_m+j'))})\kernel \rangle_{\kernel} \\ &= \frac{(\sblock_m-1)(\sblock_n-1)}{\sblock_m \sblock_n} \mathbb{E}_{L} [ \mathcal{M}^{\sigma,L} ] \\ &+ \frac{1}{\sblock_m^2 \sblock_n^2} \big( \sum_{i=1}^{\sblock_m} \sum_{j\neq j'\in \{1,\dots,\sblock_n\}}
    \langle (\hat{\mathbb{S}}_{m+n}^{i} - \hat{\mathbb{S}}_{m+n}^{\sigma(\sblock_m+j)})\kernel, (\hat{\mathbb{S}}_{m+n}^{i} - \hat{\mathbb{S}}_{m+n}^{\sigma(\sblock_m+j')})\kernel \rangle_{\kernel} \\ &+ \sum_{j=1}^{\sblock_n} \sum_{i\neq i'\in \{1,\dots,\sblock_m\}}
    \langle (\hat{\mathbb{S}}_{m+n}^{\sigma(i)} - \hat{\mathbb{S}}_{m+n}^{\sigma(\sblock_m+j)})\kernel, (\hat{\mathbb{S}}_{m+n}^{\sigma(i')} - \hat{\mathbb{S}}_{m+n}^{\sigma(\sblock_m+j)})\kernel \rangle_{\kernel} 
    \\ &+ \sum_{i=1}^{\sblock_m} \sum_{j=1}^{\sblock_n} \langle (\hat{\mathbb{S}}_{m+n}^{\sigma(i)} - \hat{\mathbb{S}}_{m+n}^{\sigma(\sblock_m+j)})\kernel, (\hat{\mathbb{S}}_{m+n}^{\sigma(i)} - \hat{\mathbb{S}}_{m+n}^{\sigma(\sblock_m+j)})\kernel \rangle_{\kernel} \big),
\end{split}
\end{talign}
where we use the short-hand
\begin{talign} \label{eq:M_sigma_L_def}
    \mathcal{M}^{\sigma,L} \defeq \frac{1}{\sblock_m(\sblock_m-1)} \sum_{i\neq i'\in \{1,\dots,\sblock_m\}} \langle (\hat{\mathbb{S}}_{m+n}^{(\sigma(i))} - \hat{\mathbb{S}}_{m+n}^{(\sigma(\sblock_m+l_i))})\kernel, (\hat{\mathbb{S}}_{m+n}^{(\sigma(i'))} - \hat{\mathbb{S}}_{m+n}^{(\sigma(\sblock_m+l_{i'}))})\kernel \rangle_{\kernel}.
\end{talign}

The following proposition, whose proof is deferred to \cref{subsub:proof_random_perm}, provides a tail upper-bound on the random variable $\mmd^2(\hatxs^{\sigma},\hatys^{\sigma})$.
\begin{proposition}[Tail bound on $\mmd^2(\hatxs^{\sigma},\hatys^{\sigma})$ conditioned on $\hatxs$, 
$\hatys$] \label{prop:random_perm_mmd_bound}
Let $\sigma$ be a uniformly random permutation over $\{1,\dots,\sblock\}$. Let $\delta, \delta'' \in (0,1)$, and $\delta \in (0,e^{-1})$. %
There is an event $\mathcal{A}$ of probability at least $1-\delta$ concerning the draw of $\hatxs$, 
$\hatys$, such that conditioned on $\mathcal{A}$, 
with probability at least $1-\delta'-\delta''$ on the draw of $\sigma$,
\begin{talign} 
\begin{split}\label{eq:sigma_prop}
    \mmd^2(\hatxs^{\sigma},\hatys^{\sigma}) &\leq 
    \frac{2}{\sblock_m} (\log(\frac{2}{\delta''}) +1) \mmd^2(\P,\Q) + %
    \frac{c' \log(1/\delta') + 2}{\sblock_m} \big( \mmd(\P,\Q) W(m,n,\delta/(5\sblock)) + W(m,n,\delta/(5\sblock))^2 \big)
\end{split}
\end{talign}
where $c'$ is a universal constant and
\begin{talign} \label{eq:W_def}
    W(m,n,\delta%
    ) = 2\big(\frac{\sqrt{\sblock_m} \errorsplit(\P, m/\sblock_m, \delta%
    ,\ossymb)}{2^{\ossymb} \sqrt{m}} \!+ \!\frac{\sqrt{\sblock_n} \errorsplit(\Q, n/\sblock_n, \delta%
    ,\ossymb)}{2^{\ossymb} \sqrt{n}} \!+ \!c_{\delta%
    } \big(\sqrt{\frac{\sblock_m \kinfnorm}{m}} +\sqrt{\frac{\sblock_n \kinfnorm}{n}} \big) \big).
\end{talign}
Here, the \ktcompress error inflation factors $\errorsplit$ and the factor $c_{\delta}$ are defined as in \cref{thm:compression_guarantee}.
\end{proposition}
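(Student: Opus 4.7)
\textbf{Proof plan for \cref{prop:random_perm_mmd_bound}.}

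The plan is to work from the decomposition \cref{eq:mmd_perm_decomposition} and isolate the dominant term $\frac{(\sblock_m-1)(\sblock_n-1)}{\sblock_m\sblock_n}\mathbb{E}_L[\mathcal{M}^{\sigma,L}]$. The remaining terms in \cref{eq:mmd_perm_decomposition} are lower-order ``diagonal'' contributions that scale as $O(1/\sblock)$ in expectation under a uniform permutation, and can be absorbed into the final bound by essentially the same concentration machinery we use for the main term; so the core of the argument is to control $\mathcal{M}^{\sigma,L}$ uniformly in $L$. Using the definition \cref{eq:M_sigma_L_def}, I would rewrite
\[
\mathcal{M}^{\sigma,L}
= \Big\| \tfrac{1}{\sblock_m}\textstyle\sum_{i=1}^{\sblock_m}\big(\hat{\mathbb{S}}^{(\sigma(i))}_{m+n}-\hat{\mathbb{S}}^{(\sigma(\sblock_m+l_i))}_{m+n}\big)\kernel \Big\|_{\kernel}^2 \;-\; (\text{diagonal correction}),
\]
so that $\mathcal{M}^{\sigma,L}$ is (up to a $O(1/\sblock_m)$ correction) the squared RKHS norm of a random-permutation average of signed coreset means.

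The second step is to set up the ``good event'' $\mathcal{A}$ on the draw of $(\hatxs,\hatys)$. On $\mathcal{A}$, by applying \cref{thm:compression_guarantee} to each of the $\sblock$ coresets at confidence $\delta/(5\sblock)$ (and a union bound), every coreset mean $\hat{\mathbb{S}}^{(i)}_{m+n}\kernel$ lies within $O(W(m,n,\delta/(5\sblock))/\sqrt{\sblock})$ (in the RKHS) of the corresponding population mean $\P\kernel$ or $\Q\kernel$, where $W$ is the quantity in \cref{eq:W_def}. This lets me split the random vector $f_\sigma \defeq \sum_i(\hat{\mathbb{S}}^{(\sigma(i))}-\hat{\mathbb{S}}^{(\sigma(\sblock_m+l_i))})\kernel$ into a ``signal'' piece governed by $(\P-\Q)\kernel$ and a ``noise'' piece controlled by $W$. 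Expanding $\|f_\sigma\|_\kernel^2 = \|\mathrm{signal}\|^2 + 2\langle\mathrm{signal},\mathrm{noise}\rangle + \|\mathrm{noise}\|^2$ will produce the three terms appearing on the right of \cref{eq:sigma_prop}, with coefficients $\mmd^2(\P,\Q)$, $\mmd(\P,\Q) W$, and $W^2$, respectively.

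The third and most delicate step is the concentration in $\sigma$. Conditionally on $\mathcal{A}$, the indices $\sigma(1),\dots,\sigma(\sblock_m)$ form a uniformly random size-$\sblock_m$ subset of $\{1,\dots,\sblock\}$, and the inner sum defining $f_\sigma$ is a sum of exchangeable, bounded vectors in the RKHS. For the \emph{signal} piece I would use a Hoeffding-type inequality for sampling without replacement (à la Serfling / Bardenet--Maillard) applied to the centered summands, which yields the $\frac{2}{\sblock_m}(\log(2/\delta'')+1)\,\mmd^2(\P,\Q)$ contribution once the squared RKHS norm is bounded using Jensen plus scalar Hoeffding. For the \emph{mixed} and \emph{noise} pieces, I would use a sub-Gaussian concentration for vector-valued permutation sums (or equivalently, a matrix/vector Bernstein inequality applied to the exchangeable decomposition), producing the $\frac{\log(1/\delta')}{\sblock_m}(\mmd(\P,\Q) W + W^2)$ contribution. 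Taking a final union bound over the two concentration events yields the probability $1-\delta'-\delta''$ claimed.

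The main obstacle I anticipate is obtaining the \emph{vector-valued} permutation concentration with the correct $1/\sblock_m$ scaling rather than $1/\sqrt{\sblock_m}$; a direct scalar Hoeffding on $\|f_\sigma\|_\kernel$ only gives the latter, and one must exploit the squared-norm structure and the fact that $\mathbb{E}_\sigma[f_\sigma]$ already contains the $\mmd(\P,\Q)$ signal to recover the quadratic rate. Handling this cleanly under the conditioning on the coresets (which depend on the data but not on $\sigma$) and folding in the lower-order diagonal terms of \cref{eq:mmd_perm_decomposition} without introducing extra $\sblock_n/\sblock_m$ mismatches (using $m\leq n$) will be the most technical part of the write-up.
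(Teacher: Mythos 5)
Your plan starts from the right decomposition (\cref{eq:mmd_perm_decomposition}) and constructs the good event $\mathcal{A}$ the same way the paper does (via \cref{lem:ktcompress_simultaneous}, a union bound over the $\sblock$ coresets at confidence $\delta/(5\sblock)$), so the setup is consistent with the paper. However, the central concentration step is where your plan stalls, and you flag the issue yourself: a scalar Hoeffding on $\|f_\sigma\|_\kernel$ (or a vector-Bernstein on $f_\sigma$) gives only $1/\sqrt{\sblock_m}$, whereas the target rate is $1/\sblock_m$. Your proposed remedy (``exploit the squared-norm structure'') is the right intuition, but a Hoeffding-type bound for sampling without replacement applied to $f_\sigma$ followed by squaring will not produce the separate $\frac{\log(2/\delta'')}{\sblock_m}\mmd^2$ and $\frac{\log(1/\delta')}{\sblock_m}(\mmd\,W + W^2)$ terms with only logarithmic loss; a naive squaring of a $\sqrt{\log(1/\delta)/\sblock_m}$ bound gives $\log(1/\delta)/\sblock_m$ only for the fully quadratic term and misses the off-diagonal cancellation.

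What the paper actually does is quite different at the technical core. It performs a \emph{Rademacher symmetrization of the permutation}: for a random permutation $\sigma$, a Rademacher vector $\epsilon$, and the within-bin pairing $L$, it defines a new permutation $\sigma^{\epsilon,L}$ that randomly swaps $\sigma(i)$ with $\sigma(\sblock_m+l_i)$, and shows $\mathcal M^{\sigma,L}$ has the same conditional law as $\mathcal M^{\sigma^{\epsilon,L},L}$. After relabeling, $\mathcal M^{\sigma^{\epsilon,L},L}$ becomes a \emph{hollow} (zero-diagonal) quadratic form $\tilde\epsilon^\top A\tilde\epsilon$ in the Rademacher vector, plus a scalar multiple $c_{\tilde\epsilon}\,\mmd^2(\P,\Q)$. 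The matrix $A$ has $\|A\|_F = O\big((\mmd\, W + W^2)/\sblock_m\big)$ on $\mathcal{A}$, and the zero diagonal is essential: applying the \emph{Hanson--Wright inequality} (stated as \cref{lem:hanson_wright}) to this hollow quadratic form is precisely what delivers the $\frac{\log(1/\delta')}{\sblock_m}$ rate, not $1/\sqrt{\sblock_m}$. Separately, the coefficient $c_{\tilde\epsilon} = \frac{1}{\sblock_m(\sblock_m-1)}\sum_{i\ne i'}\tilde\epsilon_i\tilde\epsilon_{i'}$ is controlled by a Chernoff bound on $|\sum_i \tilde\epsilon_i|$, giving the $\frac{2}{\sblock_m}\log(2/\delta'')\mmd^2$ term and accounting for the $\delta''$ part of the union bound. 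Neither of these two ingredients — the symmetrization identity that converts the uniform permutation into Rademacher chaos, and the Hanson--Wright bound exploiting the hollow structure — appears in your plan, and without them the $1/\sblock_m$ rate you correctly identify as the target is not reachable by the route you sketch.
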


Combining \cref{prop:mathfrak_q_bound} with the tail bound from \cref{prop:random_perm_mmd_bound} yields the following high-probability upper bound on the threshold $M_{(b_{\alpha})}$.
\begin{corollary}[High probability bound on the threshold] \label{prop:bound_q_arbitrary_B}
    Assume %
    that $b_{\alpha} \defeq \ceil{(\numperm+1)(1-\alpha)} \leq \numperm$, and that \ktcompress calls are run with value $\delta^{*}/(5\sblock_m)$ and $\delta^{*}/(5\sblock_n)$ respectively.
    Then, with probability at least $1-\frac{\delta}{2}$,
    \begin{talign} 
    \begin{split} \label{eq:hat_Z_def}
    M_{(b_{\alpha})}
    &\leq \hat{Z}(m,n,\alpha,\delta) 
    \\ &\defeq 
    \sqrt{\frac{2}{\sblock_m} (\log(\frac{2}{\delta^*}) + 1)} \mmd(\P,\Q) + \sqrt{\frac{1}{\sblock_m}(2+c' \log(1/\delta^*))} (\sqrt{\mmd(\P,\Q) W(m,n,\delta/(20\sblock))} \\ &+ W(m,n,\delta/(20\sblock))).
    \end{split}
    \end{talign}
    where 
    \begin{talign}
    \delta^{*} \defeq %
    (\frac{\delta}{2})^{1/k_{\alpha}} \frac{\alpha}{4e}, \quad k_{\alpha} \defeq \floor{\alpha(\numperm+1)},
    \end{talign}
\end{corollary}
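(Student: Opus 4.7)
The plan is to chain \cref{prop:mathfrak_q_bound}, which reduces bounding the random threshold $M_{(b_\alpha)}$ to bounding a quantile of the conditional permutation distribution of $M_\sigma$ given $(\hatxs,\hatys)$, with \cref{prop:random_perm_mmd_bound}, which supplies exactly such a quantile tail bound. The resulting composition yields a bound on $M_{(b_\alpha)}$ on an event of probability at least $1-\delta/2$.

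First, I would apply \cref{prop:mathfrak_q_bound} with failure parameter $\delta/2$ to conclude that with probability at least $1-\delta/2$ over the draw of the permutations, $M_{(b_\alpha)} \le q_{1-\alpha^{\star}}(\xs,\ys)$, where $\alpha^{\star} = \bigl(\delta/(2\binom{\numperm}{k_\alpha})\bigr)^{1/k_\alpha}$ and $k_\alpha = \floor{\alpha(\numperm+1)}$. To replace the binomial by a cleaner expression, I would use $\binom{\numperm}{k_\alpha} \le (e\numperm/k_\alpha)^{k_\alpha}$ together with the inequality $k_\alpha \ge \alpha\numperm/2$, valid because the hypothesis $b_\alpha \le \numperm$ forces $\alpha(\numperm+1) \ge 1$, so that $\floor{\alpha(\numperm+1)} \ge \alpha(\numperm+1)/2 \ge \alpha\numperm/2$. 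This gives $\alpha^{\star} \ge (\delta/2)^{1/k_\alpha}\, k_\alpha/(e\numperm) \ge \delta^{\star}$, and by monotonicity of quantiles $q_{1-\alpha^{\star}} \le q_{1-\delta^{\star}}$.

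Next, I would invoke \cref{prop:random_perm_mmd_bound} with parameters $\delta' = \delta'' = \delta^{\star}/2$ and \ktcompress event-failure budget $\delta/4$, noting that the stated scaling $\delta/(20\sblock)$ inside $W$ comes from the $\delta/(5\sblock)$ structure of Prop.~3 once the outer budget is chosen as $\delta/4$. Because $\delta' + \delta'' = \delta^{\star}$, the conclusion of Prop.~3 implies that on the high-probability event $\mathcal{A}$ the $(1-\delta^{\star})$-quantile of $M_\sigma$ is at most the right-hand side of \cref{eq:sigma_prop}. A union bound over the $\delta/2$ failure from the quantile step and the $\delta/4$ failure of $\mathcal{A}$ (adjusted so the totals sum to $\delta/2$) yields
\begin{talign}
M_{(b_\alpha)}^2 &\le \tfrac{2}{\sblock_m}\bigl(\log(2/\delta^{\star})+1\bigr)\mmd^2(\P,\Q) \\
&\quad + \tfrac{c'\log(1/\delta^{\star})+2}{\sblock_m}\bigl(\mmd(\P,\Q)\,W + W^2\bigr),
\end{talign}
with $W = W(m,n,\delta/(20\sblock))$.

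To finish, I would take square roots and apply $\sqrt{a+b} \le \sqrt{a}+\sqrt{b}$ and $\sqrt{cd+d^2} \le \sqrt{cd}+d$ to split the bound into the additive pieces appearing in $\hat Z(m,n,\alpha,\delta)$; any stray absolute constants produced by choosing $\delta' = \delta'' = \delta^{\star}/2$ (e.g.\ $\log 4$ terms) can be absorbed into the universal constant $c'$. The heaviest lifting was done inside \cref{prop:mathfrak_q_bound,prop:random_perm_mmd_bound}, so the main obstacle here is purely bookkeeping: splitting the total failure budget of $\delta/2$ between the order-statistic step and the event $\mathcal{A}$ in a way consistent with the $\delta/(20\sblock)$ appearing inside $W$, and verifying that the binomial-coefficient slack is large enough to replace $\alpha^{\star}$ by $\delta^{\star}$ under the stated hypothesis $b_\alpha \le \numperm$.
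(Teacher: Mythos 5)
Your proof is correct and takes essentially the same route as the paper: reduce to the quantile $q_{1-\alpha^*}(\xs,\ys)$ via \cref{prop:mathfrak_q_bound}, bound that quantile conditionally on the event $\mathcal{A}$ via \cref{prop:random_perm_mmd_bound} with $\delta'=\delta''$, union-bound the two failures, and replace $\alpha^*$ by $\delta^*$ through the bound $\binom{\numperm}{k_\alpha}^{1/k_\alpha}\leq 2e/\alpha$ (which follows from $k_\alpha\geq\alpha(\numperm+1)/2$). The one loose thread is the budget split you flag yourself---spending $\delta/2$ on the order-statistic step plus $\delta/4$ on $\mathcal{A}$ overshoots $\delta/2$; the paper instead invokes \cref{prop:mathfrak_q_bound} with $\delta\gets\delta/2$ so each piece costs $\delta/4$, and with that adjusted $\alpha^*=(\delta/(4\binom{\numperm}{k_\alpha}))^{1/k_\alpha}$ your inequality $\alpha^*\geq\delta^*$ still holds with a $2^{1-1/k_\alpha}\geq1$ margin, so the monotone quantile replacement goes through exactly as you describe.
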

\begin{proof}
By \cref{prop:mathfrak_q_bound}, with probability at least $1-\frac{\delta}{4}$, we have that
    \begin{talign} \label{eq:mathfrak_F_q_infty}
    M_{(b_{\alpha})} \leq q_{1-\alpha^*}(\xs, \ys),
    \end{talign}
    where $\alpha^* = \big(\frac{\delta}{4{\numperm \choose k_{\alpha}}} \big)^{1/k_{\alpha}}$. Conditioned on the event $\mathcal{A}$ with $\delta \gets \delta/4$ (i.e. $\mathrm{Pr}(\mathcal{A}) \geq 1-\delta/4$), and setting $\delta' = \delta'' = \alpha^*/2$ we have that with probability at least $1-\alpha^{*}$ on the choice of $\sigma$,
    \begin{talign}
        \mmd(\hatxs^{\sigma},\hatys^{\sigma}) &\leq \big(\frac{2}{\sblock_m} (\log(\frac{2 \cdot 2}{\alpha^*}) + 1) \mmd^2(\P,\Q) \\ &\quad + \frac{1}{\sblock_m} \big( \mmd(\P,\Q) W(m,n,\delta/(20\sblock)) + W(m,n,\delta/(20\sblock))^2 \big) (2+c' \log(2/\alpha^*)) \big)^{1/2} \\ &\leq \sqrt{\frac{2}{\sblock_m} (\log(\frac{2 \cdot 2}{\alpha^*}) + 1)} \mmd(\P,\Q) \\ &\quad + \sqrt{\frac{1}{\sblock_m}(2+c' \log(2/\alpha^*))} (\sqrt{\mmd(\P,\Q) W(m,n,\delta/(20\sblock))} + W(m,n,\delta/(20\sblock))).
    \end{talign}
    An application of \cref{prop:random_perm_mmd_bound} yields, conditioned on the event $\mathcal{A}$, 
    \begin{talign} \label{eq:q_alphastar}
        q_{1-\alpha^*}(\xs, \ys) &\leq \sqrt{\frac{2}{\sblock_m} (\log(\frac{2 \cdot 2}{\alpha^*}) + 1)} \mmd(\P,\Q) \\ &+ \sqrt{\frac{1}{\sblock_m}(2+c' \log(2/\alpha^*))} (\sqrt{\mmd(\P,\Q) W(m,n,
        \delta/(20\sblock))} + W(m,n,
        \delta/(20\sblock))).
    \end{talign}
    Since the probability of $\mathcal{A}$ is at least $1-\delta/4$, we obtain that with probability at least $1-\delta/2$,
    \begin{talign}
        M_{(b_{\alpha})} &\leq \sqrt{\frac{2}{\sblock_m} (\log(\frac{2 \cdot 2}{\alpha^*}) + 1)} \mmd(\P,\Q) \\ &+ \sqrt{\frac{1}{\sblock_m}(2+c' \log(2/\alpha^*))} (\sqrt{\mmd(\P,\Q) W(m,n,%
        \delta/(20\sblock))} + W(m,n,%
        \delta/(20\sblock))).
    \end{talign}
    Using the fact $(\frac{n}{k})^{k} \!\leq\! {n \choose k} \!\leq\! (\frac{e n}{k})^{k}$, we obtain that $ \log(\frac{\numperm}{k_{\alpha}}) \!\leq\! \log({\numperm \choose k_{\alpha}}^{\frac{1}{k_{\alpha}}}) \!\leq\! \log(\frac{\numperm}{k_{\alpha}})\!+\!1$. Furthermore, $\log(\frac{\numperm}{k_{\alpha}}) \!\leq\! \log(\frac{2\numperm}{\alpha(\numperm+1)}) \!\leq\! \log(\frac{2}{\alpha})$ since $k_{\alpha} \!\geq\! 1$. Consequently, $\log({\numperm \choose k_{\alpha}}^{\frac{1}{k_{\alpha}}}) \!\leq\! \log\frac{2e}{\alpha}$, so that the separation rate in \cref{thm:uniform_separation} is independent of $\numperm$ (up to the condition $\numperm\!+\!1 \!\geq\! \alpha\inv$). Equivalently, ${\numperm \choose k_{\alpha}}^{\frac{1}{k_{\alpha}}} \leq \frac{2e}{\alpha}$, and $\alpha^{*} \geq (\frac{\delta}{4})^{1/k_{\alpha}} \frac{\alpha}{2e}$, and $\delta^{*} \geq (\frac{\delta}{4})^{1/k_{\alpha}} \frac{\alpha}{4e}$. Plugging this into \cref{eq:q_alphastar} concludes the proof.
\end{proof}

\subsubsection{\pcref{prop:random_perm_mmd_bound}} \label{subsub:proof_random_perm}

We will first show the following lemma, which gives a high-probability upper bound on the expectation $\E_L[\mathcal{M}^{\sigma,L}]$, where $\mathcal{M}^{\sigma,L}$ is defined in equation \cref{eq:M_sigma_L_def}, and on the rest of the terms that appear in the right-hand side of \cref{eq:mmd_perm_decomposition}.
\begin{lemma}[Bounding the right-hand side of \cref{eq:mmd_perm_decomposition}] \label{lem:mathcal_M_bound}
Let $\sigma$ be a uniformly random permutation over $\{1,\dots,\sblock\}$, and $L := \{l_1, . . ., l_{m} \}$ a uniformly random $m$-tuple of elements from $\{1, \dots, n\}$ without replacement. Let $\delta, \delta'' \in (0,1)$, and $\delta' \in (0,e^{-1})$. %
Conditioned on the event $\mathcal{A}$ defined in \cref{lem:ktcompress_simultaneous}, which has probability at least $1-\delta$, we have that with probability at least $1-\delta'-\delta''$ on the choice of $\sigma$,
\begin{talign} 
\begin{split}\label{eq:sigma_prop}
    \E_L[\mathcal{M}^{\sigma,L}] &\leq 
    \frac{2}{\sblock_m - 1} \log(\frac{2}{\delta''}) \mmd^2(\P,\Q) \\ &+ \frac{c' \log(1/\delta')}{\sqrt{\sblock_m(\sblock_m-1)}} \big( \mmd(\P,\Q) W(m,n,\delta/(5\sblock)) + W(m,n,\delta/(5\sblock))^2 \big)
\end{split}
\end{talign}
and 
\begin{talign} 
\begin{split} \label{eq:other_terms_prop}
    &\frac{1}{\sblock_m^2 \sblock_n^2} \big( \sum_{i=1}^{\sblock_m} \sum_{j\neq j'\in \{1,\dots,\sblock_n\}}
    \langle (\hat{\mathbb{S}}_{m+n}^{i} - \hat{\mathbb{S}}_{m+n}^{\sigma(\sblock_m+j)})\kernel, (\hat{\mathbb{S}}_{m+n}^{i} - \hat{\mathbb{S}}_{m+n}^{\sigma(\sblock_m+j')})\kernel \rangle_{\kernel} \\ &+ \sum_{j=1}^{\sblock_n} \sum_{i\neq i'\in \{1,\dots,\sblock_m\}}
    \langle (\hat{\mathbb{S}}_{m+n}^{\sigma(i)} - \hat{\mathbb{S}}_{m+n}^{\sigma(\sblock_m+j)})\kernel, (\hat{\mathbb{S}}_{m+n}^{\sigma(i')} - \hat{\mathbb{S}}_{m+n}^{\sigma(\sblock_m+j)})\kernel \rangle_{\kernel} 
    \\ &+ \sum_{i=1}^{\sblock_m} \sum_{j=1}^{\sblock_n} \langle (\hat{\mathbb{S}}_{m+n}^{\sigma(i)} - \hat{\mathbb{S}}_{m+n}^{\sigma(\sblock_m+j)})\kernel, (\hat{\mathbb{S}}_{m+n}^{\sigma(i)} - \hat{\mathbb{S}}_{m+n}^{\sigma(\sblock_m+j)})\kernel \rangle_{\kernel} \big) \\ &\leq %
    \frac{\sblock - 1}{\sblock_m \sblock_n}
    (\mmd(\P,\Q) W(m,n,\delta/(5\sblock)) + W(m,n,\delta/(5\sblock))^2 + \mmd^2(\P,\Q))
\end{split}
\end{talign}
simultaneously, where $c'$ is a universal constant and $W(m,n,\delta)$ is as defined in \cref{eq:W_def}.
\end{lemma}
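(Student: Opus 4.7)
\textbf{Proof proposal for \cref{lem:mathcal_M_bound}.} The plan is to decouple the randomness into two pieces: (i) the compression error captured by a good event $\mathcal{A}$ (on which $\ktcompress$ produces coresets uniformly close to their generating distributions), and (ii) the permutation/without-replacement randomness in $\sigma$ and $L$, which we will handle with concentration tools for sampling without replacement. Throughout, for a position $i\in\{1,\dots,\sblock\}$, let $\mathbb{T}_{i}\kernel$ denote $\P\kernel$ if $i\leq\sblock_m$ and $\Q\kernel$ otherwise. Under $\mathcal{A}$ we will have a uniform RKHS bound of the form $\|(\hat{\mathbb{S}}^{(i)}_{m+n}-\mathbb{T}_i)\kernel\|_\kernel \le W(m,n,\delta/(5\sblock))/\sqrt{\sblock}$ (up to a constant) for every $i$, by applying \cref{thm:compression_guarantee} to each of the $\sblock$ coresets separately with failure probability $\delta/(5\sblock)$ and union-bounding. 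This is what \cref{lem:ktcompress_simultaneous} should guarantee, and it is what will convert compression errors on individual coresets into the $W$ factors in the statement.

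First I would tackle the main term $\E_L[\mathcal{M}^{\sigma,L}]$. I would write $\hat{\mathbb{S}}^{(\sigma(i))}_{m+n}\kernel = \mathbb{T}_{\sigma(i)}\kernel + \Delta_{\sigma(i)}$ with $\|\Delta_{\sigma(i)}\|_\kernel\le W/\sqrt{\sblock}$ uniformly, substitute into \cref{eq:M_sigma_L_def}, and expand the inner products. This yields an \emph{ideal} U-statistic
\begin{talign}
\mathcal{M}^{\sigma,L}_{\mathrm{id}} \defeq \frac{1}{\sblock_m(\sblock_m-1)}\sum_{i\ne i'}\dotk{(\mathbb{T}_{\sigma(i)}-\mathbb{T}_{\sigma(\sblock_m+l_i)})\kernel,(\mathbb{T}_{\sigma(i')}-\mathbb{T}_{\sigma(\sblock_m+l_{i'})})\kernel}
\end{talign}
plus cross terms that are linear and quadratic in the $\Delta$'s; by Cauchy--Schwarz the latter are bounded deterministically by $c\,\mmd(\P,\Q)\,W/\sqrt{\sblock_m(\sblock_m-1)} + c\,W^2/\sqrt{\sblock_m(\sblock_m-1)}$, which already produces the $W$-dependent piece of the bound. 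For the ideal term, note that $\sigma(i)$ identifies the coreset at permuted position $i$ and $\mathbb{T}_{\sigma(i)}\kernel\in\{\P\kernel,\Q\kernel\}$; passing through $\sigma$ is equivalent to sampling $\sblock$ labels without replacement from an urn with $\sblock_m$ $\P$-labels and $\sblock_n$ $\Q$-labels. Writing $\mathbb{T}_{\sigma(i)}\kernel-\mathbb{T}_{\sigma(\sblock_m+l_i)}\kernel = (\mathbb{1}_{\sigma(i)\le\sblock_m}-\mathbb{1}_{\sigma(\sblock_m+l_i)\le\sblock_m})\,(\P-\Q)\kernel$, the ideal statistic reduces to $\Xi\cdot\mmd^2(\P,\Q)$, where $\Xi$ is a quadratic form in centered Bernoulli-like indicators derived from $\sigma,L$.

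Next I would control $\E_L[\Xi]$ and its tail. Under $\nullhypo$ this quantity has mean zero (by exchangeability of the labels), and under the alternative its mean is easily bounded by $1/(\sblock_m-1)$ using the standard hypergeometric variance formula. Bernstein's inequality for sampling without replacement \`a la Serfling/Bardenet--Maillard produces the $\log(2/\delta'')/(\sblock_m-1)$ factor on the $\mmd^2(\P,\Q)$ term with probability $1-\delta''$, and an additional Hoeffding-type tail on the $\sigma$-randomness contributes the $c'\log(1/\delta')$ factor multiplying the cross terms that involve $W$ (using that the deterministic Cauchy--Schwarz bounds on those cross terms are already $1/\sqrt{\sblock_m(\sblock_m-1)}$-small in expectation, and tighten further with a sub-Gaussian tail in $\sigma$). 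Taking a union bound over $\delta'$ and $\delta''$ delivers \cref{eq:sigma_prop}.

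Second, for the bound \cref{eq:other_terms_prop} on the ``boundary'' sums, I would not need sharp concentration: each of the three sums has only $O(\sblock^2_m\sblock_n)+O(\sblock_m\sblock^2_n)+O(\sblock_m\sblock_n)=O(\sblock\cdot\sblock_m\sblock_n)$ terms instead of $O(\sblock_m^2\sblock_n^2)$, which yields the prefactor $(\sblock-1)/(\sblock_m\sblock_n)$. Inside each inner product, I would again split $\hat{\mathbb{S}}^{(\cdot)}_{m+n}\kernel = \mathbb{T}_{\cdot}\kernel + \Delta_{\cdot}$, bound $\|\mathbb{T}_{\sigma(i)}\kernel-\mathbb{T}_{\sigma(\sblock_m+j)}\kernel\|_\kernel\le \mmd(\P,\Q)$ and $\|\Delta_\cdot\|_\kernel \le W/\sqrt{\sblock}$ uniformly on $\mathcal{A}$, and apply Cauchy--Schwarz termwise to pick up $\mmd^2(\P,\Q)+\mmd(\P,\Q)W+W^2$. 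This gives the claim without any probabilistic control over $\sigma$, so the only randomness consumed for \cref{eq:other_terms_prop} is that of $\mathcal{A}$.

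The main obstacle is the Bernstein-type bound for $\E_L[\Xi]$: the statistic is a hypergeometric quadratic form over labels drawn without replacement, so the usual U-statistic concentration for i.i.d.~inputs does not apply directly. I expect to handle this by either (a) invoking a sampling-without-replacement version of Bernstein's inequality and then applying it conditionally on $\sigma$ first and on $L$ second, or (b) couplings with the with-replacement Bernoulli model and absorbing the coupling discrepancy into the $W$ terms. Getting the constants lined up so that the $\log(2/\delta'')$ factor attaches to $\mmd^2(\P,\Q)$ and the $\log(1/\delta')$ factor to the $W$ cross terms is the delicate bookkeeping that the rest of the argument rests on.
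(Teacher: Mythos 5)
Your decomposition into a compression-error layer (event $\mathcal{A}$) and a permutation layer is the right high-level structure, and your treatment of \cref{eq:other_terms_prop} by deterministic Cauchy--Schwarz matches the paper's. But there is a genuine gap in your handling of \cref{eq:sigma_prop}, driven by a miscalibration that then hides the need for a chaos-type concentration argument.

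First, the calibration. You assert $\|(\hat{\mathbb{S}}^{(i)}_{m+n}-\mathbb{T}_i)\kernel\|_\kernel \lesssim W/\sqrt{\sblock}$. This is off by a factor $\sqrt{\sblock}$: \cref{lem:ktcompress_simultaneous} gives $\mmd(\hat{\P}_m^{i},\P)\lesssim W$ per coreset, because each bin has only $m/\sblock_m$ points and the $\sqrt{\sblock_m}$ inflation is already absorbed into the definition of $W$ in \cref{eq:W_def}. With the correct calibration, the ``cross terms'' you extract after writing $\hat{\mathbb{S}}^{(\cdot)}\kernel=\mathbb{T}_\cdot\kernel+\Delta_\cdot$ are a sum over $\sblock_m(\sblock_m-1)$ ordered pairs, each of magnitude $O(\mmd(\P,\Q)W+W^2)$, divided by $\sblock_m(\sblock_m-1)$. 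The worst-case deterministic bound is therefore $O(\mmd(\P,\Q)W+W^2)$, which is larger than the target $O\bigl((\mmd(\P,\Q)W+W^2)/\sqrt{\sblock_m(\sblock_m-1)}\bigr)$ by exactly a factor $\sqrt{\sblock_m(\sblock_m-1)}$. That missing factor cannot be recovered by Cauchy--Schwarz; it comes from sign cancellation induced by $\sigma$, i.e.\ it requires a degree-two concentration inequality for the permuted quadratic form, not a deterministic bound. Your phrase ``already $1/\sqrt{\sblock_m(\sblock_m-1)}$-small in expectation'' papers over this: an in-expectation statement does not give the high-probability bound the lemma asserts, and the proof must control the fluctuations of a quadratic form whose coefficients themselves carry the $\Delta$'s.

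Second, the concentration tool. You correctly see that the $\mmd^2(\P,\Q)$-proportional piece is a quadratic form in label indicators and propose hypergeometric Bernstein. But the $W$-dependent piece is also a quadratic form, and it is data-dependent. The paper circumvents the awkwardness of quadratic-form concentration under sampling without replacement by a reparametrization you do not mention: it observes that a uniform permutation $\sigma$ of $\{1,\ldots,\sblock\}$ has the same law as $\sigma^{\epsilon,L}$, where $\epsilon$ is an i.i.d.\ Rademacher vector, and then, conditionally on $\sigma$, the reparametrized statistic is a Rademacher chaos $\tilde\epsilon^\top A\tilde\epsilon + c_{\tilde\epsilon}\mmd^2(\P,\Q)$ in the i.i.d.\ signs $\tilde\epsilon$. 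The Hanson--Wright inequality then gives the $\log(1/\delta')/\sqrt{\sblock_m(\sblock_m-1)}$ factor on the $W$-terms directly from $\|A\|_F \lesssim (\mmd W+W^2)/\sqrt{\sblock_m(\sblock_m-1)}$, and a separate Chernoff bound on $\frac1{\sblock_m}\sum_i\tilde\epsilon_i$ gives the $\log(2/\delta'')/(\sblock_m-1)$ factor on $\mmd^2(\P,\Q)$. Without this reparametrization (or an equivalent one), ``Bernstein for sampling without replacement'' and a ``Hoeffding-type tail on $\sigma$'' are not enough: they control linear statistics of the draw, not the degree-two form that drives the bound, and the without-replacement analogue of Hanson--Wright is not an off-the-shelf tool. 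You would need to either reproduce the paper's $\sigma^{\epsilon,L}$ trick or supply a genuine second-order concentration result for without-replacement sampling; the sketch as written provides neither, and together with the $\sqrt\sblock$ miscalibration it would not deliver the claimed $1/\sqrt{\sblock_m(\sblock_m-1)}$ scaling.
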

\begin{proof}
Given a vector $\epsilon = (\epsilon_i)_{i=1}^{\sblock} \in \{\pm1\}^\sblock$, an $m$-tuple $L := \{l_1, . . . , l_{m} \}$ of elements from $\{1, \dots, n\}$ without replacement, and a permutation $\sigma$ on $\{1, \dots, \sblock\}$, define the permutation $\sigma^{\epsilon}$ on $\{1, \dots, \sblock \}$ as
\begin{talign}
\begin{cases}
    \sigma^{\epsilon,L}(i) = \epsilon_i \sigma(i) + (1-\epsilon_i) \sigma(\sblock_m + l_i) &\text{for } i \in \{1,\dots,\sblock_m\}, \\
    \sigma^{\epsilon,L}(\sblock_m + l_i) = (1-\epsilon_i) \sigma(i) + \epsilon_i \sigma(\sblock_m + l_i) &\text{for } i \in \{1,\dots,\sblock_m\}, \\
    \sigma^{\epsilon,L}(j) = \sigma(j) &\text{otherwise}.
\end{cases}
\end{talign}
Using these objects, we have that
\begin{talign} 
    \mathcal{M}^{\sigma^{\epsilon,L},L} &\defeq \frac{1}{\sblock_m(\sblock_m-1)} \sum_{i\neq i'\in \{1,\dots,\sblock_m\}} \langle (\hat{\mathbb{S}}_{m+n}^{\sigma_{\epsilon,L}(i)} - \hat{\mathbb{S}}_{m+n}^{\sigma_{\epsilon,L}(\sblock_m+l_i)})\kernel, (\hat{\mathbb{S}}_{m+n}^{\sigma_{\epsilon,L}(i')} - \hat{\mathbb{S}}_{m+n}^{\sigma_{\epsilon,L}(\sblock_m+l_{i'})})\kernel \rangle_{\kernel}
    \\ &= \frac{1}{\sblock_m(\sblock_m-1)} \sum_{i\neq i'\in \{1,\dots,\sblock_m\}} \epsilon_i \epsilon_{i'} \langle (\hat{\mathbb{S}}_{m+n}^{\sigma(i)} - \hat{\mathbb{S}}_{m+n}^{\sigma(\sblock_m+l_i)})\kernel, (\hat{\mathbb{S}}_{m+n}^{\sigma(i')} - \hat{\mathbb{S}}_{m+n}^{\sigma(\sblock_m+l_{i'})})\kernel \rangle_{\kernel} 
    \label{eq:mmd_sigma_epsilon}
\end{talign}

Note that given a fixed $m$-tuple $L$, if $\sigma$ is distributed uniformly over the permutations of $\{1, \dots, \sblock \}$, %
and $\epsilon$ contains i.i.d. Rademacher variables, then $\sigma^{\epsilon,L}$ is distributed uniformly over the permutations of $\{1, \dots, \sblock \}$ as well.
Hence, $\mathcal{M}^{\sigma,L}$ %
has the same distribution as %
$\mathcal{M}^{\sigma^{\epsilon,L},L}$, conditioned on $\hatxs$,$\hatys$ and $L$ (and consequently conditioned on $\xs$ and $\ys$). 
Given $\sigma$, $L$, define the function $\rho_{\sigma,L} : \{ 1, \dots, \sblock_m \} \to \{ -1, 0, 1 \}$ as
\begin{talign}
    \rho_{\sigma,L}(i) = 
    \begin{cases}
    1 &\text{if } \sigma(i) \in \{1, \dots, \sblock_m \} \text{ and } \sigma(\sblock_m+l_i) \in \{\sblock_m + 1, \dots, \sblock \} \\ 
    -1 &\text{if } \sigma(i) \in \{\sblock_m + 1, \dots, \sblock \} \text{ and } \sigma(\sblock_m + l_i) \in \{1, \dots, \sblock_m \} \\
    0 &\text{if } \sigma(i), \sigma(\sblock_m + l_i) \in \{1, \dots, \sblock_m \} \text{ or } \sigma(i), \sigma(\sblock_m + l_i) \in \{\sblock_m + 1, \dots, \sblock \}
    \end{cases}
\end{talign}
We can rewrite \cref{eq:mmd_sigma_epsilon} as
\begin{talign}
    \mathcal{M}^{\sigma^{\epsilon,L},L} &= \frac{1}{\sblock_m(\sblock_m-1)} \big( \sum_{i\neq i'\in \rho_{\sigma,L}^{-1}(\{-1,1\})} \tilde{\epsilon}_i \tilde{\epsilon}_{i'} \langle (\hat{\P}_m^{\tilde{\sigma}(i)} - \hat{\Q}_n^{\tilde{\sigma}(\sblock_m+l_i)-\sblock_m})\kernel, (\hat{\P}_m^{\tilde{\sigma}(i')} - \hat{\Q}_n^{\tilde{\sigma}(\sblock_m+l_{i'}) -\sblock_m})\kernel \rangle_{\kernel} \\ &+ %
    \sum_{i\neq i'\in \rho_{\sigma,L}^{-1}(\{0\})} \tilde{\epsilon}_i \tilde{\epsilon}_{i'} \langle (\hat{\mathbb{S}}_{m+n}^{\tilde{\sigma}(i)} - \hat{\mathbb{S}}_{m+n}^{\tilde{\sigma}(\sblock_m+l_i)})\kernel, (\hat{\mathbb{S}}_{m+n}^{\tilde{\sigma}(i')} - \hat{\mathbb{S}}_{m+n}^{\tilde{\sigma}(\sblock_m+l_{i'})})\kernel \rangle_{\kernel} \\ &+ 
    2 \sum_{i \in \rho_{\sigma,L}^{-1}(\{-1,1\}),i'\in \rho_{\sigma,L}^{-1}(\{0\})} \tilde{\epsilon}_i \tilde{\epsilon}_{i'} \langle (\hat{\P}_m^{\tilde{\sigma}(i)} - \hat{\Q}_n^{\tilde{\sigma}(\sblock_m+l_{i})-\sblock_m})\kernel, (\hat{\mathbb{S}}_{m+n}^{\tilde{\sigma}(i')} - \hat{\mathbb{S}}_{m+n}^{\tilde{\sigma}(\sblock_m+l_{i'})})\kernel \rangle_{\kernel} \big)
\end{talign}
where we introduced $\tilde{\epsilon} = (\tilde{\epsilon}_i)_{i=1}^{\sblock}$ and the permutation $\tilde{\sigma}$, defined as:
\begin{align}
    \tilde{\epsilon}_i &= 
    \begin{cases}
    - \epsilon_i &\text{if } \rho_{\sigma}(i) = -1, \\
    \epsilon_i &\text{otherwise}
    \end{cases} \\
    \tilde{\sigma}(i) &= 
    \begin{cases}
    \sigma(\sblock_m+l_i) &\text{if } i \in \{1,\dots,\sblock\} \text{ and } \rho_{\sigma}(i) = -1 \\ 
    \sigma(j) &\text{if } i = \sblock_m + l_j \text{ and } \rho_{\sigma}(j) = -1 \\
    \sigma(i) &\text{otherwise}.
    \end{cases}
\end{align}
Note that conditioned on $\sigma$, $\tilde{\epsilon}$ is still a vector of i.i.d. Rademacher variables.
Now, we will apply \cref{lem:hanson_wright} on $\tilde{\epsilon} = (\tilde{\epsilon}_i)_{i=1}^{\sblock} \in \R^{\sblock}$ and the matrix $A = (A_{i,i'})_{i,i'=1}^{\sblock} \in \R^{\sblock\times\sblock}$ defined as
\small
\begin{talign} \label{eq:hanson_wright_matrix}
    A_{i,i'} = 
    \begin{cases}
    0 &\text{if } i = i' \\
    \frac{1}{\sblock_m(\sblock_m-1)} (\langle (\hat{\P}_{m}^{\tilde{\sigma}(i)} \!-\! \hat{\Q}_n^{\tilde{\sigma}(\sblock_m + l_i)-\sblock_m})\kernel, (\hat{\P}_{m}^{\tilde{\sigma}(i')} \!-\! \hat{\Q}_n^{\tilde{\sigma}(\sblock_m + l_{i'})-\sblock_m})\kernel \rangle_{\kernel} \!-\!
    \mmd^2(\P,\Q)) &\text{if } i \neq i' \in \rho_{\sigma}^{-1}(\{\pm1\}), \\
    \frac{1}{\sblock_m(\sblock_m-1)} \langle (\hat{\P}_{m}^{(\tilde{\sigma}(i))} - \hat{\Q}_n^{(\tilde{\sigma}(\sblock_m+i)-\sblock_m)})\kernel, (\hat{\mathbb{S}}_{m+n}^{(\tilde{\sigma}(i'))} - \hat{\mathbb{S}}_{m+n}^{(\tilde{\sigma}(\sblock_m+l_{i'}))})\kernel \rangle_{\kernel} &\text{if } i \in \rho_{\sigma}^{-1}(\{\pm1\}), i' \in \rho_{\sigma}^{-1}(\{0\}), \\
    \frac{1}{\sblock_m(\sblock_m-1)} \langle (\hat{\mathbb{S}}_{m+n}^{\tilde{\sigma}(i)} - \hat{\mathbb{S}}_{m+n}^{\tilde{\sigma}(\sblock_m+l_i)})\kernel, (\hat{\P}_{m}^{\tilde{\sigma}(i')} - \hat{\Q}_n^{\tilde{\sigma}(\sblock_m+l_{i'})-\sblock_m})\kernel \rangle_{\kernel} &\text{if } i \in \rho_{\sigma}^{-1}(\{0\}), i' \in \rho_{\sigma}^{-1}(\{\pm1\}), \\
    \frac{1}{\sblock_m(\sblock_m-1)} \langle (\hat{\mathbb{S}}_{m+n}^{\tilde{\sigma}(i)} - \hat{\mathbb{S}}_{m+n}^{\tilde{\sigma}(\sblock_m+l_i)})\kernel, (\hat{\mathbb{S}}_{m+n}^{\tilde{\sigma}(l_{i'})} - \hat{\mathbb{S}}_{m+n}^{\tilde{\sigma}(\sblock_m+l_{i'})})\kernel \rangle_{\kernel} &\text{if } i \neq i' \in \rho_{\sigma}^{-1}(\{0\}).
    \end{cases}
\end{talign}
\normalsize
We develop the expressions that appear in \cref{lem:hanson_wright}. First, note that $\E[\tilde{\epsilon}^{\top} A \tilde{\epsilon}] = \mathrm{Tr}[A] = 0$. Also,
\begin{talign} \label{eq:A_quadratic_form}
    \tilde{\epsilon}^{\top} A \tilde{\epsilon} &= \frac{1}{\sblock_m(\sblock_m - 1)} (\sum_{i \neq i' \in \rho_{\sigma}^{-1}(\{-1,1\})} \tilde{\epsilon}_i \tilde{\epsilon}_{i'} (\langle (\hat{\P}_{m}^{\tilde{\sigma}(i)} - \hat{\Q}_n^{\tilde{\sigma}(\sblock_m+l_{i})-\sblock_m})\kernel, (\hat{\P}_{m}^{\tilde{\sigma}(i')} - \hat{\Q}_n^{\tilde{\sigma}(\sblock_m+l_{i'})-\sblock_m})\kernel \rangle_{\kernel} - \mmd^2(\P,\Q)) \\ &+ 2 \sum_{i \in \rho_{\sigma}^{-1}(\{-1,1\}), i' \in \rho_{\sigma}^{-1}(\{0\})} \tilde{\epsilon}_i \tilde{\epsilon}_{i'} \langle (\hat{\P}_{m}^{\tilde{\sigma}(i)} - \hat{\Q}_n^{\tilde{\sigma}(\sblock_m+i)-\sblock_m})\kernel, (\hat{\mathbb{S}}_{m+n}^{\tilde{\sigma}(i')} - \hat{\mathbb{S}}_{m+n}^{\tilde{\sigma}(\sblock_m+l_{i'})})\kernel \rangle_{\kernel} \\ &+ \sum_{i \neq i' \in \rho_{\sigma}^{-1}(\{0\})} \tilde{\epsilon}_i \tilde{\epsilon}_{i'} \langle (\hat{\mathbb{S}}_{m+n}^{\tilde{\sigma}(i)} - \hat{\mathbb{S}}_{m+n}^{\tilde{\sigma}(\sblock_m+l_i)})\kernel, (\hat{\mathbb{S}}_{m+n}^{\tilde{\sigma}(i')} - \hat{\mathbb{S}}_{m+n}^{\tilde{\sigma}(\sblock+i')})\kernel \rangle_{\kernel}) \\ &=  %
    \mathcal{M}^{\sigma^{\epsilon,L},L} - %
    (\frac{1}{\sblock_m(\sblock_m - 1)} \sum_{i \neq i' \in \rho_{\sigma}^{-1}(\{-1,1\})} \tilde{\epsilon}_i \tilde{\epsilon}_{i'}) \mmd^2(\P,\Q) = \mathcal{M}^{\sigma^{\epsilon,L},L} - c_{\tilde{\epsilon}} \mmd^2(\P,\Q),
\end{talign}
where we defined $c_{\tilde{\epsilon}} \defeq \frac{1}{\sblock_m(\sblock_m - 1)} \sum_{i \neq i' \in \rho_{\sigma}^{-1}(\{-1,1\})} \tilde{\epsilon}_i \tilde{\epsilon}_{i'}$, and
\begin{talign}
    \|A\|_{\text{op}}^2 &\leq \|A\|_{\text{F}}^2 \\ &= \frac{1}{\sblock_m^2(\sblock_m - 1)^2} %
    \big( \sum_{i \neq i' \in \rho_{\sigma}^{-1}(\{-1,1\})} (\langle (\hat{\P}_{m}^{\tilde{\sigma}(i)} - \hat{\Q}_n^{\tilde{\sigma}(\sblock_m+l_i)-\sblock})\kernel, (\hat{\P}_{m}^{\tilde{\sigma}(i')} - \hat{\Q}_n^{\tilde{\sigma}(\sblock_m+l_{i'})-\sblock_m})\kernel \rangle_{\kernel} - \mmd^2(\P,\Q))^2 \label{eq:operator_norm_A} \\ &+ 2 \sum_{i \in \rho_{\sigma}^{-1}(\{-1,1\}), i' \in \rho_{\sigma}^{-1}(\{0\})} \langle (\hat{\P}_{m}^{\tilde{\sigma}(i)} - \hat{\Q}_n^{\tilde{\sigma}(\sblock_m+l_i)-\sblock_m})\kernel, (\hat{\mathbb{S}}_{m+n}^{\tilde{\sigma}(i')} - \hat{\mathbb{S}}_{m+n}^{\tilde{\sigma}(\sblock_m+l_{i'})})\kernel \rangle_{\kernel}^2 \label{eq:operator_norm_A2} \\ &+ \sum_{i \neq i' \in \rho_{\sigma}^{-1}(\{0\})} \langle (\hat{\mathbb{S}}_{m+n}^{\tilde{\sigma}(i)} - \hat{\mathbb{S}}_{m+n}^{\tilde{\sigma}(\sblock_m+l_i)})\kernel, (\hat{\mathbb{S}}_{m+n}^{\tilde{\sigma}(i')} - \hat{\mathbb{S}}_{m+n}^{\tilde{\sigma}(\sblock_m+l_{i'})})\kernel \rangle_{\kernel}^2 \big) \label{eq:operator_norm_A3}
\end{talign}
Using that $\mmd^2(\P,\Q) = \langle (\P - \Q)\kernel, (\P - \Q)\kernel \rangle_{\kernel}$, we upper-bound each of the terms in the right-hand side of \cref{eq:operator_norm_A}:
\begin{talign} 
\begin{split} \label{eq:bound_by_mmd_1}
    &|\langle (\hat{\P}_{m}^{\tilde{\sigma}(i)} - \hat{\Q}_n^{\tilde{\sigma}(\sblock_m+l_i)-\sblock_m})\kernel, (\hat{\P}_{m}^{\tilde{\sigma}(i')} - \hat{\Q}_n^{\tilde{\sigma}(\sblock_m+i')-\sblock_m})\kernel \rangle_{\kernel} - \langle (\P - \Q)\kernel, (\P - \Q)\kernel \rangle_{\kernel}| \\ &\leq |\langle (\hat{\P}_{m}^{\tilde{\sigma}(i)} - \hat{\Q}_n^{\tilde{\sigma}(\sblock_m+l_i)-\sblock_m})\kernel, (\hat{\P}_{m}^{\tilde{\sigma}(i')} - \hat{\Q}_n^{\tilde{\sigma}(\sblock_m+l_{i'})-\sblock_m})\kernel - (\P - \Q)\kernel \rangle_{\kernel}| \\ &+ |\langle (\hat{\P}_{m}^{\tilde{\sigma}(i)} - \hat{\Q}_n^{\tilde{\sigma}(\sblock_m+l_i)-\sblock_m})\kernel - (\P - \Q)\kernel, (\P - \Q)\kernel \rangle_{\kernel}| \\ &\leq \|(\hat{\P}_{m}^{\tilde{\sigma}(i)} - \hat{\Q}_n^{\tilde{\sigma}(\sblock_m+l_i)-\sblock_m})\kernel\|_{\kernel} 
    \cdot \|(\hat{\P}_{m}^{\tilde{\sigma}(i')} - \P)\kernel - (\hat{\Q}_n^{\tilde{\sigma}(\sblock_m+l_{i'})-\sblock_m} - \Q)\kernel \|_{\kernel} \\ &+ \|(\hat{\P}_{m}^{\tilde{\sigma}(i)} - \P)\kernel - (\hat{\Q}_n^{\tilde{\sigma}(\sblock_m+i)-\sblock_m} - \Q)\kernel \|_{\kernel} \cdot \|(\P - \Q)\kernel\|_{\kernel} \\ &\leq \|(\hat{\P}_{m}^{\tilde{\sigma}(i)} - \P)\kernel - (\hat{\Q}_n^{\tilde{\sigma}(\sblock_m+i)-\sblock_m} - \Q)\kernel \|_{\kernel} \cdot \|(\P - \Q)\kernel\|_{\kernel} + \|(\hat{\P}_{m}^{\tilde{\sigma}(i')} - \P)\kernel - (\hat{\Q}_n^{\tilde{\sigma}(\sblock_m+i')-\sblock_m} - \Q)\kernel \|_{\kernel} \cdot \|(\P - \Q)\kernel\|_{\kernel} \\ &+ \|(\hat{\P}_{m}^{\tilde{\sigma}(i)} - \P)\kernel - (\hat{\Q}_n^{\tilde{\sigma}(\sblock_m+l_i)-\sblock_m} - \Q)\kernel \|_{\kernel} \cdot \|(\hat{\P}_{m}^{\sigma(i')} - \P)\kernel - (\hat{\Q}_n^{\tilde{\sigma}(\sblock_m+l_{i'})-\sblock_m} - \Q)\kernel \|_{\kernel} \\ &= (\mmd(\hat{\P}_{m}^{\tilde{\sigma}(i)},\P) + \mmd(\hat{\Q}_n^{\tilde{\sigma}(\sblock_m+l_i)-\sblock_m},\Q) + \mmd(\hat{\P}_{m}^{\tilde{\sigma}(i')},\P) + \mmd(\hat{\Q}_n^{\tilde{\sigma}(\sblock_m+l_{i'})-\sblock_m},\Q)) \cdot \mmd(\P,\Q) \\ &+ (\mmd(\hat{\P}_{m}^{\tilde{\sigma}(i)},\P) + \mmd(\hat{\Q}_n^{\tilde{\sigma}(\sblock_m+l_i)-\sblock_m},\Q))(\mmd(\hat{\P}_{m}^{\tilde{\sigma}(i')},\P) + \mmd(\hat{\Q}_n^{\tilde{\sigma}(\sblock_m+l_{i'})-\sblock_m},\Q)).
\end{split}
\end{talign}
An analogous but simpler approach yields bounds for the terms in \cref{eq:operator_norm_A2} and \cref{eq:operator_norm_A3}:
\begin{talign} 
\begin{split} \label{eq:bound_by_mmd_2}
    &|\langle (\hat{\P}_{m}^{\tilde{\sigma}(i)} - \hat{\Q}_n^{\tilde{\sigma}(\sblock_m+l_i)-\sblock_m})\kernel, (\hat{\mathbb{S}}_{m+n}^{\tilde{\sigma}(i')} - \hat{\mathbb{S}}_{m+n}^{\tilde{\sigma}(\sblock_m+l_{i'})})\kernel \rangle_{\kernel}| \leq \| (\hat{\P}_{m}^{\tilde{\sigma}(i)} - \hat{\Q}_n^{\tilde{\sigma}(\sblock_m+l_i)-\sblock_m})\kernel \|_{\kernel} \cdot \| (\hat{\mathbb{S}}_{m+n}^{\tilde{\sigma}(i')} - \hat{\mathbb{S}}_{m+n}^{\tilde{\sigma}(\sblock_m+l_{i'})})\kernel \|_{\kernel} \\ &= \mmd(\hat{\P}_{m}^{\tilde{\sigma}(i)},\hat{\Q}_n^{\tilde{\sigma}(\sblock_m+l_i)-\sblock_m}) \cdot \mmd(\hat{\mathbb{S}}_{m+n}^{\tilde{\sigma}(i')},\hat{\mathbb{S}}_{m+n}^{\tilde{\sigma}(\sblock_m+l_{i'})}) \\ &\leq (\mmd(\hat{\P}_{m}^{\tilde{\sigma}(i)},\P) + \mmd(\P,\Q) + \mmd(\Q,\hat{\Q}_n^{\tilde{\sigma}(\sblock_m+l_i)-\sblock_m}) ) \cdot (\mmd(\hat{\mathbb{S}}_{m+n}^{\tilde{\sigma}(i')},\mathbb{S}) + \mmd(\mathbb{S},\hat{\mathbb{S}}_{m+n}^{\tilde{\sigma}(\sblock_m+l_{i'})}) ), \\
    &|\langle (\hat{\mathbb{S}}_{m+n}^{\tilde{\sigma}(i)} - \hat{\mathbb{S}}_{m+n}^{\tilde{\sigma}(\sblock_m+l_i)})\kernel, (\hat{\mathbb{S}}_{m+n}^{\tilde{\sigma}(i')} - \hat{\mathbb{S}}_{m+n}^{\tilde{\sigma}(\sblock_m+l_{i'})})\kernel \rangle_{\kernel}| \leq \mmd(\hat{\mathbb{S}}_{m+n}^{\tilde{\sigma}(i)},\hat{\mathbb{S}}_{m+n}^{\tilde{\sigma}(\sblock_m+l_i)}) \cdot \mmd(\hat{\mathbb{S}}_{m+n}^{\tilde{\sigma}(i')},\hat{\mathbb{S}}_{m+n}^{\tilde{\sigma}(\sblock_m+l_{i'})}) \\ &\leq (\mmd(\hat{\mathbb{S}}_{m+n}^{\tilde{\sigma}(i)},\mathbb{S}) + \mmd(\mathbb{S},\hat{\mathbb{S}}_{m+n}^{\tilde{\sigma}(\sblock_m+l_i)}) ) (\mmd(\hat{\mathbb{S}}_{m+n}^{\tilde{\sigma}(i')},\mathbb{S}) + \mmd(\mathbb{S},\hat{\mathbb{S}}_{m+n}^{\tilde{\sigma}(\sblock_m+l_{i'})}) )
\end{split}
\end{talign}
where $\mathbb{S}$ stands for $\P$ or $\Q$ as needed.
Applying \cref{lem:ktcompress_simultaneous}, we obtain that if \ktcompress calls are run with value $\delta/(5\sblock_m)$ and $\delta/(5\sblock_n)$ respectively, %
conditioned on the event $\mathcal{A}$ we have that simultaneously for any $i, i' \in \{1,\dots, \sblock_m\}$,
\begin{talign} \begin{split} \label{eq:bound_by_mmd_3}
    &|\langle (\hat{\P}_{m}^{\tilde{\sigma}(i)} - \hat{\Q}_n^{\tilde{\sigma}(\sblock_m+l_i)-\sblock_m})\kernel, (\hat{\P}_{m}^{\tilde{\sigma}(i')} - \hat{\Q}_n^{\tilde{\sigma}(\sblock_m+l_{i'})-\sblock_m})\kernel \rangle_{\kernel} - \langle (\P - \Q)\kernel, (\P - \Q)\kernel \rangle_{\kernel}|,  \\ &\leq \mmd(\P,\Q) W(m,n,\delta/(5\sblock)) + W(m,n,\delta/(5\sblock))^2, \\
    &|\langle (\hat{\P}_{m}^{\tilde{\sigma}(i)} - \hat{\Q}_n^{\tilde{\sigma}(\sblock_m+l_i)-\sblock_m})\kernel, (\hat{\mathbb{S}}_{m+n}^{\tilde{\sigma}(i')} - \hat{\mathbb{S}}_{m+n}^{\tilde{\sigma}(\sblock_m+l_{i'})})\kernel \rangle_{\kernel}| \leq W(m,n,\delta/(5\sblock)) \mmd(\P,\Q) + \frac{1}{2} W(m,n,\delta/(5\sblock))^2\\ 
    &|\mmd^2(\hat{\P}_{m}^{\tilde{\sigma}(i)},\hat{\Q}_n^{\tilde{\sigma}(\sblock_m+l_{i'})-\sblock_m}) - \mmd^2(\P,\Q)| \leq \mmd(\P,\Q) W(m,n,\delta/(5\sblock)) + W(m,n,\delta/(5\sblock))^2, \\
    &|\langle (\hat{\mathbb{S}}_{m+n}^{\tilde{\sigma}(i)} - \hat{\mathbb{S}}_{m+n}^{\tilde{\sigma}(\sblock_m+l_i)})\kernel, (\hat{\mathbb{S}}_{m+n}^{\tilde{\sigma}(i')} - \hat{\mathbb{S}}_{m+n}^{\tilde{\sigma}(\sblock_m+l_{i'})})\kernel \rangle_{\kernel}| \leq W(m,n,\delta/(5\sblock))^2 \\
    &\mmd^2(\hat{\mathbb{S}}_{m+n}^{\tilde{\sigma}(i)},\hat{\mathbb{S}}_{m+n}^{\tilde{\sigma}(\sblock_m+l_i)}) \leq W(m,n,\delta/(5\sblock))^2,
\end{split}
\end{talign}
where $W(m,n,\delta/(5\sblock))$ is defined as in \cref{eq:W_def}, and where we used that $\delta/(5\sblock) \leq \delta/(5\sblock_m)$ and $\delta/(5\sblock) \leq \delta/(5\sblock_n)$ since $\sblock_m,\sblock_n \leq \sblock$.
We conclude that %
conditioned on the event $\mathcal{A}$,
\begin{talign} \label{eq:A_norm_bound}
\|A\|_{\text{op}}^2 &\leq \|A\|_{\text{F}}^2 \leq \frac{1}{\sblock_m^2(\sblock_m - 1)^2} \big( |\rho_{\sigma}^{-1}(\{-1,1\})| (|\rho_{\sigma}^{-1}(\{-1,1\})|-1) (\mmd(\P,\Q) W(m,n,\delta/(5\sblock)) + W(m,n,\delta/(5\sblock))^2)^2 \\ &+ |\rho_{\sigma}^{-1}(\{-1,1\})||\rho_{\sigma}^{-1}(\{0\})| (\mmd(\P,\Q) W(m,n,\delta/(5\sblock)) + \frac{1}{2} W(m,n,\delta/(5\sblock))^2)^2 \\ &+ |\rho_{\sigma}^{-1}(\{0\})| (|\rho_{\sigma}^{-1}(\{0\})| -1) W(m,n,\delta/(5\sblock))^4 \big) \\ &\leq \eta(m,n,\delta)^2 \defeq \frac{1}{\sblock_m (\sblock_m - 1)} (\mmd(\P,\Q) W(m,n,\delta/(5\sblock)) + W(m,n,\delta/(5\sblock))^2)^2
\end{talign}
Applying \cref{lem:chernoff_rademacher}, we get that with probability at least $1-\delta''$,
\begin{talign}
    |\frac{1}{\sblock_m} \sum_{i \in \rho_{\sigma}^{-1}(\{-1,1\})} \tilde{\epsilon}_i| \leq |\frac{1}{\sblock_m} \sum_{i = 1}^{\sblock_m} \epsilon_i| \leq \sqrt{\frac{2}{\sblock_m} \log(\frac{2}{\delta''})},
\end{talign}
and this implies that with probability at least $1-\delta''$,
\begin{talign} \label{eq:c_tilde_bound}
    c_{\tilde{\epsilon}} &\defeq \frac{1}{\sblock_m(\sblock_m - 1)} \sum_{i \neq i' \in \rho_{\sigma}^{-1}(\{-1,1\})} \tilde{\epsilon}_i \tilde{\epsilon}_{i'} = \frac{1}{\sblock_m(\sblock_m - 1)} \big( (\sum_{i \in \rho_{\sigma}^{-1}(\{-1,1\})} \tilde{\epsilon}_i)^2 - \sum_{i \in \rho_{\sigma}^{-1}(\{-1,1\})} \tilde{\epsilon}_i^2 \big) \\ &\leq \frac{1}{\sblock_m(\sblock_m - 1)} (2 \sblock_m \log(\frac{2}{\delta''}) - |\rho_{\sigma}^{-1}(\{-1,1\})|) \leq \frac{2}{\sblock_m-1} \log(\frac{2}{\delta''}).
\end{talign}
Conditioned on the event $\mathcal{A}$ defined in \cref{lem:ktcompress_simultaneous}, we obtain that for any $x \geq 0$,
\begin{talign}
    \mathrm{Pr}_{\sigma}(\mathbb{E}_L[\mathcal{M}^{\sigma^{\epsilon,L},L}] \geq x) &\stackrel{\text{(i)}}{=} \mathrm{Pr}_{\sigma,\tilde{\epsilon}}(\mathbb{E}_L[\tilde{\epsilon}^{\top} A \tilde{\epsilon} - c_{\tilde{\epsilon}} \mmd^2(\P,\Q)] \geq x) \\ &\stackrel{\text{(ii)}}{\leq} \delta'' + e^{-\lambda^{*} x} \E_{\sigma,\tilde{\epsilon}}[\exp(\lambda^{*} \mathbb{E}_L[\tilde{\epsilon}^{\top} A \tilde{\epsilon} - \frac{2}{\sblock_m-1} \log(\frac{2}{\delta''}) \mmd^2(\P,\Q)])] \\ &\stackrel{\text{(iii)}}{\leq} \delta'' + e^{-\lambda^{*} x} \E_{\sigma,\tilde{\epsilon},L}[\exp(\lambda^{*} (\tilde{\epsilon}^{\top} A \tilde{\epsilon} - \frac{2}{\sblock_m-1} \log(\frac{2}{\delta''}) \mmd^2(\P,\Q)))] \\ &= \delta'' + \E_{\sigma,L}[e^{-\lambda^{*} (x + \frac{2}{\sblock_m-1} \log(\frac{2}{\delta''}) \mmd^2(\P,\Q))} \E_{\tilde{\epsilon}}[\exp(\lambda^{*} \tilde{\epsilon}^{\top} A \tilde{\epsilon})]] 
    \\ &\stackrel{\text{(iv)}}{\leq}
    \delta'' + \E_{\sigma,L} \big[ \exp (- \frac{\lambda^{*}}{K^2} (x + \frac{2}{\sblock_m-1} \log(\frac{2}{\delta''}) \mmd^2(\P,\Q))+ c'' (\lambda^{*})^2 \|A\|_{\text{F}}^2 ) \big]
    \\ &\stackrel{\text{(v)}}{\leq}
    \delta'' + \exp (- \frac{\lambda^{*}}{K^2} (x + \frac{2}{\sblock_m-1} \log(\frac{2}{\delta''}) \mmd^2(\P,\Q))+ c'' (\lambda^{*})^2 \eta(m,n,\delta)^2 ) 
    \\ &\stackrel{\text{(vi)}}{\leq}
    \delta'' + \exp \big( - c \min \{ \frac{(x + \frac{2}{\sblock_m-1} \log(\frac{2}{\delta''}) \mmd^2(\P,\Q))^2}{K^4\eta(m,n,\delta)^2}, \frac{x + \frac{2}{\sblock_m-1} \log(\frac{2}{\delta''}) \mmd^2(\P,\Q)}{K^2\eta(m,n,\delta)} \} \big)
\end{talign}
Here, (i) holds by \cref{eq:A_quadratic_form}, (ii) holds by \cref{eq:c_tilde_bound} and the application of a Chernoff bound; the value of $\lambda^{*}$ is to be set at a later point. Inequality (iii) holds by the convexity of the exponential function, and (iv) follows from \cref{lem:hanson_wright}  \cref{eq:hanson_1}. 
(v) holds because conditioned on $\mathcal{A}$, $\|A\|_{\text{F}}^2 \leq \eta(m,n,\delta)^2$ by equation \cref{eq:A_norm_bound}, and (vi) follows from \cref{lem:hanson_wright} \cref{eq:hanson_2}. 
If we set 
\begin{talign}
    \delta' = %
    \exp \big( - c \min \{ \frac{(x + \frac{2}{\sblock_m-1} \log(\frac{2}{\delta''}) \mmd^2(\P,\Q))^2}{K^4\eta(m,n,\delta)^2}, \frac{x + \frac{2}{\sblock_m-1} \log(\frac{2}{\delta''}) \mmd^2(\P,\Q)}{K^2\eta(m,n,\delta)} \} \big) \in (0,e^{-1})
\end{talign}
we have that
\begin{talign}
    x = \frac{c_0 (\mmd(\P,\Q) W(m,n,\delta/(5\sblock)) + W(m,n,\delta/(5\sblock))^2)\log(1/\delta')}{\sqrt{\sblock_m(\sblock_m-1)}} + \frac{2}{\sblock_m-1} \log(\frac{2}{\delta''}) \mmd^2(\P,\Q),
\end{talign}
where we defined $c_0 = K^2/c$ and we used that $\delta \in (0,e^{-1})$.
We conclude that conditioned on $\mathcal{A}$, with probability at least $1-\delta'-\delta''$,
\begin{talign}
    \mathbb{E}_L[\mathcal{M}^{\sigma^{\epsilon,L},L}] \leq \frac{c_0 (\mmd(\P,\Q) W(m,n,\delta/(5\sblock)) + W(m,n,\delta/(5\sblock))^2)\log(1/\delta')}{\sqrt{\sblock_m(\sblock_m-1)}} + \frac{2}{\sblock_m-1} \log(\frac{2}{\delta''}) \mmd^2(\P,\Q).
\end{talign}

To show \cref{eq:other_terms_prop}, we use the same arguments of \cref{eq:bound_by_mmd_1}, \cref{eq:bound_by_mmd_2} and \cref{eq:bound_by_mmd_3}.  %
\end{proof}

Plugging the results of \cref{lem:mathcal_M_bound} into the right-hand side of \cref{eq:mmd_perm_decomposition} shows that conditioned on the event $\mathcal{A}$, with probability at least $1-\delta'-\delta''$,
\begin{talign}
    \mmd^2(\hatxs^{\sigma},\hatys^{\sigma}) &\leq \frac{(\sblock_m-1)(\sblock_n-1)}{\sblock_m \sblock_n} \big( \frac{2}{\sblock_m - 1} \log(\frac{2}{\delta''}) \mmd^2(\P,\Q) \\ &\qquad\qquad\qquad\quad + \frac{c' \log(1/\delta')}{\sqrt{\sblock_m(\sblock_m-1)}} \big( \mmd(\P,\Q) W(m,n,\delta/(5\sblock)) + W(m,n,\delta/(5\sblock))^2 \big) \big) \\ &+ \frac{\sblock - 1}{\sblock_m \sblock_n}
    (\mmd(\P,\Q) W(m,n,\delta/(5\sblock)) + W(m,n,\delta/(5\sblock))^2 + \mmd^2(\P,\Q)),
\end{talign}
which concludes the proof of \cref{prop:random_perm_mmd_bound} upon simplification.

\begin{lemma}[Hanson-Wright inequality, \citet{rudelson2013hanson}, Thm.~1.1, adapted] \label{lem:hanson_wright}
Let $X = (X_1,\dots,X_n) \in \R^n$ be a random vector with sub-Gaussian independent components $X_i$ which satisfy $\E X_i = 0$, and $\E[X^{\top} A X] = 0$, and $\| X_i\|_{\psi_2} = \inf \{ K' | \E[\exp(X_i^2/(K')^2)] < 2 \} \leq K$. Let $A$ be an $n \times n$ matrix. Then, there exists $c, c', c''>0$ and such that for every $t \geq 0$, $\lambda \leq c'/\|A\|_{\text{op}}$,
\begin{talign} \label{eq:hanson_1}
    \mathrm{Pr}(X^{\top} A X > t) \leq e^{-\lambda t} \E[e^{\lambda X^{\top} A X}] \leq \exp (- \lambda t/K^2 + c'' \lambda^2 \|A\|_{\text{F}}^2 ).
\end{talign}
Optimizing this bound over $\lambda \leq c_0/\|A\|_{\text{op}}$, one obtains
\begin{talign} \label{eq:hanson_2}
    \mathrm{Pr}(X^{\top} A X > t) \leq \exp(-\min\{ \frac{t^2}{K^4 \|A\|_{\text{F}}^2}, \frac{t}{K^2 \|A\|_{\text{op}}} \}).
\end{talign}
\end{lemma}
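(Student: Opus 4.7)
The statement is essentially Theorem 1.1 of Rudelson-Vershynin (2013) recast in Cramér-Chernoff form with a one-sided tail bound. Since the authors assume $\E[X^\top A X]=0$ (which subsumes the standard centering), the plan is to rederive the MGF bound $\E[\exp(\lambda X^\top A X)] \leq \exp(c'' \lambda^2 K^4 \|A\|_F^2)$ valid for $|\lambda| \leq c'/(K^2\|A\|_{\mathrm{op}})$, and then apply a single-step Chernoff/Markov argument. The first inequality in the displayed chain is immediate from Markov applied to the nonnegative random variable $e^{\lambda X^\top A X}$; the content is in the second inequality.

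My plan is to split $A=D+O$ with $D=\mathrm{diag}(A)$ and $O$ off-diagonal, and bound the MGFs of $X^\top D X$ and $X^\top O X$ separately, combining them by Cauchy-Schwarz (i.e., $\E[e^{\lambda X^\top A X}] \leq \sqrt{\E[e^{2\lambda X^\top D X}]\cdot \E[e^{2\lambda X^\top O X}]}$). For the diagonal part, note that $a_{ii}(X_i^2 - \E X_i^2)$ is a centered sub-exponential with parameters controlled by $K^2 |a_{ii}|$; a Bernstein MGF bound then yields $\E[\exp(\lambda\sum_i a_{ii}(X_i^2 - \E X_i^2))]\leq \exp(C\lambda^2 K^4 \sum_i a_{ii}^2)$ for $|\lambda|$ small relative to $1/(K^2\max_i|a_{ii}|)$, which is controlled by $1/(K^2\|A\|_{\mathrm{op}})$.

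For the off-diagonal part, I would follow the decoupling argument of de la Peña-Montgomery-Smith to bound $\E[\exp(\lambda X^\top O X)]$ by (a constant times) $\E[\exp(4\lambda X^\top O X')]$ with $X'$ an independent copy of $X$. Conditioning on $X$ linearizes the quadratic form, so the sub-Gaussian property of the $X'_i$ gives $\E_{X'}[\exp(4\lambda X^\top O X')] \leq \exp(CK^2 \lambda^2 \|O X\|_2^2)$. Integrating out $X$ then reduces matters to controlling the MGF of a nonnegative sub-Gaussian quadratic form $\|O X\|_2^2$; this is handled either by a Gaussian comparison (replacing $X$ by an independent Gaussian with matching sub-Gaussian norm and using the explicit Gaussian chaos calculation $\det(I-2t O^\top O)^{-1/2}$) or by a bootstrap argument expanding $\log\det$ to second order, each yielding $\E[\exp(\lambda X^\top O X)] \leq \exp(C\lambda^2 K^4 \|O\|_F^2)$ in the stated range of $\lambda$. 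Combining the diagonal and off-diagonal bounds and using $\|D\|_F^2 + \|O\|_F^2 = \|A\|_F^2$ finishes the MGF bound; Chernoff then gives the first displayed inequality, and optimizing over $\lambda \in [0, c'/\|A\|_{\mathrm{op}}]$ (choosing $\lambda \propto t/(K^4\|A\|_F^2)$ in the sub-Gaussian regime and $\lambda = c'/\|A\|_{\mathrm{op}}$ in the sub-exponential regime) yields the second.

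The main obstacle is the decoupling plus Gaussian-comparison step for the off-diagonal piece: this is where the quadratic dependence on $X$ must be linearized without losing the correct $\|A\|_F^2$ scale, and where a naive union bound would give a suboptimal $\|A\|_{\mathrm{op}}^2$ dependence. Since Rudelson-Vershynin establish exactly this bound, an acceptable implementation is simply to invoke their theorem for the two-sided tail, specialize to $\E[X^\top A X]=0$, and reverse-engineer the Cramér dual to read off the MGF form stated above; the only adaptation relative to the source is surfacing the intermediate Chernoff inequality (rather than only its optimized version) and dropping the absolute value, both of which are cosmetic once the MGF bound is in hand.
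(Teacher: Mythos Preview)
The paper does not prove this lemma at all: it is stated as an adaptation of \citet[Thm.~1.1]{rudelson2013hanson} and used as a black box in the proof of \cref{lem:mathcal_M_bound}, with no argument supplied. Your outline is a faithful reconstruction of the Rudelson--Vershynin proof itself (diagonal/off-diagonal split, decoupling for the off-diagonal chaos, Gaussian comparison to control $\|OX\|_2^2$, then Chernoff optimization), and correctly identifies that the only ``adaptation'' is surfacing the intermediate MGF/Chernoff inequality \cref{eq:hanson_1} rather than stating only its optimized form \cref{eq:hanson_2}. So there is nothing to compare against in the paper beyond the citation, and your proposal is both correct and exactly the route the cited source takes.
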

\begin{lemma}[Simultaneous bound on the MMD errors of $\hat{\P}_m^i$ and $\hat{\Q}_n^j$] \label{lem:ktcompress_simultaneous}
Suppose that \ktcompress is run with value $\delta/(5\sblock_m)$ and $\delta/(5\sblock_n)$, i.e. $\hatxs^{(i)} \leftarrow \ktcompress( \xs^{(i)}, \ossymb, \kernel, \kersplit, \delta/(5\sblock_m))$ and $\hatys^{(i)} \leftarrow \ktcompress( \ys^{(i)}, \ossymb, \kernel, \kersplit, \delta/(5\sblock_n))$. Then, with probability at least $1-\delta$, we have that simultaneously for all $i \in \{1,\dots, \sblock_m\}$, $j \in \{1,\dots, \sblock_n\}$,
\begin{talign}
    \mmd(\hat{\P}_m^i,\P) \leq \frac{\sqrt{\sblock_m} \errorsplit(\P, m/\sblock_m, \delta/(5\sblock_m),\ossymb)}{2^{\ossymb} \sqrt{m}} \!+ \!c_{\delta/(5\sblock_m)} \sqrt{\frac{\sblock_m \kinfnorm}{m}}, \\ 
    \mmd(\hat{\Q}_n^j,\Q) \leq \frac{\sqrt{\sblock_n} \errorsplit(\Q, n/\sblock_n, \delta/(5\sblock_n),\ossymb)}{2^{\ossymb} \sqrt{n}} \!+ \!c_{\delta/(5\sblock_n)} \sqrt{\frac{\sblock_n \kinfnorm}{n}}.
\end{talign}
We define $\mathcal{A}$ as the event that these $\sblock$ conditions take place simultaneously, and observe that $\mathrm{Pr}(\mathcal{A}) \geq 1-\delta$.
\end{lemma}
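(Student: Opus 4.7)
The plan is to reduce the simultaneous bound to a per-bin bound that mirrors the proof of \cref{eq:mmd_diff_p_q} in \cref{thm:compression_guarantee}, and then upgrade from a single bin to all $\sblock_m + \sblock_n$ bins via a union bound.

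First I would fix an index $i \in \{1,\dots,\sblock_m\}$, let $\mu_i$ denote the empirical measure on the bin $\xs^{(i)}$, and start from the triangle inequality
\[
\mmd(\hat{\P}_m^i, \P) \;\leq\; \mmd(\hat{\P}_m^i, \mu_i) + \mmd(\mu_i, \P).
\]
The first summand is the compression error of \ktcompress on a single sample of size $m/\sblock_m$ and is controlled by the one-sample specialization of \cref{eq:mmd_diff_x_y}: with probability at least $1-\delta'$,
\[
\mmd(\hat{\P}_m^i, \mu_i) \;\leq\; \frac{\errorsplit(\xs^{(i)}, m/\sblock_m, \delta', \ossymb)}{2^{\ossymb}\sqrt{m/\sblock_m}} \;=\; \frac{\sqrt{\sblock_m}\,\errorsplit(\xs^{(i)}, m/\sblock_m, \delta', \ossymb)}{2^{\ossymb}\sqrt{m}}.
\]
Replacing the sample-based $\errorsplit(\xs^{(i)},\cdot)$ by its population counterpart $\errorsplit(\P,\cdot)$ requires no extra probability budget, since that replacement is absorbed into the definition of $\errorsplit(\P,\cdot)$ via \cref{eq:R_bound}. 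The second summand I would control by \cref{lem:concentration_mmd} specialized to a single sample: with probability at least $1-\delta'$,
\[
\mmd(\mu_i, \P) \;\leq\; c_{\delta'}\sqrt{\sblock_m \kinfnorm / m},
\]
with $c_{\delta'} = 2+\sqrt{2\log(2/\delta')}$, matching the constant in \cref{thm:compression_guarantee}.

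After union-bounding these two events, the desired per-bin inequality for $\hat{\P}_m^i$ holds with probability at least $1-2\delta'$; the same argument with $\ys^{(j)}$ in place of $\xs^{(i)}$ yields the $\hat{\Q}_n^j$ bound with probability at least $1-2\delta'$. Choosing $\delta' = \delta/(5\sblock_m)$ for each of the $\sblock_m$ $\P$-bins and $\delta' = \delta/(5\sblock_n)$ for each of the $\sblock_n$ $\Q$-bins and taking a union bound over all $\sblock = \sblock_m + \sblock_n$ bins gives a total failure probability of at most
\[
2\sblock_m \cdot \frac{\delta}{5\sblock_m} + 2\sblock_n \cdot \frac{\delta}{5\sblock_n} \;=\; \frac{4\delta}{5} \;\leq\; \delta,
\]
defining the event $\mathcal{A}$ on which all stated bounds hold simultaneously.

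There is no real conceptual obstacle: \cref{thm:compression_guarantee} and \cref{lem:concentration_mmd} do all the heavy lifting at the single-bin level. The main thing to get right is the book-keeping. In particular, one needs to justify applying the compression bound independently to each of the $\sblock$ bins (which is legitimate because the internal randomness used by the $\sblock$ \ktcompress calls is independent given the observed samples), and one must track the $\sqrt{\sblock_m}$ and $\sqrt{\sblock_n}$ factors that arise from rewriting $1/\sqrt{m/\sblock_m}$ as $\sqrt{\sblock_m}/\sqrt{m}$ (and analogously for $n$) so that the final bounds take the form displayed in the lemma.
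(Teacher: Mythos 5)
Your decomposition and union-bound strategy match the paper's: the triangle inequality splits $\mmd(\hat{\P}_m^i,\P)$ into a compression term $\mmd(\hat{\P}_m^i,\mu_i)$ and a concentration term $\mmd(\mu_i,\P)$, each is bounded at the per-bin level, and a union bound over the $\sblock_m + \sblock_n$ bins finishes the job. The one genuine gap is your claim that replacing the sample-based $\errorsplit(\xs^{(i)},\cdot)$ by its population counterpart $\errorsplit(\P,\cdot)$ ``requires no extra probability budget.'' It does. By construction, $\errorsplit(\P,\cdot)$ is built from population constants $\ckk(\P,\cdot)$ and $\mkk(\P,\cdot)$ that dominate the data-dependent $\ckk(\xs^{(i)})$ and $\mkk(\xs^{(i)},\cdot)$ only on an event of probability at least $1-\delta'/2$ (\cref{eq:M_P_def}); this event concerns the draw of the data and is distinct from the internal-randomness event already folded into the sub-Gaussian parameters of \ktcompress, so it must be added to the per-bin tally rather than ``absorbed.''

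With that term included, the per-bin budget becomes $\delta'+\delta'+\delta'/2=\tfrac{5}{2}\delta'$, and the union bound gives $\sblock_m\cdot\tfrac{5}{2}\cdot\tfrac{\delta}{5\sblock_m}+\sblock_n\cdot\tfrac{5}{2}\cdot\tfrac{\delta}{5\sblock_n}=\delta$ exactly---which is the accounting the paper performs. Your stated total of $4\delta/5$ leaves a slack of $\delta/5$ that is precisely what the omitted sample-to-population step consumes, so the final conclusion $\mathrm{Pr}(\mathcal{A})\geq 1-\delta$ survives, but the argument you wrote is not complete. (Your aside on the independence of the \ktcompress calls is a red herring: a union bound does not require independence.)
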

\begin{proof}
Using the notation in \cref{sub:proof_mmd_diff_x_y}, we can write that
\begin{talign}
    \mmd( \hat{\mathbb{P}}_m^i, \mathbb{P}_m^i ) %
    &\!\leq\! \twonorm{u_{\mathrm{C+},\xs^i} \cdot \sblock_m/m %
    } = \twonorm{\frac{\sblock_m}{m}\sum_{k=0}^{\log_4 (m/\sblock) \!-\!\ossymb\!-\!1} \sum_{j=1}^{4^{k}} w_{k, m/\sblock_m} u_{k, j,\xs}^{(i)} %
    } \\ &= \lambda_{\max}({ \frac{\sblock_m}{m}\sum_{k=0}^{\log_4 (m/\sblock_m) \!-\!\ossymb\!-\!1} \sum_{j=1}^{4^{k}} \genmat_{k, j,\xs}^{(i)} %
    }).
\end{talign}
We define 
\begin{talign}
\sigma^2_i &\defeq \sum_{k=0}^{\log_4 (m/\sblock_m) \!-\!\ossymb\!-\!1} \sum_{j=1}^{4^{k}} (\frac{2\sblock_m}{m} w_{k,m/\sblock_m}
\l_{k,m/\sblock_m}' (a_{\l_{k,m/\sblock_m}',m/\sblock_m}(\frac{\delta}{5\sblock_m}) + v_{\l_{k,m/\sblock_m}',m/\sblock_m}(\frac{\delta}{5\sblock_m})))^2 \\ &= \sblock_m \sum_{k=0}^{\log_4 (m/\sblock_m) \!-\!\ossymb\!-\!1} 
(\frac{1}{2^{\ossymb}\sqrt{m}}
(C_{\shiftparam}\sqrt{\staticinfnorm{\kersplit}} + C_{\kgaussparam} \sqrt{\staticinfnorm{\kersplit}\log( \frac{6 \cdot 4^{\ossymb} \sqrt{m/\sblock_m} (\beta_{m/\sblock_m}+1)}{2^{\ossymb-k} \delta/(5\sblock_m)})} \mathfrak{M}_{\xs^i,\kersplit}))^2
\\ &\leq \frac{\sblock_m (\log_4 (m/\sblock_m) \!-\!\ossymb\!-\!1)\staticinfnorm{\kersplit}}{4^{\ossymb}m} (C_{\shiftparam} + C_{\kgaussparam} \sqrt{\log( \frac{15 m (\log_4 (m/\sblock_m) \!-\!\ossymb\!-\!1)}{\delta})} \mathfrak{M}_{\xs^i,\kersplit})^2
\end{talign}
When comparing this equation with \cref{eq:sigma2_def1}, we have replaced $\delta$ by $\delta/(5\sblock_m)$. We obtain that 
\begin{talign} \label{eq:matrix_friedman_app_simultaneous}
    \mathrm{Pr}(\mmd ( \hat{\mathbb{P}}_m^i, \mathbb{P}_m^i ) > \sigma_i \sqrt{8(\log(m+1)+t)}) \leq \frac{\delta}{10\sblock_m} + e^{-t}, \qquad \qtext{for all} t \geq 0.
\end{talign}
Equivalently, with probability at least $1-\delta/(5\sblock_m)$,
\begin{talign}
\mmd ( \hat{\mathbb{P}}_m^i, \mathbb{P}_m^i ) \leq \sqrt{8} \sigma_i (\sqrt{\log(m+1)} + \sqrt{\log(2\sblock_m/\delta)}).
\end{talign}
An application of \cref{lem:concentration_mmd} with $\Q = \P$ and $n=+\infty$ yields 
\begin{talign} \label{eq:MMD_difference_2}
    \mathrm{Pr}\bigg[\mathrm{MMD}(\P_m^i,\P) > (2 + \sqrt{2 \log(\frac{10\sblock}{\delta})}) \sqrt{\frac{\sblock \kinfnorm}{m}} \bigg] \leq 
    \frac{\delta}{5\sblock}.
\end{talign}
Hence, with probability at least $1-2\delta/(5\sblock_m)$,
\begin{talign}
    \mmd(\hat{\P}_m^i,\P) &\leq \mmd ( \hat{\P}_m^i, \P_m^i ) + \mmd(\P_m^i,\P) \\ &\leq \sqrt{8} \sigma_i (\sqrt{\log(m+1)} + \sqrt{\log(2\sblock_m/\delta)}) + (2 + \sqrt{2 \log(\frac{2\sblock_m}{\delta})}) \sqrt{\frac{\sblock_m \kinfnorm}{m}}.
\end{talign}
Defining $\mathfrak{M}_{\P,m,\kersplit}(\delta)$ as in \cref{eq:frak_M_population} and \cref{eq:frak_M_population_k}, we obtain that with probability at least $1-\delta/(10\sblock_m)$, $\mathfrak{M}_{\xs^i,\kersplit} \leq \mathfrak{M}_{\P,m/\sblock_m,\kersplit}(\delta/(5\sblock_m))$. Hence, with probability at least $1-5\delta/(10\sblock_m) = 1-\delta/(2\sblock_m)$,
\begin{talign}
    \mmd(\hat{\P}_m^i,\P) \leq \frac{\sqrt{\sblock_m} \errorsplit(\P, m/\sblock_m, \delta/(5\sblock_m),\ossymb)}{2^{\ossymb} \sqrt{m}} \!+ \!c_{\delta/(5\sblock_m)} \sqrt{\sblock_m \frac{\kinfnorm}{m}},
\end{talign}
Using a union bound, we obtain the result.
\end{proof}

\begin{lemma}[Chernoff bound for sums of Rademacher variables] \label{lem:chernoff_rademacher}
Let $\epsilon = (\epsilon_i)_{i=1}^{k}$ be i.i.d. Rademacher variables. We have that for any $x > 0$,
\begin{talign}
    \mathrm{Pr}\big( |\frac{1}{k} \big(\sum_{i=1}^{k} \epsilon_i \big) |  > x \big) %
    \leq e^{-D((1+x)/2||1/2)k} + e^{-D((1-x)/2||1/2)k} \leq 2\exp(-\frac{x^2 k}{2}),
\end{talign}
where $D(x||y) = x \log(\frac{x}{y}) + (1-x) \log(\frac{1-x}{1-y})$.
\end{lemma}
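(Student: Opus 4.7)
The plan is to reduce the Rademacher sum to a sum of Bernoulli$(1/2)$ random variables, apply the standard Chernoff bound, and then deduce the Gaussian-type tail via a convexity inequality on the KL divergence.

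First I would write $T_k \defeq \sum_{i=1}^k \epsilon_i = 2S_k - k$ where $S_k \defeq \sum_{i=1}^k (1+\epsilon_i)/2$ is a sum of $k$ i.i.d.\ $\mathrm{Bernoulli}(1/2)$ variables. Then the event $\{T_k/k > x\}$ coincides with $\{S_k/k > (1+x)/2\}$ and $\{T_k/k < -x\}$ coincides with $\{S_k/k < (1-x)/2\}$. A union bound yields
\begin{talign}
\Pr\bigl(|T_k|/k > x\bigr) \leq \Pr\bigl(S_k/k > (1+x)/2\bigr) + \Pr\bigl(S_k/k < (1-x)/2\bigr).
\end{talign}

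Next, I would apply the standard multiplicative Chernoff bound to each term: for any $p \in (1/2,1)$, Markov's inequality applied to $e^{\lambda S_k}$ together with the MGF $\E[e^{\lambda S_k}] = (\frac{1+e^{\lambda}}{2})^k$ gives, upon optimizing $\lambda$,
\begin{talign}
\Pr\bigl(S_k/k \geq p\bigr) \leq \exp\bigl(-k\,D(p\,\|\,1/2)\bigr),
\end{talign}
and symmetrically $\Pr(S_k/k \leq 1-p) \leq \exp(-k\,D(1-p\,\|\,1/2))$. Taking $p = (1+x)/2$ and $1-p = (1-x)/2$ respectively establishes the first claimed inequality.

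Finally, to pass from the KL bound to the sub-Gaussian bound, I would prove $D((1+x)/2 \,\|\, 1/2) \geq x^2/2$ (and the analogous bound with $1-x$) for all $x \in [0,1]$. Writing $f(x) \defeq D((1+x)/2 \,\|\, 1/2) - x^2/2$, one checks $f(0)=0$, $f'(0)=0$, and $f''(x) = \frac{1}{1-x^2} - 1 \geq 0$ on $[0,1)$, so $f(x)\geq 0$; the $x \geq 1$ case holds trivially since the event has probability $0$. This yields $e^{-k D((1\pm x)/2 \| 1/2)} \leq e^{-k x^2/2}$, and combining the two tail bounds gives the stated $2\exp(-kx^2/2)$ bound. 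No step is a real obstacle here — this is essentially a bookkeeping exercise assembling the Cram\'er--Chernoff bound and a second-derivative check — so the main care is simply to verify the convexity inequality on the correct domain.
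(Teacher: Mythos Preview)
Your proposal is correct and follows essentially the same approach as the paper: reduce to the Chernoff--Hoeffding bound for Bernoulli sums and then lower bound the KL divergence $D((1\pm x)/2\,\|\,1/2)$ by $x^2/2$. The only cosmetic difference is that the paper simply cites the Chernoff--Hoeffding theorem and the general Pinsker-type inequality $D(p+x\,\|\,p)\geq x^2/(2p(1-p))$ (specialized to $p=1/2$), whereas you unpack the MGF optimization and verify the KL lower bound via a direct second-derivative check; both routes are standard and equivalent here.
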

\begin{proof}
The first inequality holds by the Chernoff-Hoeffding theorem. The second inequality holds because $D((1+x)/2||1/2) = D((1-x)/2||1/2)$, and because for $p \geq 1/2$, we have that $D(p+x||p) \geq \frac{x^2}{2p(1-p)}$.
\end{proof}

\subsection{Concluding the proof of \cref{thm:uniform_separation_detailed}}

The following result is the basis for \cref{thm:uniform_separation_detailed}.

\begin{lemma}[Putting everything together] \label{thm:uniform_separation_details}
    Let $\beta \in (0,1)$ be arbitrary, and define $\tilde{\beta} = \frac{\beta}{1+\frac{\beta}{2}}$. Let $\alpha \in (0,1)$ and suppose that $k_{\alpha} \defeq \floor{\alpha (\numperm+1)} \geq 1$. Assume that \ctt is run with $\numperm \geq \frac{1}{\alpha} - 1$ and $\delta = \min\{ \frac{\tilde{\beta}}{6}, (\frac{\tilde{\beta}}{2})^{1/k_{\alpha}} \frac{\alpha}{30e\sblock} \}$.
    Define the function 
    \begin{talign}
    \tilde{Z}(m,n,\alpha,\beta)
    \defeq Z(m,n,\beta) + \hat{Z}(m,n,\alpha,\tilde{\beta}),
    \end{talign}
    where the functions $Z$ and $\hat{Z}$ are defined in \cref{prop:control_power} (equation \cref{eq:Z_def}) and \cref{prop:bound_q_arbitrary_B} (equation \cref{eq:hat_Z_def}), respectively.
    If 
    $\mathrm{MMD}(\P,\Q) \geq \tilde{Z}(m,n,\alpha,\beta)$,
    then  
    \begin{talign} \label{eq:power_condition}
    \mathrm{Pr}[\mathrm{MMD}(\hat{\mathbb{X}}_m,\hat{\mathbb{Y}}_n) \leq %
    M_{(b_{\alpha})}] \leq \beta.
    \end{talign}
\end{lemma}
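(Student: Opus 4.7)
\medskip

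\noindent\textbf{Proof proposal.} The plan is to chain together the two main tools already established in the excerpt---\cref{prop:control_power} (turning a high-probability gap between $\mathrm{MMD}(\P,\Q)$ and the threshold into a power guarantee) and \cref{prop:bound_q_arbitrary_B} (a high-probability upper bound on the threshold $M_{(b_\alpha)}$)---with the separation hypothesis acting as the bridge. The only non-trivial work is bookkeeping: verifying that the single value of $\delta$ passed to \ctt simultaneously meets the failure-probability requirements of both propositions.

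First, I would apply \cref{prop:bound_q_arbitrary_B} with its failure parameter instantiated at $\tilde\beta$, which yields $\Pr[M_{(b_\alpha)} \le \hat Z(m,n,\alpha,\tilde\beta)] \ge 1-\tilde\beta/2$. The hypothesis of this proposition is that every \ktcompress call is run with failure probability at most $\delta^*/(5s_m)$ and $\delta^*/(5s_n)$ for $\delta^*=(\tilde\beta/2)^{1/k_\alpha}\alpha/(4e)$; the second term in the $\min$ defining $\delta$ in the lemma statement, namely $(\tilde\beta/2)^{1/k_\alpha}\alpha/(30es)$, was chosen exactly so that this requirement is satisfied (using $s_m,s_n\le s$). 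The condition $k_\alpha\ge 1$, which is implied by $\numperm\ge \alpha^{-1}-1$, is what makes $b_\alpha\le\numperm$, the other prerequisite for \cref{prop:bound_q_arbitrary_B}.

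Next, on the event $\{M_{(b_\alpha)} \le \hat Z(m,n,\alpha,\tilde\beta)\}$, the separation assumption
\[
\mathrm{MMD}(\P,\Q) \;\ge\; \tilde Z(m,n,\alpha,\beta) \;=\; Z(m,n,\beta) + \hat Z(m,n,\alpha,\tilde\beta)
\]
gives $\mathrm{MMD}(\P,\Q) - M_{(b_\alpha)} \ge Z(m,n,\beta)$, so
\[
\Pr\!\left[\mathrm{MMD}(\P,\Q) \ge Z(m,n,\beta) + M_{(b_\alpha)}\right] \;\ge\; 1-\tilde\beta/2 \;=\; \frac{1}{1+\beta/2},
\]
where the final equality is a direct calculation from $\tilde\beta=\beta/(1+\beta/2)$. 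Here the first component $\tilde\beta/6$ of the $\min$ defining $\delta$ is used: it ensures that the $\errorsplit(\cdot,\cdot,\tilde\beta/6,\mathfrak g)$ terms appearing inside $Z(m,n,\beta)$ are valid, since by \cref{thm:compression_guarantee} this requires each \ktcompress call to be run with failure probability at most $\tilde\beta/6$.

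Finally, feeding the displayed probability bound into \cref{prop:control_power} immediately yields $\Pr[\mathrm{MMD}(\hat{\mathbb X}_m,\hat{\mathbb Y}_n)\le M_{(b_\alpha)}]\le\beta$, which is the desired conclusion \cref{eq:power_condition}. The main ``obstacle'' is not mathematical but notational: one must keep straight the three different failure budgets ($\tilde\beta/6$ for the concentration of $\mathrm{MMD}(\hat{\mathbb X}_m,\hat{\mathbb Y}_n)$ around $\mathrm{MMD}(\P,\Q)$, $\tilde\beta/2$ for the bound on the threshold, and the resulting $\beta$ for the acceptance probability) and confirm that both $\min$ arguments in the definition of $\delta$ are indeed tight enough that all the prior results fire with compatible parameters. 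No new probabilistic machinery is needed beyond a union bound implicit in \cref{prop:control_power}.
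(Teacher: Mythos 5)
Your proof is correct and follows essentially the same argument as the paper's: apply \cref{prop:bound_q_arbitrary_B} to bound $M_{(b_\alpha)}$ with probability $1-\tilde\beta/2$, combine with the separation hypothesis $\mathrm{MMD}(\P,\Q)\ge\tilde Z$, and feed the resulting event into \cref{prop:control_power}. You are more explicit than the paper in checking that the single $\delta = \min\{\tilde\beta/6,\,(\tilde\beta/2)^{1/k_\alpha}\alpha/(30es)\}$ simultaneously satisfies the failure-probability hypotheses of both propositions, which is a helpful clarification the paper leaves implicit.
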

\begin{proof}
Using \cref{prop:control_power}, it suffices to see that with probability at least $1-\frac{\tilde{\beta}}{2}$,
\begin{talign} \label{eq:power_condition_2}
\mathrm{MMD}(\P,\Q) \geq
Z(m,n,\beta) + %
M_{(b_{\alpha})}.
\end{talign}
\cref{prop:bound_q_arbitrary_B} implies that with probability at least $1-\frac{\tilde{\beta}}{2}$,
\begin{talign}
    M_{(b_{\alpha})} \leq \hat{Z}(m,n,%
    \alpha,\tilde{\beta}).
\end{talign}
Hence, with probability at least $1-\frac{\tilde{\beta}}{2}$, $\tilde{Z}(m,n,\alpha,\beta) \geq Z(m,n,\beta) + %
M_{(b_{\alpha})}$. Using the assumption that $\mathrm{MMD}(\P,\Q) \geq \tilde{Z}(m,n,\alpha,\beta)$, we obtain that \cref{eq:power_condition} holds.
\end{proof}

\textbf{Proof of \cref{thm:uniform_separation_detailed}} 
To go from the statement of \cref{thm:uniform_separation_details} to the one of \cref{thm:uniform_separation_detailed}, we write the function $\tilde{Z}(m,n,\alpha,\beta)$ %
in terms of its arguments, as follows:
\begin{talign}
    \tilde{Z}(m,n,\alpha,\beta) &= 
    \frac{\errorsplit(\P, m/\sblock_m, \tilde{\beta}%
    /6,\ossymb)}{2^{\ossymb} \sqrt{m}} +  \frac{\errorsplit(\Q, n/\sblock_n, \tilde{\beta}%
    /6,\ossymb)}{2^{\ossymb} \sqrt{n}} +c_{\tilde{\beta}%
    /6} \bigparenth{\sqrt{\frac{\kinfnorm}{m}}\!+\!\sqrt{\frac{\kinfnorm}{n}}} 
    \\ &+ \sqrt{\frac{2}{\sblock_m} \log(\frac{8e^2}{\alpha} (\frac{4}{\tilde{\beta}})^{1/k_{\alpha}})} \mmd(\P,\Q) \\ &+ \sqrt{\frac{1}{\sblock_m}(2+c' \log(\frac{4e}{\alpha} (\frac{4}{\tilde{\beta}})^{1/k_{\alpha}}))} (\sqrt{\mmd(\P,\Q) W(m,n,\tilde{\beta}/(20\sblock))} + W(m,n,\tilde{\beta}/(20\sblock))).
\end{talign}
If we define 
\begin{talign} 
\begin{split} \label{eq:inequality_coefficients}
    a &= %
    1 - \sqrt{\frac{2}{\sblock_m} \log(\frac{8e^2}{\alpha} (\frac{4}{\tilde{\beta}})^{1/k_{\alpha}})}, \\
    b &= \sqrt{\frac{1}{\sblock_m}(2+c' \log(\frac{4e}{\alpha} (\frac{2}{\tilde{\beta}})^{1/k_{\alpha}})) W(m,n,\tilde{\beta}/(20\sblock))}, 
    \\
    c \! &= \! \frac{\errorsplit(\P, m/\sblock_m, \tilde{\beta}%
    /6,\ossymb)}{2^{\ossymb} \sqrt{m}} \! + \!  \frac{\errorsplit(\Q, n/\sblock_n, \tilde{\beta}%
    /6,\ossymb)}{2^{\ossymb} \sqrt{n}} \! + \! c_{\tilde{\beta}%
    /6} \bigparenth{\sqrt{\frac{\kinfnorm}{m}}\!+\!\sqrt{\frac{\kinfnorm}{n}}} \! + \! \sqrt{\frac{2+c' \log(\frac{4e}{\alpha} (\frac{4}{\tilde{\beta}})^{1/k_{\alpha}})}{\sblock_m}} W(m,n,\tilde{\beta}/(20\sblock)), \\
    x &= \sqrt{\mmd(\P,\Q)},
\end{split}
\end{talign}
and we assume that $a > 0$, we can rewrite the condition $\mathrm{MMD}(\P,\Q) \geq \tilde{Z}(m,n,\alpha,\beta)$ as $a x^2 - b x - c \geq 0$, which together with the positivity constraint on $x$ is equivalent to $x \geq \frac{b + \sqrt{b^2 + 4 a c}}{2a}$. A sufficient condition for this is $\sqrt{\mmd(\P,\Q)} \geq \frac{b}{a} + \sqrt{\frac{c}{a}}$, and yet another sufficient condition is $\mmd(\P,\Q) \geq 2(\frac{b^2}{a^2} + \frac{c}{a})$. Since $0 < a < 1$ by assumption, the right-hand side of this equation is upper-bounded by
\begin{talign}
    \frac{2}{a^2}(b^2 + c) &= \frac{2}{(1 - \sqrt{\frac{2}{\sblock_m} \log(\frac{8e^2}{\alpha} (\frac{4}{\tilde{\beta}})^{1/k_{\alpha}})})^2} \bigg(\frac{\errorsplit(\P, m/\sblock_m, \tilde{\beta}%
    /6,\ossymb)}{2^{\ossymb} \sqrt{m}} +  \frac{\errorsplit(\Q, n/\sblock_n, \tilde{\beta}%
    /6,\ossymb)}{2^{\ossymb} \sqrt{n}} +c_{\tilde{\beta}%
    /6} \bigparenth{\sqrt{\frac{\kinfnorm}{m}}\!+\!\sqrt{\frac{\kinfnorm}{n}}} \\ &+ 2\big(\frac{1}{\sblock_m}(2+c' \log(\frac{4e}{\alpha} (\frac{4}{\tilde{\beta}})^{1/k_{\alpha}})) + \sqrt{\frac{1}{\sblock_m}(2+c' \log(\frac{4e}{\alpha} (\frac{4}{\tilde{\beta}})^{1/k_{\alpha}}))} \big) \times \\ &\big(\frac{\errorsplit(\P, m/\sblock_m, \tilde{\beta}/(20\sblock_m),\ossymb
    )}{2^{\ossymb} \sqrt{m/\sblock_m}} \!+ \!\frac{ \errorsplit(\Q, n/\sblock_n, \tilde{\beta}/(20\sblock_n),\ossymb
    )}{2^{\ossymb} \sqrt{n/\sblock_n}} \!+ \!c_{\tilde{\beta}/(20\sblock)} \big(\sqrt{\frac{\kinfnorm}{m/\sblock_m}} +\sqrt{\frac{\kinfnorm}{n/\sblock_n}} \big) \big) %
    \bigg). 
\end{talign}
We have that 
\begin{talign} \label{eq:cond_s_m}
    \sqrt{\frac{2}{\sblock_m} \log(\frac{8e^2}{\alpha} (\frac{4}{\tilde{\beta}})^{1/k_{\alpha}})} \leq \frac{3}{4} \iff \sblock_m \geq \frac{32}{9} \log(\frac{8e^2}{\alpha} (\frac{4}{\tilde{\beta}})^{1/k_{\alpha}})
\end{talign}
Under \cref{eq:cond_s_m}, we have that 
\begin{talign}
    \frac{2}{(1 - \sqrt{\frac{2}{\sblock_m} \log(\frac{8e^2}{\alpha}(\frac{4}{\tilde{\beta}})^{1/k_{\alpha}})})^2} &\leq \frac{2}{(1-\frac{3}{4})^2} = 32, \\
    \frac{1}{\sqrt{\sblock_m}}(2+c' \log(\frac{4e}{\alpha} (\frac{4}{\tilde{\beta}})^{1/k_{\alpha}})) &\leq \frac{2+c' \log(\frac{4e}{\alpha} (\frac{4}{\tilde{\beta}})^{1/k_{\alpha}})}{\sqrt{\frac{9}{32} \log(\frac{8e^2}{\alpha} (\frac{4}{\tilde{\beta}})^{1/k_{\alpha}})}} \leq \sqrt{\frac{32}{9}}(2+c' \sqrt{\log(\frac{4e}{\alpha} (\frac{4}{\tilde{\beta}})^{1/k_{\alpha}})})
\end{talign}
and consequently,
\begin{talign}
    \frac{2}{a^2}(b^2 + c) &\leq 32 \bigg(\frac{\errorsplit(\P, m/\sblock_m, \tilde{\beta}
    /6)}{2^{\ossymb} \sqrt{m}} +  \frac{\errorsplit(\Q, n/\sblock_n, \tilde{\beta}
    /6,\ossymb)}{2^{\ossymb} \sqrt{n}} +c_{\tilde{\beta}
    /6} \bigparenth{\sqrt{\frac{\kinfnorm}{m}}\!+\!\sqrt{\frac{\kinfnorm}{n}}} \\ &\qquad\quad + 2\big(\sqrt{\frac{9}{32}} + 1\big) \big(2+c' \sqrt{\log(\frac{4e}{\alpha} (\frac{4}{\tilde{\beta}})^{1/k_{\alpha}})} \big)
    \bigg(\frac{\errorsplit(\P, m/\sblock_m, \tilde{\beta}/(20\sblock)
    ,\ossymb)}{2^{\ossymb} \sqrt{m}} \\ &\qquad\qquad\qquad\qquad\qquad\qquad\qquad\qquad+ \sqrt{\frac{\sblock_n}{\sblock_m}} \frac{ \errorsplit(\Q, n/\sblock_n, \tilde{\beta}/(20\sblock)
    ,\ossymb)}{2^{\ossymb} \sqrt{n}} \!+ \!c_{\tilde{\beta}/(20\sblock)} \big(\sqrt{\frac{\kinfnorm}{m}} +\sqrt{\frac{\sblock_n \kinfnorm}{\sblock_m n}} \big) \bigg)
    \bigg)
\end{talign}
The final result follows.

\section{\pcref{thm:rates_other_tests}}
\label{proof_of_rates_other_tests}
We prove the three parts of \cref{thm:rates_other_tests} one by one.

\subsection{Proof of \cref{thm:rates_other_tests}\cref{thm:rate_complete}} \label{proof:rate_complete}
According to \citet[Thm.~8]{gretton2007kernel} (see also \citet[Eq.~2]{gretton2009afast} for the exact formulation), the complete unbiased test with statistic $\mmd^2_{up}(\xs,\ys[m])$ has the following asymptotic distribution under the null hypothesis:
\begin{talign}
    m \mmd^2_{up}(\xs,\ys[m]) \rightarrow \sum_{l = 1}^{\infty} \lambda_l (z_l^2 - 2),
\end{talign}
where $z_l \sim N(0,2)$ i.i.d., $\rightarrow$ denotes convergence in distribution and $\lambda_i$ are the solutions to the eigenvalue equation
\begin{talign}
    \int \tilde{k}(x_i,x_j) \psi_l(x_i) dP(x_i) = \lambda_l \psi_l(x_j),
\end{talign}
where $\tilde{k}(x_i,x_j) = k(x_i,x_j) - \E_x k(x_i,x) - \E_x k(x, x_j) + \E_{x,x'} k(x,x')$. Hence, the variance of $\mmd^2_{up}(\xs,\ys[m])$ is (asymptotically)
\begin{talign}
    \mathrm{Var}(\mmd^2_{up}(\xs,\ys[m])) = \frac{1}{m^2}\sum_{l = 1}^{\infty} \lambda_l^2 \mathrm{Var}(z_l^2 - 2) = \frac{4}{m^2} \sum_{l = 1}^{\infty} \lambda_l^2,
\end{talign}
where the last equality holds because $z_l^2$ is distributed like a chi-squared distribution of one degree of freedom scaled by $\sqrt{2}$, which has variance $4$. Since the asymptotic threshold $t_{1-\alpha}$ of level $\alpha$ for $\mmd^2_{up}(\xs,\ys[m])$ is of the order of the standard deviation of $\mmd^2_{up}(\xs,\ys[m])$, we can write 
\begin{talign} \label{eq:t_1_m_alpha}
    t_{1-\alpha} = \frac{K_{1-\alpha}}{m},
\end{talign}
where the constant $K_{1-\alpha}$ is of the order of $2 \sqrt{\sum_{l = 1}^{\infty} \lambda_l^2}$ and depends on $\alpha$.

Under the alternative, $\mmd^2_{up}(\xs,\ys[m])$ converges in distribution to a Gaussian according to
\begin{talign} \label{eq:dist_up}
    m^{1/2} (\mmd^2_{up}(\xs,\ys[m]) - \mmd^2(\P,\Q)) \rightarrow N(0,\sigma_{up}^2),
\end{talign}
where $\sigma_{up}^2 = 4 (\E_{x,y}(\E_{x',y'} \hkernel(x,x',y,y'))^2 - (\E_{x,y,x',y'} \hkernel(x,x',y,y'))^2)$ \citep[Sec.~6]{gretton2007kernel}, \citep[Sec.~5.5]{serfling2009approximation}. 
Let $z = (x,y), z' = (x',y')$ and $\hkernel(z,z') = \hkernel(x,x',y,y')$. If $\angles{\cdot,\cdot}_{\kernel}$ denotes the RKHS inner product, note that
\begin{talign} 
\begin{split} \label{eq:order_MMD}
&|\mathbb{E}_{z'} [\hkernel(z,z')]| \\ &= |\mathbb{E}_{x',y'} [\kernel(x,x') + \kernel(y,y') - \kernel(x,y') - \kernel(x',y)]| \\ &= \big|\int \kernel(x,x') \, d(\P-\Q)(x') - \int \kernel(y,x') \, d(\P-\Q)(x')\big| = \big|\int (\kernel(x,x') - \kernel(y,x')) \, d(\P-\Q)(x')\big| \\ &= \big|\int \kernel(x'',x') \, d(\P-\Q)(x') \, d(\delta_x -\delta_y)(x'') \big| = \big|\left\langle \int \kernel(\cdot,x') \, d(\P-\Q)(x'), \int \kernel(\cdot,x') \, d(\delta_x -\delta_y)(x') \right\rangle_{\kernel} \big| \\ &\leq \big\| \int \kernel(\cdot,x') \, d(\P-\Q)(x') \big\|_{\kernel} \big\| \int \kernel(\cdot,x') \, d(\delta_x -\delta_y)(x') \big\|_{\kernel} \\ &= \mathrm{MMD}(\P,\Q) (\kernel(x,x) + \kernel(y,y) - 2\kernel(x,y)).
\end{split}
\end{talign}
Hence, using equation \cref{eq:order_MMD} we obtain an upper bound on $\sigma_{up}^2$:
\begin{talign} 
\begin{split} \label{eq:sigma_1_2_bound}
    \sigma_{up}^2 &:= 4 (\mathbb{E}_{z}[(\mathbb{E}_{z'}[\hkernel(z, z')])^2] - (\mathbb{E}_{z,z'}[\hkernel(z, z')])^2) \leq 4 \mathbb{E}_{z}[(\mathbb{E}_{z'}[\hkernel(z, z')])^2] \\ &\leq 4 \mathrm{MMD}^2(\P,\Q) \mathbb{E}_{x,y}[(\kernel(x,x) + \kernel(y,y) - 2\kernel(x,y))^2] = 4 \mathrm{MMD}^2(\P,\Q) \mathbb{E}_z[\hkernel(z,z)^2].
\end{split}
\end{talign}
Asymptotically, we obtain that under the alternative distribution
\begin{talign}
    &\mathrm{Pr}(\mmd^2_{up}(\xs,\ys[m]) < t_{1-\alpha}) \\ &= \mathrm{Pr}(\frac{\sqrt{m}}{\sigma_{up}}(\mmd^2_{up}(\xs,\ys[m])  - \mmd^2(\P,\Q)) < \frac{\sqrt{m}}{\sigma_{up}}(t_{1-\alpha} - \mmd^2(\P,\Q))) \\ &= \Phi(\frac{\sqrt{m}}{\sigma_{up}}(t_{1-\alpha} - \mmd^2(\P,\Q))),
\end{talign}
where $\Phi$ denotes the CDF of a standard Gaussian. Hence, the condition that the Type II error be upper-bounded by $\beta \in (0,1/2)$ translates to
\begin{talign} \label{eq:mmd2_cond_0}
    \frac{\sqrt{m}}{\sigma_{up}}(t_{1-\alpha} - \mmd^2(\P,\Q)) &\leq \Phi^{-1}(\beta) \\ \iff \mmd^2(\P,\Q) &\geq t_{1-\alpha} -  \frac{\sigma_{up}}{\sqrt{m}} \Phi^{-1}(\beta) = t_{1-\alpha} + \frac{\sigma_{up}}{\sqrt{m}} \Phi^{-1}(1-\beta)
\end{talign}
Replacing $t_{1-\alpha}$ by its expression in \cref{eq:t_1_m_alpha} and using the upper bound \cref{eq:sigma_1_2_bound}, we get that a sufficient condition for \cref{eq:mmd2_cond_0} is
\begin{talign}
    \mmd^2(\P,\Q) - 2\sqrt{\frac{ \mathbb{E}_z[\hkernel(z,z)^2]}{m}} \Phi^{-1}(1-\beta) \mmd(\P,\Q) - \frac{K_{1-\alpha}}{m} \geq 0.
\end{talign}
Solving the corresponding second-degree equation, this is equivalent to
\begin{talign}
    \mmd(\P,\Q) \geq \frac{\sqrt{ \mathbb{E}_z[\hkernel(z,z)^2]} \Phi^{-1}(1-\beta) + \sqrt{ \mathbb{E}_z[\hkernel(z,z)^2] \Phi^{-1}(1-\beta)^2 + K_{1-\alpha}}}{\sqrt{m}} = O(1/\sqrt{m})
\end{talign}
A necessary condition for \cref{eq:mmd2_cond_0} to hold is $\mmd(\P,\Q) \geq \sqrt{\frac{K_{1-\alpha}}{m}} = \Omega(1/\sqrt{m})$, which concludes the proof of this part.

\subsection{Proof of \cref{thm:rates_other_tests}\cref{thm:rate_block}}
\label{thm:proof_rate_block}

By the definition of $\mmdblock^2(\xs, \ys[m])$ in \cref{eq:tauBdef}, it is the average of $m/\block$ independent instances $\eta_i(\xs, \ys[m])$ of the quadratic estimator \cref{eq:unbiased_equal}, each with sample size $\block$. Hence, in the regime $m/\block \to \infty$, we have
\begin{talign}
\frac{m}{\block}(\mmdblock^2(\xs, \ys[m]) - \mmd^2(\P,\Q)) \rightarrow N(0,\Var(\eta_i(\xs, \ys[m]))).
\end{talign}
Using the argument of \cref{proof:rate_complete}, under the null hypothesis, we obtain that asymptotically $\mathrm{Var}(\eta_i(\xs, \ys[m])) = \frac{4}{\block^2} \sum_{l = 1}^{\infty} \lambda_l^2$. Hence, under the null hypothesis,
$%
\sqrt{\frac{B m}{4\sum_{l = 1}^{\infty} \lambda_l^2}} \mmdblock^2(\xs, \ys[m]) \to N(0,1)$. We derive the expression for the threshold $t_{1-\alpha}$ corresponding to the level $\alpha$:
\begin{talign}
&\mathrm{Pr}(\mmdblock^2(\xs, \ys[m]) < t_{1-\alpha}) = \mathrm{Pr}(\sqrt{\frac{B n}{4\sum_{l = 1}^{\infty} \lambda_l^2}} \mmdblock^2(\xs, \ys[m]) < \sqrt{\frac{B n}{4\sum_{l = 1}^{\infty} \lambda_l^2}} t_{1-\alpha}) \\ &= \Phi(\sqrt{\frac{B m}{4\sum_{l = 1}^{\infty} \lambda_l^2}}t_{1-\alpha}) = 1-\alpha.
\end{talign}
This implies that 
\begin{talign} \label{eq:t_1_alpha_block}
t_{1-\alpha} = \sqrt{\frac{4\sum_{l = 1}^{\infty} \lambda_l^2}{B m}} \Phi^{-1}(1-\alpha).
\end{talign}
Reusing \cref{eq:dist_up}, we have that asymptotically, under the alternative hypothesis, $\Var(\eta_i(\xs, \ys[m])) = \sigma_{up}^2/\block$. Hence, under the alternative hypothesis,
\begin{talign}
\frac{\sqrt{m}}{\sigma_{up}} (\mmdblock^2(\xs, \ys[m]) - \mmd^2(\P,\Q)) \rightarrow N(0,1)).
\end{talign}
We conclude that asymptotically,
\begin{talign}
    &\mathrm{Pr}(\mmd^2_{\block}(\xs,\ys[m]) < t_{1-\alpha}) \\ &= \mathrm{Pr}(\frac{\sqrt{m}}{\sigma_{up}}(\mmd^2_{\block}(\xs,\ys[m])  - \mmd^2(\P,\Q)) < \frac{\sqrt{m}}{\sigma_{up}}(t_{1-\alpha} - \mmd^2(\P,\Q))) \\ &= \Phi(\frac{\sqrt{m}}{\sigma_{up}}(t_{1-\alpha} - \mmd^2(\P,\Q))).
\end{talign}
Hence, the condition that the Type II error be upper-bounded by $\beta \in (0,1/2)$ translates to
\begin{talign} \label{eq:mmd2_cond_1}
    \frac{\sqrt{m}}{\sigma_{up}}(t_{1-\alpha} - \mmd^2(\P,\Q)) \leq \Phi^{-1}(\beta) \iff \mmd^2(\P,\Q) \geq t_{1-\alpha} + \frac{\sigma_{up}}{\sqrt{m}} \Phi^{-1}(1-\beta)
\end{talign}
Replacing $t_{1-\alpha}$ by its expression in \cref{eq:t_1_alpha_block} and using the upper bound \cref{eq:sigma_1_2_bound}, we get that a necessary condition for \cref{eq:mmd2_cond_1} to hold is
\begin{talign}
    \mmd(\P,\Q) \geq \sqrt{t_{1-\alpha}} = (\frac{4\sum_{l = 1}^{\infty} \lambda_l^2}{B m})^{1/4} \sqrt{\Phi^{-1}(1-\alpha)} = \Omega(1/(Bm)^{1/4}).
\end{talign} 
Also, a sufficient condition for \cref{eq:mmd2_cond_1} is
\begin{talign}
    \mmd^2(\P,\Q) - 2\sqrt{\frac{ \mathbb{E}_z[\hkernel(z,z)^2]}{m}} \Phi^{-1}(1-\beta) \mmd(\P,\Q) - 2\sqrt{\frac{\sum_{l = 1}^{\infty} \lambda_l^2}{B m}} \Phi^{-1}(1-\alpha) \geq 0.
\end{talign}
Solving the corresponding second-degree equation, this is equivalent to
\begin{talign}
    \mmd(\P,\Q) &\geq \sqrt{\frac{ \mathbb{E}_z[\hkernel(z,z)^2]}{m}} \Phi^{-1}(1-\beta) + \sqrt{ \frac{ \mathbb{E}_z[\hkernel(z,z)^2]}{m} \Phi^{-1}(1-\beta)^2 + 2\sqrt{\frac{\sum_{l = 1}^{\infty} \lambda_l^2}{B m}} \Phi^{-1}(1-\alpha)} \\ &= O(1/(B m)^{1/4}).
\end{talign}

\subsection{Proof of \cref{thm:rates_other_tests}\cref{thm:rate_incomplete}}
\citet[Cor.~3]{yamada2018post} show that when the pairs in the design $\mathcal{D}$ are chosen i.i.d.\ (with replacement), and $\lim_{m,|\mathcal{D}| \to \infty} m^{-2} |\mathcal{D}| = 0$, $0 < \gamma = \lim_{m,|\mathcal{D}| \to \infty} m^{-1} |\mathcal{D}| < \infty$, the incomplete MMD statistic $\mmdincomplete^2(\xs,\ys[m])$ is asymptotically distributed according to
\begin{talign}
    \begin{cases}
    |\mathcal{D}|^{1/2} \mmdincomplete^2(\xs,\ys[m]) \rightarrow N(0,\sigma^2) &\text{if } \P=\Q\\
    |\mathcal{D}|^{1/2} (\mmdincomplete^2(\xs,\ys[m]) - \mmd^2(\P,\Q)) \rightarrow N(0,\sigma^2+\gamma \sigma_{up}^2), &\text{if } \P\neq \Q.
    \end{cases}
\end{talign}
where $\sigma^2 = \mathbb{E}_{z,z'}(\hkernel(z,z') - \mathbb{E}_{z,z'}\hkernel(z,z'))^2$ and $\sigma_{up}^2$ is defined in \cref{eq:sigma_1_2_bound}.

We derive the expression for the threshold $t_{1-\alpha}$ corresponding to the level $\alpha$ using the asymptotic distribution under the null hypothesis:
\begin{talign} \label{eq:t_1_alpha_incomplete}
&\mathrm{Pr}(\mmdincomplete^2(\xs,\ys[m]) < t_{1-\alpha}) = \mathrm{Pr}(\frac{|\mathcal{D}|^{1/2}}{\sigma} \mmdincomplete^2(\xs,\ys[m]) < \frac{|\mathcal{D}|^{1/2}}{\sigma} t_{1-\alpha}) \\ &= \Phi(\frac{|\mathcal{D}|^{1/2}}{\sigma} t_{1-\alpha}) = 1-\alpha \iff 
t_{1-\alpha} = \frac{\sigma}{|\mathcal{D}|^{1/2}} \Phi^{-1}(1-\alpha).
\end{talign}
And then, under the alternative hypothesis and asymptotically,
\begin{talign}
    &\mathrm{Pr}(\mmdincomplete^2(\xs,\ys[m]) < t_{1-\alpha}) \\ &= \mathrm{Pr}(\frac{|\mathcal{D}|^{1/2}}{\sqrt{\sigma^2+\gamma^2\sigma^2_{up}}}(\xs,\ys[m])  - \mmd^2(\P,\Q)) < \frac{|\mathcal{D}|^{1/2}}{\sqrt{\sigma^2+\gamma^2\sigma^2_{up}}}(t_{1-\alpha} - \mmd^2(\P,\Q))) \\ &= \Phi(\frac{|\mathcal{D}|^{1/2}}{\sqrt{\sigma^2+\gamma^2\sigma^2_{up}}}(t_{1-\alpha} - \mmd^2(\P,\Q))).
\end{talign}
Proceeding as in \cref{thm:proof_rate_block}, we obtain that a necessary condition for the Type II error to be upper-bounded by $\beta \in (0,1/2)$ is that
\begin{talign}
    \mmd(\P,\Q) \geq \sqrt{t_{1-\alpha}} = \sqrt{\frac{\sigma}{|\mathcal{D}|^{1/2}} \Phi^{-1}(1-\alpha)} = \Omega(1/|\mathcal{D}|^{1/4}),
\end{talign}
and that a sufficient condition is
\begin{talign}
    \mmd^2(\P,\Q) \geq t_{1-\alpha} + \frac{\sigma + 2 \gamma \mathrm{MMD}(\P,\Q) \sqrt{\mathbb{E}_z[\hkernel(z,z)^2]}}{|\mathcal{D}|^{1/2}} \Phi^{-1}(1-\beta)
\end{talign}
In order to derive this, we used that $\sqrt{\sigma^2+4\gamma^2 \mathrm{MMD}^2(\P,\Q) \mathbb{E}_z[\hkernel(z,z)^2] } \leq \sigma + 2 \gamma \mathrm{MMD}(\P,\Q) \sqrt{\mathbb{E}_z[\hkernel(z,z)^2]}$. Solving the corresponding second-degree equation, this is equivalent to
\begin{talign}
    \mmd(\P,\Q) &\geq \frac{ \gamma \sqrt{\mathbb{E}_z[\hkernel(z,z)^2]}}{|\mathcal{D}|^{1/2}} \Phi^{-1}(1-\beta) + \sqrt{ \frac{ \gamma^2 \mathbb{E}_z[\hkernel(z,z)^2]}{|\mathcal{D}|} \Phi^{-1}(1-\beta)^2 +  t_{1-\alpha} + \frac{\sigma \Phi^{-1}(1-\beta)}{|\mathcal{D}|^{1/2}}} \\ &= O(1/|\mathcal{D}|^{1/4}).
\end{talign}
\section{\pcref{lctt_guarantees}}\label{proof-lctt_guarantees}

The proof of \cref{lctt_guarantees} follows the structure of the proof of \cref{thm:uniform_separation}. We first introduce a detailed statement of the result of \cref{lctt_guarantees}.

\begin{theorem} \label{thm:lctt_guarantees_full}
    \lcttname (\cref{algo:lctt}) has size exactly equal to the level $\alpha$ for all $\P$.
Suppose \lcttname (\cref{algo:lctt}) is run with level $\alpha$, replication count $\numperm \geq \frac{1}{\alpha}- 1$, and coreset count %
$\sblock_m \geq (32/9) \log(\frac{2e}{\gamma})$  for
    $\gamma \defeq \frac{\alpha}{4e} (\frac{\beta}{4+2\beta})^{\frac{1}{\floor{\alpha (\numperm\!+\!1)}}}$. Let $\tilde{\beta} = \beta/(1+\beta/2)$.
Then \lctt 
has power 
\begin{talign} \label{eq:power_condition_main}
    \Pr[\deltactt(\mathbb{X}_m,\mathbb{Y}_n)=1] &\geq 1\!-\!\beta 
    \end{talign}
whenever %
\begin{talign} \label{eq:detailed_mmd_lower_bound}
\mmd_{\kernel}(\P,\Q) \!\geq \! 32 \bigg(&\frac{\errorsplit(\P, m/\sblock_{m,r}, \tilde{\beta}
    /6, \ossymb)}{2^{\ossymb} \sqrt{m}} +  \frac{\errorsplit(\Q, n/\sblock_{n,r}, \tilde{\beta}
    /6, \ossymb)}{2^{\ossymb} \sqrt{n}} +c_{\tilde{\beta}
    /6} \bigparenth{\sqrt{\frac{\kinfnorm}{m}}\!+\!\sqrt{\frac{\kinfnorm}{n}}} \!+\!2 %
    \featerr(\xs,\ys) \\ &+ 2^{1+1/4}\big(\sqrt{\frac{9}{32}} + 1\big)\big(2+c' \sqrt{\log(\gamma)} \big) \times 
    \bigg(%
    \featerr(\xs,\ys)+ \frac{\errorsplit(\P, m/\sblock_{m,r}, \tilde{\beta}/(20\sblock_{m,r}), \ossymb
    )}{2^{\ossymb} \sqrt{m}} \\ &\qquad\qquad\qquad\qquad\qquad+ \sqrt{\frac{\sblock_n}{\sblock_{m}}} \frac{\errorsplit(\Q, n/\sblock_{n,r}, \tilde{\beta}/(20\sblock_{n,r}), \ossymb
    )}{2^{\ossymb} \sqrt{n}} \!+ \!c_{\tilde{\beta}/(20\sblock_r)} \big(\sqrt{\frac{\kinfnorm}{m}} +\sqrt{\frac{\sblock_n \kinfnorm}{\sblock_m  n}} \big) \bigg)
    \bigg).
\end{talign}
\end{theorem}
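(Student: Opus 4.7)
The plan is to follow the two-part structure of the \ctt proofs (\cref{thm:validity} and \cref{thm:uniform_separation_detailed}), inserting the low-rank approximation error $\featerr(\hatxs,\hatys)$ at the appropriate places. The exactness claim is essentially verbatim the argument of \cref{thm:validity}: under the null the pooled point sequence is exchangeable, and the bin-respecting coreset permutations preserve exchangeability of the permuted statistics $(M_b)_{b=1}^{\numperm+1}$ since coreset permutations are the image of bin-preserving point permutations under the deterministic \ktcompress map. The randomized ranking test on $R$ then yields Type I error exactly $\alpha$ by the counting identity on order statistics.

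For the power bound, the key bridge between \lctt and \ctt is the pointwise identity
\begin{align}
|\featmmd(\xs,\ys) - \mmd_\kernel(\xs,\ys)| \leq \sqrt{|\featmmd^2(\xs,\ys) - \mmd_\kernel^2(\xs,\ys)|} \leq 2\featerr(\xs,\ys),
\end{align}
which combines $|\sqrt{a}-\sqrt{b}|\leq\sqrt{|a-b|}$ with the stated $|\mmd_\kernel^2-\featmmd^2|\leq 4\featerr^2$. Crucially, any bin-respecting coreset permutation $\sigma$ only reshuffles the pooled set $\hatxs\cup\hatys$, so $\featerr(\hatxs^\sigma,\hatys^\sigma)\leq\featerr(\hatxs,\hatys)$ uniformly for the test statistic and every permuted replicate.

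With these two facts in hand, the analysis of \cref{thm:uniform_separation_detailed} transfers in three steps. First, reduce the Type II error bound to $\Pr[\featmmd(\hatxs,\hatys)\leq M_{(b_\alpha)}]\leq\beta$ exactly as before. Second, decompose
\begin{align}
|\mmd_\kernel(\P,\Q) - \featmmd(\hatxs,\hatys)| \leq |\mmd_\kernel(\P,\Q) - \mmd_\kernel(\hatxs,\hatys)| + 2\featerr(\hatxs,\hatys),
\end{align}
and bound the first term by \cref{thm:compression_guarantee} with bin sizes $m/\sblock_{m,r}$ and $n/\sblock_{n,r}$; this yields the \lctt analogue of the function $Z$ from \cref{prop:control_power}. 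Third, bound the permutation quantile $M_{(b_\alpha)}$ by chaining the quantile-CDF argument of \cref{prop:mathfrak_q_bound}, the tail bound for permuted $\mmd_\kernel(\hatxs^\sigma,\hatys^\sigma)$ from \cref{prop:random_perm_mmd_bound}, and a $+2\featerr(\hatxs,\hatys)$ correction for passing to $\featmmd$; this yields the \lctt analogue of $\hat{Z}$. An analog of \cref{prop:control_power} then closes the argument, and the bin-count condition $\sblock_m\geq\frac{32}{9}\log(2e/\gamma)$ in \cref{thm:uniform_separation_detailed} translates to the stated $\sblock\geq\frac{m+n}{m}\frac{32}{9}\log(2e/\gamma)$ via $\sblock_m=\sblock m/(m+n)$.

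The main obstacle I anticipate is algebraic bookkeeping rather than any new concentration inequality: one must simultaneously carry two distinct partitions---the $(\sblock_{m,r},\sblock_{n,r})$ compression bins feeding \cref{thm:compression_guarantee} and the $(\sblock_m,\sblock_n)$ permutation bins feeding \cref{prop:random_perm_mmd_bound}---through the final quadratic inequality $ax^2-bx-c\geq 0$ at the end of \cref{proof-uniform_separation}. The only substantively new quantity anywhere in the bound is the fixed, data-dependent error $\featerr(\hatxs,\hatys)$, and it enters additively and uniformly in the final MMD separation threshold, producing precisely the extra $\featerr(\hatxs,\hatys)$ term appearing in the statement.
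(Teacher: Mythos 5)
Your approach is correct in outline and would yield a theorem of the same shape, but it departs from the paper's proof in one structurally significant way. The paper does not pass through the pointwise inequality $|\featmmd - \mmd_\kernel| \leq 2\featerr$ at the permutation-quantile stage. Instead, it redefines the Hanson--Wright matrix $A$ (\cref{eq:hanson_wright_matrix_2}) directly in terms of $\feat$-inner products while still subtracting $\mmd_\kernel^2(\P,\Q)$, and then propagates the per-entry error bound $|\langle\feat(\mathbb{S}),\feat(\mathbb{S}')\rangle - \langle\mathbb{S}\kernel,\mathbb{S}'\kernel\rangle_\kernel|\leq \featerr^2$ (\cref{lem:diff_feat}) through the Frobenius-norm bound, so that the $\featerr^2$ contribution to $\|A\|_F$ inherits the same $1/\sqrt{\sblock_m(\sblock_m-1)}$ prefactor as the $W$-terms. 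Your modular route --- run the CTT permutation tail bound on $\mmd_\kernel(\hatxs^\sigma,\hatys^\sigma)$, then bolt on $+2\featerr$ --- instead produces a $\featerr$ term in $\hat Z$ without the $1/\sqrt{\sblock_m}$ factor but also without the $(2+c'\sqrt{\log(1/\gamma)})$ multiplier. Because the proof ultimately only uses $\sblock_m\geq\tfrac{32}{9}\log(2e/\gamma)$, the two routes end up differing only in constants in the final separation threshold; yours is in fact a bit tighter for small $\gamma$, so it does imply the stated result. Your version is cleaner and more reusable for arbitrary low-rank approximations, at the price of not tracking the $\sblock_m$-dependence of $\featerr$.

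Two details you call ``bookkeeping'' that do need explicit care: \emph{(i)} \cref{prop:random_perm_mmd_bound} cannot be invoked as a literal black box, because in \lctt each permutation bin $\hat\P_m^{(i)}$ is a concatenation of $\sblock_{m,r}/\sblock_m$ separate \ktcompress outputs rather than a single one; this is precisely why the paper's analogue (\cref{prop:random_perm_mmd_feat_bound}) replaces $W(m,n,\cdot)$ with a version whose $\errorsplit$ arguments are the compression bin sizes $m/\sblock_{m,r}$, $n/\sblock_{n,r}$ while the $1/\sqrt{\sblock_m}$ prefactors come from the coarser permutation bins. You would need to repeat the proof of \cref{lem:ktcompress_simultaneous} for these concatenated bins (a triangle inequality over the sub-bins) to make the step rigorous. \emph{(ii)} Your phrasing that coreset permutations are ``the image of bin-preserving point permutations under the deterministic \ktcompress map'' should be corrected: \ktcompress is randomized, and the exchangeability argument goes through conditionally on its internal randomness, as in the proof of \cref{thm:validity}. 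Neither point invalidates your approach, but both are load-bearing and not merely algebra.
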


It is important to remark that \cref{algo:lctt} involves two separate parameters: the number of compression bins $\sblock_r := \sblock_{m,r} + \sblock_{n,r}$ and the number of permutation bins $\sblock = \sblock_m + \sblock_n$. The former is always larger or equal than the latter and in particular $\sblock$ divides $\sblock_r$, that is, the compressed outputs of $\sblock_r/\sblock$ compression bins are grouped together into a single permutation bin. %

We formulate a statement which is analogous to \cref{prop:control_power}, but for the \lctt test statistic, and with a slightly different lower bound $Z(m,n,\beta)$.
\begin{lemma}[Upper bound on acceptance probability from upper bound on threshold] \label{prop:control_power_lctt}
Let $1 \geq \beta > 0$ arbitrary, and define $\tilde{\beta} = \frac{\beta}{1+\frac{\beta}{2}}$. Define the function
\begin{talign} \label{eq:Z_def_lctt}
Z(m,n,\beta) %
\defeq 
    \frac{\errorsplit(\P, m/\sblock_{m,r}, \tilde{\beta}%
    /6, \ossymb)}{2^{\ossymb} \sqrt{m}} +  \frac{\errorsplit(\Q, n/\sblock_{n,r}, \tilde{\beta}%
    /6, \ossymb)}{2^{\ossymb} \sqrt{n}} +c_{\tilde{\beta}%
    /6} \bigparenth{\sqrt{\frac{\kinfnorm}{m}}\!+\!\sqrt{\frac{\kinfnorm}{n}}} + 2\featerr(\xs,\ys), %
\end{talign}
which is equal to the upper bound in \cref{eq:mmd_diff_p_q} when we make the choice $\delta = \tilde{\beta}/6$.
If $\mathrm{Pr}\left[\mathrm{MMD}(\P,\Q) \geq
Z(m,n,\beta) +
M_{(b_{\alpha})} \right] \geq \frac{1}{1+\frac{\beta}{2}}$
then %
$\mathrm{Pr}[\mathrm{MMD}_{\feat}(\hat{\mathbb{X}}_m,\hat{\mathbb{Y}}_n) \leq %
M_{(b_{\alpha})}] \leq \beta$. Here, $\errorsplit$ are defined in \cref{sub:inflation_factor}, and $\featerr^2(\P_m, \Q_n) = \sup_{x,x' \in \supp{\P_m} \cup \supp{\Q_m}} |\langle \feat(x), \feat(x') \rangle  - \kernel(x,x')|$.
\end{lemma}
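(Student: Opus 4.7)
The plan is to mirror the proof of \cref{prop:control_power} almost verbatim, but to absorb the extra low-rank approximation error $2\featerr(\xs,\ys)$ into $Z(m,n,\beta)$ by an additional triangle inequality step. Concretely, I would define the two events
\begin{talign}
\mathfrak{A} \defeq \{\mmd_\feat(\hatxs,\hatys) \leq M_{(b_\alpha)}\}
\qtext{and}
\mathfrak{B} \defeq \{\mmd(\P,\Q) \geq Z(m,n,\beta) + M_{(b_\alpha)}\},
\end{talign}
and, exactly as in \cref{prop:control_power}, the identity $\Pr[\mathfrak{A}] = \Pr[\mathfrak{A}\mid\mathfrak{B}]\Pr[\mathfrak{B}] + \Pr[\mathfrak{A}\mid\mathfrak{B}^c]\Pr[\mathfrak{B}^c] \leq \Pr[\mathfrak{A}\mid\mathfrak{B}] + \Pr[\mathfrak{B}^c]$, combined with $\Pr[\mathfrak{B}^c]\leq \tilde\beta/2$ by hypothesis, reduces the claim to showing $\Pr[\mathfrak{A}\mid\mathfrak{B}] \leq \tilde{\beta}/(2-\tilde\beta) = \beta/2$.

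To control $\Pr[\mathfrak{A}\mid\mathfrak{B}]$, I would use a triangle inequality to split the compression error from the low-rank approximation error:
\begin{talign}
|\mmd(\P,\Q) - \mmd_\feat(\hatxs,\hatys)|
\leq
|\mmd(\P,\Q) - \mmd(\hatxs,\hatys)|
+
|\mmd(\hatxs,\hatys) - \mmd_\feat(\hatxs,\hatys)|.
\end{talign}
The first term on the right is handled by \cref{thm:compression_guarantee} at level $\delta = \tilde\beta/6$ (adapted for the \lctt binning with $\sblock_{m,r}$ and $\sblock_{n,r}$), which contributes the first three summands of \cref{eq:Z_def_lctt} with failure probability $\tilde\beta/2$. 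For the second term, I would invoke the standard low-rank bound $|\mmd_\kernel^2(\cdot,\cdot) - \mmd_\feat^2(\cdot,\cdot)| \leq 4\featerr^2(\cdot,\cdot)$ referenced in \cref{sec:lctt}, which together with $|\sqrt{a}-\sqrt{b}|\leq\sqrt{|a-b|}$ yields $|\mmd(\hatxs,\hatys) - \mmd_\feat(\hatxs,\hatys)| \leq 2\featerr(\hatxs,\hatys) \leq 2\featerr(\xs,\ys)$, since $\supp{\hatxs}\cup\supp{\hatys}\subseteq\supp{\xs}\cup\supp{\ys}$ by \ktcompress. This is a deterministic bound, so it adds no extra failure probability.

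Putting these pieces together, on $\mathfrak{B}$ we have $\mmd_\feat(\hatxs,\hatys) \geq \mmd(\P,\Q) - Z(m,n,\beta) + (Z(m,n,\beta) - |\mmd(\P,\Q) - \mmd_\feat(\hatxs,\hatys)|)$, so
\begin{talign}
\Pr[\mathfrak{A}\mid\mathfrak{B}]
\leq
\frac{1}{\Pr[\mathfrak{B}]}\Pr\bigl[|\mmd(\P,\Q) - \mmd_\feat(\hatxs,\hatys)| \geq Z(m,n,\beta)\bigr]
\leq
\frac{\tilde\beta/2}{1-\tilde\beta/2}
=
\frac{\beta}{2},
\end{talign}
where the second inequality uses the two bounds above. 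Combining everything yields $\Pr[\mathfrak{A}] \leq \beta/2 + \tilde\beta/2 \leq \beta$, as desired.

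The only nontrivial step is a bookkeeping one: verifying that the \lctt partition into $\sblock_{m,r}$ and $\sblock_{n,r}$ bins (rather than $\sblock_m,\sblock_n$) still produces a bound of the form \cref{eq:mmd_diff_p_q} in \cref{thm:compression_guarantee} with the $\errorsplit$ factors evaluated at the \lctt bin sizes $m/\sblock_{m,r}$ and $n/\sblock_{n,r}$. This follows because \cref{thm:compression_guarantee} is stated for an arbitrary \ktcompress-based \tmmd construction, and the \lctt compression step is structurally identical; only the bin sizes change. Once that substitution is made, everything else is a carbon copy of the \ctt argument.
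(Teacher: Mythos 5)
Your proposal follows the paper's own argument essentially step for step: both mirror the $\mathfrak{A}/\mathfrak{B}$ conditioning of \cref{prop:control_power}, both split $|\mmd(\P,\Q)-\mmd_{\feat}(\hatxs,\hatys)|$ by triangle inequality into a compression term (handled via \cref{eq:mmd_diff_p_q} at level $\delta=\tilde\beta/6$ with the \lctt bin sizes $m/\sblock_{m,r}$, $n/\sblock_{n,r}$) plus the deterministic low-rank term $|\mmd(\hatxs,\hatys)-\mmd_{\feat}(\hatxs,\hatys)|\leq 2\featerr(\xs,\ys)$, and both finish with the same $\tilde\beta$ bookkeeping. The only caveat is that the ``standard bound'' $|\mmd_\kernel^2-\mmd_\feat^2|\leq 4\featerr^2$ you cite from \cref{sec:lctt} is, in the paper, tagged \cref{eq:mmd_approx_error} and derived \emph{inside this very proof} (by expanding the two squared MMDs termwise and invoking \cref{lem:diff_feat}), so a self-contained write-up should reproduce that two-line expansion rather than cite it forward.
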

\begin{proof}
    The proof structure is the same as for \cref{prop:control_power}, but in this case we must use instead that with probability at least $1- \frac{\tilde{\beta}}{2}$,
\begin{talign}
    |\mathrm{MMD}(\P,\Q) - \mathrm{MMD}_{\feat}(\hat{\mathbb{X}}_m,\hat{\mathbb{Y}}_n)| &\leq |\mathrm{MMD}(\P,\Q) - \mathrm{MMD}(\hat{\mathbb{X}}_m,\hat{\mathbb{Y}}_n)| \\ &+ |\mathrm{MMD}_{\feat}(\hat{\mathbb{X}}_m,\hat{\mathbb{Y}}_n) - \mathrm{MMD}(\hat{\mathbb{X}}_m,\hat{\mathbb{Y}}_n)|
\end{talign}
The first term in the right-hand side is upper-bounded by 
\begin{align}
\frac{\errorsplit(\P, m/\sblock_{m,r}, \tilde{\beta}%
/6, \ossymb)}{2^{\ossymb} \sqrt{m}} +  \frac{\errorsplit(\Q, n/\sblock_{n,r}, \tilde{\beta}%
/6, \ossymb)}{2^{\ossymb} \sqrt{n}} +c_{\tilde{\beta}%
/6} \bigparenth{\sqrt{\frac{\kinfnorm}{m}}\!+\!\sqrt{\frac{\kinfnorm}{n}}},
\end{align}
while the second term can be upper-bounded as follows, since MMD is nonnegative:
\begin{align}
    &|\mathrm{MMD}_{\feat}(\hat{\mathbb{X}}_m,\hat{\mathbb{Y}}_n) - \mathrm{MMD}(\hat{\mathbb{X}}_m,\hat{\mathbb{Y}}_n)| \\ &\leq \sqrt{|\mathrm{MMD}_{\feat}^2(\hat{\mathbb{X}}_m,\hat{\mathbb{Y}}_n) - \mathrm{MMD}^2(\hat{\mathbb{X}}_m,\hat{\mathbb{Y}}_n)|} \\ &\leq \sqrt{|\langle \feat(\hat{X}_{m}) - \feat(\hat{Y}_{n}), \feat(\hat{X}_{m}) - \feat(\hat{Y}_{n}) \rangle - \langle (\hat{\P}_{m} - \hat{\Q}_{n}) \kernel, (\hat{\P}_{m} - \hat{\Q}_{n}) \kernel \rangle_{\kernel}|}
    \\ &\leq \bigg( |\langle \feat(\hat{X}_{m}), \feat(\hat{X}_{m}) \rangle - \langle \hat{\P}_{m} \kernel, \hat{\P}_{m} \kernel \rangle_{\kernel}| + 2|\langle \feat(\hat{X}_{m}), \feat(\hat{Y}_{n}) \rangle - \langle \hat{\P}_{m} \kernel, \hat{\Q}_{n} \kernel \rangle_{\kernel}| \\ &\qquad\qquad + |\langle \feat(\hat{Y}_{n}), \feat(\hat{Y}_{n}) \rangle - \langle \hat{\Q}_{n} \kernel, \hat{\Q}_{n} \kernel \rangle_{\kernel}| \bigg)^{1/2} \leq 2 \sup_{x, x' \in \xs \cup \ys} \sqrt{|\langle \feat(x), \feat(x') \rangle  - \kernel(x,x')|} = 2 \featerr(\P_m, \Q_n).
    \label{eq:mmd_approx_error}
\end{align}
\end{proof}

The following proposition, an analog of \cref{prop:random_perm_mmd_bound}, establishes a high-probability upper-bound on the MMD obtained by permuting the data samples. 

\begin{lemma}[Tail bound on $\mmd_{\feat}^2(\hatxs^{\sigma},\hatys^{\sigma})$ conditioned on $\hatxs$, 
$\hatys$, and $\Phi_r$] \label{prop:random_perm_mmd_feat_bound}
Let $\sigma$ be a uniformly random permutation over $\{1,\dots,\sblock\}$. Let $\delta, \delta'' \in (0,1)$, and $\delta \in (0,e^{-1})$.  
There is an event $\mathcal{A}$ of probability at least $1-\delta$ concerning the draw of $\hatxs$, 
$\hatys$, such that conditioned on $\mathcal{A}$, 
with probability at least $1-\delta'-\delta''$ on the draw of $\sigma$,
\begin{talign} 
\begin{split}\label{eq:sigma_prop_ctt_rff}
    \mmd^2_{\feat}(\hatxs^{\sigma},\hatys^{\sigma}) &\leq 
    \frac{2}{\sblock_{m}} (\log(\frac{2}{\delta''}) +1) \mmd^2(\P,\Q) \\ &+ 
    \frac{c' \log(1/\delta') + 2}{\sblock_{m}} \big(2 \featerr^4(\P_m, \Q_n) + 2(\mmd(\P,\Q) W(m,n,\delta/(5\sblock)) + W(m,n,\delta/(5\sblock))^2)^2 \big)^{1/2}
\end{split}
\end{talign}
where $\sblock_r = \sblock_{m} + \sblock_{n}$, $c'$ is a universal constant and
\begin{talign} \label{eq:W_def_ctt_rff}
    W(m,n,\delta
    ) = 2\big(\frac{\sqrt{\sblock_{m}} \errorsplit(\P, m/\sblock_{m,r}, \delta, \ossymb
    )}{2^{\ossymb} \sqrt{m}} \!+ \!\frac{\sqrt{\sblock_{n}} \errorsplit(\Q, n/\sblock_{n,r}, \delta, \ossymb
    )}{2^{\ossymb} \sqrt{n}} \!+ \!c_{\delta
    } \big(\sqrt{\frac{\sblock_{m} \kinfnorm}{m}} +\sqrt{\frac{\sblock_{n} \kinfnorm}{n}} \big) \big).
\end{talign}
Here, the \ktcompress error inflation factors $\errorsplit$ and the factor $c_{\delta}$ are defined as in \cref{thm:compression_guarantee}, and $\xi(\P_m, \Q_n) = \sup_{x,x' \in \supp{\P_m} \cup \supp{\Q_m}} |\langle \feat(x), \feat(x') \rangle  - \kernel(x,x')|$.
\end{lemma}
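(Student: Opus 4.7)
The plan is to mirror the proof of \cref{prop:random_perm_mmd_bound}, substituting RKHS inner products with feature-space inner products $\angles{\feat(\cdot),\feat(\cdot)}$ throughout, and tracking the additional perturbations of size $\featerr^2 \defeq \featerr^2(\P_m,\Q_n)$ that enter at each inner-product evaluation. First I would write the analog of the expansion \cref{eq:mmd_perm_decomposition}: since
\begin{talign*}
\mmd_{\feat}^2(\hatxs^{\sigma},\hatys^{\sigma}) = \frac{1}{\sblock_m^2\sblock_n^2}\sum_{i,i',j,j'}\angles{\feat(\hat{\mathbb{S}}^{\sigma(i)}_{m+n})-\feat(\hat{\mathbb{S}}^{\sigma(\sblock_m+j)}_{m+n}),\,\feat(\hat{\mathbb{S}}^{\sigma(i')}_{m+n})-\feat(\hat{\mathbb{S}}^{\sigma(\sblock_m+j')}_{m+n})},
\end{talign*}
the same block decomposition splits $\mmd_{\feat}^2(\hatxs^{\sigma},\hatys^{\sigma})$ into a cross-bin quadratic $\mathcal{M}_{\feat}^{\sigma,L}$ plus the diagonal correction terms from \cref{eq:MMD_sq_hatx_haty_rewritten}.

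Second, I would run the Rademacher symmetrization exactly as before: conditional on $\xs,\ys$ and $L$, the distribution of $\mathcal{M}_{\feat}^{\sigma,L}$ equals that of $\mathcal{M}_{\feat}^{\sigma^{\epsilon,L},L}$ for i.i.d.\ $\epsilon_i$, so that quantity equals $\tilde{\epsilon}^\top A_{\feat}\tilde{\epsilon}-c_{\tilde{\epsilon}}\mmd^2(\P,\Q)$ for a Hanson-Wright matrix $A_{\feat}$ defined as in \cref{eq:hanson_wright_matrix} but with $\feat$-inner products. The crucial observation is that each entry of $A_{\feat}-A$ is bounded by $\featerr^2/(\sblock_m(\sblock_m-1))$, since for any two signed measures $\mu,\nu$ supported on $\xs\cup\ys$,
\begin{talign*}
|\angles{\mu \feat,\nu \feat} - \angles{\mu\kernel,\nu\kernel}_{\kernel}|\leq \featerr^2 \ltwonorm{\mu}_{\mrm{TV}}\ltwonorm{\nu}_{\mrm{TV}},
\end{talign*}
and the relevant signed measures in each entry have total variation bounded by a constant. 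This yields $\fronorm{A_{\feat}-A}^2 \leq \featerr^4/(\sblock_m(\sblock_m-1))$ and an analogous operator-norm bound, so $\fronorm{A_{\feat}}^2 \leq 2\eta(m,n,\delta)^2 + 2\featerr^4/(\sblock_m(\sblock_m-1))$ on the event $\mathcal A$ of \cref{lem:ktcompress_simultaneous}.

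Third, I would plug the two-term Frobenius and operator norm bounds into the Hanson-Wright tail bound (\cref{lem:hanson_wright}), exactly as in the proof of \cref{prop:random_perm_mmd_bound}, using $\sqrt{u+v}\leq \sqrt{u}+\sqrt{v}$ only at the final step to isolate the $\featerr^2$ contribution. Combined with the same Rademacher-sum bound \cref{eq:c_tilde_bound} on $c_{\tilde{\epsilon}}$ and the analog of \cref{lem:mathcal_M_bound} for the diagonal correction terms (whose magnitudes also gain a $\featerr^2$ offset by the same inner-product perturbation argument), this produces the claimed bound with the $(2\featerr^4 + 2(\mmd(\P,\Q) W + W^2)^2)^{1/2}$ factor.

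The main obstacle will be bookkeeping: propagating the $\featerr^2$ offsets through each of the $\sblock^2$ entries of $A_{\feat}$, the three groups of diagonal terms in \cref{eq:other_terms_prop}, and the Hanson-Wright exponent, while keeping the algebra compatible with the clean final form that combines the kernel and feature errors in quadrature under a single square root. A cleaner route, which I would attempt in parallel as a sanity check, is to write $\mmd_{\feat}^2(\hatxs^\sigma,\hatys^\sigma)\leq \mmd_{\kernel}^2(\hatxs^\sigma,\hatys^\sigma) + 4\featerr^2$ via \cref{eq:mmd_approx_error} applied to $(\hatxs^\sigma,\hatys^\sigma)$ and apply \cref{prop:random_perm_mmd_bound} directly; this gives a bound of the same order but with the $\featerr$ term appearing additively rather than in quadrature, so I would only fall back on it if the tighter in-quadrature form proves too intricate.
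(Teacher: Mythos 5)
Your proposal is correct and takes essentially the same approach as the paper: both expand $\mmd_{\feat}^2(\hatxs^{\sigma},\hatys^{\sigma})$ into the same Rademacher quadratic form, build the Hanson-Wright matrix with $\feat$-inner products in place of kernel inner products, bound each entry by the corresponding kernel-entry bound plus a $\featerr^2$ perturbation (the paper's \cref{lem:diff_feat} gives exactly your total-variation bound on signed measures supported in $\xs\cup\ys$), and then split the squared entries via $(a+b)^2\leq 2a^2+2b^2$ to obtain the Frobenius norm in quadrature before applying the Hanson-Wright tail. The only cosmetic difference is that you formulate the split as a triangle inequality $\|A_{\feat}\|_F^2\leq 2\|A_{\feat}-A\|_F^2+2\|A\|_F^2$ rather than splitting each entry bound additively before squaring, but these are algebraically equivalent up to constants; and your observation that the additive-$4\featerr^2$ fallback via \cref{eq:mmd_approx_error} would lose a $\sblock_m$-factor discount on the feature error is accurate and is why the paper uses the in-quadrature form.
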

\begin{proof}
    For an arbitrary distribution $\P$, we denote $\feat(\P) = \E_{x \sim \P} \feat(x) \in \R^r$. %
    We can write
\begin{talign} 
\begin{split} \label{eq:MMD_feat_sq_hatx_haty}
    \mmd_{\feat}^2(\hatxs^{\sigma},\hatys^{\sigma}) &= \langle \feat(\hat{\P}_m^{\sigma}) - \feat(\hat{\Q}_n^{\sigma}), \feat(\hat{\P}_m^{\sigma}) - \feat(\hat{\Q}_n^{\sigma}) \rangle \\ &= 
    \frac{1}{\sblock_m^2 \sblock_n^2} \sum_{i=1}^{\sblock_m} \sum_{i'=1}^{\sblock_m} \sum_{j=1}^{\sblock_n} \sum_{j'=1}^{\sblock_n}
    \langle \feat(\hat{\mathbb{S}}_{m+n}^{(\sigma(i))}) - \feat(\hat{\mathbb{S}}_{m+n}^{(\sigma(\sblock_m+j))}), \feat(\hat{\mathbb{S}}_{m+n}^{(\sigma(i'))}) - \feat(\hat{\mathbb{S}}_{m+n}^{(\sigma(\sblock_m+j'))}) \rangle
\end{split}
\end{talign}
Remark that this is the analog of \cref{eq:mmd_perm_decomposition} when one replaces $\langle (\hat{\mathbb{S}}_{m+n}^{(\sigma(i))} - \hat{\mathbb{S}}_{m+n}^{(\sigma(\sblock_m+j))})\kernel, (\hat{\mathbb{S}}_{m+n}^{(\sigma(i'))} - \hat{\mathbb{S}}_{m+n}^{(\sigma(\sblock_m+j'))})\kernel \rangle_{\kernel}$ by $\langle \feat(\hat{\mathbb{S}}_{m+n}^{(\sigma(i))}) - \feat(\hat{\mathbb{S}}_{m+n}^{(\sigma(\sblock_m+j))}), \feat(\hat{\mathbb{S}}_{m+n}^{(\sigma(i'))}) - \feat(\hat{\mathbb{S}}_{m+n}^{(\sigma(\sblock_m+j'))}) \rangle$. Following the analogy and using the construction from \cref{prop:random_perm_mmd_bound}, we will apply the Hanson-Wright inequality (\cref{lem:hanson_wright}) to the matrix $A$ defined as
\begin{talign} \label{eq:hanson_wright_matrix_2}
    A_{i,i'} = 
    \begin{cases}
    0 &\text{if } i = i' \\
    \frac{\langle \feat(\hat{\P}_{m}^{\tilde{\sigma}(i)}) \!-\! \feat(\hat{\Q}_n^{\tilde{\sigma}(\sblock_m + l_i)-\sblock_m}), \feat(\hat{\P}_{m}^{\tilde{\sigma}(i')}) \!-\! \feat(\hat{\Q}_n^{\tilde{\sigma}(\sblock_m + l_{i'})-\sblock_m}) \rangle \!-\!
    \mmd_{\kernel}^2(\P,\Q)}{\sblock_m(\sblock_m-1)}  &\text{if } i \neq i' \in \rho_{\sigma}^{-1}(\{\pm1\}), \\
    \frac{1}{\sblock_m(\sblock_m-1)} \langle \feat(\hat{\P}_{m}^{\tilde{\sigma}(i)}) - \feat(\hat{\Q}_n^{\tilde{\sigma}(\sblock_m+i)-\sblock_m}), \feat(\hat{\mathbb{S}}_{m+n}^{\tilde{\sigma}(i')}) - \feat(\hat{\mathbb{S}}_{m+n}^{\tilde{\sigma}(\sblock_m+l_{i'})}) \rangle &\text{if } i \in \rho_{\sigma}^{-1}(\{\pm1\}), i' \in \rho_{\sigma}^{-1}(\{0\}), \\
    \frac{1}{\sblock_m(\sblock_m-1)} \langle \feat(\hat{\mathbb{S}}_{m+n}^{\tilde{\sigma}(i)}) - \feat(\hat{\mathbb{S}}_{m+n}^{\tilde{\sigma}(\sblock_m+l_i)}), \feat(\hat{\P}_{m}^{\tilde{\sigma}(i')}) - \feat(\hat{\Q}_n^{\tilde{\sigma}(\sblock_m+l_{i'})-\sblock_m}) \rangle &\text{if } i \in \rho_{\sigma}^{-1}(\{0\}), i' \in \rho_{\sigma}^{-1}(\{\pm1\}), \\
    \frac{1}{\sblock_m(\sblock_m-1)} \langle \feat(\hat{\mathbb{S}}_{m+n}^{\tilde{\sigma}(i)}) - \feat(\hat{\mathbb{S}}_{m+n}^{\tilde{\sigma}(\sblock_m+l_i)}), \feat(\hat{\mathbb{S}}_{m+n}^{\tilde{\sigma}(l_{i'})}) - \feat(\hat{\mathbb{S}}_{m+n}^{\tilde{\sigma}(\sblock_m+l_{i'})}) \rangle &\text{if } i \neq i' \in \rho_{\sigma}^{-1}(\{0\}).
    \end{cases}
\end{talign}
which is the analog of the matrix defined in \cref{eq:hanson_wright_matrix}. Note that
\begin{talign} \label{eq:error_split_lctt}
    &|\langle \feat(\hat{\P}_{m}^{\tilde{\sigma}(i)}) \!-\! \feat(\hat{\Q}_n^{\tilde{\sigma}(\sblock_m + l_i)-\sblock_m}), \feat(\hat{\P}_{m}^{\tilde{\sigma}(i')}) \!-\! \feat(\hat{\Q}_n^{\tilde{\sigma}(\sblock_m + l_{i'})-\sblock_m}) \rangle \!-\!
    \mmd_{\kernel}^2(\P,\Q)| \\ &\leq |\langle \feat(\hat{\P}_{m}^{\tilde{\sigma}(i)}) \!-\! \feat(\hat{\Q}_n^{\tilde{\sigma}(\sblock_m + l_i)-\sblock_m}), \feat(\hat{\P}_{m}^{\tilde{\sigma}(i')}) \!-\! \feat(\hat{\Q}_n^{\tilde{\sigma}(\sblock_m + l_{i'})-\sblock_m}) \rangle \\ &\qquad\qquad - \langle (\hat{\P}_{m}^{\tilde{\sigma}(i)} - \hat{\Q}_n^{\tilde{\sigma}(\sblock_m+l_i)-\sblock_m})\kernel, (\hat{\P}_{m}^{\tilde{\sigma}(i')} - \hat{\Q}_n^{\tilde{\sigma}(\sblock_m+i')-\sblock_m})\kernel \rangle_{\kernel}| \\ &\qquad + |\langle (\hat{\P}_{m}^{\tilde{\sigma}(i)} - \hat{\Q}_n^{\tilde{\sigma}(\sblock_m+l_i)-\sblock_m})\kernel, (\hat{\P}_{m}^{\tilde{\sigma}(i')} - \hat{\Q}_n^{\tilde{\sigma}(\sblock_m+i')-\sblock_m})\kernel \rangle_{\kernel} - \langle (\P - \Q)\kernel, (\P - \Q)\kernel \rangle_{\kernel}|.
\end{talign}
The last term in the right-hand side is formally the same as the one upper-bounded in \cref{eq:bound_by_mmd_1}. However, there is a difference: in this case $\hat{\P}_{m}^{\tilde{\sigma}(i)}$ is not the empirical distribution of the output of \ktcompress on the points of $\xs^{(\tilde{\sigma}(i))}$, but rather the distribution corresponding to the concatenation of the outputs of \ktcompress on the $\sblock_r/\sblock$ compression bins contained in $\xs^{(\tilde{\sigma}(i))}$. %
As a result, an adaptation of the argument yields
\begin{align}
    &|\langle \feat(\hat{\P}_{m}^{\tilde{\sigma}(i)}) \!-\! \feat(\hat{\Q}_n^{\tilde{\sigma}(\sblock_m + l_i)-\sblock_m}), \feat(\hat{\P}_{m}^{\tilde{\sigma}(i')}) \!-\! \feat(\hat{\Q}_n^{\tilde{\sigma}(\sblock_m + l_{i'})-\sblock_m}) \rangle \!-\!
    \mmd_{\kernel}^2(\P,\Q)| \\ &\leq \mmd(\P,\Q) W(m,n,\delta/(5\sblock_r)) + W(m,n,\delta/(5\sblock_r))^2,
\end{align}
where the function $W$ defined in \cref{eq:W_def_ctt_rff} is slightly different from the one in \cref{prop:random_perm_mmd_bound} in that the arguments of the error inflation factors are $m/\sblock_{m,r}$ and $n/\sblock_{n,r}$ instead of $m/\sblock_{m}$ and $n/\sblock_{n}$.

We can bound the first term of \cref{eq:error_split_lctt} by
\begin{talign}
    &|\langle \feat(\hat{\P}_{m}^{\tilde{\sigma}(i)}), \feat(\hat{\P}_{m}^{\tilde{\sigma}(i')}) \rangle - \langle \hat{\P}_{m}^{\tilde{\sigma}(i)} \kernel, \hat{\P}_{m}^{\tilde{\sigma}(i')} \kernel \rangle_{\kernel}| + |\langle \feat(\hat{\P}_{m}^{\tilde{\sigma}(i)}), \feat(\hat{\Q}_n^{\tilde{\sigma}(\sblock_m + l_{i'})-\sblock_m}) \rangle - \langle \hat{\P}_{m}^{\tilde{\sigma}(i)} \kernel, \hat{\Q}_n^{\tilde{\sigma}(\sblock_m + l_{i'})-\sblock_m} \kernel \rangle_{\kernel}| \\ &+ |\langle \feat(\hat{\P}_{m}^{\tilde{\sigma}(i')}), \feat(\hat{\Q}_n^{\tilde{\sigma}(\sblock_m + l_{i})-\sblock_m}) \rangle - \langle \hat{\P}_{m}^{\tilde{\sigma}(i')} \kernel, \hat{\Q}_n^{\tilde{\sigma}(\sblock_m + l_{i})-\sblock_m} \kernel \rangle_{\kernel}| \\ &+ |\langle \feat(\hat{\Q}_n^{\tilde{\sigma}(\sblock_m + l_i)-\sblock_m}), \feat(\hat{\Q}_n^{\tilde{\sigma}(\sblock_m + l_{i'})-\sblock_m}) \rangle - \langle \hat{\Q}_n^{\tilde{\sigma}(\sblock_m + l_i)-\sblock_m} \kernel, \hat{\Q}_n^{\tilde{\sigma}(\sblock_m + l_{i'})-\sblock_m} \kernel \rangle_{\kernel}| \\ &\leq 4 \sup_{x,x' \in \supp{\P_m} \cup \supp{\Q_n}} |\langle \feat(x), \feat(x') \rangle  - \kernel(x,x')|.
\end{talign}
The bound in the right-hand side follows from applying \cref{lem:diff_feat}. The other cases in \cref{eq:hanson_wright_matrix_2} admit similar upper-bounds which in this case rely on \cref{eq:bound_by_mmd_2}. In analogy with \cref{eq:A_norm_bound}, we obtain that
\begin{talign}
\|A\|_{\text{op}}^2 &\leq \|A\|_{\text{F}}^2 \leq \frac{1}{\sblock_m^2(\sblock_m - 1)^2} \big( |\rho_{\sigma}^{-1}(\{-1,1\})| (|\rho_{\sigma}^{-1}(\{-1,1\})|-1) \big(2 \xi^2(\P_m, \Q_n) \\ &\qquad\qquad\qquad\qquad\qquad\qquad + 2 (\mmd(\P,\Q) W(m,n,\delta/(5\sblock_r)) + W(m,n,\delta/(5\sblock_r))^2)^2 \big) \\ &\qquad\qquad\qquad\qquad\quad + |\rho_{\sigma}^{-1}(\{-1,1\})||\rho_{\sigma}^{-1}(\{0\})| (2\xi^2(\P_m, \Q_n) \\ &\qquad\qquad\qquad\qquad\qquad\qquad + 2(\mmd(\P,\Q) W(m,n,\delta/(5\sblock_r)) + \frac{1}{2} W(m,n,\delta/(5\sblock_r))^2)^2) \\ &\qquad\qquad\qquad\qquad\quad + |\rho_{\sigma}^{-1}(\{0\})| (|\rho_{\sigma}^{-1}(\{0\})| -1) W(m,n,\delta/(5\sblock_r))^4 \big) \\ &\leq \eta(m,n,\delta)^2 \defeq \frac{1}{\sblock_m (\sblock_m - 1)} \big(2 \featerr^4(\P_m, \Q_n) + 2(\mmd(\P,\Q) W(m,n,\delta/(5\sblock_r)) + W(m,n,\delta/(5\sblock_r))^2)^2 \big) \label{eq:A_norm_bound_2}
\end{talign}
To prove this inequality, we used that $(a + b)^2 \leq 2 a^2 + 2 b^2$ for any $a,b \geq 0$.

Mirroring the proof of \cref{lem:mathcal_M_bound}, we establish that conditioned on the event $\mathcal{A}$ defined in \cref{lem:ktcompress_simultaneous}, which has probability at least $1-\delta$, we have that with probability at least $1-\delta'-\delta''$ on the choice of $\sigma$,
\begin{talign} 
\begin{split}\label{eq:sigma_prop_2}
    \E_L[\mathcal{M}^{\sigma,L}] &\leq 
    \frac{2}{\sblock_m - 1} \log(\frac{2}{\delta''}) \mmd^2(\P,\Q) \\ &+ \frac{c' \log(1/\delta')}{\sqrt{\sblock_m(\sblock_m-1)}} (2 \featerr^4(\P_m, \Q_n) + 2(\mmd(\P,\Q) W(m,n,\delta/(5\sblock_r)) + W(m,n,\delta/(5\sblock_r))^2)^2 )^{1/2}
\end{split}
\end{talign}
and 
\begin{talign} 
\begin{split} \label{eq:other_terms_prop_2}
    &\frac{1}{\sblock_m^2 \sblock_n^2} \big( \sum_{i=1}^{\sblock_m} \sum_{j\neq j'\in \{1,\dots,\sblock_n\}}
    \langle (\hat{\mathbb{S}}_{m+n}^{i} - \hat{\mathbb{S}}_{m+n}^{\sigma(\sblock_m+j)})\kernel, (\hat{\mathbb{S}}_{m+n}^{i} - \hat{\mathbb{S}}_{m+n}^{\sigma(\sblock_m+j')})\kernel \rangle_{\kernel} \\ &+ \sum_{j=1}^{\sblock_n} \sum_{i\neq i'\in \{1,\dots,\sblock_m\}}
    \langle (\hat{\mathbb{S}}_{m+n}^{\sigma(i)} - \hat{\mathbb{S}}_{m+n}^{\sigma(\sblock_m+j)})\kernel, (\hat{\mathbb{S}}_{m+n}^{\sigma(i')} - \hat{\mathbb{S}}_{m+n}^{\sigma(\sblock_m+j)})\kernel \rangle_{\kernel} 
    \\ &+ \sum_{i=1}^{\sblock_m} \sum_{j=1}^{\sblock_n} \langle (\hat{\mathbb{S}}_{m+n}^{\sigma(i)} - \hat{\mathbb{S}}_{m+n}^{\sigma(\sblock_m+j)})\kernel, (\hat{\mathbb{S}}_{m+n}^{\sigma(i)} - \hat{\mathbb{S}}_{m+n}^{\sigma(\sblock_m+j)})\kernel \rangle_{\kernel} \big) \\ &\leq %
    \frac{\sblock - 1}{\sblock_m \sblock_n}
    ((2 \featerr^4(\P_m, \Q_n) + 2(\mmd(\P,\Q) W(m,n,\delta/(5\sblock_r)) + W(m,n,\delta/(5\sblock_r))^2)^2 )^{1/2} + \mmd^2(\P,\Q))
\end{split}
\end{talign}
simultaneously, where $c'$ is a universal constant and $W(m,n,\delta)$ is as defined in \cref{eq:W_def}.

Mirroring the final step of the proof of \cref{prop:random_perm_mmd_bound}, we rely on these two equations to show that conditioned on the event $\mathcal{A}$, with probability at least $1-\delta'-\delta''$,
\begin{talign}
    &\mmd_{\feat}^2(\hatxs^{\sigma},\hatys^{\sigma}) \leq \frac{(\sblock_m-1)(\sblock_n-1)}{\sblock_m \sblock_n} \bigg( \frac{2}{\sblock_m - 1} \log(\frac{2}{\delta''}) \mmd^2(\P,\Q) \\ &\qquad\qquad\qquad\qquad\qquad+ \frac{c' \log(1/\delta')}{\sqrt{\sblock_m(\sblock_m-1)}} (2 \featerr^4(\P_m, \Q_n) + 2(\mmd(\P,\Q) W(m,n,\delta/(5\sblock_r)) + W(m,n,\delta/(5\sblock_r))^2)^2 )^{1/2} \bigg) \\ &\qquad\qquad\quad+ \frac{\sblock - 1}{\sblock_m \sblock_n}
    ((2 \featerr^4(\P_m, \Q_n) + 2(\mmd(\P,\Q) W(m,n,\delta/(5\sblock_r)) + W(m,n,\delta/(5\sblock_r))^2)^2 )^{1/2} + \mmd^2(\P,\Q)),
\end{talign}
which concludes the proof upon simplification.

\end{proof}

\begin{lemma} \label{lem:diff_feat}
Let $\mathbb{S}$ and $\mathbb{S}'$ be arbitrary distributions. We have that
\begin{align} \label{eq:diff_feat}
    |\langle \feat(\mathbb{S}), \feat(\mathbb{S}') \rangle - \langle \mathbb{S} \kernel, \mathbb{S}' \kernel \rangle_{\kernel}| \leq \sup_{x \in \supp{\mathbb{S}}, x' \in \supp{\mathbb{S}'}} |\langle \feat(x), \feat(x') \rangle  - \kernel(x,x')|.
\end{align}
\end{lemma}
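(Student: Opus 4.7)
The plan is straightforward: expand both inner products as double integrals against $\mathbb{S}\otimes\mathbb{S}'$ and apply Jensen's inequality (triangle inequality for integrals).

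First, I would use linearity to write $\feat(\mathbb{S}) = \E_{x\sim\mathbb{S}}[\feat(x)]$ (as defined in the proof of the preceding lemma) so that bilinearity of the Euclidean inner product gives
\begin{talign}
\langle \feat(\mathbb{S}), \feat(\mathbb{S}')\rangle
    = \E_{x\sim\mathbb{S}, x'\sim\mathbb{S}'}\big[\langle \feat(x), \feat(x')\rangle\big].
\end{talign}
Similarly, by the reproducing property of the RKHS inner product $\langle\cdot,\cdot\rangle_{\kernel}$, together with Bochner integrability,
\begin{talign}
\langle \mathbb{S}\kernel, \mathbb{S}'\kernel\rangle_{\kernel}
    = \E_{x\sim\mathbb{S}, x'\sim\mathbb{S}'}\big[\kernel(x,x')\big].
\end{talign}

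Subtracting the two displays and pulling the subtraction inside the expectation yields
\begin{talign}
\langle \feat(\mathbb{S}), \feat(\mathbb{S}')\rangle - \langle \mathbb{S}\kernel, \mathbb{S}'\kernel\rangle_{\kernel}
    = \E_{x\sim\mathbb{S}, x'\sim\mathbb{S}'}\big[\langle \feat(x), \feat(x')\rangle - \kernel(x,x')\big].
\end{talign}
Taking absolute values and applying Jensen's inequality (i.e., $|\E[Z]|\leq \E|Z|$) gives
\begin{talign}
\big|\langle \feat(\mathbb{S}), \feat(\mathbb{S}')\rangle - \langle \mathbb{S}\kernel, \mathbb{S}'\kernel\rangle_{\kernel}\big|
    \leq \E_{x\sim\mathbb{S}, x'\sim\mathbb{S}'}\big|\langle \feat(x), \feat(x')\rangle - \kernel(x,x')\big|.
\end{talign}
Finally, since the integrand is supported on $\supp{\mathbb{S}}\times\supp{\mathbb{S}'}$, bounding it by its supremum concludes the proof.

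There is no real obstacle here -- the lemma is a direct consequence of linearity and Jensen. The only subtlety worth flagging is ensuring that the expectations defining $\feat(\mathbb{S})$ and $\mathbb{S}\kernel$ exist as Bochner integrals (equivalently, that $\feat$ and $\kernel(\cdot,x)$ are integrable against $\mathbb{S}$, $\mathbb{S}'$), which is immediate in our setting since $\mathbb{S}$ and $\mathbb{S}'$ are the empirical coreset distributions (finite sums of point masses) so all expectations reduce to finite sums.
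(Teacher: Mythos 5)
Your proof is correct and takes essentially the same route as the paper: express both inner products as expectations over $\mathbb{S}\otimes\mathbb{S}'$ of pointwise terms, apply the triangle inequality for integrals, and bound by the supremum over the supports. The remark about Bochner integrability is a reasonable clarification that the paper leaves implicit (the relevant distributions in use are finite empirical measures, so it is automatic).
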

\begin{proof}
    The right-hand side of \cref{eq:diff_feat} is equal to 
\begin{align}
    &|\mathbb{E}_{x \sim \mathbb{S}, x' \sim \mathbb{S}'} \langle \feat(x), \feat(x') \rangle - \mathbb{E}_{x \sim \mathbb{S}, x' \sim \mathbb{S}'} \kernel(x,x')| \leq \mathbb{E}_{x \sim \mathbb{S}, x' \sim \mathbb{S}'} |\langle \feat(x), \feat(x') \rangle  - \kernel(x,x')| \\ &\leq \sup_{x \in \supp{\mathbb{S}}, x' \in \supp{\mathbb{S}'}} |\langle \feat(x), \feat(x') \rangle  - \kernel(x,x')|.
\end{align}
\end{proof}

We proceed to prove \cref{thm:lctt_guarantees_full}. Reproducing the argument of \cref{prop:bound_q_arbitrary_B}, if $b_{\alpha} \defeq \ceil{(\numperm+1)(1-\alpha)} \leq \numperm$, and \ktcompress calls are run with value $\delta^{*}/(5\sblock_{m})$ and $\delta^{*}/(5\sblock_{n})$ respectively, then with probability at least $1-\frac{\delta}{2}$,
    \begin{talign} 
    \begin{split} \label{eq:hat_Z_def_lctt}
    M_{(b_{\alpha})}
    &\leq \hat{Z}(m,n,\alpha,\delta) 
    \\ &\defeq 
    \sqrt{\frac{2}{\sblock_{m}} (\log(\frac{2}{\delta^*}) + 1)} \mmd(\P,\Q) + \sqrt{\frac{1}{\sblock_{m}}(2+c' \log(1/\delta^*))} \cdot 2^{1/4} (\featerr(\P_m, \Q_n) \\ &\qquad\qquad\qquad + \sqrt{\mmd(\P,\Q) W(m,n,\delta/(20\sblock_r))} + W(m,n,\delta/(20\sblock_r))).
    \end{split}
    \end{talign}
    where $M_{(b_{\alpha})}$ is the threshold of \cref{algo:lctt}, and 
    \begin{talign}
    \delta^{*} \defeq %
    (\frac{\delta}{2})^{1/k_{\alpha}} \frac{\alpha}{4e}, \quad k_{\alpha} \defeq \floor{\alpha(\numperm+1)}.
    \end{talign}

Then, we have that the statement in \cref{thm:uniform_separation_details} holds with the redefined $\hat{Z}(m,n,\alpha,\tilde{\beta})$ given by \cref{eq:hat_Z_def_lctt}. Consequently, the function $\tilde{Z}(m,n,\alpha,\beta)$ now reads
\begin{talign}
    \tilde{Z}(m,n,\alpha,\beta) &= 
    \frac{\errorsplit(\P, m/\sblock_{m,r}, \tilde{\beta}%
    /6, \ossymb)}{2^{\ossymb} \sqrt{m}} +  \frac{\errorsplit(\Q, n/\sblock_{n,r}, \tilde{\beta}%
    /6, \ossymb)}{2^{\ossymb} \sqrt{n}} +c_{\tilde{\beta}%
    /6} \bigparenth{\sqrt{\frac{\kinfnorm}{m}}\!+\!\sqrt{\frac{\kinfnorm}{n}}} 
    \\ &\qquad + \sqrt{\frac{2}{\sblock_m} \log(\frac{8e^2}{\alpha} (\frac{4}{\tilde{\beta}})^{1/k_{\alpha}})} \mmd(\P,\Q) \\ &\qquad + \sqrt{\frac{\sqrt{2}}{\sblock_m}(2+c' \log(\frac{4e}{\alpha} (\frac{4}{\tilde{\beta}})^{1/k_{\alpha}}))} (\featerr(\P_m, \Q_n) + \sqrt{\mmd(\P,\Q) W(m,n,\tilde{\beta}/(20\sblock_r))} \\ &\qquad\qquad\qquad\qquad\qquad\qquad\qquad + W(m,n,\tilde{\beta}/(20\sblock_r))).
\end{talign}
We also reduce the remainder of the proof to a second-degree inequality analogous to \cref{eq:inequality_coefficients}, but in this case the coefficients read
\begin{talign} \label{eq:inequality_coefficients_lctt}
    a &= %
    1 - \sqrt{\frac{2}{\sblock_m} \log(\frac{8e^2}{\alpha} (\frac{4}{\tilde{\beta}})^{1/k_{\alpha}})}, \\
    b &= \sqrt{\frac{\sqrt{2}}{\sblock_m}(2+c' \log(\frac{4e}{\alpha} (\frac{2}{\tilde{\beta}})^{1/k_{\alpha}})) W(m,n,\tilde{\beta}/(20\sblock_r))}, 
    \\
    c &= \frac{\errorsplit(\P, m/\sblock_{m,r}, \tilde{\beta}%
    /6, \ossymb)}{2^{\ossymb} \sqrt{m}} +  \frac{\errorsplit(\Q, n/\sblock_{n,r}, \tilde{\beta}%
    /6, \ossymb)}{2^{\ossymb} \sqrt{n}} +c_{\tilde{\beta}%
    /6} \bigparenth{\sqrt{\frac{\kinfnorm}{m}}\!+\!\sqrt{\frac{\kinfnorm}{n}}} \\ &\qquad + \sqrt{\frac{\sqrt{2}}{\sblock_m}(2+c' \log(\frac{4e}{\alpha} (\frac{4}{\tilde{\beta}})^{1/k_{\alpha}}))} (\featerr(\P_m, \Q_n) + W(m,n,\tilde{\beta}/(20\sblock_r))), \\
    x &= \sqrt{\mmd(\P,\Q)},
\end{talign}
Proceeding just like in the case of CTT, the proof is concluded (recall that the function $W$ is slightly different in this case).
\section{\pcref{thm:agg_validity_power}}
\label{sec:proof_agg_validity_power}
\acttalg

The validity statement in \cref{eq:agg_validity} follows from exactly the same argument as \citet[Prop.~8]{schrab2021mmd}, replacing the estimate $\mmd_{up}(\xs,\ys)$ with parameter $\lambda$ by $\tmmd(\xs,\ys)$ with parameter $\lambda$.  

Let $\Delta_{\ctt,\lambda}$ denote
the output of a modified $\ctt$ (\cref{algo:ctt}) with level $\alpha w_{\lambda}/2$,  $\numperm = \numperm_1$, $\kernel\!=\! \sum_{\lam\in\Lam}\kernel_{\lam}$ and $\kersplit \!=\! \sum_{\lam\in\Lam} \kersplit[\lam]$ that uses $\klam$ (in place of $\kernel$) to compute \tmmd. Then using arguments from \citet[Proof of Thm.~9, up to their equation 25]{schrab2021mmd}, we find that
\begin{talign}
    \Pr[\deltaactt(\xs, \ys) = 1] \geq
     \max_{\lambda \in \Lambda} \Pr[\Delta_{\ctt,\lambda}(\xs,\!\ys)\!=\!1]
     - \textfrac{\beta}{2}.
      \label{eq:agg_power_int}
\end{talign}
We claim that for $\lam$ such that \cref{eq:mmd_agg_power} holds, we have $ \Pr[\Delta_{\ctt,\lam}(\xs, \ys) = 1]\geq1-\frac\beta2$, which when put together with \cref{eq:agg_power_int} immediately implies the claimed power in \cref{eq:agg_power}. We now establish our power claim for this modified \ctt test.

To do so, we claim that \ktcompress with $\kernel$ and $\kersplit=\sum_{\lam\in\Lam}\kslam$---referred to as \ktagg---is $\klam$-sub-Gaussian \citep[Def.~3]{shetty2022distribution} with parameters $\shiftparam_{\l, n}''$ and $\kgaussparam_{\l, n}''$ (the analog of $(\shiftparam_{\l,n}, \kgaussparam_{\l,n})$ from \cref{eq:subgaussian_params} in our notation) simultaneously for all $\lam \in \Lam$, on an event of probability $1-\delta/2$, where 
\begin{talign}
\label{eq:kt_agg_params}
  \shiftparam_{\l, n}'' &= \frac{4}{\l}\sqrt{\sum_{\lam\in\Lam}\parenth{\ackk^2(\inputcoreset)+ \amkk^2(\inputcoreset,\delta,\l)\cdot \log|\Lam|}},
 \\
    \kgaussparam_{\l, n}'' &= \frac{2}{\l}\sqrt{\log( 
    \frac{12n4^{\ossymb} (\beta_n+1)}{\l\delta})  \cdot \sum_{\lam\in\Lam}\amkk^2(\inputcoreset,\delta,\l)},
\end{talign}
and $\ckk$ and $\mkk$ were defined in \cref{eq:subgaussian_params}.
Deferring the proof of this claim to the end of this section, we proceed with the proof. 

Using \cref{eq:kt_agg_params} and repeating the arguments from the proof of \cref{thm:compression_guarantee} (after \cref{eq:subgaussian_params}), we conclude the following analog of \cref{eq:mmd_diff_x_y} for the \tmmd estimate with the output $\hatxs, \hatys$ of \ktagg: With probability at least $1-\delta$, we have
\begin{talign} \label{eq:tilde_sigma_def_agg}
&|\mmd_{\klam}( \xs, \ys ) - \mmd_{\klam}( \hatxs, \hatys )|^2 \\ &\leq 
    1024 (\sqrt{\log(m+n+1)} + \sqrt{\log(2/\delta)})^2 \\ 
      &\qquad \cdot |\Lam| \Bigg[\frac{(\log_4 (m/\sblock_m) \!-\!\ossymb\!-\!1) }{4^{\ossymb}m} \parenth{C_{\Lam,\kersplit}(\xs) + \parenth{\sqrt{\log |\Lam|}+ \sqrt{\log( \frac{3 m (\log_4 (m/\sblock_m) \!-\!\ossymb\!-\!1)}{\sblock_m \delta})}} \mathfrak{M}'_{\Lam,\kersplit}(\xs, \delta, 2^{\ossymb+1}\sqrt{m/\sblock_m})}^2 \\ 
     &\qquad\qquad+ \frac{(\log_4 (n/\sblock_n) \!-\!\ossymb\!-\!1) }{4^{\ossymb}n} \parenth{C_{\Lam,\kersplit}(\ys) + \parenth{\sqrt{\log |\Lam|}+ \sqrt{\log( \frac{3 n (\log_4 (n/\sblock_n) \!-\!\ossymb\!-\!1)}{\sblock_n \delta})}} \mathfrak{M}'_{\Lam,\kersplit}(\ys, \delta, 2^{\ossymb+1}\sqrt{n/\sblock_n})}^2 \Bigg],
\end{talign}
where $C_{\Lam,\kersplit}(\inputcoreset)\defeq \max_{\lam\in\Lam} \ackk(\inputcoreset)$ and
    $\mathfrak{M}'_{\Lam,\kernel'}(\inputcoreset, \delta, \l) \defeq 
    \max_{\lam\in\Lam} \amkk(\inputcoreset,\delta, \l)$.
Putting \cref{eq:tilde_sigma_def_agg} together with the definitions~\cref{eq:errorsplit_def}, we find that
\begin{talign}
    |\mmd_{\klam}( \xs, \ys ) - \mmd_{\klam}( \hatxs, \hatys )| \leq
    \underbrace{2\sqrt{|\Lam|(1+\log(|\Lam|))}}_{=c_{\Lam}}
    \cdot  \max_{\lam\in\Lam} 
    \parenth{\frac{\error[{\klam,\kersplit}](\xs, \frac{m}{\sblock_m}, \delta, \ossymb)}{4^{\ossymb} m}+  \frac{\error[{\klam,\kersplit}](\ys, \frac{n}{\sblock_n}, \delta, \ossymb)}{2^{\ossymb} \sqrt{n}} }.
\end{talign}
with probability at least $1-\delta$.
Propagating this result further in the proof of \cref{thm:compression_guarantee} 
implies the following analog of \cref{eq:mmd_diff_p_q}:
\begin{talign} \label{eq:eps_Lambda}
    |\mmd_{\klam}(\P, \Q)  -  \mmd_{\klam}( \hatxs, \hatys ) | 
    &\leq
    c_{\delta} \bigparenth{\sqrt{\frac{\sinfnorm{\klam}}{m}}\!+\!\sqrt{\frac{\sinfnorm{\klam}}{n}}} 
    \\ 
    &\qquad+ c_{\Lam}
    \max_{\lam\in \Lam } 
    \parenth{\frac{\error[{\kersplit[\lam]}](\P, \frac{m}{\sblock_m}, \delta, \ossymb)}{2^{\ossymb} \sqrt{m}} \!+\!  \frac{\error[{\kersplit[\lam]}](\Q, \frac{n}{\sblock_n}, \delta, \ossymb)}{2^{\ossymb} \sqrt{n}} }, 
    \label{eq:mmd_diff_p_q_agg}
\end{talign}
with probability at least $1\!-\!3\delta$ with  $c_{\delta}\!\defeq\! 2\!+\!\sqrt{2\log(\frac{2}{\delta}})$ as in \cref{thm:compression_guarantee}.

We now apply \cref{thm:uniform_separation} to characterize the power of the modified \ctt (corresponding to $\Delta_{\ctt,\lambda}$) described above. In particular, substituting $\alpha \gets \frac{\alpha w_{\lam}}{2}$, $\beta\gets \frac{\beta}{2}$, 
$\gamma \gets \gamma_{\lam}$ 
in \cref{thm:uniform_separation}, noting $\sblock_m \geq \frac{32}{9}\log(\frac{2e}{\gamma_{\lam}})$, and using the definition of $\epsagg$ along with \cref{eq:mmd_diff_p_q_agg}  in the proof of \cref{thm:uniform_separation}, we conclude that
\begin{talign}
    \Pr[\Delta_{\ctt,\lambda}(\xs, \ys)] \geq 1-\frac{\beta}{2}
    \qtext{whenever}
    \mmd_{\klam}(\P, \Q) \geq  c'\sqrt{\log(\frac{1}{\gamma_{\lam}})}\,
    \epsagg(\frac{\beta/(10\sblock)}{4+\beta})
\end{talign}
for some universal constant $c'$, and yielding the desired claim when $m\leq n$. It remains to prove our earlier claim~\cref{eq:kt_agg_params}.

\paragraph{Proof of \cref{eq:kt_agg_params}}
Note that, for any probability measures $(\P', \Q')$,
\begin{align}
\label{eq:klam_k_mmd}
    \mmd_{\klam}(\P', \Q') \leq  \mmd_{\kernel}(\P', \Q')
\end{align}
whenever the right-hand side is  well-defined, since for any two kernels $\kernel_1, \kernel_2$ with well-defined $\mmd_{\kernel_1+\kernel_2}(\P', \Q')$, we have
\begin{align}
    \mmd_{\kernel_1+\kernel_2}^2(\P', \Q') = (\P'-\Q')(\kernel_1+\kernel_2)(\P'-\Q') 
    = \mmd_{\kernel_1}^2(\P',\Q') + \mmd_{\kernel_2}^2(\P',\Q').
    \label{eq:mmd_k_sum}
\end{align}
In the terminology of \citet[Def.~3]{shetty2022distribution}, we next establish that the halving algorithm $\kt(\delta)$ \citep[Ex.~2]{shetty2022distribution}  underlying \ktcompress is $\klam$-sub-Gaussian when run with $\kernel$ and split kernel $\kersplit$.
To proceed, we can suitably adapt the proof of Thm.~4 of  \citet{dwivedi2022generalized} (which in turn is an adaptation of \citet[Thm.~2-4]{dwivedi2021kernel}). %

We begin by instantiating the notation of \citet{dwivedi2022generalized}.
Given an input coreset $\inputcoreset$, let $\cset_{\trm{split}, 1}$ denote the first coreset output by the \ktsplit step and $\cset_{\trm{out}}$ denote the output of size $\nout$ after the \ktswap step. Then using \cref{eq:klam_k_mmd} and the definition of \ktswap, we have
\begin{talign}
    \mmd_{\klam}^2(\inputcoreset, \cset_{\trm{out}}) \leq
    \mmd_{\kernel}^2(\inputcoreset, \cset_{\trm{out}}) 
    \sless{(i)} \mmd_{\kernel}^2(\inputcoreset, \cset_{\trm{split}, 1})
    \seq{\cref{eq:mmd_k_sum}} \sum_{\lam\in\Lam} \mmd_{\klam}^2(\inputcoreset, \cset_{\trm{split}, 1}),
    \label{eq:mmd_klam_sequence}
\end{talign}
where the inequality~(i) follows directly from the definition of \ktswap \citep[Eqn.~27]{dwivedi2022generalized}.
Hence it remains to show that $\ktsplit(\delta)$ is $\klam$-sub-Gaussian for each $\lam$.

To proceed, we modify the \citet[Proof of Thm.~4]{dwivedi2022generalized}. In particular, replacing $\kernel^{\dagger}$ (in their notation) with $\kersplit$, and $\sinfnorm{\kernel^{\dagger}}$ with $\kagginf$ throughout their proof\footnote{The remark in \citet[Footnote 5]{dwivedi2022generalized} implies that the arguments work both with $\sinfnorm{\kersplit}$ and $\kinfsin[\kersplit]$.} 
we conclude, with analogy to \citet[Ex.~2]{shetty2022distribution},  that $\ktsplit(\delta)$ is $\klam$-sub-Gaussian with parameters $v_{\lam,\l}$ and $a_{\lam,\l}$ satisfying
\begin{talign}
    \label{eq:kt_params}
    a_{\lam,\l} = \frac{\ackk(\inputcoreset)}{\nout}
    \qtext{and}
    v_{\lam, \l} = \frac{\amkk(\inputcoreset, \delta, \l)}{\nout}\sqrt{\log(\frac{6\nout\log_2(\l/\nout)}{\delta})},
\end{talign}
for input (a subset of $\inputcoreset$) of size $\l$ and output of size $\nout$,
for $\ackk$ and $\amkk$ defined in \cref{eq:subgaussian_params} (also see \cref{rem:aggk}).

Next, we use an auxiliary result proven at the end of this section.
\begin{lemma}[Tail bounds for sum of non-centered sub-Gaussian random variables]
\label{lem:sum_tail_bound}
    Consider non-negative random variables $Z_1, \ldots, Z_\l$ such that for $i \in [\l]$, we have $\P\brackets{Z_i \geq a_i + v_i\sqrt{t}} \leq e^{-t}$ for all $t\geq 0$, with
    some suitable scalars $\sbraces{a_i, v_i}_{i=1}^{\l}$.
    Then $\P\brackets{\sqrt{\sum_{i=1}^{\l}Z_i^2} 
        \geq \wtil{\alpha} +\wtil{\beta} \sqrt{t}} \leq e^{-t}$ for $t\geq 0$,
    where 
    \begin{talign}
        \wtil{\alpha}^2 \defeq 2\sum_{i=1}^{\l} (a_i^2 + v_i^2 \log \l )
        \leq 4\l \log \l \cdot \max_{i}\max\sbraces{a_i^2, v_i^2}
        \qtext{and}
        \wtil{\beta}^2 \defeq \sum_{i=1}^{\l} v_i^2 \leq \l \max_{i} v_i^2.
    \end{talign}
\end{lemma}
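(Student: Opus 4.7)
My plan is to reduce the problem to a uniform deviation bound on the one-sided deviations $(Z_i - a_i)_+$ via the triangle inequality in $\ell^2$. Assuming (as is standard in the sub-Gaussian setting and as occurs in every application of this lemma in the paper) that $a_i \geq 0$, I will first use that $Z_i \geq 0$ and $Z_i \leq a_i + (Z_i - a_i)_+$ pointwise, so that the componentwise domination and a standard triangle inequality give
\begin{talign}
\sqrt{\sum_{i=1}^{\ell} Z_i^2} \;\leq\; \sqrt{\sum_{i=1}^{\ell} a_i^2} \;+\; \sqrt{\sum_{i=1}^{\ell} (Z_i-a_i)_+^2}.
\end{talign}
This cleanly separates the ``location'' contribution from the ``fluctuation'' contribution.

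Next I will control the fluctuation term by a union bound. Setting $U_i \defeq (Z_i - a_i)_+ / v_i$, the hypothesis $\P[Z_i \geq a_i + v_i\sqrt{s}] \leq e^{-s}$ yields $\P[U_i^2 \geq s] \leq e^{-s}$ for every $s \geq 0$. Taking $s = t + \log \ell$ and union-bounding over $i \in \{1,\dots,\ell\}$, with probability at least $1 - e^{-t}$ we have $U_i^2 \leq t + \log \ell$ for all $i$, and hence
\begin{talign}
\sqrt{\sum_{i=1}^{\ell} (Z_i - a_i)_+^2} \;=\; \sqrt{\sum_{i=1}^{\ell} v_i^2 U_i^2} \;\leq\; \sqrt{(t+\log \ell)\textstyle\sum_{i} v_i^2} \;\leq\; \widetilde{\beta}\sqrt{\log \ell} + \widetilde{\beta}\sqrt{t},
\end{talign}
using $\sqrt{x+y} \leq \sqrt{x}+\sqrt{y}$ in the last step.

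Combining the two displays above gives, on the same event of probability at least $1-e^{-t}$,
\begin{talign}
\sqrt{\sum_{i=1}^{\ell} Z_i^2} \;\leq\; \Bigl(\sqrt{\textstyle\sum a_i^2} + \widetilde{\beta}\sqrt{\log \ell}\Bigr) + \widetilde{\beta}\sqrt{t}.
\end{talign}
To finish, I will absorb the deterministic prefactor into $\widetilde{\alpha}$ using the elementary $(x+y)^2 \leq 2(x^2+y^2)$ applied to $x=\sqrt{\sum a_i^2}$ and $y = \widetilde{\beta}\sqrt{\log\ell}$, which yields exactly $\sqrt{\sum a_i^2}+\widetilde{\beta}\sqrt{\log\ell} \leq \sqrt{2\sum_i(a_i^2 + v_i^2\log\ell)} = \widetilde{\alpha}$. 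The two stated crude upper bounds $\widetilde{\alpha}^2 \leq 4\ell\log\ell\cdot \max_i\max\{a_i^2,v_i^2\}$ and $\widetilde{\beta}^2 \leq \ell\max_i v_i^2$ then follow immediately from pulling out the maxima and crudely bounding $\sum_i 1 = \ell$.

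The only real subtlety is the factor-of-$\sqrt{2}$ inflation in $\widetilde{\alpha}$; this is precisely the cost of the $(x+y)^2 \leq 2(x^2+y^2)$ step, and it cannot be avoided if one wants the cleaner ``$\widetilde{\alpha}+\widetilde{\beta}\sqrt{t}$'' form (as opposed to $\sqrt{\sum a_i^2}+\widetilde{\beta}\sqrt{\log\ell}+\widetilde{\beta}\sqrt{t}$). Everything else reduces to the triangle inequality and a one-line union bound, so I expect no further difficulties.
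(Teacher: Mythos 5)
Your proof is correct and essentially the same as the paper's: both reduce to a union bound over $i$ at level $t+\log\ell$, an $\ell^2$ triangle inequality (which the paper phrases via Cauchy--Schwarz on $\sum(a_i+v_i\sqrt{t})^2$, and you phrase via the decomposition $Z_i\le a_i+(Z_i-a_i)_+$), and the elementary bound $(x+y)^2\le 2(x^2+y^2)$ to absorb the constant terms into $\widetilde\alpha$. The only difference is the order of the union-bound and triangle-inequality steps, which is purely cosmetic.
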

Putting \cref{lem:sum_tail_bound} together with \cref{eq:mmd_klam_sequence,eq:kt_params}, we conclude that on an event of probability at least $1-\delta/2$ and simultaneously for all $\lam \in \Lam$, 
\ktagg ($\kt(\delta)$ with aggregated kernels as above) is $\klam$-sub-Gaussian with parameters $(a'_{\l}, v'_{\l})$ given by
\begin{talign}
    a_{\l}' &= \frac{1}{\nout} \sqrt{2\sum_{\lam\in\Lam}\parenth{\ackk^2(\inputcoreset)+ \amkk^2(\inputcoreset,\delta, \l)\cdot \log|\Lam|}} 
    \qtext{and} \\
    v_{\l}' &= \frac{1}{\nout}\sqrt{\log(\frac{6\nout\log_2(\l/\nout)}{\delta})} \cdot \sqrt{\sum_{\lam\in\Lam}\amkk^2(\inputcoreset,\delta, \l)}.
\end{talign}
for input (a subset of $\inputcoreset$) of size $\l$ and output of size $\nout$.
Now the arguments of \citet[Ex.~4]{shetty2022distribution} imply that on an event of probability at least $1-\delta/2$, every \halve call invoked by \compress (for \ktagg) is $\klam$-sub-Gaussian with parameters $\shiftparam_{\l, n}''$ and $\kgaussparam_{\l, n}''$ (the analog of $(\shiftparam_{\l,n}, \kgaussparam_{\l,n})$ in our notation~\cref{eq:subgaussian_params}), where
\begin{talign}
  \shiftparam_{\l, n}'' &= \frac{2\sqrt{2}}{\l}\sqrt{2\sum_{\lam\in\Lam}\parenth{\ackk^2(\inputcoreset)+ \amkk^2(\inputcoreset,\delta, \l)\cdot \log|\Lam|}}
\qtext{and} \\
    \kgaussparam_{\l, n}'' &= \frac{2}{\l}\sqrt{\log( 
    \frac{12n4^{\ossymb} (\beta_n+1)}{\l\delta})  \cdot \sum_{\lam\in\Lam}\amkk^2(\inputcoreset,\delta, \l)},
\end{talign}
as claimed in \cref{eq:kt_agg_params}.

\paragraph{Proof of \cref{lem:sum_tail_bound}}
Collect the scalars $\sbraces{a_i}$ (resp. $\sbraces{v_i}$) into vector $a \in \real^{\l}$ (resp. $v\in \real^\l$) such that the i-th coordinate of $a$ (resp. $v$) is equal to $a_i$ (resp. $v_i$).
    A direct union bound yields that with probability at least $1-\l e^{-t}$, we have 
    \begin{talign}
        \sum_{i=1}^{\l}Z_i^2 \leq \sum_{i=1}^{\l} (a_i+v_i\sqrt{t})^2 
        = \sum_{i=1}^{\l} a_i^2+v_i^2 t + 2 a_i v_i \sqrt{t}
        &= \twonorm{a}^2 + \twonorm{v \sqrt{t}}^2 + 2\angles{a, v\sqrt t} \\
        &\sless{(i)} \twonorm{a}^2 + \twonorm{v \sqrt{t}}^2 + 2 \twonorm{a} \twonorm{v\sqrt{t}} \\ 
        &=(\twonorm{a}+\twonorm{v\sqrt{t}})^2,
    \end{talign}
    where step~(i) follows from Cauchy-Schwarz's inequality.
    Substituting $t \gets t + \log \l$, we conclude that
    \begin{talign}
        \P\brackets{\sqrt{\sum_{i=1}^{\l}Z_i^2} \geq \twonorm{a}+\twonorm{v}\sqrt{\log \l} + \sqrt{t}\twonorm{v} }  \leq e^{-t}.
    \end{talign}
    The lemma now follows once we note that $\wtil{\beta} = \twonorm{v}$ and
    \begin{talign}
       \wtil{\alpha}^2 = 2(\twonorm{a}^2 +  \twonorm{v}^2 \log\l) \geq  (\twonorm{a}+\twonorm{v}\sqrt{\log \l} )^2.
    \end{talign}
\begin{remark}
    \label{rem:aggk}
    \normalfont
    If the aggregated kernel satisfies $\kersplit=\sum_{\lam \in \Lam } \klam$ with each $\klam$ normalized, i.e., $\sinfnorm{\klam}=1$. In this case, \citet[Eq.~(23)]{dwivedi2022generalized} shows that for any $\lam \in \Lam$ and 
    any $f$ in the RKHS of $\klam$, we have $\norm{f}_{\klam} \leq \norm{f}_{\kersplit}$. Then, repeating arguments as in \citet[App.~F, Proof of Thm.~4]{dwivedi2022generalized}, we find that
\begin{talign}
    \ckk[\klam,\kersplit](\inputcoreset) = 2 \sqrt{\sinfnorm{\kersplit}}= 2\sqrt{|\Lam|}
    \qtext{and}
    \mkk[\klam,\kersplit] = \sqrt{|\Lam|} 
  \cdot \mkk[\klam,\klam]
\end{talign}
where $\mkk[\kernel,\kernel]$ is defined in \cref{eq:mmd_guarantee_target_k}.
\end{remark}
\section{Experiment details and supplementary results} \label{sec:app_experiments}

Here we provide the details deferred from \cref{sec:experiments} along with supplementary results.

\paragraph{Optimal four-point halving}
As discussed in \cref{sec:ktcompress}, we modify the \ktcompress algorithm of 
\citet[Ex.~4]{shetty2022distribution} slightly so that whenever an input of size $4$ is being compressed into an coreset of size $2$, we return an optimal coreset of size $2$ that minimizes $\mmd_{\kernel}$ between the input point set and the output.  This optimal coreset is also symmetrized so the either the coreset or its complement is returned with equal probability.  See \cref{algo:opthalvefour}.

\paragraph{Details on the code} 
All computations related to kernel and MMD evaluations are written using identical Cython commands to ensure both consistent runtime comparisons across methods and faster runtimes overall. 
Our code can be easily extended to cover other MMD tests and can be used as a benchmark to assess power-time trade-off curves.

\paragraph{Additional details for CTT experiments on \textsc{Gaussian} and \textsc{EMNIST}} 
\begin{itemize}[leftmargin=*]
\item The bandwidth of the Gaussian kernel is selected according the median heuristic, which is a popular heuristic in kernel methods \citep{chaudhuri2017themean} that prescribes the usage of kernels of the form $k(x,y) = \exp(-\|x-y\|^2/(2\hat{\sigma}^2))$, where $\hat{\sigma}$ is the median of the pairwise distances between different points in the sequence $\xs \cup \ys$. Unless otherwise specified, we used the median heuristic to select all bandwidths in our experiments. Since computing the median among all pairs is  expensive, we selected 512 points from $\xs$ and 512 points from $\ys$ uniformly at random and computed the median of all $1024\choose 2$ pairwise distances among them. 
\item For wild bootstrap block and incomplete tests, we use the fast computation procedure proposed by \citet{schrab2021mmd, schrab2022efficient}, which computes the terms $\hkernel(X_i,X_j,Y_i,Y_j)$ only once for each pair $i \neq j$. This is the main advantage of the wild bootstrap approach over the permutation approach. The wild bootstrap incomplete test is the same test studied by \citet{schrab2022efficient}.
\item Both in Asymp. Block I and II, the threshold is computed via the CLT using an estimate of the variance of the estimator. In Asymp. Block II, the estimate of the variance is obtained from the variance of the $n/B$ block MMD estimates. Asymp. Block II was considered as a baseline by \cite{yamada2018post}. In Asymp. Block I, the estimate of the variance is obtained by sampling a Rademacher vector length $n$ and flipping the corresponding elements of $\xs$, $\ys$ to obtain a new pair of sets of $n/B$ blocks of size $B$, and computing the empirical variance of these $n/B$ block MMD estimates. Since computations of $\hkernel(X_i,X_j,Y_i,Y_j)$ are reused, Asymp. Block I is almost as fast as Asymp. Block II. Asymp. Block I was proposed first chronologically by \cite{zaremba2013btest} in the paper that introduced block tests, although they used a permutation instead of a Rademacher variable, which made the method twice as slow.
\end{itemize}

\paragraph{Additional details for LR-CTT experiments on \textsc{Gaussian} and \textsc{EMNIST}}
\begin{itemize}[leftmargin=*]
    \item The bandwidth selection is as described above.
\end{itemize}

\paragraph{Additional details for ACTT experiments on \textsc{Blobs} and \textsc{Higgs}} %
\begin{itemize}[leftmargin=*]
    \item We use the permutation approach and take $\numperm_1 = 299$ permutations, $\numperm_2 = 200$ permutations, and $\numperm_3 = 20$ iterations.
    \item As suggested by \citet{schrab2021mmd}, the ACTT experiments set $\Lambda$ as multiples of the bandwidth given by the median heuristic. We computed the median heuristic bandwidth $\lambda_0$ as in the \ctt experiments, and we set $\Lambda = \{ 2^{-i} \lambda_0 | i \in \{0,\dots,4\} \}$. The CTT (median $\lambda$) curves in \cref{fig:blobs} are obtained by compressing using the kernel sums $(\sum_{\lambda\in\Lambda}\kernel_\lambda, \sum_{\lambda\in\Lambda}\kernel'_\lambda)$, exactly as in ACTT, but then testing using the  median heuristic \tmmd, $\mmd_{\lambda_0}(\hatxs,\hatys)$, exactly as in CTT. %
    \item The aggregated wild bootstrap incomplete test is the same test studied by \citet{schrab2022efficient}.
\end{itemize}

\begin{figure}[h]
    \centering
    \includegraphics[width=0.48\textwidth]{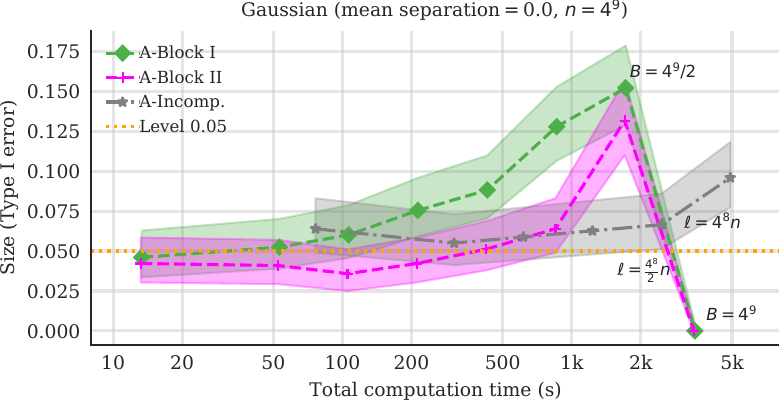}
    \includegraphics[width=0.48\textwidth]{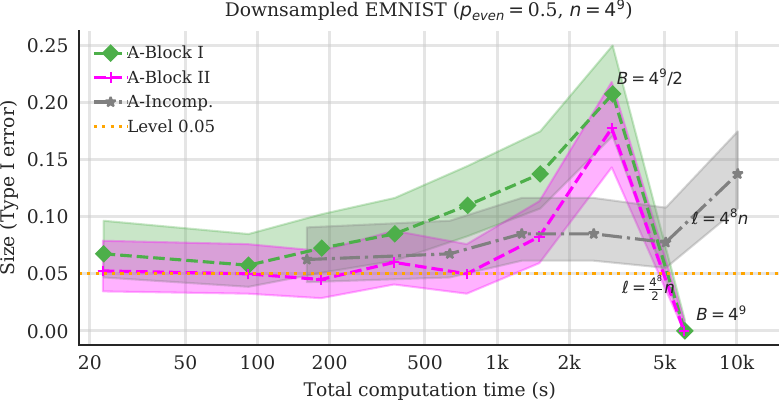}
    \\
    \includegraphics[width=0.495\textwidth]{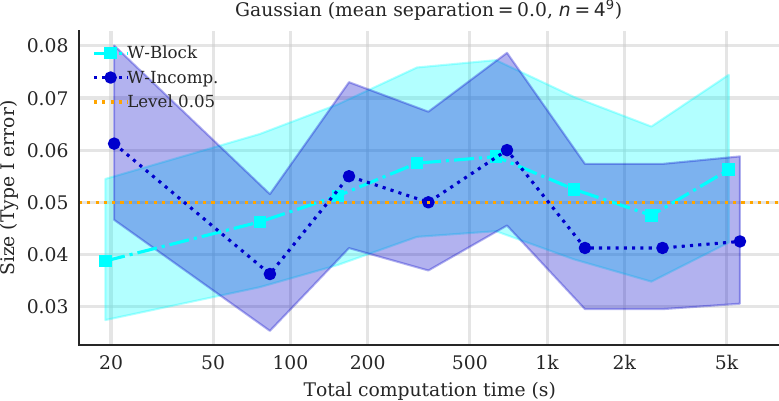}
    \includegraphics[width=0.495\textwidth]{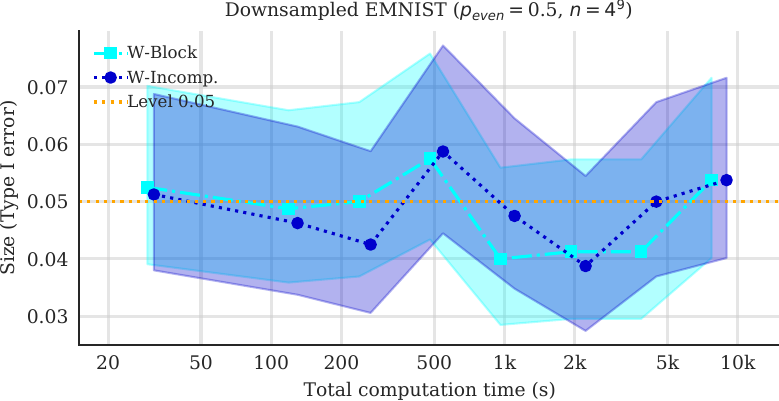} 
    \\
    \includegraphics[width=0.495\textwidth]{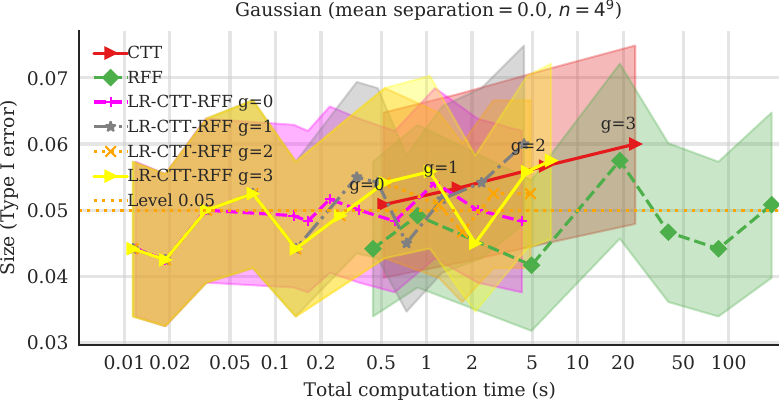}
    \includegraphics[width=0.495\textwidth]{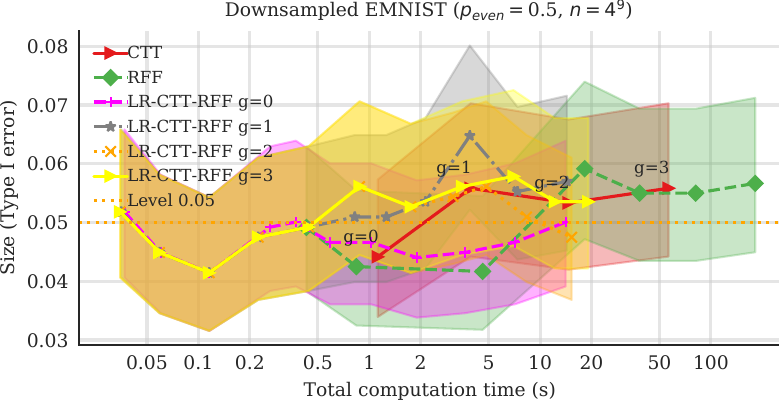} 
    \vspace{0.2cm}
    \caption{\tbf{Estimated test size} with 95\% Wilson confidence intervals in the \textsc{Gaussian} (\emph{left}) and \textsc{EMNIST} (\emph{right}) experimental settings of \cref{fig:gaussians_EMNIST}.
    \tbf{Top:} Asymptotic block and incomplete tests with $800$ (\emph{left}) and $400$ (\emph{right}) independent test repetitions. 
    \tbf{Middle:} Non-asymptotic wild bootstrap block and incomplete with $800$ independent test repetitions. 
    \tbf{Bottom:} Non-asymptotic \ctt, RFF, and \lctt with $1200$ independent test repetitions.
    } 
    \label{fig:gaussians_EMNIST_size_exact}
\end{figure}

\begin{figure}[h]
    \centering
    \includegraphics[width=0.495\textwidth]{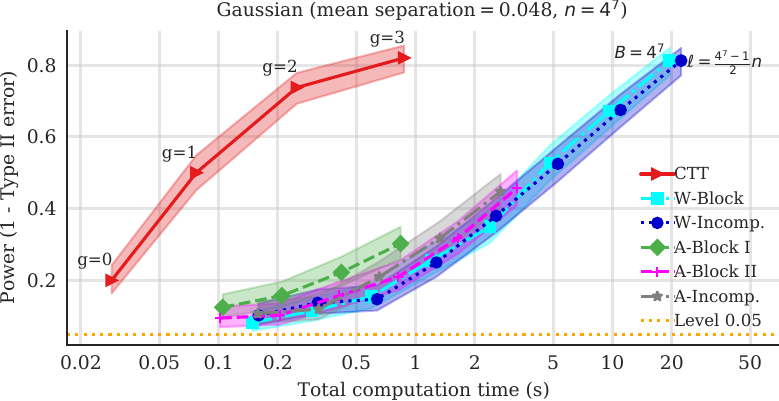}
    \includegraphics[width=0.48\textwidth]{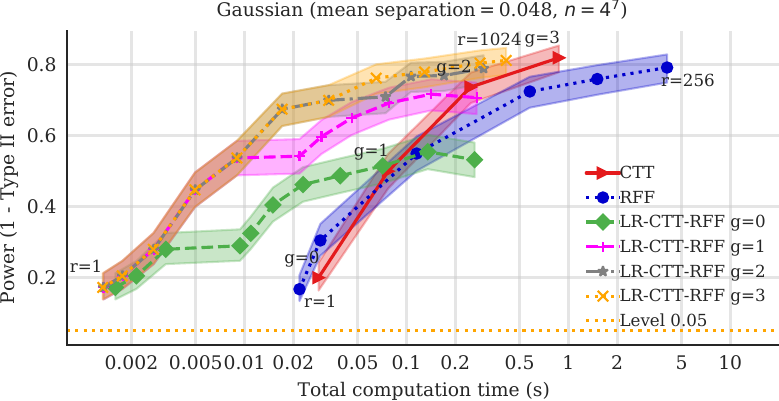}
    \\ \vspace{0.2cm}
    \includegraphics[width=0.495\textwidth]{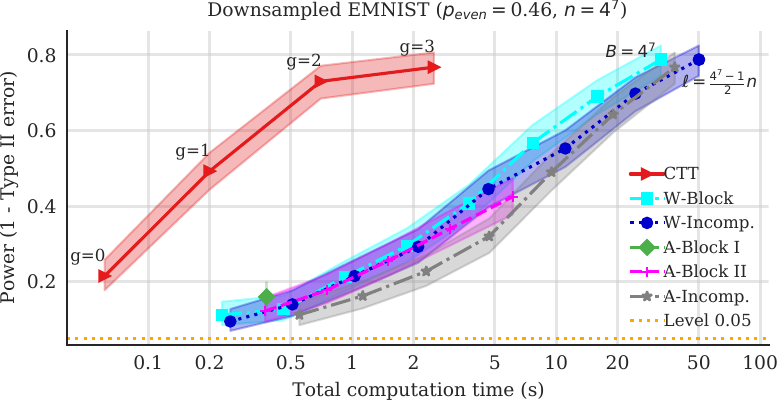}
    \includegraphics[width=0.495\textwidth]{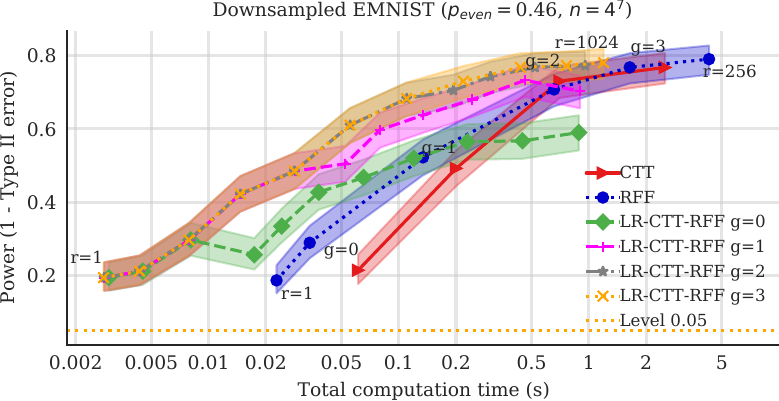}
    \caption{
    \tbf{Time-power trade-off curves} in the \textsc{Gaussian} and \textsc{EMNIST} experimental settings comparing 
    (\emph{left}) \ctt to five state-of-the-art approximate MMD tests based on subsampling 
    and (\emph{right}) \lctt to the state-of-the-art low-rank MMD test based on random Fourier features (RFF).
    These plots are like those in \cref{fig:gaussians_EMNIST}, but for a smaller sample size: $n=4^7$ instead of $n=4^9$.}
    \label{fig:power_47}
\end{figure}

\begin{figure}[h]
    \centering
    \includegraphics[width=0.48\textwidth]{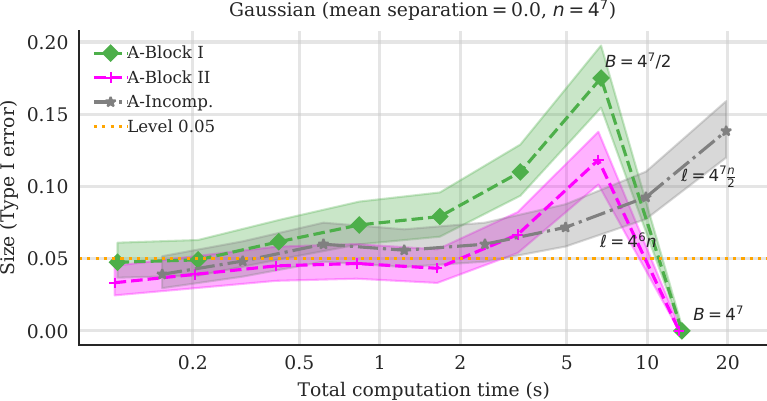}
    \includegraphics[width=0.48\textwidth]{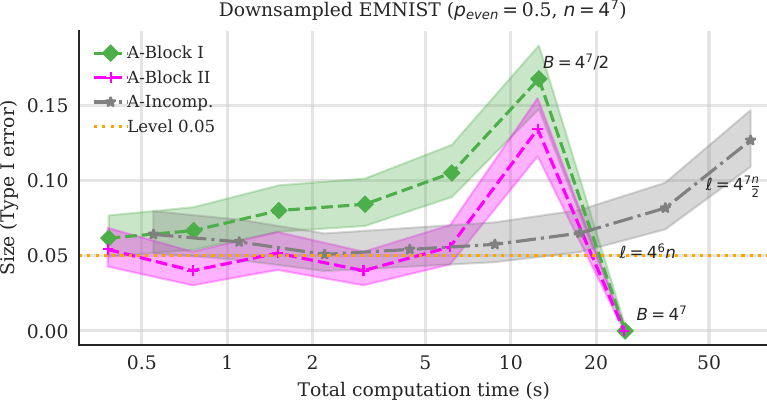}
    \\
    \includegraphics[width=0.495\textwidth]{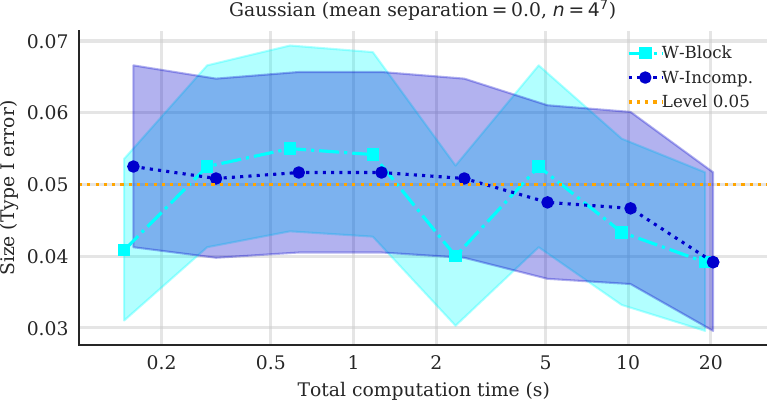}
    \includegraphics[width=0.495\textwidth]{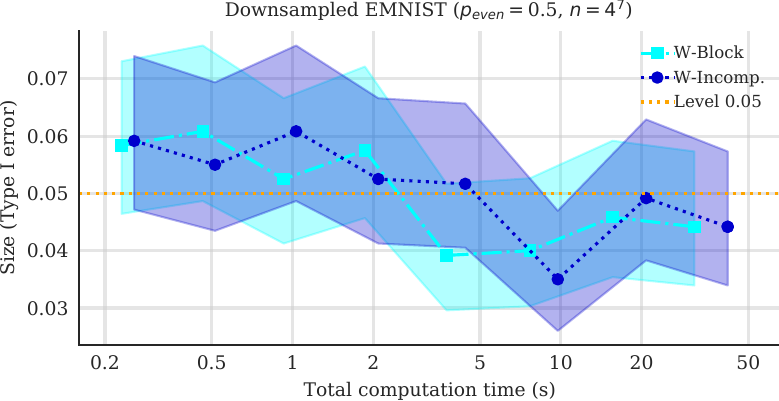}
    \\
    \includegraphics[width=0.495\textwidth]{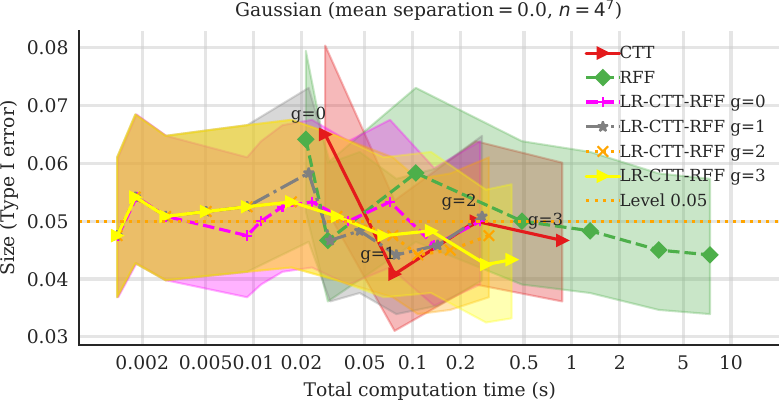}
    \includegraphics[width=0.495\textwidth]{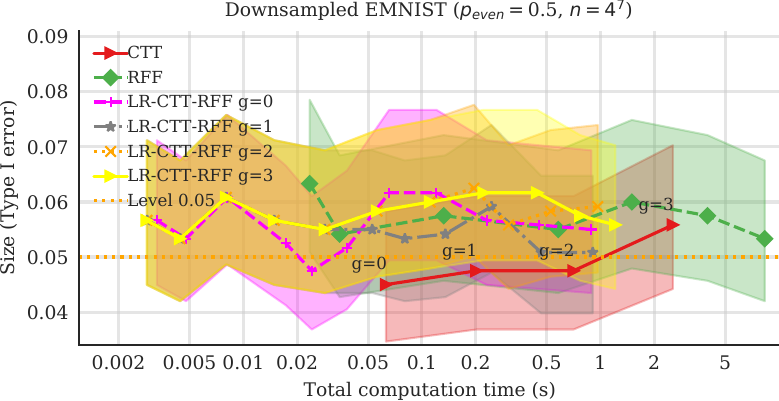} 
    \vspace{0.2cm}
    \caption{\tbf{Estimated test size} with 95\% Wilson confidence intervals in the \textsc{Gaussian} (\emph{right}) and \textsc{EMNIST} (\emph{left}) experimental settings of \cref{fig:power_47}, i.e. with $n=4^7$.
    \tbf{Top:} Asymptotic block and incomplete tests with $1200$ independent test repetitions. 
    \tbf{Middle:} Non-asymptotic wild bootstrap block and incomplete with $1200$ independent test repetitions. 
    \tbf{Bottom:} Non-asymptotic \ctt, RFF, and \lctt with $1200$ independent test repetitions.
    } 
    \label{fig:size_47}
\end{figure}

\end{document}